%% file: main.tex
\documentclass{article}

\usepackage{microtype}
\usepackage{graphicx}
\usepackage{subcaption}
\usepackage{booktabs} 

\usepackage{hyperref}

\usepackage[accepted]{icml2026}
\newcommand{\githuburl}{\url{https://github.com/leechungpa/finetuning-without-forgetting-icl}}

\usepackage{amsmath}
\usepackage{amssymb}
\usepackage{mathtools}
\usepackage{amsthm}

\usepackage[capitalize,noabbrev]{cleveref}

\theoremstyle{plain}
\newtheorem{theorem}{Theorem}[section]
\newtheorem{proposition}[theorem]{Proposition}
\newtheorem{lemma}[theorem]{Lemma}
\newtheorem{corollary}[theorem]{Corollary}
\theoremstyle{definition}
\newtheorem{definition}[theorem]{Definition}
\newtheorem{assumption}[theorem]{Assumption}
\theoremstyle{remark}

\input{math_commands.tex}

\usepackage{url}

\icmltitlerunning{Fine-Tuning Without Forgetting In-Context Learning: A Theoretical Analysis of Linear Attention Models}

\begin{document}

\twocolumn[
  \icmltitle{Fine-Tuning Without Forgetting In-Context Learning:\\ A Theoretical Analysis of Linear Attention Models}



  \icmlsetsymbol{equal}{*}

  \begin{icmlauthorlist}
    \icmlauthor{Chungpa Lee\textsuperscript{$\dagger$}}{ys}
    \icmlauthor{Jy-yong Sohn}{ys}
    \icmlauthor{Kangwook Lee}{madison,krafton,ludo}
  \end{icmlauthorlist}

  \icmlaffiliation{ys}{Yonsei University, Korea}
  \icmlaffiliation{madison}{University of Wisconsin--Madison, USA}
  \icmlaffiliation{krafton}{KRAFTON, Korea}
  \icmlaffiliation{ludo}{Ludo Robotics, Korea}

  \icmlcorrespondingauthor{Jy-yong Sohn}{jysohn1108@yonsei.ac.kr}
  \icmlcorrespondingauthor{Kangwook Lee}{kangwooklee@krafton.com}

  \icmlkeywords{language model, LLM, in-context learning, fine-tuning, zero-shot, few-shot, transformer, attention, value matrix}

  \vskip 0.3in
]



\printAffiliationsAndNotice{
\textsuperscript{$\dagger$}Work done while visiting the University of Wisconsin--Madison.
}

\begin{abstract}
    Transformer-based large language models exhibit in-context learning, enabling adaptation to downstream tasks via few-shot prompting with demonstrations. In practice, such models are often fine-tuned to improve zero-shot performance on downstream tasks, allowing them to solve tasks without examples and thereby reducing inference costs. However, fine-tuning can degrade in-context learning, limiting the performance of fine-tuned models on tasks not seen during fine-tuning. Using linear attention models, we provide a theoretical analysis that characterizes how fine-tuning objectives modify attention parameters and identifies conditions under which this leads to degraded few-shot performance. We show that fine-tuning all attention parameters can harm in-context learning, whereas restricting updates to the value matrix improves zero-shot performance while preserving in-context learning. We further show that incorporating an auxiliary few-shot loss enhances in-context learning primarily on the target task, at the expense of degraded in-context learning ability on tasks not seen during fine-tuning. We provide empirical evidence from synthetic and real-world datasets consistent with the qualitative predictions of our theory.
\end{abstract}

\input{texts/0_intro}

\input{texts/1_relatedwork}
\input{texts/2_body}


\section*{Acknowledgements}
This work was supported by the National Science Foundation (NSF) Award DMS-2023239, the NSF CAREER Award CCF-2339978, an Amazon Research Award, and a grant from FuriosaAI.
In addition, it was supported by the National Research Foundation of Korea (NRF) grant funded by the Korean Ministry of Science and ICT (MSIT) (RS-2024-00345351, RS-2024-00408003), by the Institute of Information \& Communications Technology Planning \& Evaluation (IITP) grant funded by MSIT (RS-2023-00259934, RS-2025-02283048, RS-2026-25544647), and by the Brain Korea 21 program funded by the Korean Ministry of Education and the NRF (BK21 FOUR, No. 5199990913980).

We thank Jungtaek Kim, Jaden Park, Jaewoo Shin, Thomas Zeng, Yuchen Zeng, and especially Jongwon Jeong for helpful discussions.

\section*{Impact Statement}
This paper presents work whose goal is to advance the field of Machine Learning. There are many potential societal consequences of our work, none of which we feel must be specifically highlighted here.

\bibliography{__ref}
\bibliographystyle{icml2026}

\newpage
\appendix
\onecolumn

\input{texts/3_appendix}


\end{document}

%% file: math_commands.tex
\usepackage{xcolor}

\definecolor{myred}{HTML}{FF6B6B}
\definecolor{myyellow}{HTML}{FFD93D}
\definecolor{mygreen}{HTML}{6BCB77}
\definecolor{myblue}{HTML}{4D96FF}

\usepackage{amsmath, amsthm, amssymb, amsfonts,bm}
\usepackage{mathtools}
\mathtoolsset{showonlyrefs}

\usepackage{enumitem}
\setlist[enumerate]{label=\roman*.}

\def\vzero{{\mathbf{0}}}
\def\0{{\mathbf{0}}}
\def\vone{{\mathbf{1}}}

\def\vtheta{{\mathbf{\theta}}}

\def\vq{{\mathbf{q}}}

\def\vv{{\mathbf{v}}}

\def\vx{{\mathbf{x}}}

\def\vz{{\mathbf{z}}}

\def\vA{{\mathbf{A}}}
\def\vB{{\mathbf{B}}}
\def\vC{{\mathbf{C}}}
\def\vD{{\mathbf{D}}}

\def\vI{{\mathbf{I}}}

\def\vQ{{\mathbf{Q}}}

\def\vU{{\mathbf{U}}}
\def\vV{{\mathbf{V}}}

\def\vZ{{\mathbf{Z}}}

\def\vLambda{{\mathbf{\Lambda}}}
\def\vSigma{{\mathbf{\Sigma}}}

\def\sN{{\mathbb{N}}}

\def\sR{{\mathbb{R}}}
\def\sS{{\mathbb{S}}}

\def\iid{\overset{\mathrm{iid}}{\sim}}

\def\E{\displaystyle \mathbb{E}}
\def\Var{\displaystyle \mathrm{Var}}
\def\Cov{\displaystyle \mathrm{Cov}}

\newcommand{\loss}{\mathcal{L}}
\newcommand{\err}{\mathcal{E}}

\newcommand{\norm}[1]{\left\lVert#1\right\rVert}

\DeclareMathOperator*{\argmin}{arg\,min}

\DeclareMathOperator{\tr}{tr}
\DeclareMathOperator{\diag}{diag}

\newcommand{\figleft}{{\em (Left) }}

\newcommand{\figright}{{\em (Right) }}
\newcommand{\figtop}{{\em (Top) }}
\newcommand{\figbottom}{{\em (Bottom) }}
\newcommand{\captiona}{{\em (a) }}
\newcommand{\captionb}{{\em (b) }}
\newcommand{\captionc}{{\em (c) }}
\newcommand{\captiond}{{\em (d) }}

%% file: texts/0_intro.tex
\section{Introduction}
\label{sec:introduction}

Transformer-based large language models have demonstrated remarkable performance across a wide range of tasks~\citep{NIPS2017_3f5ee243, devlin-etal-2019-bert, radford2019language, touvron2023llama, team2023gemini, achiam2023gpt}. In practice, these pretrained models are often adapted to downstream tasks to further improve task performance~\citep{HAN2021225, LLMSurvey2023}. Two dominant paradigms have emerged for this adaptation: in-context learning and fine-tuning.

In-context learning is the ability of a pretrained model to adapt to a target task through \emph{few-shot} prompting with demonstrations~\citep{NEURIPS2020_1457c0d6, NEURIPS2022_9d560961}. However, this ability incurs substantial inference costs, as it relies on long prompts~\citep{NEURIPS2024_8cb564df}.

By contrast, fine-tuning is a process of adapting a pretrained model to a target task by updating its parameters through additional training~\citep{hu2022lora}. Despite the additional training cost, fine-tuning remains the standard approach in practice due to its strong and efficient \emph{zero-shot} performance at test time, as task-specific behavior is learned in the model rather than provided through long prompts.

Although both in-context learning and fine-tuning have been independently studied, their interaction is not fully understood~\citep{mosbach-etal-2023-shot, lampinen2025on}.
Recent empirical studies report a trade-off between these two paradigms: improving zero-shot performance through fine-tuning on a target task can degrade in-context learning~\citep{wang2024loss, wang2024twostage}. This degradation is particularly concerning, as it can limit the in-context learning ability of fine-tuned models on tasks that differ from those encountered during fine-tuning.

Motivated by these empirical observations, we analyze how fine-tuning on a target task alters attention parameters and how these changes affect in-context learning performance. Using linear attention models~\citep{katharopoulos2020transformers, schlag2021linear, ahn2024linear} as an analytically tractable setting, we derive closed-form solutions for the optimal parameters and the corresponding test errors under different fine-tuning regimes. These regimes are characterized by whether all attention parameters or only the value matrix is updated, and by whether auxiliary few-shot losses are incorporated. We also provide empirical evidence consistent with the qualitative predictions of our theory.

\newcommand{\besttag}{(\textcolor{myblue}{\textbf{best}})}
\newcommand{\worsttag}{(\textcolor{myred}{\textbf{worst}})}
\newcommand{\bettertag}{(\textcolor{myblue}{better})}
\newcommand{\worsetag}{(\textcolor{myred}{worse})}

\begin{table*}[t!]
\centering
\caption{Summary of our theoretical results in Section~\ref{sec:theory:optimal}. We consider four training regimes and report the test error (Definition~\ref{def:error}) evaluated at the optimal parameters that minimize the training objective for each regime. Fine-tuning is performed on data generated from the target task vector $\vtheta_0 \in \mathbb{R}^d$. Zero-shot errors are evaluated on the in-distribution task $\vtheta_0$ used for fine-tuning, whereas asymptotic few-shot errors ($m,n \to \infty$) are evaluated on both $\vtheta_0$ and the out-of-distribution task $-\vtheta_0$. 
The test error ranges from $\sigma^2$ {\besttag} to $\sigma^2 + \vtheta_0^\top \vSigma \vtheta_0$ \worsttag. The tags {\besttag, \bettertag, \worsetag, and \worsttag} indicate the relative ordering of test errors across regimes, with comparisons made separately within the zero-shot and asymptotic few-shot settings. 
Although the asymptotic few-shot test errors are smaller than the zero-shot error in some cases, this advantage relies on the assumption of infinitely many shots. In practical settings with a limited number of shots, fine-tuned models can achieve better in-distribution zero-shot performance than few-shot methods. The last column references the corresponding theoretical results for each training regime, including the analysis of few-shot errors with finite shots.}
\label{table:summary}
\resizebox{\textwidth}{!}{%
\begin{tabular}{c p{0.31\textwidth}| c | c c |c}
\toprule
Section &
Training Regime &
Zero-shot error on $\theta_0$ &
Few-shot error on $\theta_0$ &
Few-shot error on $-\theta_0$ &
Results \\
 &
 &
(in-distribution) &
(in-distribution) &
(out-of-distribution)&
 \\
\midrule
§\ref{sec:theory:optimal:pretrain}
& Pretraining
& $\sigma^2+\vtheta_0^\top\vSigma\vtheta_0$
& $\sigma^2$
& $\sigma^2$
& Corollary~\ref{thm:optimal:pretrain} \\
& {}
& \worsttag
& \besttag
& \besttag
& Corollary~\ref{thm:error:pretrained} \\
\midrule
§\ref{sec:theory:optimal:ft:all}
& Fine-tuning \textbf{all parameters}
& $\sigma^2$
& $\sigma^2+\vtheta_0^\top\vSigma\vtheta_0$
& $\sigma^2+\vtheta_0^\top\vSigma\vtheta_0$
& Theorem~\ref{thm:optimal:full-finetune} \\
& with the \textbf{zero-shot} loss
& \besttag
& \worsttag
& \worsttag
& Corollary~\ref{thm:error:full-finetune} \\
\midrule
§\ref{sec:theory:optimal:ft:value}
& Fine-tuning the \textbf{value matrix}
& $\sigma^2+\frac{2}{d+4}\vtheta_0^\top\vSigma\vtheta_0$
& $\sigma^2+\frac{1}{(d+4)^2}\vtheta_0^\top\vSigma\vtheta_0$
& $\sigma^2+\frac{1}{(d+4)^2}\vtheta_0^\top\vSigma\vtheta_0$
& Theorem~\ref{thm:optimal:value-matrix} \\
& with the \textbf{zero-shot} loss
& \bettertag
& \bettertag
& \bettertag
& Proposition~\ref{thm:optimal:w:zs} \\
&
&
&
&
& Corollary~\ref{thm:optimal:w:avg} \\
\midrule
§\ref{sec:theory:optimal:ft:value:with:fs}
& Fine-tuning the \textbf{value matrix}
& $\sigma^2+\frac{2}{d+4}\vtheta_0^\top\vSigma\vtheta_0$
& $\sigma^2$
& $\sigma^2+\frac{4}{(d+4)^2}\vtheta_0^\top\vSigma\vtheta_0$
& Theorem~\ref{thm:optimal:w}  \\
& with the \textbf{zero-shot} and \textbf{few-shot} losses
& \bettertag
& \besttag
& \worsetag
& Proposition~\ref{thm:optimal:w:zs+fs} \\
\bottomrule
\end{tabular}%
}
\end{table*}

Our main findings are summarized below and in Table~\ref{table:summary}.

\begin{itemize}[leftmargin=*, itemsep=0pt, topsep=0pt, partopsep=0pt]
    \item Fine-tuning all attention parameters to minimize the zero-shot loss can substantially degrade few-shot performance.
    \item Restricting fine-tuning to the value matrix while freezing the remaining attention parameters mitigates this degradation and preserves few-shot performance.
    \item Fine-tuning with the zero-shot loss and an auxiliary few-shot loss further improves few-shot performance on the target task, but may reduce few-shot performance on tasks that differ from those used during fine-tuning.
    \vspace{-4pt}
\end{itemize}

%% file: texts/1_relatedwork.tex
\section{Related Work}

\paragraph{Understanding In-context Learning of Transformers.}
There are studies that explain why Transformer models exhibit in-context learning~\citep{NEURIPS2022_c529dba0, min-etal-2022-rethinking, akyurek2023what, dai2023why, NEURIPS2023_b2e63e36, pmlr-v202-li23l, shen2024do, mahankali2024one, zhang2024trained, lin2024dual}. One prominent line of work interprets in-context learning as implicit Bayesian inference over latent task variables~\citep{xie2022an}. Complementary work provides explanations by showing that simplified Transformer architectures can implement learning algorithms from the prompt: \citet{von2023transformers} prove that a linear self-attention layer is equivalent to a step of gradient descent, and \citet{ahn2023transformers} further show that linear Transformers trained on regression tasks learn to implement preconditioned gradient descent for in-context learning. Motivated by this line of work, we adopt a theoretical formulation of Transformer models that assumes the pretrained model exhibits in-context learning capability.

\paragraph{Fine-Tuning Each Projection Matrix in Transformers.}
Recent work has studied matrix-wise fine-tuning of attention by updating the query, key, and value matrices separately, and finds that fine-tuning the value matrix often yields better performance than tuning the key or query matrices~\citep{ijcai2025p760}. For example, Table~5 of~\citet{hu2022lora} shows that low-rank adaptation on the value matrix achieves stronger performance on WikiSQL~\citep{zhong2017seq2sql} than adapting the query or key matrices, while attaining performance comparable to that of fine-tuning all matrices under the two-standard-error criterion. However, these studies primarily evaluate performance on the target-task objective and do not assess whether fine-tuning degrades a pretrained model’s intrinsic capabilities, such as in-context learning. In our work, we show that value-matrix fine-tuning can achieve competitive target-task performance while preserving in-context learning capability, thereby maintaining general-purpose few-shot performance.

%% file: texts/2_body.tex
\section{Problem Setup}
\label{sec:problem-setup}

We begin by clarifying notation.
Let $\vzero_d := [0 \cdots 0]^\top$ and $\vone_d := [1 \cdots 1]^\top$, where $d$ denotes the vector dimension. Let $\vI_d$ denote the $d \times d$ identity matrix. For $n \in \sN$, define $[n] := \{1,2,\cdots,n\}$, and let $\diag(a_i)_{i \in [d]}$ denote the diagonal matrix with $i$-th diagonal entry $a_i$. We use block concatenation; for instance, $[\vZ\ \vz]$ denotes the matrix obtained by appending the column vector $\vz$ to the right of $\vZ$. Moreover, $[\vZ]_{i,j}$ and $[\vz]_i$ denote the $(i,j)$-th entry of $\vZ$ and the $i$-th entry of $\vz$, respectively.

In this paper, we compare various Transformers~\citep{vaswani2017attention} in terms of their test errors in zero-shot and few-shot settings. Following prior work~\citep{von2023transformers, ahn2023transformers, zhang2024trained, mahankali2024one, li2024finegrained, kim2024transformers, huang2024incontext, wu2024how, gozeten2025testtime, zhang2025training, lee2025prior}, we focus on linear Transformers~\citep{katharopoulos2020transformers, schlag2021linear} for linear regression tasks. We formalize the regression tasks, data distribution, Transformer architecture, evaluation metrics, and training objectives as follows.

\paragraph{Regression Tasks and Data Distribution.}

We consider linear regression tasks specified by a task vector $\vtheta \in \mathbb{R}^d$, where a $d$-dimensional input $\vx \in \mathbb{R}^d$ is mapped to a response $y \in \mathbb{R}$. Given a task $\vtheta$, data points $(\vx,y)$ are generated independently as $\vx \sim \mathcal{N}(\vzero_d,\vSigma)$ and $y = \vtheta^\top \vx + e$, where $e \sim \mathcal{N}(0,\sigma^2)$ with $\sigma^2>0$. The covariance matrix $\vSigma := \vU\vLambda\vU^\top \in \mathbb{S}_{++}^d$ is positive definite, where $\vU$ is orthogonal and $\vLambda=\diag(\lambda_i)_{i\in[d]}$ with $\lambda_1\ge\cdots\ge\lambda_d>0$.

For a given task $\vtheta$, we sample a dataset $\{(\vx_i,y_i)\}_{i\in[n+1]}$. We define $\vz_i := \begin{bmatrix} 1 & \vx_i^\top & y_i \end{bmatrix}^\top \in \mathbb{R}^{d+2}$ for $i\in[n]$, and $\vz_{n+1} := \begin{bmatrix} 1 & \vx_{n+1}^\top & 0 \end{bmatrix}^\top \in \mathbb{R}^{d+2}$, whose response entry is masked to zero. We then define the prompt matrix as
\begin{equation}
    \label{eq:prompt:matrix}
    \begin{bmatrix} \vZ_{[n]} & \vz_{n+1} \end{bmatrix}
    =
    \begin{bmatrix}
        \begin{bmatrix}
            1 & \cdots & 1 \\ 
            \vx_{1} & \cdots & \vx_{n} \\ 
            y_{1} & \cdots & y_{n} 
        \end{bmatrix}
        &
        \begin{bmatrix}
            1 \\ 
            \vx_{n+1} \\ 
            0
        \end{bmatrix}
    \end{bmatrix},
\end{equation}
where $\vZ_{[n]} := [\vz_1 \cdots \vz_n] \in \mathbb{R}^{(d+2)\times n}$ stacks the $n$ context examples.
Hereafter, we omit the subscript $n+1$ on $\vx_{n+1}$, $y_{n+1}$, and $\vz_{n+1}$ for notational simplicity.

\paragraph{Transformer Architecture.}

To model the regression task, we consider a linear self-attention layer that replaces the softmax function in standard self-attention with a linear function. For a prompt $\vZ$ with $\mathrm{ncol}(\vZ)$ columns, the linear attention model $f(\cdot)$ is defined as
\begin{align}
\label{eq:linear:attention:model}
    &f(\vZ;\vV, \vQ)
    :=
    \vZ + \tfrac{1}{\mathrm{ncol}(\vZ)} \cdot \vV \vZ \vZ^\top \vQ \vZ,
    \\
    &\vV :=
    {
    \begin{bmatrix}
    0 & \vzero_d^\top & 0 \\
    \vzero_d & \vV_{11} & \vv_{12} \\
    0 & \vv_{21}^\top & v_{22}
    \end{bmatrix}
    }\!,
    \quad
    \vQ :=
    {
    \begin{bmatrix}
    q & \vzero_d^\top & 0 \\
    \vzero_d & \vQ_{11} & \vq_{12} \\
    0 & \vq_{21}^\top & q_{22}
    \end{bmatrix}
    }\!,
\end{align}
where $\vV,\vQ \in \mathbb{R}^{(d+2)\times(d+2)}$ denote the value projection matrix and the merged query-key projection matrix, respectively. Here $\vV_{11},\vQ_{11}\in\mathbb{R}^{d\times d}$ are matrices, $\vv_{12}, \vv_{21}, \vq_{12}, \vq_{21}\in\mathbb{R}^{d}$ are vectors, and $v_{22}, q, q_{22}$ are scalars. Our formulation generalizes existing linear attention models~\citep{ahn2023transformers, zhang2024trained, zhang2025training}; in particular, setting $q=0$ in~\eqref{eq:linear:attention:model} recovers their model. Moreover, \citet{ahn2024linear} show that linear attention models trained on regression tasks serve as a proxy for studying the optimization of Transformers, exhibiting behavior similar to standard Transformers in practice.

Based on the model $f(\cdot)$ in~\eqref{eq:linear:attention:model}, we take the last entry of its output as the prediction of $y_{n+1}$ from $\vx_{n+1}$. Specifically,
\begin{align}
    \!\!\!
    \hat{y}_{\mathrm{FS}}(\vZ_{[n]}, \vx ; \vV, \vQ)
    &:=
    \left[
    f\!\left(
    \begin{bmatrix} 
        \vZ_{[n]} \!\!&\!\! \vz
    \end{bmatrix}\!\!;\vV, \vQ
    \right)
    \right]_{d+2, n+1} \!
    ,
    \label{eq:proof:prediction:icl}
    \\
    \hat{y}_{\mathrm{ZS}}(\vx; \vV, \vQ)
    &:=
    \left[
    f\left( \vz ;\vV, \vQ
    \right)
    \right]_{d+2}
    ,
    \label{eq:proof:prediction:zs}
\end{align}
where $\hat{y}_{\mathrm{FS}}$ in~\eqref{eq:proof:prediction:icl} denotes the few-shot prediction from a prompt containing $n$ examples, whereas $\hat{y}_{\mathrm{ZS}}$ in~\eqref{eq:proof:prediction:zs} denotes the zero-shot prediction from a prompt containing no examples, i.e., using only $\vx_{n+1}$.

\paragraph{Evaluation Metric and Training Objective.}
We evaluate model performance under zero-shot and few-shot settings using the following test error.

\begin{definition}[Test Error]
\label{def:error}
For $n \in \sN$ and a task $\vtheta \in \sR^d$, the $n$-shot test error on the task $\vtheta$ is defined as
\begin{align}
    \err(\vV, \vQ; n, \vtheta)
    :=
    \E_{\vZ_{[n]},\,\vx,\,y \mid \vtheta}
    \!
    \left[
    \big(
        \hat y_{\mathrm{FS}}(\vZ_{[n]}, \vx; \vV, \vQ) \!-\! y
    \big)^2
    \right]
    \!
    ,
\end{align}
and the zero-shot test error on the task $\vtheta$ is defined as
\begin{align}
    \err(\vV, \vQ; 0, \vtheta)
    :=
    \E_{\vx,\,y \mid \vtheta}
    \left[
    \big(
        \hat y_{\mathrm{ZS}}(\vx; \vV, \vQ) - y
    \big)^2
    \right]
    .
\end{align}
\end{definition}

Motivated by prior work demonstrating that Transformers exhibit in-context learning on linear regression tasks~\citep{zhang2024trained, ahn2023transformers, li2024finegrained, gozeten2025testtime}, we model pretraining by minimizing the expected $m$-shot test error averaged over tasks. This objective provides an analytical surrogate for studying how in-context learning can emerge from pretraining in a tractable setting.

Concretely, let $m\in\sN$ denote the number of context examples used in the pretraining loss, which is defined as
\begin{equation}
    \label{eq:pretrain:loss}
    \loss(\vV, \vQ)
    :=
    \E_{\vtheta}
    \big[ \err(\vV, \vQ; m, \vtheta) \big]
    .
\end{equation}
$m$ may differ from the number of examples $n$ used at test time. We fix $m$ for tractability, while pretraining may involve multiple context lengths and more general objectives.

Starting from the model pretrained via~\eqref{eq:pretrain:loss}, we consider fine-tuning to improve zero-shot performance on the target task vector $\vtheta_0 \in \sR^d$, by minimizing the zero-shot prompt loss
\begin{equation}
    \label{eq:finetune:loss:zs}
    \loss_{\mathrm{ZS}}(\vV, \vQ; \vtheta_0)
    :=
    \err(\vV, \vQ; 0, \vtheta_0)
    .
\end{equation}

We characterize the resulting models under these training objectives and analyze the zero-shot and few-shot test errors defined in Definition~\ref{def:error}.

\section{Optimal Parameters and Test Error of Learned Linear Attention Models}
\label{sec:theory:optimal}

In this section, we theoretically analyze the linear attention model $f(\cdot)$ in~\eqref{eq:linear:attention:model} under different training regimes.
Starting from a pretrained model, we consider fine-tuning variants that improve zero-shot performance on a target task $\vtheta_0$ by minimizing the zero-shot prompt loss in~\eqref{eq:finetune:loss:zs}.
These variants differ in which attention parameters are updated and whether an auxiliary few-shot prompt loss is included.

\subsection{Pretraining}
\label{sec:theory:optimal:pretrain}

First, we consider a linear attention model pretrained using the loss $\loss(\vV, \vQ)$ in~\eqref{eq:pretrain:loss}. During pretraining, the training data are generated from task vectors $\vtheta$ sampled from $\mathcal{N}(\vzero_d, \vI_d)$. In the zero-shot setting, where $y$ is predicted directly from $\vx$ without observing any additional examples, a single input $\vx$ does not provide information about the underlying task vector $\vtheta$. Consequently, the Bayesian optimal predictor under squared loss relies on the prior mean $\E[\vtheta]=\vzero_d$, yielding zero prediction for all $\vx\in\sR^d$. By contrast, when few-shot examples are provided, the model can implicitly infer the shared task vector and adapt its predictions accordingly.

A linear attention model pretrained by minimizing the loss $\loss(\vV, \vQ)$ in~\eqref{eq:pretrain:loss} exhibits this behavior, which is formalized by the following corollaries. Proofs of all statements in this section are provided in Appendix~\ref{appendix:proof:optimal} and~\ref{appendix:proof:error}.

\begin{corollary}[Optimal Parameters of Pretrained Models; Corollary of Theorem~1 in \citet{ahn2023transformers}]
    \label{thm:optimal:pretrain}
Suppose that $\vV$ and $\vQ$ in \eqref{eq:linear:attention:model} satisfy $q \ge 0$ and that $\vQ_{11}$ is positive definite.
Consider the loss $\loss(\vV,\vQ)$ in \eqref{eq:pretrain:loss} with context length $m\in\sN$.
Then, for sufficiently large $m$, the following parameters globally minimize the loss:
\begin{equation}
    \resizebox{0.9\linewidth}{!}{$
    \hat{\vV}\!\!=\!\!
    \begin{bmatrix}
    \cdot\;\; \cdot\!\! &\!\! \cdot \\
    \cdot\;\; \cdot\!\! &\!\! \cdot \\
    \cdot\;\; \cdot\!\! &\!\!\! \tfrac{m}{m+1+d}
    \end{bmatrix}\!\!,\;
    \hat{\vQ}\!\!=\!\!
    \begin{bmatrix}
    \cdot & \cdot & \cdot \\
    \cdot &
    \!\!\!
    \vU \diag\!\Big(
    \tfrac{m+1+d}{(m+1)\lambda_i+\tr(\vSigma)}
    \Big)_{i\in[d]}\!\!\!\!\!\! \vU^\top
    \!\!\!
    & \cdot \\
    \cdot & \cdot & \cdot
    \end{bmatrix}\!\!,\!\!\!
    $}
    \label{eq:parameter_pretrain}
\end{equation}
   All unspecified blocks (denoted by $\cdot$) are zero.
   Moreover, $\vU \diag\!\Big(
    \tfrac{m+1+d}{(m+1)\lambda_i+\tr(\vSigma)}
    \Big)_{i\in[d]} \vU^\top \to \vSigma^{-1}$ as $m\to\infty$.
\end{corollary}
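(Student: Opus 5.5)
The plan is to reduce the general parametrization in \eqref{eq:linear:attention:model} to the restricted model of \citet{ahn2023transformers}, invoke their Theorem~1 for the reduced problem, and then read off the stated blocks. First I will expand the prediction $\hat y_{\mathrm{FS}}$ in \eqref{eq:proof:prediction:icl}. Only the last row $[0\ \vv_{21}^\top\ v_{22}]$ of $\vV$ matters. With $\vz=[1\ \vx^\top\ 0]^\top$ we have $\vQ\vz=[q\ \ (\vQ_{11}\vx)^\top\ \ \vq_{21}^\top\vx]^\top$. Hence
\begin{equation}
(m+1)\,\hat y_{\mathrm{FS}}=\sum_{i}\big(\vv_{21}^\top\vx_i+v_{22}y_i\big)\big(q+\vx_i^\top\vQ_{11}\vx+y_i\,\vq_{21}^\top\vx\big),
\end{equation}
where the sum runs over the $m$ context columns and the query column. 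The query column has $y=0$, so it contributes only $\vv_{21}^\top\vx\,(q+\vx^\top\vQ_{11}\vx)$. This is where the ``$m+1$'' normalization enters.

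Next I will split the prediction into a part odd in the responses and a part even in them. Under $\vtheta\mapsto-\vtheta$ together with $e\mapsto-e$, every $y_i$ and the target $y$ flip sign while the law of the data is unchanged. The odd part is
\begin{equation}
\tfrac{1}{m+1}\sum_i y_i\big(v_{22}q+\vx_i^\top\vGamma\vx\big),\qquad \vGamma:=v_{22}\vQ_{11}+\vv_{21}\vq_{21}^\top .
\end{equation}
The even part consists of $\vv_{21}^\top\vx_i(q+\vx_i^\top\vQ_{11}\vx)$ and $v_{22}y_i^2\vq_{21}^\top\vx$. The symmetry gives
\begin{equation}
\loss=\E[(\text{odd}-y)^2]+\E[\text{even}^2],
\end{equation}
so the even part should vanish at the optimum; setting $\vv_{21}=\vq_{21}=\vzero_d$ achieves this. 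Inside the odd part, the constant $c=v_{22}q$ has zero cross-moment with both $y$ and the $\vx_i^\top\vGamma\vx$ terms: these cross-moments involve $\E[\vx_i^\top\vx]=0$ or odd Gaussian moments. So $c$ only adds a nonnegative term, and we take $q=0$, which is consistent with $q\ge0$. The remaining blocks $\vV_{11},\vv_{12},\vq_{12},q_{22}$ never enter the prediction and are set to zero.

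The problem then reduces to minimizing over $\vGamma$ the loss of $\hat y=\frac{1}{m+1}\sum_{i\le m}y_i\vx_i^\top\vGamma\vx$, which is exactly the single-layer preconditioned-GD objective of \citet{ahn2023transformers}. Their Theorem~1 (equivalently, a direct computation using the Gaussian fourth moment $\E[\vx\vx^\top\vA\vx\vx^\top]=\vSigma(\vA+\vA^\top)\vSigma+\tr(\vA\vSigma)\vSigma$) gives the optimum in the eigenbasis of $\vSigma$: $\vGamma^\star=\vU\diag\big(\tfrac{m}{(m+1)\lambda_i+\tr(\vSigma)}\big)\vU^\top$, up to how their normalization is matched to ours. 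The step I expect to be the main obstacle is this bookkeeping. I must track the $1/(m+1)$ normalization and the extra query column carefully so that the constants $m+1$ and $\tr(\vSigma)$ come out exactly, and I must verify that ``sufficiently large $m$'' is only needed where Ahn et al.\ need it. After that, I factor $\vGamma^\star=v_{22}\vQ_{11}$ with $v_{22}=\tfrac{m}{m+1+d}$ and $\vQ_{11}$ as displayed, which is positive definite as required. Since only the product is identified, this is one global minimizer, which is all the statement claims. Finally, each diagonal entry $\tfrac{m+1+d}{(m+1)\lambda_i+\tr(\vSigma)}\to 1/\lambda_i$ as $m\to\infty$, so $\hat{\vQ}_{11}\to\vU\vLambda^{-1}\vU^\top=\vSigma^{-1}$.
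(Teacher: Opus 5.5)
Your reduction is sound, and it is more complete than the paper's. The $(\vtheta,e)\mapsto(-\vtheta,-e)$ symmetry kills the cross term between the response-odd and response-even parts. The $v_{22}q$ term is an orthogonal nonnegative addition. The loss then becomes a strictly convex quadratic in $\Gamma:=v_{22}\vQ_{11}+\vv_{21}\vq_{21}^\top$, and none of this uses $q\ge 0$ or $\vQ_{11}\succ 0$. By contrast, the paper rules out $q\neq 0$ only inside the family $v_{22}=1$, $\vq_{21}=\vzero_d$, $\vQ_{11}=\vQ_{11}^\top$ covered by its closed-form error lemma.

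The gap is the step you deferred as bookkeeping: it cannot produce the displayed constants. With $\mathbf{g}:=\frac{1}{m+1}\sum_{i\le m}y_i\vx_i$ you get
\[
\mathbb{E}[\mathbf{g}\mathbf{g}^\top]=\tfrac{m}{(m+1)^2}\big((m+1)\vSigma^2+(\tr(\vSigma)+\sigma^2)\vSigma\big),\qquad \mathbb{E}[\mathbf{g}\vtheta^\top]=\tfrac{m}{m+1}\vSigma .
\]
So the unique minimizer is $\Gamma^\star=(m+1)\big((m+1)\vSigma+(\tr(\vSigma)+\sigma^2)\vI_d\big)^{-1}$, with eigenvalues $\frac{m+1}{(m+1)\lambda_i+\tr(\vSigma)+\sigma^2}$. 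The displayed pair instead has $v_{22}\hat{\vQ}_{11}$ with eigenvalues $\frac{m}{(m+1)\lambda_i+\tr(\vSigma)}$. These agree only if $(m+1)\lambda_i+\tr(\vSigma)=m\sigma^2$ for every $i$.

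The displayed product is the exact minimizer of a different objective. That objective uses the $\frac1m$-normalized prediction $\hat y_{\mathrm{Net\text{-}FS}}$ (context columns only) and sets $\sigma^2=0$, which is the setting of Ahn et al.'s Theorem~1. Your own argument shows that every global minimizer satisfies $v_{22}\vQ_{11}+\vv_{21}\vq_{21}^\top=\Gamma^\star$. Hence the displayed parameters are not exact global minimizers for any finite $m$. There is also no place to put ``sufficiently large $m$'' where Ahn et al.\ need it, because their theorem holds for every prompt length.

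The paper reaches the displayed formula by approximation, not exact bookkeeping. After rescaling to $v_{22}=1$ and showing $q=0$, it replaces $\hat y_{\mathrm{FS}}=\frac{m}{m+1}\hat y_{\mathrm{Net\text{-}FS}}$ (exact once $\vv_{21}=\vzero_d$) by $\hat y_{\mathrm{Net\text{-}FS}}$ ``for sufficiently large $m$''. It then quotes Ahn et al.'s noiseless optimum. That replacement is the only role of the large-$m$ hypothesis.

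The statement is therefore true only in this approximate sense: the displayed $v_{22}\hat{\vQ}_{11}$ and $\Gamma^\star$ differ at order $1/m$ and share the limit $\vSigma^{-1}$. To finish, you must either make the same surrogate replacement explicitly (optimize the $\frac1m$-normalized noiseless objective) or state the corrected minimizer $\Gamma^\star$. The claim $\hat{\vQ}_{11}\to\vSigma^{-1}$ holds either way.
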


\begin{corollary}[Test Error of Pretrained Models]
    \label{thm:error:pretrained}
    Let $\hat{\vV}$ and $\hat{\vQ}$ be the parameters in \eqref{eq:parameter_pretrain} which minimize the loss $\loss(\vV,\vQ)$ in \eqref{eq:pretrain:loss}. Then, for any $\vtheta\in\sR^d$, the following holds:
    \begin{align}
        \sigma^2+\vtheta^\top\vSigma\vtheta=
        \err(\hat{\vV},\hat{\vQ};0,\vtheta) \geq \err(\hat{\vV},\hat{\vQ};n,\vtheta),
    \end{align}
    if $n\in \sN$ is sufficiently large such that
    \begin{align}
        \label{eq:condition:large:n:pretrain}
        n
        &\geq
        \sum_{i\in[d]} a_i^2 \cdot \frac{\lambda_1 \norm{ \vtheta}_2^2 + \sigma^2}{a_d(2-a_d) \cdot \lambda_1 \norm{ \vtheta}_2^2} - 1
    \end{align}
    where $a_i := \frac{m\lambda_i}{(m+1)\lambda_i+\tr(\vSigma)}$.
    Moreover, the $n$-shot error is monotone decreasing in $n$, and thus the minimum error is
    \begin{equation}
        \resizebox{\linewidth}{!}{$
        \lim_{n\to \infty}
        \err(\hat{\vV},\hat{\vQ};n,\vtheta)
        =
        \sigma^2
        +
        \vtheta^\top\vU \diag( (a_i\!-\!1)^2\lambda_i )_{i\in[d]} \vU^\top\vtheta
        ,$}
    \end{equation}
    which converges to $\sigma^2$ as $m \to \infty$, since $a_i \to 1$ for all $i$.
\end{corollary}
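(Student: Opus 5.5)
The plan is to compute both errors in closed form at $(\hat{\vV},\hat{\vQ})$ and then compare them.

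\textbf{Step 1: reduce both predictions to simple formulas.} The last row of $\hat{\vV}$ is $[0\ \vzero_d^\top\ v_{22}]$ with $v_{22}=\tfrac{m}{m+1+d}$. The only nonzero block of $\hat{\vQ}$ is $\vQ_{11}$, so $\hat{\vQ}\vz=[0\ (\vQ_{11}\vx)^\top\ 0]^\top$.

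For the zero-shot case, the prompt is the single column $\vz$, whose label entry is $0$. The $(d+2)$-th row of $\vz\vz^\top$ therefore vanishes, so $\hat y_{\mathrm{ZS}}=0$. This gives $\err(\hat{\vV},\hat{\vQ};0,\vtheta)=\E[y^2]=\sigma^2+\vtheta^\top\vSigma\vtheta$.

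For the few-shot case, the last row of $\vZ\vZ^\top$ with $\vZ=[\vZ_{[n]}\ \vz]$ is $\sum_{i\le n} y_i\vz_i^\top$, because the query label is masked. Hence
\begin{equation}
\hat y_{\mathrm{FS}}=\vx^\top \hat{\vw},\qquad \hat{\vw}:=\tfrac{1}{n+1}\vA\sum_{i\in[n]} y_i\vx_i,\qquad \vA:=v_{22}\vQ_{11}=\vU\diag(a_i/\lambda_i)_{i\in[d]}\vU^\top.
\end{equation}
Since $\vx$ is independent of the context and of the noise $e$, we get
\begin{equation}
\err(\hat{\vV},\hat{\vQ};n,\vtheta)=\sigma^2+\E\big[(\hat{\vw}-\vtheta)^\top\vSigma(\hat{\vw}-\vtheta)\big].
\end{equation}

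\textbf{Step 2: bias--variance computation.} I would rotate coordinates by $\vU$, so that $\vSigma=\vLambda$ and $\vA=\diag(a_i/\lambda_i)$, and write $\tilde{\vtheta}=\vU^\top\vtheta$.

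For the mean, $\E[y_i\vx_i]=\vSigma\vtheta$, so $\E\hat{\vw}=\tfrac{n}{n+1}\vU\diag(a_i)\vU^\top\vtheta$.

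For the second moment, I would use the Gaussian identity $\E[\vx\vx^\top\vtheta\vtheta^\top\vx\vx^\top]=2\vSigma\vtheta\vtheta^\top\vSigma+(\vtheta^\top\vSigma\vtheta)\vSigma$, together with $\E[e^2\vx\vx^\top]=\sigma^2\vSigma$. This yields the per-sample covariance $\vSigma\vtheta\vtheta^\top\vSigma+(\vtheta^\top\vSigma\vtheta+\sigma^2)\vSigma$.

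Putting these together gives the explicit formula
\begin{equation}
\err=\sigma^2+\sum_i\lambda_i\tilde\theta_i^2\Big(1-\tfrac{n}{n+1}a_i\Big)^2+\tfrac{n}{(n+1)^2}\sum_i a_i^2\Big(\lambda_i\tilde\theta_i^2+\tfrac{\vtheta^\top\vSigma\vtheta+\sigma^2}{\lambda_i}\lambda_i\Big),
\end{equation}
with the variance term simplified carefully. The limit as $n\to\infty$ is then immediate: the variance term is $O(1/n)$ and $\tfrac{n}{n+1}\to 1$, leaving $\sigma^2+\sum_i(a_i-1)^2\lambda_i\tilde\theta_i^2$. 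Since $a_i\to1$ as $m\to\infty$, this tends to $\sigma^2$.

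\textbf{Step 3: monotonicity and the threshold on $n$.} Set $t=\tfrac{n}{n+1}\in[\tfrac12,1)$. The bias term in coordinate $i$ is $(1-ta_i)^2$ with $a_i<1$, which is decreasing in $t$. The variance terms are $\tfrac{t(1-t)}{n}$-type multiples of positive constants, i.e. proportional to $\tfrac{n}{(n+1)^2}$, which is decreasing for $n\ge1$. Both pieces therefore decrease in $n$, and I would check this by differentiating in $n$ coordinatewise.

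For the threshold, I would compare the two errors via
\begin{equation}
\sigma^2+\vtheta^\top\vSigma\vtheta-\err(\hat{\vV},\hat{\vQ};n,\vtheta)=\sum_i\lambda_i\tilde\theta_i^2\big(2ta_i-t^2a_i^2\big)-\mathrm{Var}.
\end{equation}
The gain term is bounded below by noting $t a_i(2-ta_i)\ge$ a quantity controlled by $a_d(2-a_d)$, using monotonicity of $a_i$ in $\lambda_i$ and $ta_i\le1$. It is then at least of order $a_d(2-a_d)\lambda_d\|\vtheta\|^2$ (or $\lambda_1\|\vtheta\|_2^2$ after normalizing as in \eqref{eq:condition:large:n:pretrain}). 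The variance term is bounded above by $\tfrac{1}{n+1}\sum_i a_i^2(\lambda_1\|\vtheta\|_2^2+\sigma^2)$, using $\lambda_i\tilde\theta_i^2\le\lambda_1\|\vtheta\|_2^2$ and $\vtheta^\top\vSigma\vtheta\le\lambda_1\|\vtheta\|_2^2$. Requiring the gain to dominate the variance bound rearranges to \eqref{eq:condition:large:n:pretrain}.

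\textbf{Main obstacle.} I expect the hardest part to be matching exactly the constants in \eqref{eq:condition:large:n:pretrain}, specifically obtaining the lower bound on the bias improvement with the factor $a_d(2-a_d)\lambda_1\|\vtheta\|_2^2$ rather than one involving $\lambda_d$. My first attempt would be to bound using $t\ge\tfrac12$ and the fact that $x\mapsto x(2-x)$ is increasing on $[0,1]$. If that does not reproduce the stated constant, I would fall back on an upper bound of the variance normalized per eigendirection.
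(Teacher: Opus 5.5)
Your Steps 1--2 are correct: after simplification, your closed form agrees with the one the paper extracts from Lemma~\ref{thm:appendix:error}. Your monotonicity and limit arguments also go through. This direct estimator/bias--variance route is more elementary than the paper's, which obtains the threshold by specializing the $\vA,\vB,\vC$ criterion of Theorem~\ref{thm:appendix:error:diff}.

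The gap is Step~3, and it cannot be closed. The threshold \eqref{eq:condition:large:n:pretrain} is false whenever $\sigma^2>0$ and $\lambda_1>\lambda_d$. To see this, use your notation $t=\tfrac{n}{n+1}$ and $\tilde\theta_i=[\vU^\top\vtheta]_i$. Merge the $t^2a_i^2\lambda_i\tilde\theta_i^2$ part of your bias with the $\tfrac{n}{(n+1)^2}a_i^2\lambda_i\tilde\theta_i^2$ part of your variance, using $t^2+t(1-t)=t$. Your formula then becomes the exact identity
\[
\err(\hat{\vV},\hat{\vQ};0,\vtheta)-\err(\hat{\vV},\hat{\vQ};n,\vtheta)
=\frac{n}{(n+1)^2}\Big[(n+1)\sum_{i\in[d]}a_i(2-a_i)\lambda_i\tilde\theta_i^2-\Big(\sum_{i\in[d]}a_i^2\Big)\big(\vtheta^\top\vSigma\vtheta+\sigma^2\big)\Big].
\]
For $\vtheta$ along the last column of $\vU$, the positive term equals $a_d(2-a_d)\lambda_d\norm{\vtheta}_2^2$. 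So few-shot beats zero-shot if and only if $n+1\ge\frac{\sum_i a_i^2}{a_d(2-a_d)}\big(1+\frac{\sigma^2}{\lambda_d\norm{\vtheta}_2^2}\big)$, which has a strictly larger threshold than the stated condition.

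A concrete counterexample: take $d=2$, $\vSigma=\diag(100,1)$ (so $\vU=\vI_2$), $\sigma^2=10$, $\vtheta=(0,1)^\top$, and $m=10^4$. The stated threshold is about $1.18$, but the true one is about $20.8$. So for every $2\le n\le 20$, the $n$-shot error exceeds the zero-shot error $11$; at $n=2$ it is about $15.2$.

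Hence the move in your Step~3 that bounds the gain below ``by $a_d(2-a_d)\lambda_1\norm{\vtheta}_2^2$ after normalizing'' has no justification, and neither of your fallbacks can rescue it. Your variance bound also drops a term: applying both of your inequalities yields $2\lambda_1\norm{\vtheta}_2^2+\sigma^2$, not $\lambda_1\norm{\vtheta}_2^2+\sigma^2$. The merge above avoids this.

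The paper's proof reaches the $\lambda_1$ constant only through a direction error at \eqref{eq:proof:trace:d:m:delta:2}. It first shows $\vtheta^\top\vC\vtheta+\tr(\vD^2)\sigma^2\le X\norm{\vSigma^{1/2}\vtheta}_2^2+\tr(\vD^2)\sigma^2$ with $X:=\tr(\vD^2)-(n+1)a_d(2-a_d)<0$. It then substitutes the \emph{upper} bound $\norm{\vSigma^{1/2}\vtheta}_2^2\le\lambda_1\norm{\vtheta}_2^2$. Because $X<0$, this lowers the right-hand side, so making the substituted expression nonpositive does not make the original one nonpositive.

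The valid version uses $\norm{\vSigma^{1/2}\vtheta}_2^2\ge\lambda_d\norm{\vtheta}_2^2$. This gives \eqref{eq:condition:large:n:pretrain} with $\lambda_1$ replaced by $\lambda_d$ (or by $\vtheta^\top\vSigma\vtheta/\norm{\vtheta}_2^2$). In the isotropic case nothing changes, consistent with Corollary~\ref{thm:apendix:fs:isotropic:iff}.

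Your route proves this corrected statement immediately. Combine the identity above with $\sum_i a_i(2-a_i)\lambda_i\tilde\theta_i^2\ge a_d(2-a_d)\,\vtheta^\top\vSigma\vtheta\ge a_d(2-a_d)\lambda_d\norm{\vtheta}_2^2$. This uses only that $a_i$ increases with $\lambda_i$, that $a_i\in(0,1)$, and that $x\mapsto x(2-x)$ increases on $[0,1]$. So aim Step~3 at that target rather than at the stated constant.
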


\paragraph{Few-Shot Performance Outperforms Zero-Shot Only with Sufficient Demonstrations.}
Corollary~\ref{thm:error:pretrained} shows that the few-shot performance improves monotonically as the number of shots $n$ increases, but can be worse than the zero-shot performance for small $n$.

To illustrate this phenomenon, consider an isotropic setting in which inputs are sampled as $\vx \sim \mathcal{N}(\vzero_d, \lambda \vI_d)$ for some $\lambda>0$. In this case, the condition in \eqref{eq:condition:large:n:pretrain} under which the $n$-shot performance outperforms the zero-shot reduces to
\begin{align}
    \label{eq:condition:large:n:pretrain:reduced}
    n
    \ge
    \frac{d a}{2-a}
    \Bigl(
        1+\frac{\sigma^2}{\lambda\|\vtheta_0\|_2^2}
    \Bigr)
    -1,
    \quad
    a := \frac{m}{m+1+d}.
\end{align}
This condition is both necessary and sufficient, see Corollary~\ref{thm:apendix:fs:isotropic:iff}. When $n$ is below this threshold, few-shot performance is strictly worse than the zero-shot baseline, in particular for $n \in \{1,\cdots,d-2\}$.

\begin{figure}[ht]
    \centering
    \includegraphics[width=\linewidth]{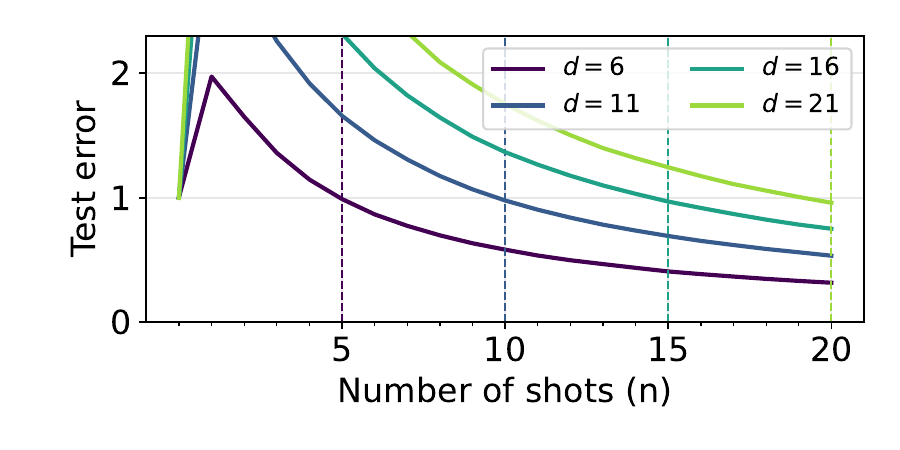}
    \caption{
    $n$-shot test error on the target task $\vtheta_0$ of the pretrained model using the theory-derived parameters in Corollary~\ref{thm:optimal:pretrain}. Each curve corresponds to a model pretrained with context length $m=1000$ and a different input dimension $d$. We set $\sigma^2=0$, under which the condition in~\eqref{eq:condition:large:n:pretrain:reduced} predicts that the $n$-shot performance is worse than zero-shot performance for $n$ ranging from $1$ to $d-2$.}
    \label{fig:optimal_pretrained}
\end{figure}

Figure~\ref{fig:optimal_pretrained} visualizes this effect for $m=1000$ and $\sigma^2=0$, where the threshold in~\eqref{eq:condition:large:n:pretrain:reduced} is approximately $d-1$. This behavior arises from pretraining at a fixed and sufficiently large context length $m$, for which $a \to 1$. As a result, the same effect can appear even when multiple context lengths are used during pretraining, but may not persist if smaller context lengths are included. This phenomenon should not be confused with double descent or early ascent, as each curve corresponds to a final pretrained model.

The pretrained model specified by~\eqref{eq:parameter_pretrain} exhibits two limitations. First, achieving improved performance requires a sufficiently large number of examples $n$. Second, in the zero-shot setting, the model predicts under the prior mean assumption $\E[\vtheta]=\vzero$, and therefore produces the same prediction regardless of the target task vector $\vtheta_0 \in \sR^d$. This naturally motivates fine-tuning as a mechanism for specializing the model to a given target task.

For simplicity, unless stated otherwise, we assume the model is pretrained in the limit $m\to\infty$, so that the middle block $\hat{\vQ}_{11}$ of $\hat{\vQ}$ in~\eqref{eq:parameter_pretrain} converges to $\vSigma^{-1}$.

\subsection{Fine-Tuning All Parameters with Zero-Shot Loss}
\label{sec:theory:optimal:ft:all}

We now study fine-tuned linear attention models, where all parameters $\vV$ and $\vQ$ are optimized to improve zero-shot performance on the target task $\vtheta_0 \in \sR^d$. Fine-tuning starts from the pretrained model in~\eqref{eq:parameter_pretrain} and minimizes the zero-shot loss  $\loss_{\mathrm{ZS}}(\vV,\vQ;\vtheta_0)$ in~\eqref{eq:finetune:loss:zs}.
The following results characterize the optimal parameters and the test error.

\begin{theorem}[Optimal Parameters of Fully Fine-Tuned Models]
    \label{thm:optimal:full-finetune}
    Given a target task $\vtheta_0\in\sR^d$, consider the zero-shot prompt loss $\loss_\mathrm{ZS}(\vV, \vQ;\vtheta_0)$ in \eqref{eq:finetune:loss:zs}.
    Then, for any $w>0$, the following parameters globally minimize the loss:
    \begin{equation}
        \hat{\vV}(w)=
        \begin{bmatrix}
        \cdot & \cdot & \cdot \\
        \cdot & \cdot & \cdot \\
        \cdot & w\vtheta_0^\top & 1
        \end{bmatrix},
        \quad
        \hat{\vQ}(w)=
        \begin{bmatrix}
        1/w & \cdot & \cdot \\
        \cdot & \cdot & \cdot \\
        \cdot & \cdot & \cdot
        \end{bmatrix}
        \label{eq:finetune:zs:parameter}
        .
    \end{equation}
\end{theorem}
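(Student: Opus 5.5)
The plan is to reduce the zero-shot loss to an explicit quadratic function of the parameters. Then I show the loss is bounded below by $\sigma^2$ and that the proposed parameters attain this bound.

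\textbf{Step 1: compute the zero-shot prediction.} For a single-column prompt $\vz=[1\ \vx^\top\ 0]^\top$ we have $\mathrm{ncol}(\vz)=1$, so $f(\vz)=\vz+\vV\vz\vz^\top\vQ\vz$. The last entry of $\vz$ is $0$, so its output entry is just the last entry of $\vV\vz$ times the scalar $s:=\vz^\top\vQ\vz$. Because the last entry of $\vz$ vanishes, only the blocks $q$, $\vQ_{11}$, $\vq_{12}$, $\vq_{21}$ can enter $s$. From the block structure in \eqref{eq:linear:attention:model} (zeros in the first row and column apart from $q$), this gives
\begin{equation}
    \hat y_{\mathrm{ZS}}(\vx;\vV,\vQ)=(\vv_{21}^\top\vx)\,\big(q+\vx^\top\vQ_{11}\vx\big).
\end{equation}
I would double-check which off-diagonal $\vQ$ blocks survive. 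The cross terms vanish because the first row and column of $\vQ$ are zero except for $q$, and the $\vq_{12}$, $\vq_{21}$ entries multiply the zero last coordinate of $\vz$.

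\textbf{Step 2: lower bound.} Since $y=\vtheta_0^\top\vx+e$ with $e$ independent of $\vx$ and mean zero, we get
\begin{equation}
    \loss_{\mathrm{ZS}}=\sigma^2+\E_{\vx}\big[(\hat y_{\mathrm{ZS}}(\vx)-\vtheta_0^\top\vx)^2\big]\ge\sigma^2.
\end{equation}
The bound $\sigma^2$ is the Bayes risk, because $\E[y\mid\vx]=\vtheta_0^\top\vx$. So any parameter choice with $\hat y_{\mathrm{ZS}}(\vx)=\vtheta_0^\top\vx$ almost surely is a global minimizer.

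\textbf{Step 3: plug in the proposed parameters.} Take $\vv_{21}=w\vtheta_0$, $q=1/w$, and all other blocks zero, including $\vQ_{11}=\vzero$. Then
\begin{equation}
    \hat y_{\mathrm{ZS}}=w\vtheta_0^\top\vx\cdot\tfrac1w=\vtheta_0^\top\vx,
\end{equation}
so the loss equals $\sigma^2$. This holds for every $w>0$, and indeed for any $w\neq0$.

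\textbf{Main obstacle.} There is no real difficulty beyond carefully tracking the block structure in Step 1, in particular verifying that $v_{22}$ and the last row and column of $\vQ$ do not contribute when the prompt has a masked label. The value $v_{22}=1$ shown in \eqref{eq:finetune:zs:parameter} is irrelevant to the zero-shot loss. I would remark that it is retained from the pretrained solution (where $v_{22}\to1$ as $m\to\infty$), and that the minimizer is non-unique.
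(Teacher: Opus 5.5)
Your proof is correct and uses the same idea as the paper's: show the zero-shot loss equals $\sigma^2$ at the proposed parameters, and note that $\sigma^2$ is the Bayes-risk lower bound. The paper gets the value $\sigma^2$ by substituting into its general closed-form zero-shot error formula, whereas you verify directly that $\hat y_{\mathrm{ZS}}(\vx)=\vtheta_0^\top\vx$ pointwise and also prove the lower bound, which the paper only asserts.
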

\begin{corollary}[Test Error of Fully Fine-Tuned Models]
    \label{thm:error:full-finetune}
    Given a target task $\vtheta_0\in\sR^d$, consider the family of parameters $\big(\hat{\vV}(w),\hat{\vQ}(w)\big)$ in~\eqref{eq:finetune:zs:parameter}. Then, for any $n\in\sN$ and any $w>0$, the following inequality holds:
    \begin{align}
        \label{eq:zs:ft:is:best}
        \sigma^2=
        \err(\hat{\vV}(w),\hat{\vQ}(w);0,\vtheta_0) < \err(\hat{\vV}(w),\hat{\vQ}(w);n,\vtheta_0)
        .
    \end{align}
    Moreover, for any $\vtheta \in \sR^d$, the test errors are given by
    \begin{align}
        & 
        \err(\hat{\vV}(w),\hat{\vQ}(w);0,\vtheta)
        =
        \sigma^2 +
        (\vtheta-\vtheta_0)^\top\vSigma(\vtheta-\vtheta_0),
        \\
        & \lim_{n \to \infty}
        \err(\hat{\vV}(w),\hat{\vQ}(w);n,\vtheta)
        =\sigma^2 + \vtheta^\top\vSigma\vtheta
        .
    \end{align}
\end{corollary}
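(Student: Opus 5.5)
The plan is a direct computation. Substituting $\hat{\vV}(w),\hat{\vQ}(w)$ from Theorem~\ref{thm:optimal:full-finetune} into the model~\eqref{eq:linear:attention:model} gives closed-form predictors for both prompt types. The errors then follow from Gaussian second moments, and \eqref{eq:zs:ft:is:best} is read off from the resulting formulas.

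\textbf{Zero-shot.} Here the prompt is the single column $\vz=[1\ \vx^\top\ 0]^\top$, so $\mathrm{ncol}=1$ and
\begin{equation}
\hat y_{\mathrm{ZS}}=[\vV\vz]_{d+2}\,(\vz^\top\vQ\vz).
\end{equation}
The last row of $\hat{\vV}(w)$ is $[0\ \ w\vtheta_0^\top\ \ 1]$ and the response entry of $\vz$ is zero, so $[\hat{\vV}(w)\vz]_{d+2}=w\vtheta_0^\top\vx$. The only nonzero entry of $\hat{\vQ}(w)$ is $q=1/w$, so $\vz^\top\hat{\vQ}(w)\vz=1/w$. Hence $\hat y_{\mathrm{ZS}}=\vtheta_0^\top\vx$. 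Writing $y=\vtheta^\top\vx+e$, with $\vx$ independent of $e$ and both mean zero, gives
\begin{equation}
\err(\hat{\vV}(w),\hat{\vQ}(w);0,\vtheta)=\sigma^2+(\vtheta-\vtheta_0)^\top\vSigma(\vtheta-\vtheta_0),
\end{equation}
which equals $\sigma^2$ at $\vtheta=\vtheta_0$.

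\textbf{Few-shot.} Now the prompt is $[\vZ_{[n]}\ \vz]$ with $n+1$ columns. Because only $q$ is nonzero, $\hat{\vQ}(w)\vz=(1/w)\,e_1$. Therefore $\vZ\vZ^\top\hat{\vQ}(w)\vz=(1/w)\big(\sum_{i\le n}\vz_i+\vz\big)$, since the first row of $\vZ$ is all ones. Applying the last row of $\hat{\vV}(w)$ yields
\begin{equation}
\hat y_{\mathrm{FS}}=\tfrac{1}{n+1}\Big[\vtheta_0^\top\vx+\textstyle\sum_{i\le n}\big(\vtheta_0^\top\vx_i+\tfrac1w y_i\big)\Big].
\end{equation}
Substituting $y_i=\vtheta^\top\vx_i+e_i$, the residual $\hat y_{\mathrm{FS}}-y$ splits into three mutually independent, mean-zero pieces:
\begin{equation}
\big(\tfrac{1}{n+1}\vtheta_0-\vtheta\big)^\top\vx,\qquad -e,\qquad \tfrac{1}{n+1}\textstyle\sum_{i\le n}\big[(\vtheta_0+\vtheta/w)^\top\vx_i+e_i/w\big].
\end{equation}
Adding their variances gives
\begin{equation}
\err(\hat{\vV}(w),\hat{\vQ}(w);n,\vtheta)=\sigma^2+\big(\tfrac{\vtheta_0}{n+1}-\vtheta\big)^\top\vSigma\big(\tfrac{\vtheta_0}{n+1}-\vtheta\big)+\tfrac{n}{(n+1)^2}\Big[(\vtheta_0+\tfrac{\vtheta}{w})^\top\vSigma(\vtheta_0+\tfrac{\vtheta}{w})+\tfrac{\sigma^2}{w^2}\Big].
\end{equation}
As $n\to\infty$ the factor $n/(n+1)^2\to0$ and $\vtheta_0/(n+1)\to\vzero_d$, so the limit is $\sigma^2+\vtheta^\top\vSigma\vtheta$.

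\textbf{Strict inequality.} Set $\vtheta=\vtheta_0$ in the few-shot formula. The second and third terms are nonnegative because $\vSigma$ is positive definite. The third term contains $\frac{n}{(n+1)^2}\frac{\sigma^2}{w^2}$, which is strictly positive for every $n\ge1$ and $w>0$ because $\sigma^2>0$. So the few-shot error strictly exceeds $\sigma^2$ even when $\vtheta_0=\vzero_d$, which is \eqref{eq:zs:ft:is:best}.

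The only step needing care is the bookkeeping in the few-shot prediction. The query column contributes $\vtheta_0^\top\vx$ with no $y$ term, because its response entry is masked to zero, and the normalization is $1/(n+1)$ rather than $1/n$. After that, the independence decomposition makes every expectation routine.
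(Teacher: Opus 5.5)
Your proof is correct and reaches the paper's closed form by a more elementary route.

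The paper does not recompute the predictor. It applies its general decomposition from Lemma~\ref{thm:appendix:error}, $\err(\vV,\vQ;n,\vtheta)=\frac{1}{(n+1)^2}\err(\vV,\vQ;0,\vtheta)+\frac{n}{(n+1)^2}h(\vV,\vQ;\vtheta)$. This applies because $v_{22}=1$, $\vq_{21}=\vzero_d$, and $\vQ_{11}=\vzero_d\vzero_d^\top$ is symmetric. The ingredients of that lemma were derived for arbitrary parameters via Wishart and Isserlis moment identities. The paper then substitutes $\vQ_{11}=\vzero_d\vzero_d^\top$, $\vv_{21}=w\vtheta_0$, $q=1/w$ into $h$.

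You instead observe that, under these parameters, the attention collapses to a uniform average of value-projected tokens. The prediction is then linear in the Gaussian data, and the error is a sum of variances of three independent linear forms. Expanding your expression matches the paper's term by term. Both give coefficient $-\frac{2}{n+1}+\frac{2n}{w(n+1)^2}$ on $\vtheta_0^\top\vSigma\vtheta$, and $1+\frac{n}{w^2(n+1)^2}$ on both $\vtheta^\top\vSigma\vtheta$ and $\sigma^2$.

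Your version is self-contained and avoids all fourth- and sixth-moment bookkeeping. The paper's version buys uniformity: the same lemma also handles the pretrained and value-matrix fine-tuned models. There $\vQ_{11}\neq\vzero_d\vzero_d^\top$ makes the prediction cubic in the data, so your shortcut would not apply.

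Your strictness argument is also slightly sharper. The paper's proof attributes strictness to positive definiteness of $\vSigma$, which alone does not cover $\vtheta_0=\vzero_d$. You correctly locate the strict gap in the term $\frac{n}{(n+1)^2}\frac{\sigma^2}{w^2}$, which needs only $\sigma^2>0$.
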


\paragraph{Fine-Tuning All Parameters Degrades Few-Shot Performance Even on the Target Task.}
Corollary~\ref{thm:error:full-finetune} reveals a trade-off when all parameters are fine-tuned to optimize zero-shot performance. While the resulting model attains the minimum test error $\sigma^2$ in the zero-shot setting, this gain comes at the cost of degraded few-shot performance.

As the number of shots $n$ increases, the few-shot test error on a task $\vtheta$ converges to
$\sigma^2 + \vtheta^\top\vSigma\vtheta$, which can be substantially larger than the zero-shot error.
Notably, this degradation persists even on the target task $\vtheta_0$, although the model is optimal for zero-shot performance on $\vtheta_0$.

\begin{figure}[ht]
    \centering
    \includegraphics[width=\linewidth]{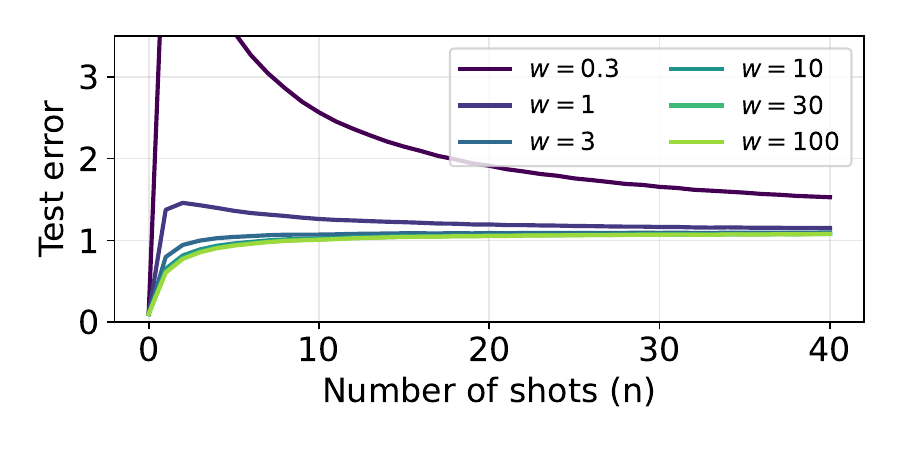}
    \caption{
    $n$-shot test error on the target task $\vtheta_0$ of the fully fine-tuned model using the theory-derived parameters in Theorem~\ref{thm:optimal:full-finetune}. Each curve corresponds to a model with a different choice of the parameter $w$, evaluated under $\sigma^2=0.1$ and $\vtheta_0^\top\vSigma\vtheta_0=1$. For all values of $w$, the zero-shot error is $0.1$, whereas the few-shot error is larger than the zero-shot error and converges to $1$ as $n \to \infty$.
    }
    \label{fig:optimal_zs_finetune}
\end{figure}

Figure~\ref{fig:optimal_zs_finetune} illustrates this phenomenon. Although minimizing the zero-shot loss admits a family of optimal solutions parameterized by a free parameter $w>0$ in Theorem~\ref{thm:optimal:full-finetune}, all such choices yield worse few-shot performance than zero-shot performance. In particular, for every $w>0$, the few-shot error exceeds the zero-shot error and converges to $\vtheta_0^\top\vSigma\vtheta_0$ as $n\to\infty$, which equals $1$ in the setting of Figure~\ref{fig:optimal_zs_finetune}.

These results reveal a limitation of full fine-tuning. While it optimizes zero-shot performance on the target task $\vtheta_0$, it degrades few-shot performance, making few-shot prediction strictly worse than zero-shot prediction even on the target task $\vtheta_0$. Thus, this degradation cannot be explained only by a mismatch between the fine-tuning task and the evaluation task. This observation motivates alternative fine-tuning strategies that preserve effective few-shot performance, which we study next.

\subsection{Fine-Tuning Value Matrix with Zero-Shot Loss}
\label{sec:theory:optimal:ft:value}

To improve zero-shot performance on the target task $\vtheta_0$ while preserving few-shot performance, we revisit the structure of the optimal parameters derived in the previous subsections. A key insight from Theorem~\ref{thm:optimal:full-finetune} is that, under full fine-tuning, the target task vector $\vtheta_0$ may be encoded in the value matrix through the parameter $\vv_{21}$. In this sense, the value component of the attention mechanism provides a pathway to improved zero-shot performance.

We formalize this by characterizing when zero-shot performance outperforms few-shot performance, and show that this comparison is governed by the value parameter $\vv_{21}$.

\begin{theorem}
    \label{thm:error:diff}
    Suppose $\vV$ and $\vQ$ in \eqref{eq:linear:attention:model} satisfy $v_{22}=1$, $\vq_{21}=\vzero_d$, and $\vQ_{11}$ is positive definite. For any $\vtheta\in\sR^d$, the following inequality holds:
    \begin{align}
        \label{eq:small:icl:error:condition}
        \err(\vV,\vQ;0,\vtheta) \leq \err(\vV,\vQ;n,\vtheta),
    \end{align}
    if and only if
    \begin{align}
        \label{eq:small:icl:error:condition:abc}
        \left(\vv_{21} - \vA^{-1}\vB \vtheta\right)^\top
        \vA
        \left(\vv_{21} - \vA^{-1}\vB \vtheta \right)
        \leq
        c,
    \end{align}
    where $c:=\vtheta^\top(\vB\vA^{-1}\vB+\vC)\vtheta + ( \tr(\vQ_{11}\vSigma\vQ_{11}\vSigma) + q^2 ) \sigma^2$, and $\vA,\vB,\vC$ are matrices determined by $\vQ,\vSigma$, and $n$.
\end{theorem}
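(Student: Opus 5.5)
The plan is to compute both errors in \eqref{eq:small:icl:error:condition} explicitly as quadratic functions of $\vv_{21}$ and $\vtheta$. I will then subtract them and complete the square in $\vv_{21}$. The hypotheses $v_{22}=1$ and $\vq_{21}=\vzero_d$ are exactly what make both predictions affine in $\vv_{21}$ and in the responses, so both errors are quadratic.

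\emph{Step 1 (predictions).} For the zero-shot prompt $\vz=[1\ \vx^\top\ 0]^\top$, the last row of $\vV$ is $(0,\vv_{21}^\top,1)$, so $[\vV\vz]_{d+2}=\vv_{21}^\top\vx$. Also $\vz^\top\vQ\vz=q+\vx^\top\vQ_{11}\vx$. Hence
\begin{equation}
\hat y_{\mathrm{ZS}}=(\vv_{21}^\top\vx)\,s(\vx),\qquad s(\vx):=q+\vx^\top\vQ_{11}\vx .
\end{equation}
For the $n$-shot prompt, $\vq_{21}=\vzero_d$ and the masked last entry give $\vQ\vz=[q;\ \vQ_{11}\vx;\ 0]$, so the $i$-th entry of $\vZ^\top\vQ\vz$ is $s_i:=q+\vx_i^\top\vQ_{11}\vx$. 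Therefore
\begin{equation}
\hat y_{\mathrm{FS}}=\tfrac{1}{n+1}\Big[\textstyle\sum_{i\in[n]}(\vv_{21}^\top\vx_i+\vtheta^\top\vx_i+e_i)\,s_i+(\vv_{21}^\top\vx)\,s(\vx)\Big].
\end{equation}

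\emph{Step 2 (expectations).} I expand $\E[(\hat y-y)^2]$ in both cases. Noise terms separate because $e,e_i$ are independent and centered. Zero-shot noise contributes $\sigma^2$. Few-shot noise contributes $\sigma^2$ plus $\frac{n\sigma^2}{(n+1)^2}\E[s_1^2]$, and $\E[s_1^2]$ reduces to $q^2$ plus trace terms such as $\tr(\vQ_{11}\vSigma\vQ_{11}\vSigma)$.

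The signal part is a quadratic form in the stacked vector $(\vv_{21},\vtheta)$. Its coefficient matrices are built from Gaussian moments $\E[\vx\vx^\top s(\vx)]$ and $\E[\vx\vx^\top s(\vx)^2]$, together with mixed moments of $(\vx_i,\vx)$ conditioned on $\vx$. I will compute these after whitening, $\vx=\vSigma^{1/2}\mathbf{g}$, using Isserlis' theorem. Odd moments vanish, which kills the cross terms between $q$ and the cubic parts.

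\emph{Step 3 (completing the square).} Subtracting gives
\begin{equation}
\err(\vV,\vQ;0,\vtheta)-\err(\vV,\vQ;n,\vtheta)=\vv_{21}^\top\vA\vv_{21}-2\vv_{21}^\top\vB\vtheta-\vtheta^\top\vC\vtheta-\kappa\sigma^2 .
\end{equation}
Here $\vA$ is the zero-shot quadratic coefficient minus the few-shot one, $\vB,\vC$ collect the remaining terms, and $\kappa\ge0$ is a scalar multiple of $\tr(\vQ_{11}\vSigma\vQ_{11}\vSigma)+q^2$. Any scalar factor such as $n/(n+1)^2$ in $\kappa$ can be absorbed into $\vA,\vB,\vC$ by rescaling. $\vB$ is symmetric since all moments are polynomials in $\vSigma,\vQ_{11}$ that commute appropriately after whitening.

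Completing the square gives $(\vv_{21}-\vA^{-1}\vB\vtheta)^\top\vA(\vv_{21}-\vA^{-1}\vB\vtheta)-c$, with $c$ of the stated form. The condition \eqref{eq:small:icl:error:condition} is then equivalent to \eqref{eq:small:icl:error:condition:abc}, and all steps are reversible, which gives the "if and only if".

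\emph{Main obstacle.} The hard step is showing that $\vA$ is positive definite, so that $\vA^{-1}$ exists and the sign of the inequality is preserved. The zero-shot term carries $\E[\vx\vx^\top s(\vx)^2]$ with weight $1$. The few-shot term has weight $(n+1)^{-2}$ on the analogous expression, plus cross contributions from $\sum_i\vx_i s_i$ of size $n/(n+1)^2$ and $n(n-1)/(n+1)^2$. I would try to show that the zero-shot moment matrix dominates, using $\vQ_{11}\succ0$ and $q^2\ge0$. Concretely, I would write everything in the whitened basis where $\vM=\vSigma^{1/2}\vQ_{11}\vSigma^{1/2}\succ0$, express each term via $\tr(\vM)$, $\vM$ and $\vM^2$, and compare term by term; here $\E[\vx\vx^\top s^2]$ contains $2q\,\E[\vx\vx^\top\vx^\top\vQ_{11}\vx]$ plus the sixth-moment terms. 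The Gaussian sixth-moment bookkeeping is the most error-prone part.
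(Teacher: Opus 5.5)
Your route is the paper's. Both errors are quadratic in $\vv_{21}$. Multiplying their difference by $(n+1)^2/n$ (the normalization that makes the $\sigma^2$ coefficient exactly $\tr(\vQ_{11}\vSigma\vQ_{11}\vSigma)+q^2$) gives $-\vv_{21}^\top\vA\vv_{21}+2\vv_{21}^\top\vB\vtheta+\vtheta^\top\vC\vtheta+(\tr(\vQ_{11}\vSigma\vQ_{11}\vSigma)+q^2)\sigma^2$, and one completes the square. The paper organizes the moments through $\hat y_{\mathrm{FS}}=\frac{n}{n+1}\hat y_{\mathrm{Net\text{-}FS}}+\frac{1}{n+1}\hat y_{\mathrm{ZS}}$, which yields $\err(\vV,\vQ;n,\vtheta)-\err(\vV,\vQ;0,\vtheta)=\frac{n}{(n+1)^2}\big(h-(n+2)\,\err(\vV,\vQ;0,\vtheta)\big)$, but that is only bookkeeping. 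The real issue is the step you single out as the main obstacle.

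First, positive definiteness is not what makes the equivalence hold. For any symmetric invertible $\vA$ and symmetric $\vB$, the identity $-\vv_{21}^\top\vA\vv_{21}+2\vv_{21}^\top\vB\vtheta=-(\vv_{21}-\vA^{-1}\vB\vtheta)^\top\vA(\vv_{21}-\vA^{-1}\vB\vtheta)+\vtheta^\top\vB\vA^{-1}\vB\vtheta$ gives the ``if and only if'' directly. No inequality sign is at stake; only invertibility is needed.

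Second, the domination argument you sketch cannot succeed from $\vQ_{11}\succ0$ and $q^2\ge0$ alone. The matrix $\vA$ contains $2(n+2)q\tr(\vQ_{11}\vSigma)\vSigma+(4n+6)q\,\vSigma\vQ_{11}\vSigma$, which is linear in $q$. With $\vSigma=\vQ_{11}=\vI_5$ and $n=1$ one gets $\vA=(2q^2+40q+168)\vI_5$. This is singular at $q=-6$ and negative definite for $-14<q<-6$, so at $q=-6$ the quantity $\vA^{-1}\vB$ in the statement does not exist. An extra hypothesis is therefore needed, e.g.\ $q\ge0$. That is the constraint in Corollary~\ref{thm:optimal:pretrain}, and it holds for the pretrained and value-fine-tuned models, where $q=0$.

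Under $q\ge0$ the step is immediate and needs no sixth-moment comparison: every term of $\vA$ is positive semidefinite, and $(7n+11)\vSigma\vQ_{11}\vSigma\vQ_{11}\vSigma\succ0$. The paper's proof asserts $\vA\succ0$ ``since $\vQ_{11}$ is positive definite'', so it has the same gap and the same fix.

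One more thing to watch when you compute $\vB$: its $q$-dependent part should come out as $q\big(\vSigma\vQ_{11}\vSigma+(q+n+1)\vSigma\big)$. The appendix's $q\big(\vSigma\vQ_{11}\vSigma+(q-1)\vSigma\big)$ drops the $+2(n+2)q\,\vv_{21}^\top\vSigma\vtheta$ term coming from $-(n+2)$ times the zero-shot error. This is harmless for the statement, which leaves $\vB$ unspecified.
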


Theorem~\ref{thm:error:diff} makes the role of the value parameter explicit. While $\vA$, $\vB$, and $c$ depend on $\vQ$, $\vSigma$, $\sigma^2$, $n$, and the task vector $\vtheta$ at which the test error is evaluated, they are independent of $\vv_{21}$. Consequently, the inequality in~\eqref{eq:small:icl:error:condition:abc} can be satisfied only through updates to $\vv_{21}$. In particular, moving $\vv_{21}$ toward $\vA^{-1}\vB\vtheta$ makes the condition hold, which corresponds to a regime where zero-shot performance outperforms few-shot performance.

At the same time, recall that few-shot performance in the pretrained model relies on the query-key matrix $\vQ$, and in particular on the condition $\vQ_{11}=\vSigma^{-1}$ in Corollary~\ref{thm:optimal:pretrain}. This observation suggests a fine-tuning strategy: rather than updating both $\vV$ and $\vQ$, we freeze $\vQ$ to preserve the mechanism enabling few-shot performance and fine-tune only the value matrix $\vV$ to improve zero-shot performance. We refer to this strategy as \emph{value-matrix fine-tuning}.

The following theorem characterizes the models obtained by value-matrix fine-tuning under the zero-shot prompt loss.

\begin{theorem}[Optimal Parameters and Test Error of Value-Matrix Fine-Tuned Models]
    \label{thm:optimal:value-matrix}
    Given a target task $\vtheta_0 \in \sR^d$, consider the zero-shot prompt loss $\loss_{\mathrm{ZS}}(\vV, \vQ; \vtheta_0)$ in~\eqref{eq:finetune:loss:zs}. 
    For any $w>0$, define
    \begin{equation}
    \label{eq:parameter:value:only}
    \hat{\vV}(w)
    =
    \begin{bmatrix}
        \cdot & \cdot & \cdot \\
        \cdot & \cdot & \cdot \\
        \cdot  & \frac{1}{d+4}\vtheta_0^\top & w
    \end{bmatrix}\!,
    \quad
    \hat{\vQ} =
    \begin{bmatrix}
        \cdot & \cdot & \cdot \\
        \cdot & \vSigma^{-1} & \cdot \\
        \cdot & \cdot & \cdot
    \end{bmatrix}\!.
    \end{equation}
    Then, $\hat{\vV}(w)$ minimizes the zero-shot loss for fixed $\hat{\vQ}$, i.e.,
    \begin{equation}
        \hat{\vV}(w) \in \argmin_{\vV}\loss_{\mathrm{ZS}}(\vV, \hat{\vQ}; \vtheta_0),
    \end{equation}
    and the minimum value is $\sigma^2+\frac{2}{d+4}\vtheta_0^\top\vSigma\vtheta_0$.
    Under this choice of parameters, for any $\vtheta\in\sR^d$,
    \begin{align}
    &\lim_{n\to \infty} \err\big(\hat{\vV}(w),\hat{\vQ};n,\vtheta\big)
    \\&=
    \sigma^2 +
    \big(\tfrac{1}{d+4}\vtheta_0 +(w-1)\vtheta\big)^{\top}
    \vSigma
    \big(\tfrac{1}{d+4}\vtheta_0 +(w-1)\vtheta\big)
    .
    \end{align}
\end{theorem}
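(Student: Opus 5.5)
The plan is to compute the zero-shot prediction in closed form under $\hat{\vQ}$. This turns the zero-shot loss into an explicit quadratic in $\vv_{21}$, which we minimize directly. The few-shot limit is then obtained by a law-of-large-numbers argument applied to the prediction formula.

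\emph{Step 1: zero-shot prediction.} With a single column $\vz=[1\ \vx^\top\ 0]^\top$ we have $\mathrm{ncol}=1$. Since $\hat{\vQ}$ has only the middle block $\vSigma^{-1}$, $\vz^\top\hat{\vQ}\vz=\vx^\top\vSigma^{-1}\vx$. The last entry of $\vV\vz$ is $\vv_{21}^\top\vx+v_{22}\cdot 0=\vv_{21}^\top\vx$, and the last entry of $\vz$ itself is $0$. Hence
\begin{equation}
\hat y_{\mathrm{ZS}}(\vx;\vV,\hat{\vQ})=(\vv_{21}^\top\vx)(\vx^\top\vSigma^{-1}\vx).
\end{equation}
This depends only on $\vv_{21}$, so $v_{22}=w$ and all other blocks of $\vV$ are free. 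This explains the family $\hat{\vV}(w)$.

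\emph{Step 2: minimize the zero-shot loss over $\vv_{21}$.} Write $\vx=\vSigma^{1/2}\mathbf g$ with $\mathbf g\sim\mathcal N(\vzero_d,\vI_d)$, and set $\mathbf u=\vSigma^{1/2}\vv_{21}$ and $\mathbf t=\vSigma^{1/2}\vtheta_0$. The noise $e$ is independent of $\vx$, so the loss equals
\begin{equation}
\sigma^2+\E\big[(\mathbf u^\top\mathbf g\,\norm{\mathbf g}_2^2-\mathbf t^\top\mathbf g)^2\big].
\end{equation}
The key facts are the Gaussian moment identities
\begin{equation}
\E[\mathbf g\mathbf g^\top\norm{\mathbf g}_2^4]=(d+2)(d+4)\vI_d,\qquad \E[\mathbf g\mathbf g^\top\norm{\mathbf g}_2^2]=(d+2)\vI_d.
\end{equation}
Both follow by symmetry (the off-diagonal entries vanish) together with $\E\norm{\mathbf g}_2^{2k+2}=d(d+2)\cdots(d+2k)$. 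The loss becomes the strictly convex quadratic
\begin{equation}
\sigma^2+(d+2)(d+4)\norm{\mathbf u}_2^2-2(d+2)\mathbf u^\top\mathbf t+\norm{\mathbf t}_2^2.
\end{equation}
It is minimized at $\mathbf u=\mathbf t/(d+4)$, i.e.\ $\vv_{21}=\vtheta_0/(d+4)$. The minimum value is $\sigma^2+\norm{\mathbf t}_2^2\bigl(1-\tfrac{d+2}{d+4}\bigr)=\sigma^2+\tfrac{2}{d+4}\vtheta_0^\top\vSigma\vtheta_0$.

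\emph{Step 3: few-shot limit.} Here $\mathrm{ncol}=n+1$, and $\hat{\vQ}\vz=[0\ (\vSigma^{-1}\vx)^\top\ 0]^\top$. The last entry of the query column is
\begin{equation}
\hat y_{\mathrm{FS}}=\frac{1}{n+1}\sum_{i=1}^{n+1}(\vv_{21}^\top\vx_i+w y_i)\,\vx_i^\top\vSigma^{-1}\vx,
\end{equation}
with the convention $y_{n+1}=0$ in the $(n+1)$-th term. Conditional on $\vx$, the empirical average converges to
\begin{equation}
\E[(\vv_{21}^\top\vx_i+wy_i)\vx_i^\top]\vSigma^{-1}\vx=(\vv_{21}+w\vtheta)^\top\vx,
\end{equation}
and the $(n+1)$-th term contributes only $O(1/n)$.

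The main obstacle is upgrading this to convergence of the expected squared error rather than almost-sure convergence. I would decompose $\hat y_{\mathrm{FS}}$ as the limit $(\vv_{21}+w\vtheta)^\top\vx$ plus a fluctuation term. Conditioned on $\vx$, the fluctuation has mean $O(1/n)$ and variance $O(\norm{\vx}^2/n)$ by independence across $i$. Its contribution to the error is then bounded by finite fourth and sixth Gaussian moments, so it vanishes as $n\to\infty$. The limiting error is therefore
\begin{equation}
\E\big[((\vv_{21}+(w-1)\vtheta)^\top\vx-e)^2\big]=\sigma^2+\big(\tfrac{1}{d+4}\vtheta_0+(w-1)\vtheta\big)^\top\vSigma\big(\tfrac{1}{d+4}\vtheta_0+(w-1)\vtheta\big),
\end{equation}
which is the claimed formula.
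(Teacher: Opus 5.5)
Your proof is correct, and it takes a different route from the paper's.

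\textbf{What the paper does.} It never computes the predictions by hand. It first rescales $(\vV,\hat{\vQ})$ so that $v_{22}=1$. It then specializes its general moment formulas in Lemma~\ref{thm:appendix:error}, which rest on the Wishart quadratic identity and the Isserlis six-moment lemma, through Lemma~\ref{thm:fix-q}. This produces the same zero-shot quadratic you obtain, $(d+2)(d+4)\vv_{21}^\top\vSigma\vv_{21}-2(d+2)\vv_{21}^\top\vSigma\vtheta_0+\vtheta_0^\top\vSigma\vtheta_0+\sigma^2$. It also produces an \emph{exact} closed form for the $n$-shot error at every finite $n$. The limit is then read off from the $1/n$-normalized leading term of $h$.

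\textbf{What you do instead.} You whiten and use rotational symmetry with chi-square moments for the zero-shot part. For the few-shot limit you use an $L^2$ law-of-large-numbers argument around the population predictor $(\vv_{21}+w\vtheta)^\top\vx$.

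\textbf{What your route buys.} It is shorter and more transparent. It shows directly that the zero-shot prediction does not involve $v_{22}$ at all, so no rescaling or $v_{22}\neq 0$ assumption is needed. In fact $w=0$ is also a minimizer, so the restriction $w\neq 0$ in the paper's appendix description of the argmin set is an artifact of its rescaling step. It also explains conceptually why the limit has the form $\sigma^2+(\vv_{21}+(w-1)\vtheta)^\top\vSigma(\vv_{21}+(w-1)\vtheta)$.

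\textbf{What it gives up.} You lose the finite-$n$ formula. The paper reuses that formula to derive the task-wise optimum $w^\star(n;\vtheta)$ in Theorem~\ref{thm:optimal:w}, the task-averaged optimum, and the excess-error identity in Proposition~\ref{thm:optimal:w:zs+fs}.

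\textbf{One step to tighten.} In Step 3, state explicitly that $\E[F^2]=O(1/n)$. This follows from the conditional variance $\mathrm{Var}(a_1\mid\vx)/n$ together with the squared conditional bias $\bigl((a_{n+1}-L)/(n+1)\bigr)^2$. Here $a_i$ is the $i$-th summand, $a_{n+1}=\vv_{21}^\top\vx\,\vx^\top\vSigma^{-1}\vx$, and $L=(\vv_{21}+w\vtheta)^\top\vx$. Then say that the cross term $2\E[(L-y)F]$ vanishes by Cauchy--Schwarz.
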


Under the constraint that $\vQ$ is frozen at the matrix $\hat{\vQ}$ learned during pretraining, the model cannot generally attain the irreducible minimum error $\sigma^2$ in the zero-shot setting. Nevertheless, the resulting zero-shot error, $\sigma^2+\frac{2}{d+4}\vtheta_0^\top\vSigma\vtheta_0$, is close to $\sigma^2$ for sufficiently large $d$. More importantly, by preserving the pretrained query-key matrix, value-matrix fine-tuning maintains strong few-shot performance and can outperform full fine-tuning in the few-shot regime.

The parameter $w$ in~\eqref{eq:parameter:value:only} does not affect zero-shot performance, so the zero-shot objective does not uniquely determine $w$. Because fine-tuning starts from the pretrained parameters in~\eqref{eq:parameter_pretrain}, it is natural to choose $w$ to make the update as small as possible. Among all zero-shot optimal solutions $\hat{\vV}(w)$ in Theorem~\ref{thm:optimal:value-matrix}, we select the one closest to the pretrained parameters. The following results formalize this choice of $w$ and characterize the resulting model.

\begin{proposition}
    \label{thm:optimal:w:zs}
    Assume $\vSigma = \vI_d$, and let $\hat{\vV}$ and $\hat{\vQ}$ be the parameters in \eqref{eq:parameter_pretrain} which minimize the loss $\loss(\vV,\vQ)$ in \eqref{eq:pretrain:loss}.
    Given a target task $\vtheta_0\in\mathbb{R}^d$, consider $\hat{\vV} (w)$ in \eqref{eq:parameter:value:only}.
    Then
    \begin{align}
        &\hat{\vV}\left(\tfrac{m}{m+1+d}\right)
        \\
        &=
        \argmin
        \Big\{
        \big\|\vV \!-\! \hat{\vV}\big\|_{\mathrm{F}}^2
        \Big|
        \vV \in
        \argmin_{\vV}\loss_{\mathrm{ZS}}(\vV, \hat{\vQ}; \vtheta_0)
        \Big\}
        ,
    \end{align}
    where $\norm{\cdot}_{\mathrm{F}}$ denotes the Frobenius norm.
    
    Moreover, if $m=n$, then for sufficiently large $n+d$, the choice $w=\frac{m}{m+1+d}$ closely approximates the task-averaged optimum $w^\star(n)$, defined in Corollary~\ref{thm:optimal:w:avg} below.
\end{proposition}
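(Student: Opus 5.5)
The plan is to compute the zero-shot loss explicitly as a function of $\vV$ when $\vQ$ is frozen at the pretrained $\hat{\vQ}$. This identifies the full argmin set, not just the family $\hat{\vV}(w)$. The Frobenius projection onto that set is then immediate.

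\emph{Step 1: the frozen query-key matrix.} With $\vSigma=\vI_d$, Corollary~\ref{thm:optimal:pretrain} gives $\hat{\vQ}_{11}=\tfrac{m+1+d}{(m+1)+d}\vI_d=\vI_d$ exactly, even for finite $m$. All other blocks of $\hat{\vQ}$ are zero; in particular $q=0$. The pretrained value matrix $\hat{\vV}$ has all blocks zero except $v_{22}=\tfrac{m}{m+1+d}$.

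\emph{Step 2: the zero-shot loss and its argmin.} For the zero-shot prompt $\vz=[1\ \vx^\top\ 0]^\top$ we have $\mathrm{ncol}=1$ and $\vz^\top\hat{\vQ}\vz=\|\vx\|_2^2$. The last row of $\vV\vz$ is $\vv_{21}^\top\vx+v_{22}\cdot 0$. Hence
\[
\hat y_{\mathrm{ZS}}=\|\vx\|_2^2\,\vv_{21}^\top\vx,
\]
which depends on $\vV$ only through $\vv_{21}$. The loss is therefore
\[
\sigma^2+\E\big[(\|\vx\|_2^2\vv_{21}^\top\vx-\vtheta_0^\top\vx)^2\big].
\]
Using the Gaussian moments $\E[\|\vx\|_2^2\vx\vx^\top]=(d+2)\vI_d$ and $\E[\|\vx\|_2^4\vx\vx^\top]=(d+2)(d+4)\vI_d$, this becomes a strictly convex quadratic in $\vv_{21}$. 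Its unique minimizer is $\vv_{21}=\tfrac{1}{d+4}\vtheta_0$, consistent with Theorem~\ref{thm:optimal:value-matrix}. Consequently the argmin set is exactly the affine set of structured $\vV$ with $\vv_{21}=\tfrac{1}{d+4}\vtheta_0$ and $\vV_{11},\vv_{12},v_{22}$ arbitrary.

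\emph{Step 3: projection onto the argmin set.} The squared Frobenius distance $\|\vV-\hat{\vV}\|_{\mathrm F}^2$ splits blockwise. The $\vv_{21}$ block is fixed, so only the free blocks matter, and each is minimized uniquely by copying the pretrained value: $\vV_{11}=\vzero$, $\vv_{12}=\vzero_d$, $v_{22}=\tfrac{m}{m+1+d}$. This is precisely $\hat{\vV}\big(\tfrac{m}{m+1+d}\big)$. Uniqueness follows from strict convexity of the distance on the affine set.

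\emph{Step 4: the approximation claim.} The main obstacle is the final statement, since $w^\star(n)$ is only defined in Corollary~\ref{thm:optimal:w:avg}. The approach is to write the task-averaged $n$-shot error of $\big(\hat{\vV}(w),\hat{\vQ}\big)$ as a quadratic in $w$, using the same fourth- and sixth-moment computations that underlie Corollary~\ref{thm:error:pretrained}. Setting its derivative to zero gives a closed form, expected to be of the shape $w^\star(n)=\tfrac{n}{n+1+d}+O\big(\tfrac{1}{n+d}\big)$, with the correction coming from the cross term with $\tfrac{1}{d+4}\vtheta_0$. Setting $m=n$ and bounding $\big|w^\star(n)-\tfrac{m}{m+1+d}\big|$ by a quantity vanishing as $n+d\to\infty$ then yields the claim. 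The delicate part is tracking the cross terms between $\vv_{21}$ and the in-context term precisely enough to confirm that this gap indeed vanishes.
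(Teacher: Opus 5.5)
Steps 1--3 are correct and follow the paper's argument. The paper takes the minimizer set from Theorem~\ref{thm:optimal:value-matrix}, whose proof is the same moment computation you do, giving the loss $(d+2)(d+4)\vv_{21}^\top\vv_{21}-2(d+2)\vv_{21}^\top\vtheta_0+\norm{\vtheta_0}_2^2+\sigma^2$. It then minimizes the blockwise Frobenius distance to obtain $\vV_{11}=\vzero$, $\vv_{12}=\vzero_d$, and $v_{22}=\tfrac{m}{m+1+d}$.

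Step~4 is harder than you make it, and your guess about where the correction comes from is wrong. In the $n$-shot error of $\big(\hat{\vV}(w),\hat{\vQ}\big)$, the only term that depends on both $\vtheta_0$ and $w$ is proportional to $w\,\vtheta_0^\top\vSigma\vtheta$. This term is linear in $\vtheta$, so it vanishes under $\mathbb{E}_{\vtheta}$ with $\vtheta\sim\mathcal{N}(\vzero_d,\vI_d)$. That is why Corollary~\ref{thm:optimal:w:avg} gives a closed form free of $\vtheta_0$, which you can simply quote. With $\vSigma=\vI_d$ it reads $w^\star(n)=\frac{n+1}{n+1+d+\sigma^2}$, so for $m=n$
\begin{equation}
w^\star(n)-\frac{m}{m+1+d}=\frac{n+1+d-n\sigma^2}{(n+1+d)(n+1+d+\sigma^2)},
\end{equation}
whose absolute value is at most $\frac{1+\sigma^2}{n+1+d}$.

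The gap from $\frac{n}{n+1+d}$ therefore has two sources, neither of which is the cross term with $\frac{1}{d+4}\vtheta_0$:
\begin{itemize}
\item the noise term $\frac{nd\,w^2}{(n+1)^2}\sigma^2$;
\item the factor $\frac{1}{\mathrm{ncol}}=\frac{1}{n+1}$ (rather than $\frac{1}{n}$) multiplying the sum over the $n$ context tokens.
\end{itemize}
There are no delicate cross terms to track, and with the quoted closed form your plan reduces to the paper's two-line conclusion.
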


\begin{corollary}[Task-Averaged Optimal $w$]
    \label{thm:optimal:w:avg}
    Given a target task $\vtheta_0\in\sR^d$, consider the family of parameters $\big(\hat{\vV}(w),\hat{\vQ}\big)$ in~\eqref{eq:parameter:value:only}.
    For any $n\in\sN$, the few-shot test error averaged over tasks
    $\vtheta\sim\mathcal{N}(\vzero_d,\vI_d)$ is minimized at
    \begin{align}
    \label{eq:w:optimal:averaged}
    w^\star(n)
    &:=
    \argmin_{w}\,
    \E_{\vtheta}\!\left[
    \err\big(\hat{\vV}(w),\hat{\vQ};n,\vtheta\big)
    \right]
    \\
    &=
    \frac{(n+1)\tr(\vSigma)}
    {(n+1+d)\tr(\vSigma)+d\sigma^2}.
    \end{align}
    Then $w^\star(n)\to 1$ as $n\to\infty$. Hence, for any $\vtheta\in\sR^d$,
    \begin{equation}
    \lim_{n\to\infty}
    \err\big(\hat{\vV}(1),\hat{\vQ};n,\vtheta\big)
    =
    \sigma^2+
    \frac{1}{(d+4)^2}\,
    \vtheta_0^\top\vSigma\vtheta_0
    .
    \end{equation}
\end{corollary}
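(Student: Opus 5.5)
The plan is to compute the task-averaged $n$-shot error in closed form as a quadratic in $w$ and minimize it. The limit statement then follows directly from Theorem~\ref{thm:optimal:value-matrix}.

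\textbf{Step 1: the prediction.} Under $(\hat{\vV}(w),\hat{\vQ})$ in~\eqref{eq:parameter:value:only}, the only nonzero block of $\hat{\vQ}$ is $\vSigma^{-1}$, so $\hat{\vQ}\vz=[0\ \ (\vSigma^{-1}\vx)^\top\ \ 0]^\top$. The last row of $\hat{\vV}(w)$ is $[0\ \ \tfrac{1}{d+4}\vtheta_0^\top\ \ w]$. Reading off the $(d+2,n+1)$ entry of~\eqref{eq:linear:attention:model}, with $\vv:=\tfrac1{d+4}\vtheta_0$, gives
\begin{equation}
\hat y_{\mathrm{FS}}=\tfrac{1}{n+1}\,\vx^\top\vSigma^{-1}\Big(\textstyle\sum_{j\in[n+1]}\vx_j\vx_j^\top\vv+w\sum_{j\in[n]}\vx_j y_j\Big).
\end{equation}
Here $\vx_{n+1}=\vx$, and $y_{n+1}$ is masked to zero, so it does not enter the second sum. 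Substituting $y_j=\vtheta^\top\vx_j+e_j$ and $y=\vtheta^\top\vx+e$, the residual $\hat y_{\mathrm{FS}}-y$ splits into three pieces:
\begin{itemize}[leftmargin=*, itemsep=0pt, topsep=0pt]
    \item a $\vtheta$-independent piece $P_0$, driven by $\vv$;
    \item a piece $wP_1(\vtheta)$, linear in $\vtheta$, plus the noise term $w\cdot\tfrac1{n+1}\vx^\top\vSigma^{-1}\sum_j\vx_je_j$;
    \item the piece $-\vtheta^\top\vx - e$.
\end{itemize}

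\textbf{Step 2: average over $\vtheta$.} Taking $\E_{\vtheta}$ with $\vtheta\sim\mathcal N(\vzero_d,\vI_d)$ kills every cross term that is linear in $\vtheta$; in particular the cross terms between $P_0$ and the $w$-dependent part vanish. The averaged error is therefore
\begin{equation}
\E_\vtheta[\err]=a\,w^2-2b\,w+\text{const},
\end{equation}
where the constant does not depend on $w$, and
\begin{align}
a&=\tfrac{1}{(n+1)^2}\Big(\E\big[\tr(\vS_n\vSigma^{-1}\vx\vx^\top\vSigma^{-1}\vS_n)\big]+\sigma^2\,\E\big[\vx^\top\vSigma^{-1}\vS_n\vSigma^{-1}\vx\big]\Big),\\
b&=\tfrac{1}{n+1}\,\E\big[\tr(\vx\vx^\top\vSigma^{-1}\vS_n)\big],
\end{align}
with $\vS_n:=\sum_{j\in[n]}\vx_j\vx_j^\top$. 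This is the only place where all the pieces interact, and the decoupling is what makes the minimization one-dimensional.

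\textbf{Step 3: Gaussian moments.} The main computational obstacle is evaluating $a$, which requires the Wishart second-moment identity
\begin{equation}
\E[\vS_n\vA\vS_n]=n^2\vSigma\vA\vSigma+n\big(\vSigma\vA\vSigma+\tr(\vA\vSigma)\vSigma\big).
\end{equation}
I would apply it with $\vA=\vSigma^{-1}\E[\vx\vx^\top]\vSigma^{-1}=\vSigma^{-1}$, so that $\tr(\vA\vSigma)=d$, and use the independence of $\vx$ from $\vS_n$. This gives
\begin{align}
b&=\tfrac{n\,\tr(\vSigma)}{n+1},\\
a&=\tfrac{n(n+1)\tr(\vSigma)+nd\,\tr(\vSigma)+nd\sigma^2}{(n+1)^2}.
\end{align}
Since $a>0$, the quadratic is strictly convex and has the unique minimizer $w^\star(n)=b/a$. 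After the factor $n$ cancels, this is exactly $\frac{(n+1)\tr(\vSigma)}{(n+1+d)\tr(\vSigma)+d\sigma^2}$, as claimed in~\eqref{eq:w:optimal:averaged}. Dividing numerator and denominator by $n$ shows $w^\star(n)\to1$ as $n\to\infty$.

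\textbf{Step 4: the limit.} For the final display, set $w=1$ in the $n\to\infty$ formula of Theorem~\ref{thm:optimal:value-matrix}. The term $(w-1)\vtheta$ vanishes, leaving
\begin{equation}
\lim_{n\to\infty}\err\big(\hat{\vV}(1),\hat{\vQ};n,\vtheta\big)=\sigma^2+\tfrac{1}{(d+4)^2}\vtheta_0^\top\vSigma\vtheta_0
\end{equation}
for every $\vtheta\in\sR^d$.
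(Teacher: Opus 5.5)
Your proof is correct and reaches exactly the paper's averaged quadratic, $\frac{n}{(n+1)^2}\big(((n+1+d)\tr(\vSigma)+d\sigma^2)w^2-2(n+1)\tr(\vSigma)\,w\big)+\mathrm{const}$, but by a different route.

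The paper recomputes nothing. It takes the per-task closed form \eqref{eq:appendix:fs:for:vft:full}, already derived for Theorem~\ref{thm:optimal:value-matrix} via Lemma~\ref{thm:fix-q} and Lemma~\ref{thm:appendix:error}. It reads off the $w$-dependent coefficients and averages them using $\E[\vtheta^\top\vSigma\vtheta]=\tr(\vSigma)$ and $\E[\vtheta_0^\top\vSigma\vtheta]=0$.

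You instead average over $\vtheta$ at the level of the residual, so the $\vtheta_0$-driven piece $P_0$ separates from the $w$-dependent part before any moment is evaluated. As a result you need only $\E[\vx\vx^\top]=\vSigma$ and the Wishart identity with $\vA=\vSigma^{-1}$. You never need the fourth- and sixth-order moments of the query $\vx$, which the per-task formula requires; those feed only the $\vtheta_0$-coefficient $\frac{n+3+2d}{d+4}$ and the $w$-free constant.

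What each route buys:
\begin{itemize}
\item Yours is self-contained and shows transparently why $\vtheta_0$ cannot enter $w^\star(n)$.
\item The paper's is shorter given its earlier lemmas. Because it keeps $\vtheta$ fixed, the same quadratic also yields the task-wise optimum $w^\star(n;\vtheta)$ of Theorem~\ref{thm:optimal:w} and the excess-error identity of Proposition~\ref{thm:optimal:w:zs+fs}. Your pre-averaged computation does not give these.
\end{itemize}

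One small imprecision in Step~2: the cross terms $P_0\cdot wN$ and $wN\cdot e$ (where $N$ is the context-noise term) are not linear in $\vtheta$. They vanish because $e_1,\dots,e_n$ are zero-mean and independent of the inputs and of $e=e_{n+1}$. Your values of $a$ and $b$ are unaffected.
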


\paragraph{Value-Matrix Fine-Tuning Improves Zero-Shot Performance While Preserving Few-Shot Performance.}
Proposition~\ref{thm:optimal:w:zs} fixes the free parameter $w$ using a minimal-update rule: among all value-matrix fine-tuning solutions $\hat{\vV}(w)$ that minimize the zero-shot loss, it chooses the one closest to the pretrained matrix $\hat{\vV}$.

Corollary~\ref{thm:optimal:w:avg} provides a complementary characterization of this choice from the few-shot perspective. It shows that, when tasks are drawn from $\vtheta\sim\mathcal{N}(\vzero_d,\vI_d)$, the task-averaged few-shot test error admits a unique minimizer $w^\star(n)$ in~\eqref{eq:w:optimal:averaged}. In the limit as $n\to\infty$, we have $w^\star(n)\to 1$, implying that the asymptotic few-shot error becomes identical for all tasks $\vtheta\in\sR^d$. Moreover, when $m=n$ and $n+d$ is sufficiently large, the choice $w=\tfrac{m}{m+1+d}$ from Proposition~\ref{thm:optimal:w:zs} closely approximates $w^\star(n)$, which implies that the resulting value-matrix fine-tuned model preserves few-shot performance uniformly over tasks.

\subsection{Fine-Tuning Value Matrix \hspace{100pt} with Zero-Shot and Auxiliary Few-Shot Losses}
\label{sec:theory:optimal:ft:value:with:fs}

Value-matrix fine-tuning with the zero-shot loss improves zero-shot performance while preserving few-shot performance. However, the zero-shot loss alone does not uniquely determine the value-matrix solution. As shown in Theorem~\ref{thm:optimal:value-matrix}, all solutions in the family $\hat{\vV}(w)$ defined in \eqref{eq:parameter:value:only} achieve the same zero-shot optimum for fixed $\hat{\vQ}$, but different choices of $w$ lead to different few-shot behavior. Thus, the zero-shot objective leaves a degree of freedom that can be used to further adapt the model in the few-shot regime.

We first characterize how the few-shot test error depends on this remaining degree of freedom. In particular, the optimal value of $w$ depends on both the evaluation task $\vtheta$ and the number of demonstrations $n$ at test time. Therefore, choosing $w$ to improve few-shot performance on the target task $\vtheta_0$ may not be optimal for other tasks.

Figure~\ref{fig:optimal_zs_finetune_v_only} illustrates this dependence by plotting the $n$-shot test error of models with parameters $\big(\hat{\vV}(w),\hat{\vQ}\big)$ in \eqref{eq:parameter:value:only} for different values of $w$. For each fixed $n$, there exists an optimal choice of $w$ that minimizes the $n$-shot test error on the target task. The following theorem formalizes this observation by characterizing the task-wise optimal choice of $w$ as a function of the evaluation task $\vtheta$ and the number of demonstrations $n$.

\begin{figure}[h!]
    \centering
    \includegraphics[width=\linewidth]{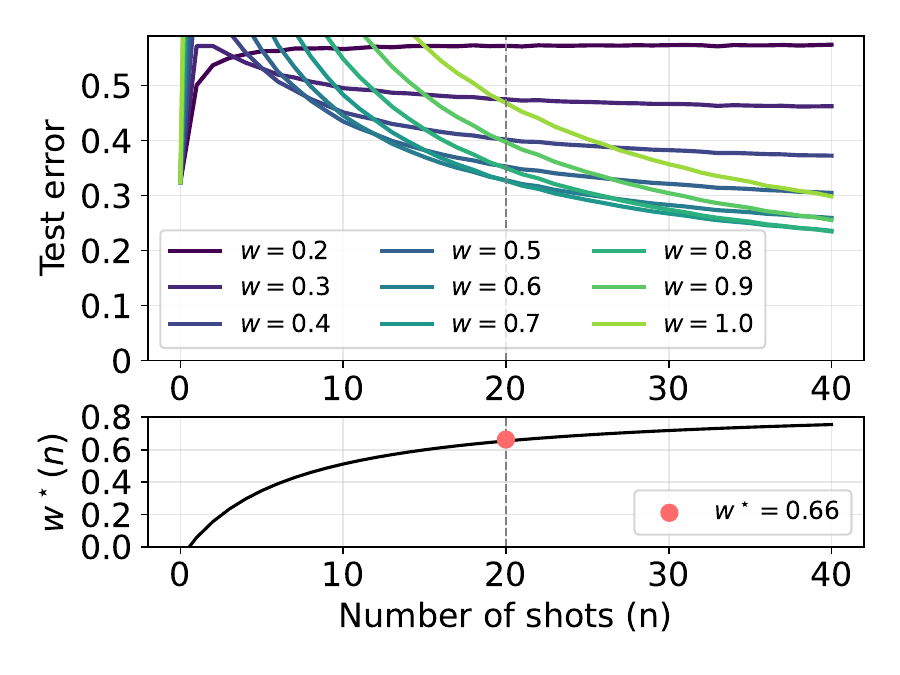}
    \vspace{-4pt}
    \caption{
    \figtop $n$-shot test error on the target task $\vtheta_0$ for the value-matrix fine-tuned model with the theory-derived parameters $\big(\hat{\vV}(w), \hat{\vQ}\big)$ in Theorem~\ref{thm:optimal:value-matrix}. Each curve corresponds to a different choice of $w$. We use $d=5$, $\vtheta_0^\top\vSigma\vtheta_0 = 1$, and $\sigma^2 = 0.1$, for which the zero-shot test error equals $3.1\,(= \sigma^2 +\tfrac{2}{d+4}\vtheta_0^\top\vSigma\vtheta_0 )$ for all models.
    \figbottom Optimal $w^\star(n;\vtheta_0)$ that minimizes the $n$-shot test error on the target task $\vtheta_0$. The optimal value increases monotonically with $n$ and converges to $8/9 \,(= \tfrac{d+3}{d+4})$. For example, the $20$-shot test error is minimized at $w \approx 0.66$.
    }
    \label{fig:optimal_zs_finetune_v_only}
\end{figure}

\begin{figure*}[t]
    \centering
    \includegraphics[width=\linewidth]{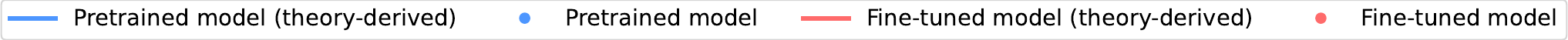}
    \begin{subfigure}[t]{0.245\linewidth}
        \centering
        \includegraphics[width=\linewidth]{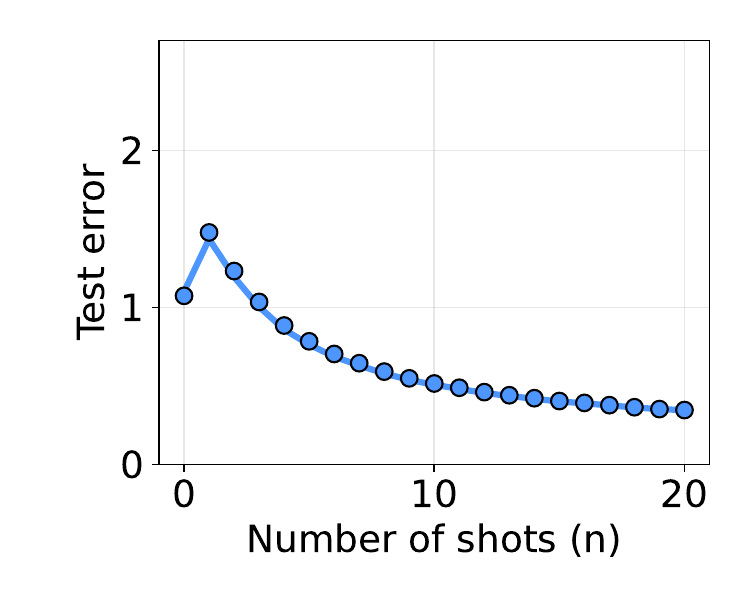}
        \caption{\parbox{0.6\linewidth}{\centering Pretraining}}
        \label{fig:main_pretrained}
    \end{subfigure}
    \hfill
    \begin{subfigure}[t]{0.245\linewidth}
        \centering
        \includegraphics[width=\linewidth]{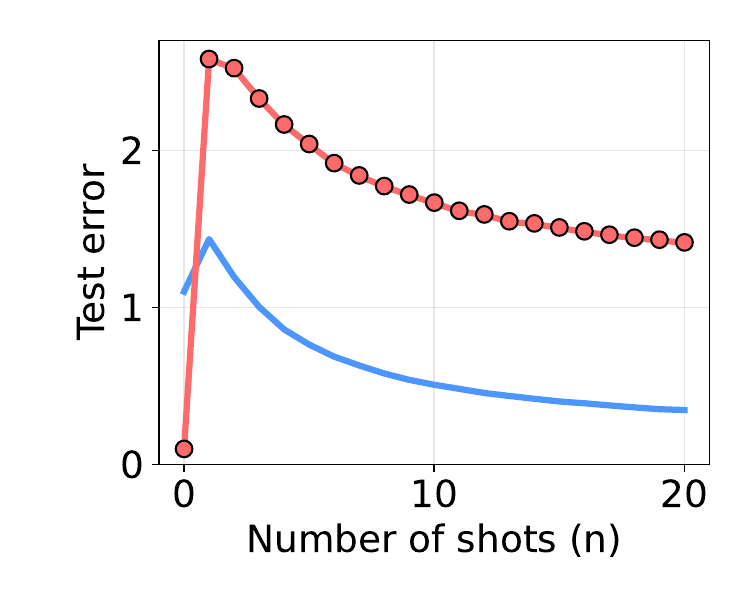}
        \caption{\parbox{0.9\linewidth}{\centering
        Fine-tuning all parameters\\
        { with zero-shot loss}
        }}
        \label{fig:main_full}
    \end{subfigure}
    \hfill
    \begin{subfigure}[t]{0.245\linewidth}
        \centering
        \includegraphics[width=\linewidth]{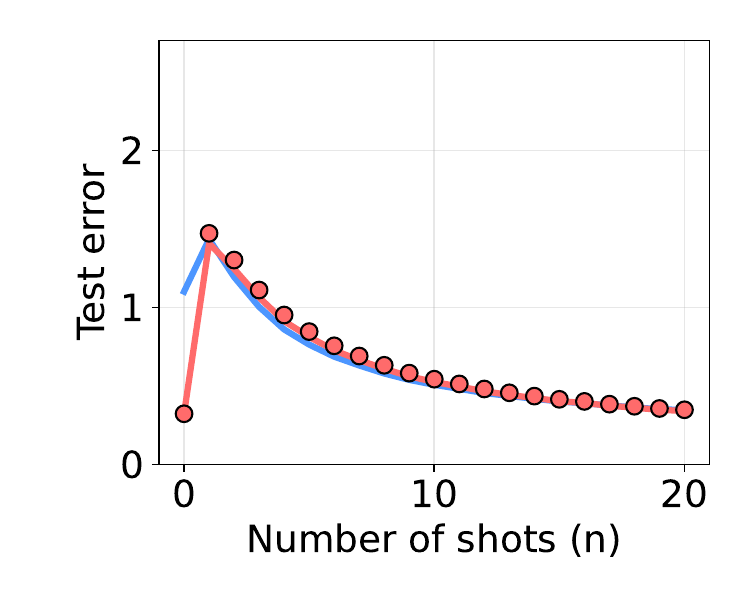}
        \caption{\parbox{0.75\linewidth}{\centering
        Value-matrix fine-tuning\\
        { with zero-shot loss}
        }}
        \label{fig:main_value_only}
    \end{subfigure}
    \hfill
    \begin{subfigure}[t]{0.245\linewidth}
        \centering
        \includegraphics[width=\linewidth]{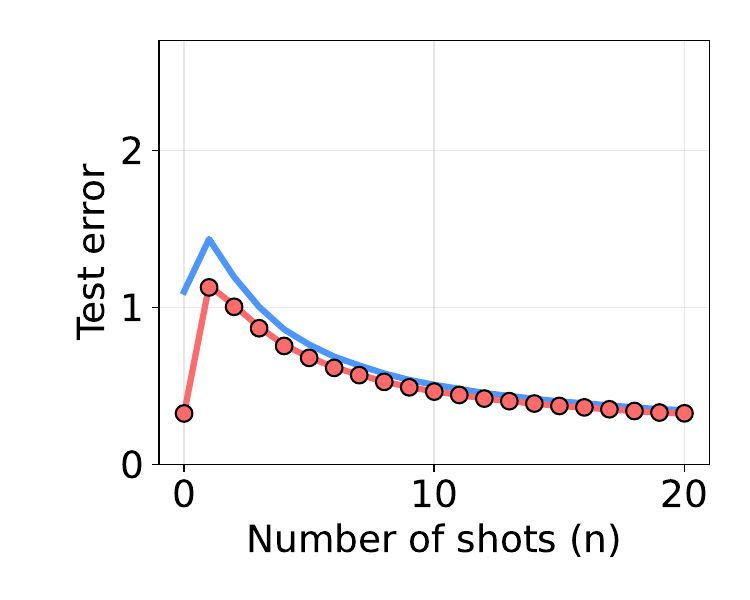}
        \caption{\parbox{0.9\linewidth}{\centering
        Value-matrix fine-tuning with\\
        zero- and few-shot losses
        }}
        \label{fig:main_value_only_with_fs}
    \end{subfigure}
    \caption{
    $n$-shot test errors on the target task $\vtheta_0$ for linear attention models under different training regimes.
    Curves show theoretical predictions from Section~\ref{sec:theory:optimal}, while points correspond to models trained empirically in Section~\ref{sec:experiment:regression}. Empirical results match the theoretical values.
    The parameters used for the theoretical predictions are given by:
    \captiona \eqref{eq:parameter_pretrain} from Corollary~\ref{thm:optimal:pretrain}.
    \captionb \eqref{eq:finetune:zs:parameter} from Theorem~\ref{thm:optimal:full-finetune} with $w=0.52$.
    \captionc \eqref{eq:parameter:value:only} from Theorem~\ref{thm:optimal:value-matrix} with $w=0.77\;(=\tfrac{m}{m+1+d})$ (Proposition~\ref{thm:optimal:w:zs}).
    \captiond \eqref{eq:parameter:value:only} with $w=0.66$ (Theorem~\ref{thm:optimal:w}; see Figure~\ref{fig:optimal_zs_finetune_v_only}).
    }
    \label{fig:linear-tf}
    \vspace{-12pt}
\end{figure*}

\begin{theorem}[Task-Wise Optimal $w$]
    \label{thm:optimal:w}
    Given a target task $\vtheta_0\in\sR^d$, consider the family of parameters
    $\big(\hat{\vV}(w),\hat{\vQ}\big)$ in~\eqref{eq:parameter:value:only}.
    For any $n\in\sN$ and any $\vtheta\in\sR^d$, the few-shot test error on the task $\vtheta$ is minimized at
    \begin{align}
        \label{eq:w:optimal:taskwise}
        w^\star(n;\vtheta)
        &:=
        \argmin_{w}\,
        \err\big(\hat{\vV}(w),\hat{\vQ};n,\vtheta\big)
        \\
        &=
        \frac{(n+1)\,\vtheta^\top\vSigma\vtheta
        -\frac{n+3+2d}{d+4}\,\vtheta_0^\top\vSigma\vtheta}
        {(n+1+d)\,\vtheta^\top\vSigma\vtheta+d\sigma^2}.
    \end{align}
    Then $w^\star(n;\vtheta)\to \frac{d+3}{d+4}$ as $n\to\infty$. Hence, for any $\vtheta\in\sR^d$,
    \begin{equation}
    \resizebox{0.99\linewidth}{!}{$
    \lim_{n\to\infty}
    \err\big(\hat{\vV}(\tfrac{d+3}{d+4}),\hat{\vQ};n,\vtheta\big)
    =
    \sigma^2+
    \frac{1}{(d+4)^2}\,
    (\vtheta-\vtheta_0)^\top\vSigma(\vtheta-\vtheta_0)
    .
    $}
    \end{equation}
\end{theorem}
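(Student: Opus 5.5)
The plan is to compute the $n$-shot error of $\big(\hat{\vV}(w),\hat{\vQ}\big)$ in closed form as an explicit quadratic in $w$, and then minimize it. First I will expand the prediction \eqref{eq:proof:prediction:icl}. Only the last row $[0,\ \tfrac{1}{d+4}\vtheta_0^\top,\ w]$ of $\hat{\vV}(w)$ matters, and $\hat{\vQ}\vz = [0;\ \vSigma^{-1}\vx;\ 0]$ because $q=0$. Hence
\begin{align}
\hat y_{\mathrm{FS}}
=\tfrac{1}{n+1}\Big[\textstyle\sum_{j\in[n]}\big(\vv_{21}^\top\vx_j + w\vtheta^\top\vx_j + w e_j\big)\vx_j^\top
+ (\vv_{21}^\top\vx)\vx^\top\Big]\vSigma^{-1}\vx ,
\end{align}
with $\vv_{21}=\tfrac{1}{d+4}\vtheta_0$. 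The query column contributes no $y$ term because its response entry is masked to zero. Writing $\vu := \vv_{21}+w\vtheta$, this becomes
\begin{align}
\hat y_{\mathrm{FS}} = \vx^\top\vSigma^{-1}\Big[\tfrac{n}{n+1}\hat{\vSigma}_n\vu + \tfrac{1}{n+1}\vx\vx^\top\vv_{21} + \tfrac{w}{n+1}\textstyle\sum_j e_j\vx_j\Big],
\end{align}
where $\hat{\vSigma}_n$ is the empirical covariance of the contexts.

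Next I will evaluate $\E[(\hat y_{\mathrm{FS}}-\vtheta^\top\vx-e)^2]$. I condition on $\vx$, average over the independent contexts and noise, and then average over $\vx$. I will whiten by setting $\vx=\vSigma^{1/2}\vg$, which reduces everything to the Gaussian moment identities $\E[\vx_j\vx_j^\top]=\vSigma$, $\E[\vx_j\vx_j^\top\vA\vx_j\vx_j^\top]=\vSigma\vA\vSigma+\vSigma\vA^\top\vSigma+\tr(\vA\vSigma)\vSigma$, and the sixth-order analogue needed for the $\vx\vx^\top\vv_{21}$ term. The last one yields factors like $(d+2)(d+4)$, which is where the $d+4$ from Theorem~\ref{thm:optimal:value-matrix} re-enters. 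The bias part collects into terms in $\vu^\top\vSigma\vu$, $\vv_{21}^\top\vSigma\vu$, $\vv_{21}^\top\vSigma\vv_{21}$, $\vtheta^\top\vSigma\vu$, and $\vtheta^\top\vSigma\vv_{21}$. The variance of the empirical covariance gives an extra $\tfrac{n}{(n+1)^2}\big(\vu^\top\vSigma\vu + d\,\vu^\top\vSigma\vu\big)$-type term, and the noise gives $\tfrac{n w^2 d\sigma^2}{(n+1)^2}$.

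Collecting powers of $w$ gives $\err = \alpha w^2 - 2\beta w + \gamma$. I expect
\begin{align}
\alpha \propto (n+1+d)\,\vtheta^\top\vSigma\vtheta + d\sigma^2,
\qquad
\beta \propto (n+1)\,\vtheta^\top\vSigma\vtheta - \tfrac{n+3+2d}{d+4}\,\vtheta_0^\top\vSigma\vtheta ,
\end{align}
with a common factor $n/(n+1)^2$. Since $\alpha>0$ (using $\sigma^2>0$), the quadratic is strictly convex, so $w^\star(n;\vtheta)=\beta/\alpha$, which is the claimed formula. Dividing numerator and denominator by $n$ and letting $n\to\infty$ gives $w^\star\to 1-\tfrac{1}{d+4}\cdot\tfrac{\vtheta_0^\top\vSigma\vtheta}{\vtheta^\top\vSigma\vtheta}$. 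I flag that this does not literally equal $\tfrac{d+3}{d+4}$ for general $\vtheta$; I will recheck the stated limit against the computation, and I expect the intended reading to be that the $n\to\infty$ error only depends on $w$ through the limit formula in Theorem~\ref{thm:optimal:value-matrix}.

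For the final claim I will not redo the moment computation. Instead I will plug $w=\tfrac{d+3}{d+4}$ into the already established limit in Theorem~\ref{thm:optimal:value-matrix}. Then $\tfrac{1}{d+4}\vtheta_0+(w-1)\vtheta=\tfrac{1}{d+4}(\vtheta_0-\vtheta)$, which immediately gives $\sigma^2+\tfrac{1}{(d+4)^2}(\vtheta-\vtheta_0)^\top\vSigma(\vtheta-\vtheta_0)$.

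The main obstacle is the bookkeeping of the cross terms in $\beta$. The term $\tfrac{1}{n+1}\vx\vx^\top\vv_{21}$ couples with both $\tfrac{n}{n+1}\hat{\vSigma}_n\vu$ and the target $\vtheta^\top\vx$, and the coefficient $\tfrac{n+3+2d}{d+4}$ must emerge from combining $n$, $d+2$, and the fourth/sixth moment constants. I will handle this by computing everything in the whitened basis with $\vSigma=\vI_d$ first and then restoring $\vSigma$ by the substitution $\vtheta\mapsto\vSigma^{1/2}\vtheta$.
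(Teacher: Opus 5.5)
Your plan is correct and is essentially the paper's argument. The paper gets the same strictly convex quadratic in $w$ by collecting the $w$-dependent terms of its closed-form $n$-shot error (derived via Lemma~\ref{thm:fix-q}) instead of recomputing the Gaussian moments. Your predicted coefficients $\frac{n}{(n+1)^2}\big((n+1+d)\vtheta^\top\vSigma\vtheta+d\sigma^2\big)$ and $\frac{n}{(n+1)^2}\big((n+1)\vtheta^\top\vSigma\vtheta-\frac{n+3+2d}{d+4}\vtheta_0^\top\vSigma\vtheta\big)$ are exactly what that yields. The final identity is likewise obtained by substituting $w=\frac{d+3}{d+4}$ into the asymptotic formula of Theorem~\ref{thm:optimal:value-matrix}.

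Your flag is also right, and the paper's proof does not address it; it only says the limits follow. In fact $w^\star(n;\vtheta)\to 1-\frac{1}{d+4}\cdot\frac{\vtheta_0^\top\vSigma\vtheta}{\vtheta^\top\vSigma\vtheta}$, which equals $\frac{d+3}{d+4}$ exactly when $\vtheta_0^\top\vSigma\vtheta=\vtheta^\top\vSigma\vtheta>0$. This covers $\vtheta=\vtheta_0$, which is how the paper uses the result in the subsequent discussion. For $\vtheta=-\vtheta_0$ the limit is $\frac{d+5}{d+4}$ instead. So the correct repair is to restrict the convergence claim to that condition, not the reading you guessed. The final error identity still holds for every $\vtheta$, because it is a direct substitution that never uses this convergence.
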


For the target task $\vtheta_0$, Theorem~\ref{thm:optimal:w} implies that the optimal value $w^\star(n;\vtheta_0)$ increases with $n$ and converges to $w=\tfrac{d+3}{d+4}$ as $n\to\infty$. Thus, the remaining degree of freedom in $w$ can be used to adapt the model to a desired few-shot regime.

To guide this adaptation during fine-tuning, we introduce an auxiliary few-shot prompt loss with $n$ demonstrations. For the target task vector $\vtheta_0 \in \sR^d$, we define the few-shot prompt loss as
\begin{equation}
\label{eq:finetune:loss:fs}
    \loss_{\mathrm{FS}}(\vV, \vQ; \vtheta_0)
    :=
    \err\big(\vV, \vQ;n,\vtheta_0\big)
\end{equation}
and fine-tune using a combination of the zero-shot loss~\eqref{eq:finetune:loss:zs} and the auxiliary few-shot loss. 
Since the auxiliary loss depends on $n$, it guides the learned parameter toward values of $w$ that improve $n$-shot performance.

\paragraph{Fine-Tuning with an Auxiliary Few-Shot Loss Improves Few-Shot Performance at the Expense of Generalization.}
The task-wise optimal value $w^\star(n;\vtheta)$ in~\eqref{eq:w:optimal:taskwise} depends on the evaluation task $\vtheta$. 
As a result, choosing $w$ to optimize few-shot performance on the target task $\vtheta_0$ need not be optimal for other tasks $\vtheta (\neq \vtheta_0)$. The following proposition quantifies this trade-off by characterizing the excess error incurred when the model uses $w^\star(n;\vtheta_0)$ instead of the task-wise optimal choice $w^\star(n;\vtheta)$.

\begin{proposition}
    \label{thm:optimal:w:zs+fs}
    Given a target task $\vtheta_0\in\sR^d$, consider the family of parameters $\big(\hat{\vV}(w),\hat{\vQ}\big)$ in~\eqref{eq:parameter:value:only}. Fix $n\in\mathbb{N}$.
    For simplicity, define $ \err(w;\theta) := \err\big(\hat{\vV}(w),\hat{\vQ};n,\vtheta\big)$ and $w^\star(\theta) := w^\star(n;\theta)$.
    For any $\theta\in\mathbb{R}^d$, the excess error is
    \begin{align}
        \label{eq:excess-error}
        &\err(w^\star(\theta_0);\theta)
        -
        \err\big(w^\star(\theta);\theta\big)
        \\&=
        \frac{n \big((n\!+\!1\!+\!d)\theta^\top\Sigma\theta+d\sigma^2\big) }{(n\!+\!1)^2}
        \cdot
        \big(w^\star(\theta_0)-w^\star(\theta)\big)^2
        \geq
        0,
    \end{align}
    with equality if and only if $w^\star(\theta)=w^\star(\theta_0)$.
    Moreover, if $\theta$ satisfies $c:=\theta^\top\Sigma\theta=\theta_0^\top\Sigma\theta_0$, then~\eqref{eq:excess-error} reduces to
    \begin{align}
        &\err(w^\star(\theta_0);\theta)
        -
        \err\big(w^\star(\theta);\theta\big)
        \\&=
        \frac{n(n\!+\!3\!+\!2d)^2 c^2}{(n\!+\!1)^2(d\!+\!4)^2\big((n\!+\!1\!+\!d)c+d\sigma^2\big)}
        \cdot
        \big(1\!-\!\rho(\theta,\theta_0)\big)^2
        \!,
    \end{align}
    where $\rho(\theta,\theta_0)= \theta^\top\Sigma\theta_0 /c \in[-1,1]$
    is the cosine similarity induced by the $\Sigma$-inner product.
\end{proposition}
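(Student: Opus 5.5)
The plan is to show that, for fixed $n$ and $\theta$, the map $w\mapsto \err(w;\theta)$ is an exact quadratic in $w$. Once that is established, the excess error at any $w$ equals the leading coefficient times $(w-w^\star(\theta))^2$. The only real work is computing that leading coefficient.

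\textbf{Structure of the error.} With $\hat{\vQ}$ having only the block $\vSigma^{-1}$ nonzero, the prediction is
\[
\hat y_{\mathrm{FS}} = \tfrac{1}{n+1}\big(\tfrac{1}{d+4}\theta_0^\top \vS + w\, \vs^\top\big)\Sigma^{-1}\vx,
\]
where $\vS=\sum_{i\le n}\vx_i\vx_i^\top + \vx\vx^\top$ and $\vs=\sum_{i\le n} y_i\vx_i$. The last column contributes zero response. This expression is affine in $w$, so the squared error is quadratic in $w$:
\[
\err(w;\theta)=\alpha w^2 - 2\beta w + \gamma, \qquad \alpha = \tfrac{1}{(n+1)^2}\,\E\big[(\vs^\top\Sigma^{-1}\vx)^2\big].
\]

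\textbf{Computing $\alpha$.} I would compute $\alpha$ using Gaussian moments. Condition on $\vx$ and write $\vs=\sum_i \vx_i(\vx_i^\top\theta+e_i)$. This gives
\[
\E[\vs\vs^\top] = n\,\E\big[\vx_1\vx_1^\top\theta\theta^\top\vx_1\vx_1^\top\big] + n(n-1)\,\Sigma\theta\theta^\top\Sigma + n\sigma^2\Sigma .
\]
By Isserlis' theorem, the fourth-moment term equals $2\Sigma\theta\theta^\top\Sigma + (\theta^\top\Sigma\theta)\Sigma$. Then
\[
\E\big[(\vs^\top\Sigma^{-1}\vx)^2\big] = \tr\big(\Sigma^{-1}\E[\vs\vs^\top]\Sigma^{-1}\Sigma\big).
\]
Evaluating the trace gives
\[
n(n+1)\,\theta^\top\Sigma\theta + n d\,\theta^\top\Sigma\theta + n d\sigma^2 = n\big((n+1+d)\theta^\top\Sigma\theta + d\sigma^2\big).
\]
Hence $\alpha = n\big((n+1+d)\theta^\top\Sigma\theta+d\sigma^2\big)/(n+1)^2$, which matches the stated prefactor. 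This is the main computation, and the one place to be careful with the fourth-moment bookkeeping.

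\textbf{Using Theorem~\ref{thm:optimal:w}.} That theorem gives the minimizer $w^\star(\theta)=\beta/\alpha$; consistency is confirmed by the denominator matching $\alpha$. Completing the square gives
\[
\err(w;\theta)-\err(w^\star(\theta);\theta)=\alpha\,(w-w^\star(\theta))^2 .
\]
Setting $w=w^\star(\theta_0)$ gives the first identity. Since $\alpha>0$ (because $\sigma^2>0$ and $d\ge 1$), the difference is nonnegative, with equality iff $w^\star(\theta)=w^\star(\theta_0)$.

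\textbf{Equal-norm case.} Assume $\theta^\top\Sigma\theta=\theta_0^\top\Sigma\theta_0=c$. Substitute into the formula of Theorem~\ref{thm:optimal:w}. The two values $w^\star(\theta)$ and $w^\star(\theta_0)$ share the denominator $D=(n+1+d)c+d\sigma^2$. Their numerators differ only in the term $\tfrac{n+3+2d}{d+4}\,\theta_0^\top\Sigma\theta$ versus $\tfrac{n+3+2d}{d+4}\,c$. Therefore
\[
w^\star(\theta_0)-w^\star(\theta) = -\frac{(n+3+2d)\,c\,(1-\rho)}{(d+4)D}.
\]
Squaring and multiplying by $\alpha=nD/(n+1)^2$ cancels one factor of $D$ and yields the stated expression.
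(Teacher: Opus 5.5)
Your proposal is correct and follows the same route as the paper: write the $n$-shot error as a quadratic in $w$ with leading coefficient $n\big((n+1+d)\theta^\top\Sigma\theta+d\sigma^2\big)/(n+1)^2$, complete the square around $w^\star(\theta)$, and substitute the closed form of $w^\star$ to get the equal-norm case. The only difference is how you get the leading coefficient. You recompute it directly from the fourth Gaussian moment of $\mathbf{s}$, whereas the paper reads it off the closed-form error in Theorem~\ref{thm:optimal:value-matrix}; your computation is correct and makes the argument more self-contained.
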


Proposition~\ref{thm:optimal:w:zs+fs} shows that tuning $w$ using the auxiliary few-shot loss improves $n$-shot performance on the target task $\vtheta_0$, but can degrade performance on other tasks $\vtheta (\neq \vtheta_0)$. Moreover, the degradation increases with the discrepancy between tasks, quantified by the cosine similarity $\rho(\vtheta,\vtheta_0)$ under the $\vSigma$-inner product. This formalizes a fundamental limitation of auxiliary few-shot fine-tuning: while it enhances adaptation to the target task in the few-shot regime, it may reduce generalization across tasks.

In Appendix~\ref{appendix:proof:dynamic}, we further analyze the test error when the zero-shot loss in~\eqref{eq:finetune:loss:zs} is combined with the auxiliary few-shot loss in~\eqref{eq:finetune:loss:fs}. In particular, Theorem~\ref{thm:appendix:dynamic:add-and-remove-fs-loss} shows that an additional gradient descent step on the few-shot loss increases the zero-shot test error once the parameters approach the optimal zero-shot solution. Motivated by this observation, we adopt an annealing strategy for the auxiliary few-shot loss in the following experimental section, gradually reducing its weight to zero during fine-tuning.

\section{Experiments}\label{sec:experiment}

In this section, we provide empirical evidence for the theoretical results in Section~\ref{sec:theory:optimal} through experiments on linear regression tasks. We further complement these findings with experiments on fine-tuning a large language model on the MMLU benchmark~\citep{hendryckstest2021}.

\subsection{Experiments on Linear Regression Tasks}
\label{sec:experiment:regression}

We train the linear attention model in~\eqref{eq:linear:attention:model} on linear regression tasks, where datasets are generated according to the setup in Section~\ref{sec:problem-setup}. Following prior work~\citep{ahn2023transformers}, we set $d=5$, $\sigma^2=0.1$, $\vSigma=\vI_5$, $\|\vtheta_0\|_2^2=1$, and $m=n=20$. We first pretrain the model and then fine-tune it using three strategies described in Section~\ref{sec:theory:optimal}. Further details are provided in Appendix~\ref{sec:appendix:experiment_lr}.

\begin{figure}[t]
    \centering
    \includegraphics[width=\linewidth]{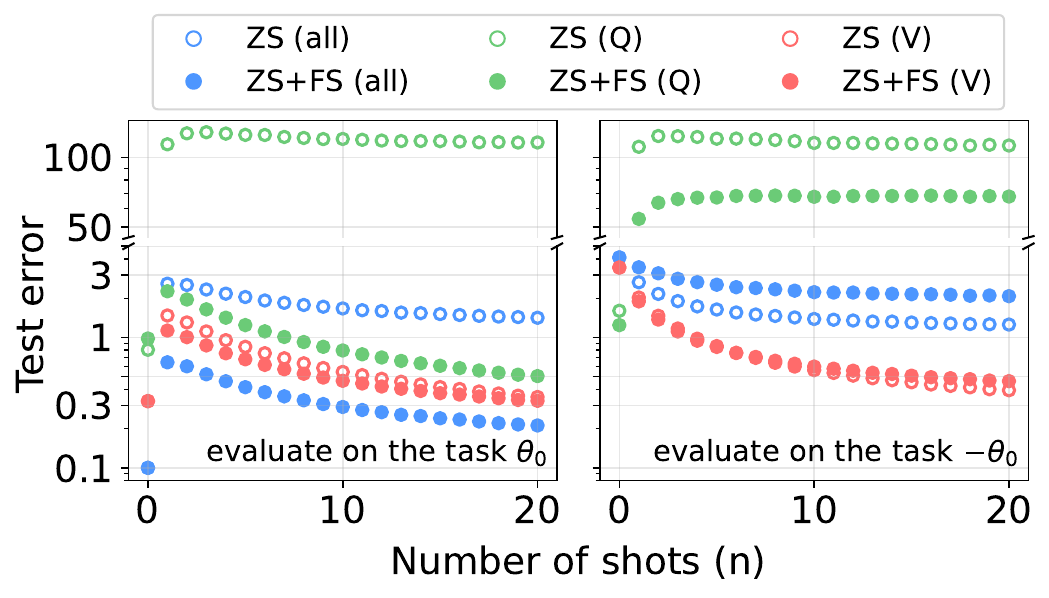}
    \vspace{-12pt}
    \caption{
    $n$-shot test errors evaluated on the in-distribution task $\vtheta_0$ \figleft and the out-of-distribution task $-\vtheta_0$ \figright. Empty circles denote models fine-tuned with the zero-shot (ZS) loss only, whereas filled circles denote models fine-tuned with both the zero-shot and few-shot (FS) losses. Parentheses indicate which parameters are updated: \texttt{all} updates all parameters, \texttt{Q} updates the query-key matrix, and \texttt{V} updates the value matrix. Value-matrix fine-tuning without the auxiliary few-shot loss gives the lowest error on the out-of-distribution task $-\vtheta_0$.
    }
    \label{fig:thetas}
    \vspace{-16pt}
\end{figure}

\begin{figure}[t]
    \centering
    \includegraphics[width=\linewidth]{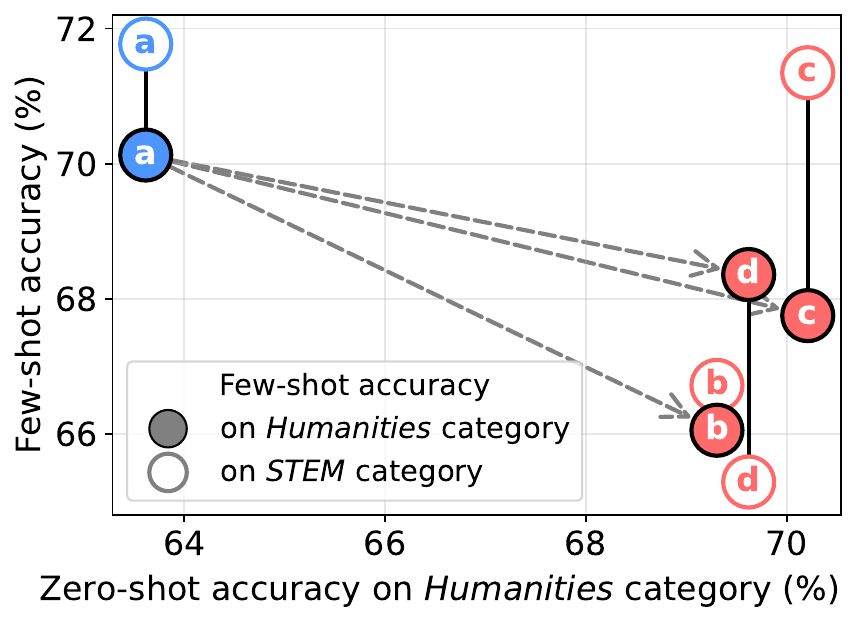}
    \caption{
    Zero-shot (x-axis) and few-shot (y-axis) accuracy of \texttt{Qwen2.5-3B-Instruct} on the \emph{Humanities} and \emph{STEM} categories of the MMLU benchmark.
    We compare the pretrained model with three models fine-tuned to improve zero-shot performance on \emph{Humanities}.
    \textbf{\captiona The pretrained model} exhibits higher few-shot than zero-shot accuracy on \emph{Humanities}.
    \textbf{\captionb Fine-tuning with the zero-shot loss} improves zero-shot accuracy relative to the pretrained model but degrades few-shot accuracy.
    \textbf{\captionc Value-matrix fine-tuning with the zero-shot loss} preserves few-shot accuracy while achieving zero-shot accuracy comparable to \captionb.
    \textbf{\captiond Value-matrix fine-tuning with both zero-shot and few-shot losses} improves few-shot accuracy on \emph{Humanities}, at the cost of reduced few-shot accuracy on other categories such as \emph{STEM}.
    }
    \label{fig:mmlu_tradeoff}
    \vspace{-16pt}
\end{figure}

Figure~\ref{fig:linear-tf} reports the test error on the target task $\vtheta_0$, where points denote empirical results and curves show the theoretical predictions from Section~\ref{sec:theory:optimal}. Across all settings, the empirical trends are consistent with the theoretical predictions.
Figure~\ref{fig:thetas} further reports test errors on the out-of-distribution task $-\vtheta_0$ under different fine-tuning strategies. Fine-tuning all parameters with both the zero-shot and auxiliary few-shot losses attains low error on $\vtheta_0$, but exhibits higher few-shot error on $-\vtheta_0$. In contrast, value-matrix fine-tuning yields lower few-shot error on $-\vtheta_0$, indicating better retention of few-shot performance across tasks, an effect that is more pronounced without the auxiliary few-shot loss.

In Appendix~\ref{sec:appendix:experiment_lr}, we further evaluate architectural variants with separate query and key matrices, as well as multi-head linear attention models. Under our framework, extending the analysis to multi-head linear attention is straightforward when the query-key parameters are shared across heads. In this case, the outputs from different heads form a linear combination of the corresponding value matrices, which is equivalent to a single-head model with an aggregated value matrix. Although our two-head experiments do not impose this shared query-key structure, they still exhibit qualitatively similar trends, suggesting that the observed behavior is not specific to the merged query-key formulation.

\subsection{Experiments on the MMLU Benchmark}
\label{sec:experiment:mmlu}

We complement our theoretical analysis with experiments on the MMLU benchmark using \texttt{Qwen2.5-3B-Instruct}~\citep{qwen2.5}. We fine-tune models to improve zero-shot accuracy on the \emph{Humanities} category and evaluate both zero-shot and $7$-shot accuracy on \emph{Humanities} and \emph{STEM}. We compare the pretrained model with three fine-tuning strategies.

Figure~\ref{fig:mmlu_tradeoff} illustrates trends consistent with our theoretical predictions. Full fine-tuning improves zero-shot accuracy on \emph{Humanities} but degrades few-shot accuracy. In contrast, value-matrix fine-tuning preserves few-shot performance while achieving comparable zero-shot accuracy. Incorporating an auxiliary few-shot loss further improves few-shot accuracy on \emph{Humanities}, but reduces few-shot accuracy on \emph{STEM}. Additional results are provided in Appendix~\ref{sec:appendix:experiment_mmlu}.

Since \texttt{Qwen2.5-3B-Instruct} may have undergone instruction tuning, these experiments should be viewed as complementary evidence for the qualitative trends predicted by our theory. Appendix~\ref{sec:appendix:experiment_mmlu} further reports an auxiliary MMLU word-count task, in which the base model performs close to chance in the zero-shot setting and value-matrix fine-tuning slightly outperforms full fine-tuning in the 7-shot setting.

\section{Conclusion}\label{sec:conclusion}
We investigate the effect of task-specific fine-tuning on the in-context learning capability of Transformers with linear attention. Specifically, we derive closed-form expressions for the parameters that minimize the pretraining and fine-tuning objectives, which reveal how different fine-tuning regimes induce a distinct trade-off between zero-shot and few-shot performance. Our analysis shows that fine-tuning all parameters to optimize the zero-shot performance can degrade the few-shot performance. In contrast, restricting updates to the value matrix preserves the few-shot performance while improving the zero-shot performance. Furthermore, although incorporating an auxiliary few-shot loss enhances the performance on the target task, it may lead to degradation on out-of-distribution tasks.

\paragraph{Limitations and Future Work.}
Our theoretical analysis relies on linear attention models, which offer analytical tractability and enable closed-form characterization of optimal parameters and test errors, but do not capture all architectural and optimization features of Transformer models. Our pretraining objective is also an analytical surrogate for studying in-context learning, rather than a literal description of real-world language model pretraining. While experiments on the MMLU benchmark indicate that qualitatively similar phenomena may arise in large-scale models, these results should be interpreted as illustrative rather than conclusive.  Extending the theory to more realistic Transformer architectures, including softmax attention, multiple layers, and MLP blocks, remains an important direction for future work. Finally, our notion of task similarity is defined through task vectors in the linear regression setting, and extending this notion to real-world tasks is necessary for understanding how these insights generalize to broader parameter-efficient fine-tuning settings.

%% file: texts/3_appendix.tex
\section{Proofs}
\label{appendix:proof}

\subsection{Proofs of Preliminary Lemma}

\begin{lemma}[Theorem 1 in \citet{hagedorn2022expected}; specialized version of Theorem 2.2.9 (ii) in \citet{kollo2005advanced}] \label{thm:appendix:wishart:quadratic}
    Let $\vQ \sim \mathcal{W}(\vSigma, n)$ be a $d\times d$ random matrix following the Wishart distribution, where $\vSigma\in \mathbb{S}_{++}^d$ is positive definite and $n\in \sN$. Consider a symmetric matrix $\vA\in \mathbb{R}^{d\times d}$, then
    \begin{equation}
        \E [\vQ\vA\vQ] = n \cdot \tr (\vA\vSigma) \cdot \vSigma + n(n + 1)\vSigma \vA \vSigma.
    \end{equation}
\end{lemma}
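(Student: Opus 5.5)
Since $\vQ\sim\mathcal{W}(\vSigma,n)$, we can write $\vQ=\sum_{i=1}^n \vx_i\vx_i^\top$ with $\vx_i\iid\mathcal{N}(\vzero_d,\vSigma)$. The first step is to whiten: set $\vx_i=\vSigma^{1/2}\vz_i$ with $\vz_i\iid\mathcal{N}(\vzero_d,\vI_d)$ and $\vB:=\vSigma^{1/2}\vA\vSigma^{1/2}$, which is symmetric. Then
\begin{equation}
\vQ\vA\vQ=\vSigma^{1/2}\Big(\sum_{i,j}\vz_i\vz_i^\top\vB\vz_j\vz_j^\top\Big)\vSigma^{1/2},
\end{equation}
so it suffices to show $\E[\vW\vB\vW]=n\tr(\vB)\vI_d+n(n+1)\vB$ for $\vW=\sum_i\vz_i\vz_i^\top$. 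Conjugating back gives the claim, because $\tr(\vB)=\tr(\vA\vSigma)$ and $\vSigma^{1/2}\vB\vSigma^{1/2}=\vSigma\vA\vSigma$.

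Next, split the double sum into off-diagonal and diagonal terms. For $i\neq j$, independence and $\E[\vz\vz^\top]=\vI_d$ give $\E[\vz_i\vz_i^\top\vB\vz_j\vz_j^\top]=\vB$. There are $n(n-1)$ such pairs, so they contribute $n(n-1)\vB$. For $i=j$ we need the fourth-moment identity $\E[\vz\vz^\top\vB\vz\vz^\top]=\tr(\vB)\vI_d+2\vB$ for symmetric $\vB$. This can be checked entrywise with Isserlis' theorem: $\E[z_kz_lz_pz_q]=\delta_{kl}\delta_{pq}+\delta_{kp}\delta_{lq}+\delta_{kq}\delta_{lp}$, so the $(k,q)$ entry $\sum_{l,p}B_{lp}\E[z_kz_lz_pz_q]$ equals $\delta_{kq}\tr(\vB)+B_{kq}+B_{qk}$. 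Symmetry of $\vB$ is used exactly here to combine the last two terms into $2B_{kq}$. The $n$ diagonal terms therefore contribute $n\tr(\vB)\vI_d+2n\vB$.

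Summing the two contributions gives $n\tr(\vB)\vI_d+(n^2-n+2n)\vB=n\tr(\vB)\vI_d+n(n+1)\vB$, as required. The main obstacle is the Gaussian fourth-moment computation, which requires careful index bookkeeping. Everything else is linearity of expectation and independence. Alternatively, one could simply cite Theorem 2.2.9(ii) of \citet{kollo2005advanced}, as the statement indicates.
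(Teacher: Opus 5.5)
Your proof is correct. It differs from the paper only in that the paper gives no derivation at all: it imports the identity directly from Theorem~1 of \citet{hagedorn2022expected}, equivalently Theorem~2.2.9(ii) of \citet{kollo2005advanced}.

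You instead derive it from scratch. You represent the integer-degree Wishart matrix as $\sum_i \vx_i\vx_i^\top$ and whiten to reduce to $\vSigma=\vI_d$. You then split the double sum into $n(n-1)$ independent cross terms, each contributing $\vB$, and $n$ diagonal terms handled by Isserlis' fourth-moment formula. This is the same Isserlis-type bookkeeping the paper uses for the sixth-moment identity in Lemma~\ref{thm:appendix:normal:6moments}, so it fits the appendix naturally and makes the lemma self-contained.

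Your route also shows exactly where symmetry enters. If you do not symmetrize, the diagonal terms give $\tr(\vB)\vI_d+\vB+\vB^\top$. You then recover $n\tr(\vA\vSigma)\vSigma+n\vSigma\vA^\top\vSigma+n^2\vSigma\vA\vSigma$ for arbitrary $\vA$, of which the stated lemma is the symmetric case. Symmetry is all the paper needs, since it applies the lemma only to matrices of the form $\mathbf{M}\vSigma_{\mathrm{joint}}\mathbf{M}^\top$.

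The citation route is shorter, but it leaves the reader to match the cited sources' convention for $\mathcal{W}(\vSigma,n)$. Your derivation fixes that convention explicitly: a sum of $n$ outer products of $\mathcal{N}(\vzero_d,\vSigma)$ vectors. It would also go through for merely positive semidefinite $\vSigma$.
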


\begin{lemma}[Theorem 4.2 in \citet{magnus1979commutation}; see also \citet{Tracy01011993}]
\label{thm:appendix:multi:normal}
    Let $\vx \sim \mathcal{N}(\0_d, \vSigma)$ be a $d$-dimensional random vector following the multivariate normal distribution, where $\vSigma\in \mathbb{S}_{++}^d$ is positive definite. Consider symmetric matrices $\vA,\vB,\vC\in \mathbb{R}^{d\times d}$, then
    \begin{enumerate}[label=(\roman*), leftmargin=*]
      \item $\E[\vx^\top \vA \vx \vx^\top \vB \vx] = \tr(\vA\vSigma)\tr(\vB\vSigma) + 2 \tr(\vA\vSigma \vB\vSigma).$
      \item $\E[\vx^\top \vA \vx \vx^\top \vB \vx \vx^\top \vC \vx]
          = \tr(\vA\vSigma)\tr(\vB\vSigma)\tr(\vC\vSigma)
          +2\tr(\vA\vSigma)\tr(\vB\vSigma \vC\vSigma)$
          
          $\qquad\qquad\qquad\qquad\qquad\quad + 2\tr(\vB\vSigma)\tr(\vA\vSigma \vC\vSigma) + 2\tr(\vC\vSigma)\tr(\vA\vSigma \vB\vSigma)
           + 8\tr(\vA\vSigma \vB\vSigma \vC\vSigma)$.
    \end{enumerate}
\end{lemma}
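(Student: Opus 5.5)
The plan is to reduce to the standard Gaussian case and then count pairings with Isserlis' (Wick's) theorem. Write $\vx = \vSigma^{1/2}\vu$ with $\vu \sim \mathcal{N}(\0_d,\vI_d)$. Set $\tilde\vA := \vSigma^{1/2}\vA\vSigma^{1/2}$, and define $\tilde\vB,\tilde\vC$ the same way. These matrices are still symmetric, and $\vx^\top\vA\vx = \vu^\top\tilde\vA\vu$. By cyclicity of the trace, $\tr(\tilde\vA)=\tr(\vA\vSigma)$, $\tr(\tilde\vA\tilde\vB)=\tr(\vA\vSigma\vB\vSigma)$ and $\tr(\tilde\vA\tilde\vB\tilde\vC)=\tr(\vA\vSigma\vB\vSigma\vC\vSigma)$. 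So it suffices to prove both identities for $\vSigma=\vI_d$.

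For (i), expand $\vu^\top\tilde\vA\vu\,\vu^\top\tilde\vB\vu=\sum_{i,j,k,l}\tilde A_{ij}\tilde B_{kl}u_iu_ju_ku_l$. Isserlis gives $\E[u_iu_ju_ku_l]=\delta_{ij}\delta_{kl}+\delta_{ik}\delta_{jl}+\delta_{il}\delta_{jk}$. The first pairing contributes $\tr\tilde\vA\,\tr\tilde\vB$. The other two contribute $\tr(\tilde\vA\tilde\vB^\top)$ and $\tr(\tilde\vA\tilde\vB)$, and these coincide by symmetry of $\tilde\vB$. This yields $\tr\tilde\vA\tr\tilde\vB+2\tr(\tilde\vA\tilde\vB)$.

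For (ii), the sixth moment $\E[u_{i_1}\cdots u_{i_6}]$ is a sum over the $15$ perfect matchings of the six indices. Each quadratic form supplies one pair of index slots, and I would classify matchings by how they connect the three pairs.
\begin{enumerate}
\item One matching pairs every form with itself. It gives $\tr\tilde\vA\tr\tilde\vB\tr\tilde\vC$.
\item In the next type, exactly one form is self-paired and the other two are cross-linked, which can happen in $2$ ways. This type gives $2\tr\tilde\vA\tr(\tilde\vB\tilde\vC)$ and its two analogues, so $6$ matchings in total.
\item The remaining $8$ matchings connect all three forms in a single cycle. Each yields $\tr$ of a product of $\tilde\vA,\tilde\vB,\tilde\vC$ or their transposes, in the cyclic order $ABC$ or $ACB$.
\end{enumerate}
Using symmetry, transposes can be dropped. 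Moreover $\tr(\tilde\vA\tilde\vC\tilde\vB)=\tr\big((\tilde\vA\tilde\vC\tilde\vB)^\top\big)=\tr(\tilde\vB\tilde\vC\tilde\vA)=\tr(\tilde\vA\tilde\vB\tilde\vC)$, so all $8$ cyclic matchings give the same value. This yields the term $8\tr(\tilde\vA\tilde\vB\tilde\vC)$, and the count $1+6+8=15$ serves as a check.

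The main obstacle is the bookkeeping in (ii): verifying that every cyclic matching really reduces to the same trace, and that the multiplicities are $2$ and $8$. I would organize this by viewing each matching as a graph on three vertices (the forms) with two half-edges each, and reading off its cycle decomposition. As a cross-check I would use the moment generating function $\E[e^{t_1 Q_A+t_2Q_B+t_3Q_C}]=\det(\vI-2(t_1\tilde\vA+t_2\tilde\vB+t_3\tilde\vC))^{-1/2}$. Writing it as $\exp\big(\sum_k \tfrac{2^{k-1}}{k}\tr(\cdot)^k\big)$ and extracting the coefficient of $t_1t_2t_3$ should reproduce the coefficients $1,2,8$ via the cumulants $\kappa_1=\tr$, $\kappa_2=2\tr(\cdot\,\cdot)$, $\kappa_3=8\tr(\cdot\,\cdot\,\cdot)$.
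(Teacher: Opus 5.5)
Your proof is correct. The paper gives no proof of this lemma: it imports it from \citet{magnus1979commutation} (Theorem 4.2) and \citet{Tracy01011993}, where such identities are obtained by matrix calculus (commutation matrices, Kronecker products, matrix derivatives).

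Your route is self-contained. You whiten to $\vSigma=\vI_d$ and sort the $15$ Wick pairings into $1$ fully self-paired matching, $6$ with exactly one self-paired form, and $8$ three-cycles. This explains where the coefficients $1$, $2$, $8$ come from. Symmetry is used exactly where it must be, to identify $\tr(\tilde\vA\tilde\vB^\top)$ with $\tr(\tilde\vA\tilde\vB)$ and $\tr(\tilde\vA\tilde\vC\tilde\vB)$ with $\tr(\tilde\vA\tilde\vB\tilde\vC)$.

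It is also the same Isserlis technique the paper uses for the matrix-valued sixth moment in Lemma~\ref{thm:appendix:normal:6moments}. There the paper does not whiten, so it carries the entries of $\vSigma$ through all $15$ terms. In fact (ii) follows from that lemma by applying $\tr(\vC\,\cdot\,)$. Indeed, $\vx^\top\vA\vx\,\vx^\top\vB\vx\,\vx^\top\vC\vx=\tr(\vC\vx\vx^\top\vA\vx\vx^\top\vB\vx\vx^\top)$, and the terms $4\vSigma\vA\vSigma\vB\vSigma+4\vSigma\vB\vSigma\vA\vSigma$ contract to $8\tr(\vA\vSigma\vB\vSigma\vC\vSigma)$.

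Your cumulant cross-check is arguably the cleanest argument of all. It uses $\log\mathbb{E}[e^{\mathbf{u}^\top\mathbf{M}\mathbf{u}}]=\sum_{k\ge 1}\frac{2^{k-1}}{k}\tr(\mathbf{M}^k)$, valid when $\mathbf{I}-2\mathbf{M}\succ 0$; write $\tr(\mathbf{M}^k)$ there, not $\tr(\mathbf{M})^k$. The joint cumulants $\tr\tilde\vA$, $2\tr(\tilde\vA\tilde\vB)$ and $8\tr(\tilde\vA\tilde\vB\tilde\vC)$ then drop out directly. The moment--cumulant formula yields (i) and (ii) with no matching bookkeeping and extends at once to products of more forms. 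The matching count, by contrast, makes the combinatorial origin of each term visible.
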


\begin{lemma}
\label{thm:appendix:normal:6moments}
Let $\vx\sim\mathcal N(\mathbf 0_d,\vSigma)$ be a $d$-dimensional random vector where 
$\vSigma\in\mathbb{S}_{++}^d$ is positive definite. Consider matrices $\vA,\vB\in\sR^{d\times d}$, then
\begin{align}
\E [\vx\vx^\top \vA \vx\vx^\top \vB \vx\vx^\top ]
    &=
    \tr(\vA\vSigma) \tr(\vB \vSigma ) \cdot \vSigma
    + \tr(\vA\vSigma \vB \vSigma ) \cdot \vSigma
    + \tr(\vA\vSigma \vB^\top \vSigma) \cdot \vSigma
    + \tr(\vA\vSigma ) \cdot \vSigma(\vB + \vB^\top) \vSigma
    \\&\quad
    + \tr(\vB\vSigma) \cdot \vSigma(\vA + \vA^\top) \vSigma
    + \vSigma(\vA + \vA^\top)\vSigma(\vB + \vB^\top)\vSigma
    + \vSigma(\vB + \vB^\top)\vSigma(\vA + \vA^\top)\vSigma
    .
\end{align}
If $\vA = \vB^\top$, it reduces to
\begin{align}
    \E[\vx\vx^\top \vA \vx\vx^\top \vB \vx\vx^\top ]
    &=
    \tr(\vA\vSigma)^2 \cdot \vSigma
    + \tr(\vA\vSigma \vA^\top \vSigma ) \cdot \vSigma
    + \tr(\vA\vSigma \vA \vSigma) \cdot \vSigma
    \\&\quad
    + 2\tr(\vA\vSigma ) \cdot \vSigma(\vA + \vA^\top) \vSigma
    + 2\vSigma(\vA + \vA^\top)\vSigma(\vA + \vA^\top)\vSigma
    .
\end{align}
\end{lemma}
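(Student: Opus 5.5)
The plan is to reduce the matrix-valued sixth moment to scalar sixth moments of quadratic forms. Those scalar moments are already given by part (ii) of Lemma~\ref{thm:appendix:multi:normal}.

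First, since $\vx^\top\vA\vx = \vx^\top\vA_s\vx$ with $\vA_s := \tfrac12(\vA+\vA^\top)$, and likewise for $\vB_s := \tfrac12(\vB+\vB^\top)$, we may replace $\vA,\vB$ by their symmetric parts. Next, fix indices $i,j\in[d]$ and set $\vC_{ij} := \tfrac12(e_ie_j^\top + e_je_i^\top)$, which is symmetric. Then
\begin{equation}
[\E[\vx\vx^\top \vA \vx\vx^\top \vB \vx\vx^\top]]_{i,j} = \E[x_ix_j\,\vx^\top\vA_s\vx\,\vx^\top\vB_s\vx] = \E[\vx^\top\vA_s\vx\,\vx^\top\vB_s\vx\,\vx^\top\vC_{ij}\vx].
\end{equation}
Applying Lemma~\ref{thm:appendix:multi:normal}(ii) with $(\vA_s,\vB_s,\vC_{ij})$ expresses this entry through traces involving $\vC_{ij}$. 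Each such trace is an entry of a matrix:
\begin{equation}
\tr(\vC_{ij}\vSigma)=[\vSigma]_{i,j},\qquad \tr(\vA_s\vSigma\vC_{ij}\vSigma)=[\vSigma\vA_s\vSigma]_{i,j},\qquad \tr(\vA_s\vSigma\vB_s\vSigma\vC_{ij}\vSigma)=\tfrac12[\vSigma\vA_s\vSigma\vB_s\vSigma+\vSigma\vB_s\vSigma\vA_s\vSigma]_{i,j},
\end{equation}
where the last identity holds because $\vC_{ij}$ symmetrizes the product. Collecting the entries gives the matrix identity
\begin{equation}
\tr(\vA_s\vSigma)\tr(\vB_s\vSigma)\vSigma + 2\tr(\vA_s\vSigma\vB_s\vSigma)\vSigma + 2\tr(\vA_s\vSigma)\vSigma\vB_s\vSigma + 2\tr(\vB_s\vSigma)\vSigma\vA_s\vSigma + 4\big(\vSigma\vA_s\vSigma\vB_s\vSigma+\vSigma\vB_s\vSigma\vA_s\vSigma\big).
\end{equation}

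The remaining step, which is the only place needing care, is to translate back to $\vA,\vB$. We have $\tr(\vA_s\vSigma)=\tr(\vA\vSigma)$, $2\vSigma\vB_s\vSigma=\vSigma(\vB+\vB^\top)\vSigma$, and $4\vSigma\vA_s\vSigma\vB_s\vSigma=\vSigma(\vA+\vA^\top)\vSigma(\vB+\vB^\top)\vSigma$. For the pure trace term, expand $\tr(\vA_s\vSigma\vB_s\vSigma)$ into its four pieces. Transposition and cyclicity give $\tr(\vA^\top\vSigma\vB^\top\vSigma)=\tr(\vA\vSigma\vB\vSigma)$ and $\tr(\vA^\top\vSigma\vB\vSigma)=\tr(\vA\vSigma\vB^\top\vSigma)$, so $2\tr(\vA_s\vSigma\vB_s\vSigma)=\tr(\vA\vSigma\vB\vSigma)+\tr(\vA\vSigma\vB^\top\vSigma)$. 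This yields exactly the general formula.

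For the special case, substitute $\vB=\vA^\top$. Then $\tr(\vB\vSigma)=\tr(\vA\vSigma)$, $\vB+\vB^\top=\vA+\vA^\top$, $\tr(\vA\vSigma\vB\vSigma)=\tr(\vA\vSigma\vA^\top\vSigma)$, and $\tr(\vA\vSigma\vB^\top\vSigma)=\tr(\vA\vSigma\vA\vSigma)$. The two pairs of identical terms then merge into the factors of $2$ in the reduced formula.
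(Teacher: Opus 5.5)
Your proof is correct, and it takes a different route from the paper. The paper works entrywise straight from Isserlis' theorem. It writes $[\vx\vx^\top\vA\vx\vx^\top\vB\vx\vx^\top]_{ij}$ as a quadruple sum of $a_{kl}b_{mn}\,x_ix_kx_lx_mx_nx_j$ and expands the sixth moment into its $15$ pairings. It then regroups each pairing into a matrix term, using transposed entries $a^\top_{lk}$, $b^\top_{nm}$ to handle non-symmetric $\vA,\vB$, and the resulting $15$ matrix terms collapse into the stated seven. You instead note that each entry equals the scalar $\E[\vx^\top\vA_s\vx\,\vx^\top\vB_s\vx\,\vx^\top\vC_{ij}\vx]$ for a symmetric test matrix $\vC_{ij}$. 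The whole identity then follows from the already-cited triple-quadratic-form formula, Lemma~\ref{thm:appendix:multi:normal}(ii), together with three ingredients: the trace-to-entry identities against $\vC_{ij}$, the symmetrization bookkeeping $2\tr(\vA_s\vSigma\vB_s\vSigma)=\tr(\vA\vSigma\vB\vSigma)+\tr(\vA\vSigma\vB^\top\vSigma)$, and the substitution $\vB=\vA^\top$ for the special case. All of these check out. Your route is much shorter and explains the structure of the answer. The two scalar-trace terms are a single symmetrized trace, and the last two matrix terms arise because $\vC_{ij}$ forces the symmetrization of $\vSigma\vA_s\vSigma\vB_s\vSigma$. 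The cost is that correctness is inherited from the cited formula, which is stated for symmetric arguments, so your symmetrization step is essential rather than cosmetic. The paper's Isserlis computation is self-contained and handles non-symmetric matrices directly, at the price of error-prone index bookkeeping. Notably, the paper uses Lemma~\ref{thm:appendix:multi:normal} only afterwards as a sanity check in the case $\vA=\vB=\vI$, whereas in your argument that consistency is built in.
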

\begin{proof}
Denote $x_i := [\vx]_i$, $\sigma_{i,j} := [\vSigma]_{i,j}$, $a_{i,j} := [\vA]_{i,j}$ and $b_{i,j} := [\vB]_{i,j}$. Then 
\begin{equation}
    [\vx\vx^\top \vA  \vx\vx^\top \vB \vx\vx^\top ]_{ij}
    = \sum_{k \in [d]}\sum_{l \in [d]}\sum_{m \in [d]}\sum_{n \in [d]}  a_{kl}b_{mn} \cdot x_i x_k x_l x_m x_n x_j
    .
\end{equation}

By \citet{isserlis1918formula} theorem, for $i,j,k,l,m \in [d]$, we have
\begin{align}
    \E [ x_i x_j x_k x_l x_m x_n]
    &=
    \sigma_{ij}\sigma_{kl}\sigma_{mn}
    + \sigma_{ij}\sigma_{km}\sigma_{nl}
    + \sigma_{ij}\sigma_{kn}\sigma_{ml}
    + \sigma_{ik}\sigma_{lj}\sigma_{mn}
    + \sigma_{ik}\sigma_{lm}\sigma_{nj}
    + \sigma_{ik}\sigma_{ln}\sigma_{mj}
    \\&\quad
    + \sigma_{il}\sigma_{kj}\sigma_{mn}
    + \sigma_{il}\sigma_{km}\sigma_{nj}
    + \sigma_{il}\sigma_{kn}\sigma_{mj}
    + \sigma_{im}\sigma_{nj}\sigma_{kl}
    + \sigma_{im}\sigma_{nk}\sigma_{lj}
    + \sigma_{im}\sigma_{nl}\sigma_{kj}
    \\&\quad
    + \sigma_{in}\sigma_{mj}\sigma_{kl}
    + \sigma_{in}\sigma_{mk}\sigma_{lj}
    + \sigma_{in}\sigma_{ml}\sigma_{kj}
    .
\end{align}

We further denote $a^\top_{i,j} := [\vA^\top]_{i,j}$ and $b^\top_{i,j} := [\vB^\top]_{i,j}$, which implies $a_{i,j}=a^\top_{j,i}$ and $b_{i,j}=b^\top_{j,i}$, then
\begin{align}
    a_{kl}b_{mn} \cdot \E [ x_i x_j x_k x_l x_m x_n]
    &=
    \sigma_{ij}
    \cdot \sigma_{kl}a^\top_{lk} \cdot \sigma_{mn} b^\top_{nm}
    + \sigma_{ij}
    \left( 
        \sigma_{km}b_{mn}\sigma_{nl} a^\top_{lk}
        + \sigma_{kn}b^\top_{nm}\sigma_{ml} a^\top_{lk}
    \right)
    \\&\quad
    + \sigma_{ik}a_{kl}\sigma_{lj}\cdot\sigma_{mn}b^\top_{nm}
    + \sigma_{ik}a_{kl}\sigma_{lm}b_{mn}\sigma_{nj}
    + \sigma_{ik}a_{kl}\sigma_{ln}b^\top_{nm}\sigma_{mj}
    \\&\quad
    + \sigma_{il}a^\top_{lk}\sigma_{kj}\cdot\sigma_{mn}b^\top_{nm}
    + \sigma_{il}a^\top_{lk}\sigma_{km}b_{mn}\sigma_{nj}
    + \sigma_{il}a^\top_{lk}\sigma_{kn}b^\top_{nm}\sigma_{mj}
    \\&\quad
    + \sigma_{im}b_{mn}\sigma_{nj}\cdot\sigma_{kl}a^\top_{lk}
    + \sigma_{im}b_{mn}\sigma_{nk}a_{kl}\sigma_{lj}
    + \sigma_{im}b_{mn}\sigma_{nl}a^\top_{lk}\sigma_{kj}
    \\&\quad
    + \sigma_{in}b^\top_{nm}\sigma_{mj}\cdot\sigma_{kl}a^\top_{lk}
    + \sigma_{in}b^\top_{nm}\sigma_{mk}a_{kl}\sigma_{lj}
    + \sigma_{in}b^\top_{nm}\sigma_{ml}a^\top_{lk}\sigma_{kj}
    .
\end{align}

Therefore,
\begin{align}
    &\E[\vx\vx^\top \vA \vx\vx^\top \vB \vx\vx^\top ]_{ij}
    \\
    &= \sum_{k \in [d]}\sum_{l \in [d]}\sum_{m \in [d]}\sum_{n \in [d]}  a_{kl}b_{mn} \cdot \E [ x_i x_k x_l x_m x_n x_j ]
    \\&=
    \sigma_{ij} \cdot \tr(\vSigma \vA^\top) \tr(\vSigma \vB^\top)
    + \sigma_{ij} \cdot
    \left( 
    \tr(\vSigma \vB \vSigma \vA^\top)
    + \tr(\vSigma \vB^\top \vSigma \vA^\top)
    \right)
    + \sum_{k \in [d]}\sum_{l \in [d]}\sigma_{ik}a_{kl}\sigma_{lj}\cdot\tr(\vSigma \vB^\top)
    \\&\quad
    + \sum_{k \in [d]}\sum_{l \in [d]}\sum_{m \in [d]}\sum_{n \in [d]}
    \left(
        \sigma_{ik}a_{kl}\sigma_{lm}b_{mn}\sigma_{nj}
        + \sigma_{ik}a_{kl}\sigma_{ln}b^\top_{nm}\sigma_{mj}
    \right)
    + \sum_{k \in [d]}\sum_{l \in [d]}\sigma_{il}a^\top_{lk}\sigma_{kj}\cdot\tr(\vSigma \vB^\top)
    \\&\quad
    + \sum_{k \in [d]}\sum_{l \in [d]}\sum_{m \in [d]}\sum_{n \in [d]}
    \left(
        \sigma_{il}a^\top_{lk}\sigma_{km}b_{mn}\sigma_{nj}
        + \sigma_{il}a^\top_{lk}\sigma_{kn}b^\top_{nm}\sigma_{mj}
    \right)
    + \sum_{m \in [d]}\sum_{n \in [d]}\sigma_{im}b_{mn}\sigma_{nj}\cdot\tr(\vSigma \vA^\top)
    \\&\quad
    + \sum_{k \in [d]}\sum_{l \in [d]}\sum_{m \in [d]}\sum_{n \in [d]}
    \left(
        \sigma_{im}b_{mn}\sigma_{nk}a_{kl}\sigma_{lj}
        + \sigma_{im}b_{mn}\sigma_{nl}a^\top_{lk}\sigma_{kj}
    \right)
    + \sum_{m \in [d]}\sum_{n \in [d]}\sigma_{in}b^\top_{nm}\sigma_{mj}\cdot\tr(\vSigma \vA^\top)
    \\&\quad
    + \sum_{k \in [d]}\sum_{l \in [d]}\sum_{m \in [d]}\sum_{n \in [d]}
    \left(
        \sigma_{in}b^\top_{nm}\sigma_{mk}a_{kl}\sigma_{lj}
        + \sigma_{in}b^\top_{nm}\sigma_{ml}a^\top_{lk}\sigma_{kj}
    \right)
    ,
\end{align}
which implies 
\begin{align}
    & \E[\vx\vx^\top \vA \vx\vx^\top \vB \vx\vx^\top ]
    \\
    &=
    \vSigma \cdot \tr(\vSigma \vA^\top) \tr(\vSigma \vB^\top)
    + \vSigma \cdot \tr(\vSigma \vB \vSigma \vA^\top)
    + \vSigma \cdot \tr(\vSigma \vB^\top \vSigma \vA^\top)
    \\&\quad
    + \vSigma\vA\vSigma \cdot\tr(\vSigma \vB^\top)
    + \vSigma\vA\vSigma\vB\vSigma + \vSigma\vA\vSigma\vB^\top\vSigma
    + \vSigma\vA^\top\vSigma \cdot\tr(\vSigma \vB^\top)
    + \vSigma\vA^\top\vSigma\vB\vSigma + \vSigma\vA^\top\vSigma\vB^\top\vSigma
    \\&\quad
    + \vSigma\vB\vSigma \cdot\tr(\vSigma \vA^\top)
    + \vSigma\vB\vSigma\vA\vSigma + \vSigma\vB\vSigma\vA^\top\vSigma
    + \vSigma\vB^\top\vSigma \cdot\tr(\vSigma \vA^\top)
    + \vSigma\vB^\top\vSigma\vA\vSigma + \vSigma\vB^\top\vSigma\vA^\top\vSigma
    \\
    &=
    \tr(\vA\vSigma) \tr(\vB \vSigma ) \cdot \vSigma
    + \tr(\vA\vSigma \vB \vSigma ) \cdot \vSigma
    + \tr(\vA\vSigma \vB^\top \vSigma) \cdot \vSigma
    + \tr(\vA\vSigma ) \cdot \vSigma(\vB + \vB^\top) \vSigma
    \\&\quad
    + \tr(\vB\vSigma) \cdot \vSigma(\vA + \vA^\top) \vSigma
    + \vSigma(\vA + \vA^\top)\vSigma(\vB + \vB^\top)\vSigma
    + \vSigma(\vB + \vB^\top)\vSigma(\vA + \vA^\top)\vSigma
    .
\end{align}
Moreover, if $\vA = \vB^\top$, we have
\begin{align}
    &\E[\vx\vx^\top \vA \vx\vx^\top \vB \vx\vx^\top ]
    \\
    &=
    \tr(\vA\vSigma) \tr(\vA^\top \vSigma ) \cdot \vSigma
    + \tr(\vA\vSigma \vA^\top \vSigma ) \cdot \vSigma
    + \tr(\vA\vSigma \vA \vSigma) \cdot \vSigma
    + \tr(\vA\vSigma ) \cdot \vSigma(\vA + \vA^\top) \vSigma
    \\&\quad
    + \tr(\vA^\top\vSigma) \cdot \vSigma(\vA + \vA^\top) \vSigma
    + 2\vSigma(\vA + \vA^\top)\vSigma(\vA + \vA^\top)\vSigma
    \\ &=
    \tr(\vA\vSigma)^2 \cdot \vSigma
    + \tr(\vA\vSigma \vA^\top \vSigma ) \cdot \vSigma
    + \tr(\vA\vSigma \vA \vSigma) \cdot \vSigma
    + 2\tr(\vA\vSigma ) \cdot \vSigma(\vA + \vA^\top) \vSigma
    + 2\vSigma(\vA + \vA^\top)\vSigma(\vA + \vA^\top)\vSigma
    ,
\end{align}
where the last equality follows from the fact that $\tr(\vA\vSigma) = \tr(\vSigma\vA^\top)= \tr(\vA^\top \vSigma )$.
One can confirm that this result for $\vA=\vB=\vI$ is consistent with Lemma~\ref{thm:appendix:multi:normal} in the special case where $\vA=\vB=\vC=\vI$.
\end{proof}

\clearpage
\subsection{Proofs of Statistical Properties of Model Outputs}

Given the task vector $\vtheta_0\in\sR^d$, the test data point $(\vx_i,y_i)$ for $i\in[n+1]$ is independently generated as
\begin{align}
\vx_i &\iid \mathcal{N} (\mathbf 0_d,\vSigma), &
e_i &\iid \mathcal{N}(0,\sigma^{2}), &
y_i &= \vtheta_0^{\top}\vx_i + e_i
.
\end{align}

It follows that for each $i\in[n+1]$, we have $y_i \iid \mathcal{N}(0, \vtheta_0^\top \vSigma \vtheta_0 + \sigma^2)$ and thus
\begin{equation}
    \begin{bmatrix} \vx_i \\ y_i \end{bmatrix}
    \iid \mathcal{N} \left( 
        \vzero_{d+1}, \vSigma_{\mathrm{joint}} 
    \right),
    \qquad
    \vSigma_{\mathrm{joint}} :=
    \begin{bmatrix} \vSigma & \vSigma\vtheta_0 \\ \vtheta_0^\top\vSigma & \vtheta_0^\top \vSigma \vtheta_0 + \sigma^2 \end{bmatrix}
    .
\end{equation}
Based on these distributions, we have to determine the statistics of the predictions, which are defined as
\begin{align}
    \hat{y}_{\mathrm{FS}}(\vZ_{[n]}, \vx_{n+1}; \vV, \vQ)
    &:=
    \frac{n}{n+1}\cdot
    \hat{y}_{\mathrm{Net\text{-}FS}}(\vZ_{[n]}, \vx_{n+1}; \vV, \vQ)
    +
    \frac{1}{n+1}\cdot
    \hat{y}_{\mathrm{ZS}}(\vx_{n+1}; \vV, \vQ),
\end{align}
where
\begin{align}
    \hat{y}_{\mathrm{Net\text{-}FS}}(\vZ_{[n]}, \vx_{n+1}; \vV, \vQ)
    &:=
        \begin{bmatrix} \vv_{21} \\ v_{22} \end{bmatrix}^\top
        \Big(
        \frac{1}{n}
        \sum_{i \in [n]}
        \begin{bmatrix} \vx_i \\ y_i \end{bmatrix} \begin{bmatrix} \vx_i \\ y_i \end{bmatrix}^\top
        \Big)
        \begin{bmatrix}
        \vQ_{11}   \\
        \vq_{21}^\top
        \end{bmatrix}
        \vx_{n+1}
        +
        \begin{bmatrix}
        \vv_{21} \\ v_{22}
        \end{bmatrix}^\top
        \!\!\!
        \Big(
        \frac{1}{n}
        \sum_{i \in [n]}
        \begin{bmatrix} \vx_i \\ y_i \end{bmatrix}
        \Big) \cdot q
    \label{eq:appendix:prediction:icl}
    ,
    \\
    \hat{y}_{\mathrm{ZS}}(\vx_{n+1}; \vV, \vQ)
    &:=
        \vv_{21}^\top
        \vx_{n+1}\vx_{n+1}^\top
        \vQ_{11} \vx_{n+1}
        +
        \vv_{21}^\top
        \vx_{n+1} \cdot q
    \label{eq:appendix:prediction:zs}
    .
\end{align}

\begin{lemma}
\label{thm:appendix:var:cov}
Given $\vV$, $\vQ$, and $\vtheta_0\in\sR^d$, (co)variances of $\hat{y}_{\mathrm{Net\text{-}FS}}(\vZ_{[n]}, \vx_{n+1}; \vV, \vQ)$ in \eqref{eq:appendix:prediction:icl}, $\hat{y}_{\mathrm{ZS}}(\vx_{n+1}; \vV, \vQ)$ in \eqref{eq:appendix:prediction:zs}, and $y_{n+1}$ are as follows:
\begin{align}
    & \Var\left[ \hat{y}_{\mathrm{Net\text{-}FS}}(\vZ_{[n]}, \vx_{n+1}; \vV, \vQ) \right]
    =
    \frac{1}{n} \cdot
        \tr \left(
        \begin{bmatrix} \vQ_{11}\\ \vq_{21}^\top \end{bmatrix} \vSigma
        \begin{bmatrix} \vQ_{11} \\ \vq_{21}^{\top} \end{bmatrix}^{\top}
        \vSigma_{\mathrm{joint}}\right)
        \cdot
        \begin{bmatrix} \vv_{21} \\ v_{22} \end{bmatrix}^\top\vSigma_{\mathrm{joint}}
    \begin{bmatrix} \vv_{21}\\ v_{22} \end{bmatrix}
    \\&\qquad\qquad\qquad\qquad\qquad\qquad\quad\quad
    +
    \frac{n+1}{n} \cdot
    \begin{bmatrix} \vv_{21} \\ v_{22} \end{bmatrix}^\top
        \vSigma_{\mathrm{joint}}
        \begin{bmatrix} \vQ_{11}\\ \vq_{21}^\top \end{bmatrix} \vSigma
        \begin{bmatrix} \vQ_{11} \\ \vq_{21}^{\top} \end{bmatrix}^{\top}
        \vSigma_{\mathrm{joint}}
    \begin{bmatrix} \vv_{21}\\ v_{22} \end{bmatrix}
    +
    \frac{q^2}{n} \cdot
    \begin{bmatrix} \vv_{21} \\ v_{22} \end{bmatrix}^\top \vSigma_{\mathrm{joint}}
    \begin{bmatrix}
      \vv_{21}\\ v_{22}
    \end{bmatrix}
    ,
    \\
    &\Var\left[ \hat{y}_{\mathrm{ZS}}(\vx_{n+1}; \vV, \vQ) \right]
    =
    \left(
        \tr(\vQ_{11}\vSigma)^2
        + \tr(\vQ_{11}\vSigma \vQ_{11}^\top \vSigma )
        + \tr(\vQ_{11}\vSigma \vQ_{11}\vSigma)
    \right)
    \cdot \vv_{21}^\top \vSigma \vv_{21}
    \\&\qquad \qquad \qquad \qquad \qquad
    + 4\tr(\vQ_{11}\vSigma )
    \cdot \vv_{21}^\top \vSigma \vQ_{11} \vSigma \vv_{21}
    + 2 \vv_{21}^\top \vSigma(\vQ_{11} + \vQ_{11}^\top)\vSigma(\vQ_{11} + \vQ_{11}^\top)\vSigma
    \vv_{21}
    \\&\qquad \qquad \qquad \qquad \qquad
    +
    2q\cdot
    \tr(\vQ_{11}\vSigma) \cdot \vv_{21}^\top\vSigma\vv_{21} 
     + 4q\cdot \vv_{21}^\top\vSigma \vQ_{11} \vSigma \vv_{21} 
    +
    q^2\cdot
    \vv_{21}^\top \vSigma \vv_{21}
    ,
    \\
    & \Var\left[ y_{n+1} \right] =  \vtheta_0^\top\vSigma \vtheta_0  + \sigma^2
    ,
    \\
    &\Cov\left[ \hat{y}_{\mathrm{Net\text{-}FS}}(\vZ_{[n]}, \vx_{n+1}; \vV, \vQ),  \hat{y}_{\mathrm{ZS}}(\vx_{n+1}; \vV, \vQ) \right]
    =
    \begin{bmatrix} \vv_{21} \\ v_{22} \end{bmatrix}^\top
    \vSigma_{\mathrm{joint}}
    \begin{bmatrix} \vQ_{11}\\ \vq_{21}^\top \end{bmatrix}
    \vSigma 
    \left(
        (\vQ_{11} +\vQ_{11}^\top)\vSigma + \tr(\vQ_{11}\vSigma)\vI_d + q \cdot \vI_d
    \right)
    \vv_{21}
    ,
    \\
    &\Cov\left[ \hat{y}_{\mathrm{Net\text{-}FS}}(\vZ_{[n]}, \vx_{n+1}; \vV, \vQ), y_{n+1} \right]
    =
    \begin{bmatrix} \vv_{21} \\ v_{22} \end{bmatrix}^\top \vSigma_{\mathrm{joint}} \begin{bmatrix} \vQ_{11}\\ \vq_{21}^\top \end{bmatrix}
    \vSigma \vtheta_0
    ,
    \\
    &\Cov\left[ \hat{y}_{\mathrm{ZS}}(\vx_{n+1}; \vV, \vQ), y_{n+1} \right]
    =
    \vv_{21}^\top 
    \left(
        \vSigma (\vQ_{11} +\vQ_{11}^\top)\vSigma + \tr(\vQ_{11}\vSigma)\cdot \vSigma
        + q\cdot \vSigma
    \right)
    \vtheta_0
    .
\end{align}
\end{lemma}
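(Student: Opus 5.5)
We prove each identity by conditioning on the query input $\vx:=\vx_{n+1}$ and using that the context vectors $\vw_i:=[\vx_i^\top\ y_i]^\top\iid\mathcal N(\vzero_{d+1},\vSigma_{\mathrm{joint}})$ are independent of $(\vx,e_{n+1})$. We abbreviate $\vu:=[\vv_{21}^\top\ v_{22}]^\top$, $\vP:=[\vQ_{11}^\top\ \vq_{21}]^\top\in\sR^{(d+1)\times d}$, $\vS:=\frac1n\sum_{i\in[n]}\vw_i\vw_i^\top$ and $\bar\vw:=\frac1n\sum_{i\in[n]}\vw_i$. Then
\[
\hat y_{\mathrm{Net\text{-}FS}}=\vu^\top\vS\vP\vx+q\,\vu^\top\bar\vw,\qquad
\hat y_{\mathrm{ZS}}=\vv_{21}^\top\vx\,\vx^\top\vQ_{11}\vx+q\,\vv_{21}^\top\vx .
\]

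First we check that all means vanish, so that (co)variances equal raw second moments. Indeed, $\E[\vu^\top\vS\vP\vx]=\vu^\top\vSigma_{\mathrm{joint}}\vP\,\E[\vx]=0$ and $\E[\bar\vw]=\vzero$. Moreover, $\hat y_{\mathrm{ZS}}$ is an odd polynomial in the Gaussian vector $\vx$, so $\E[\hat y_{\mathrm{ZS}}]=0$.

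\emph{Variance of $\hat y_{\mathrm{Net\text{-}FS}}$.} We expand the square into three terms.
\begin{enumerate}
\item The cross term $\E[\vu^\top\vS\vP\vx\cdot\vu^\top\bar\vw]$ vanishes, because $\vx$ is independent of the context and has mean zero.
\item The $q^2$ term is $q^2\vu^\top\Cov(\bar\vw)\vu=\frac{q^2}{n}\vu^\top\vSigma_{\mathrm{joint}}\vu$.
\item For the main term, we condition on $\vx$ and write $\E[(\vu^\top\vS\vP\vx)^2\mid\vx]=\vu^\top\E[\vS\vM\vS]\vu$ with the symmetric matrix $\vM:=\vP\vx\vx^\top\vP^\top$. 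Since $n\vS\sim\mathcal W(\vSigma_{\mathrm{joint}},n)$, Lemma~\ref{thm:appendix:wishart:quadratic} gives
\[
\E[\vS\vM\vS]=\tfrac1n\tr(\vM\vSigma_{\mathrm{joint}})\vSigma_{\mathrm{joint}}+\tfrac{n+1}{n}\vSigma_{\mathrm{joint}}\vM\vSigma_{\mathrm{joint}} .
\]
This expression is linear in $\vM$, so averaging over $\vx$ simply replaces $\vM$ by $\vP\vSigma\vP^\top$. This reproduces the first two terms of the claimed formula.
\end{enumerate}

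\emph{Variance of $\hat y_{\mathrm{ZS}}$.} We write $\E[\hat y_{\mathrm{ZS}}^2]$ as three pieces.
\begin{enumerate}
\item The sixth-moment piece is $\vv_{21}^\top\E[\vx\vx^\top\vQ_{11}\vx\vx^\top\vQ_{11}^\top\vx\vx^\top]\vv_{21}$. This uses $(\vx^\top\vQ_{11}\vx)^2=\vx^\top\vQ_{11}\vx\cdot\vx^\top\vQ_{11}^\top\vx$. We evaluate it with the reduced form of Lemma~\ref{thm:appendix:normal:6moments}, taking $\vA=\vQ_{11}$ and $\vB=\vQ_{11}^\top$. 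The term $2\tr(\vA\vSigma)\vSigma(\vA+\vA^\top)\vSigma$ becomes $4\tr(\vQ_{11}\vSigma)\vv_{21}^\top\vSigma\vQ_{11}\vSigma\vv_{21}$ inside the quadratic form.
\item The $2q$ cross piece is a fourth moment. We use $\E[\vx\vx^\top\vQ_{11}\vx\vx^\top]=\tr(\vQ_{11}\vSigma)\vSigma+\vSigma(\vQ_{11}+\vQ_{11}^\top)\vSigma$, which is Isserlis' theorem or the $\vB=\vI$ case of Lemma~\ref{thm:appendix:multi:normal}. It yields $2q\tr(\vQ_{11}\vSigma)\vv_{21}^\top\vSigma\vv_{21}+4q\,\vv_{21}^\top\vSigma\vQ_{11}\vSigma\vv_{21}$.
\item The $q^2$ piece is $q^2\vv_{21}^\top\vSigma\vv_{21}$.
\end{enumerate}

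\emph{Remaining identities.} $\Var[y_{n+1}]$ is immediate from $\vSigma_{\mathrm{joint}}$.
\begin{itemize}
\item For $\Cov[\hat y_{\mathrm{Net\text{-}FS}},\hat y_{\mathrm{ZS}}]$, all terms involving $\bar\vw$ drop out by independence and zero mean. Taking $\E[\vS]=\vSigma_{\mathrm{joint}}$ first leaves
\[
\vu^\top\vSigma_{\mathrm{joint}}\vP\,\E[\vx\vx^\top\vQ_{11}\vx\vx^\top+q\vx\vx^\top]\vv_{21},
\]
which equals the stated formula by the same fourth-moment identity.
\item For the covariances with $y_{n+1}=\vtheta_0^\top\vx+e_{n+1}$, the noise $e_{n+1}$ is independent of everything else. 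This gives $\vu^\top\vSigma_{\mathrm{joint}}\vP\vSigma\vtheta_0$ for the first. The second is $\vv_{21}^\top(\E[\vx\vx^\top\vQ_{11}\vx\vx^\top]+q\vSigma)\vtheta_0$.
\end{itemize}

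The main obstacle is the Wishart step. One must first condition on $\vx$ so that $\vM$ is a fixed symmetric matrix, as required by Lemma~\ref{thm:appendix:wishart:quadratic}. One must also track the normalization $\vS=\frac1n(n\vS)$ correctly, since this is what produces the factors $\frac1n$ and $\frac{n+1}{n}$. The remaining care is in using $\vQ_{11}^\top$ (not $\vQ_{11}$) as $\vB$ in the sixth-moment lemma, and in symmetrizing quadratic forms such as $\vv^\top\vSigma(\vQ_{11}+\vQ_{11}^\top)\vSigma\vv=2\vv^\top\vSigma\vQ_{11}\vSigma\vv$.
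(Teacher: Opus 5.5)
Your proposal is correct and follows essentially the same route as the paper: zero means by independence and oddness, Lemma~\ref{thm:appendix:wishart:quadratic} for the $\vSigma_{\mathrm{joint}}$ term, Lemma~\ref{thm:appendix:normal:6moments} with $\vB=\vQ_{11}^\top$ for the sixth moment, and the Gaussian fourth-moment identity for the remaining terms. The only procedural difference is that you apply the Wishart lemma conditionally on $\vx$ and then average by linearity, whereas the paper integrates $\vx$ out first and applies it to $\mathbf{P}\vSigma\mathbf{P}^\top$ directly; the two are equivalent.

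One small misattribution: $\E[\vx\vx^\top\vQ_{11}\vx\vx^\top]=\tr(\vQ_{11}\vSigma)\vSigma+\vSigma(\vQ_{11}+\vQ_{11}^\top)\vSigma$ is not the $\vB=\vI$ case of Lemma~\ref{thm:appendix:multi:normal}, since that case only gives the scalar $\E[\vx^\top\vA\vx\,\vx^\top\vx]$. The identity follows directly from Isserlis' theorem. For the scalar $2q$ piece, the paper instead takes $\vA=\vv_{21}\vv_{21}^\top$ and $\vB=\vQ_{11}+\vQ_{11}^\top$ in that lemma.
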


\begin{proof}
First, note that $\hat{y}_{\mathrm{Net\text{-}FS}}(\vZ_{[n]}, \vx_{n+1}; \vV, \vQ)$, $\hat{y}_{\mathrm{ZS}}(\vx_{n+1}; \vV, \vQ)$, and $y_{n+1}$ are linear or cubic function of the zero-mean Gaussian variable $\vx_{n+1}$. By \citet{isserlis1918formula} theorem, we have 
\begin{equation}
    \E[\hat{y}_{\mathrm{Net\text{-}FS}}(\vZ_{[n]}, \vx_{n+1}; \vV, \vQ)] 
    = \E[\hat{y}_{\mathrm{ZS}}(\vx_{n+1}; \vV, \vQ)]
    = \E[y_{n+1}] 
    = 0.
\end{equation}
Therefore, it suffices to determine their second moments, as they coincide with the (co)variances.

From the definition of $\hat{y}_{\mathrm{Net\text{-}FS}}(\vZ_{[n]}, \vx_{n+1}; \vV, \vQ)$ in \eqref{eq:appendix:prediction:icl}, we have
\begin{align}
    \Var\left[ \hat{y}_{\mathrm{Net\text{-}FS}}(\vZ_{[n]}, \vx_{n+1}; \vV, \vQ) \right]
    &=
    \E\left[ \hat{y}_{\mathrm{Net\text{-}FS}}(\vZ_{[n]}, \vx_{n+1}; \vV, \vQ)^2 \right]
    \\&=
    \E\left[ 
        \begin{bmatrix} \vv_{21} \\ v_{22} \end{bmatrix}^\top
        \!\bigg(\frac{1}{n}\sum_{i\in[n]}\begin{bmatrix} \vx_i \\ y_i \end{bmatrix}\begin{bmatrix} \vx_i \\ y_i \end{bmatrix}^{\top}\bigg)
        \begin{bmatrix} \vQ_{11}\\ \vq_{21}^\top \end{bmatrix} \vSigma
        \begin{bmatrix} \vQ_{11} \\ \vq_{21}^{\top} \end{bmatrix}^{\top}
        \bigg(\frac{1}{n}\sum_{i\in[n]}\begin{bmatrix} \vx_i \\ y_i \end{bmatrix}\begin{bmatrix} \vx_i \\ y_i \end{bmatrix}^{\top}\bigg)
        \!\begin{bmatrix} \vv_{21}\\ v_{22} \end{bmatrix}
    \right]
    \\&\quad
    +
    \E\left[ 
        \begin{bmatrix} \vv_{21} \\ v_{22} \end{bmatrix}^\top
        \!\bigg(\frac{1}{n}\sum_{i\in[n]}\begin{bmatrix} \vx_i \\ y_i \end{bmatrix}\bigg)
        \bigg(\frac{1}{n}\sum_{i\in[n]}\begin{bmatrix} \vx_i \\ y_i \end{bmatrix}^{\top}\bigg)
        \begin{bmatrix}
          \vv_{21}\\ v_{22}
        \end{bmatrix}
        \!\cdot\! q^2
    \right]
    \label{eq:var:icl:isserlis}
    \\&=
    \frac{1} {n^2}
    \begin{bmatrix} \vv_{21} \\ v_{22} \end{bmatrix}^\top
    \E\left[ 
        \sum_{i\in[n]}\begin{bmatrix} \vx_i \\ y_i \end{bmatrix}\begin{bmatrix} \vx_i \\ y_i \end{bmatrix}^{\top}
        \begin{bmatrix} \vQ_{11}\\ \vq_{21}^\top \end{bmatrix} \vSigma
        \begin{bmatrix} \vQ_{11} \\ \vq_{21}^{\top} \end{bmatrix}^{\top}
        \sum_{i\in[n]}\begin{bmatrix} \vx_i \\ y_i \end{bmatrix}\begin{bmatrix} \vx_i \\ y_i \end{bmatrix}^{\top}
    \right]
    \begin{bmatrix} \vv_{21}\\ v_{22} \end{bmatrix}
    \\&\quad
    +
    \frac{q^2} {n^2}
    \begin{bmatrix} \vv_{21} \\ v_{22} \end{bmatrix}^\top
    \E\left[ 
        \bigg(\sum_{i\in[n]}\begin{bmatrix} \vx_i \\ y_i \end{bmatrix}\bigg)
        \bigg(\sum_{i\in[n]}\begin{bmatrix} \vx_i \\ y_i \end{bmatrix}^{\top}\bigg)
    \right]
    \begin{bmatrix}
      \vv_{21}\\ v_{22}
    \end{bmatrix}
    ,
\end{align}
where \eqref{eq:var:icl:isserlis} holds from \citet{isserlis1918formula} theorem.

Note that $\sum_{i\in[n]}\begin{bmatrix} \vx_i \\ y_i \end{bmatrix}\begin{bmatrix} \vx_i \\ y_i \end{bmatrix}^{\top}$ follows the Wishart distribution $\mathcal{W}(\vSigma_{\mathrm{joint}}, n)$ because $\begin{bmatrix} \vx_i \\ y_i \end{bmatrix} \iid \mathcal{N} \left(  \vzero_{d+1}, \vSigma_{\mathrm{joint}} \right)$ for all $i \in [n]$. By using Lemma~\ref{thm:appendix:wishart:quadratic}, we have
\begin{align}
    &
    \E\left[ 
        \sum_{i\in[n]}\begin{bmatrix} \vx_i \\ y_i \end{bmatrix}\begin{bmatrix} \vx_i \\ y_i \end{bmatrix}^{\top}
        \begin{bmatrix} \vQ_{11}\\ \vq_{21}^\top \end{bmatrix} \vSigma
        \begin{bmatrix} \vQ_{11} \\ \vq_{21}^{\top} \end{bmatrix}^{\top}
        \sum_{i\in[n]}\begin{bmatrix} \vx_i \\ y_i \end{bmatrix}\begin{bmatrix} \vx_i \\ y_i \end{bmatrix}^{\top}
    \right]
    \\&
    =
    n \cdot \tr \left(
         \begin{bmatrix} \vQ_{11}\\ \vq_{21}^\top \end{bmatrix} \vSigma
        \begin{bmatrix} \vQ_{11} \\ \vq_{21}^{\top} \end{bmatrix}^{\top}
     \vSigma_{\mathrm{joint}}\right)\cdot \vSigma_{\mathrm{joint}} + n(n + 1)\vSigma_{\mathrm{joint}}
         \begin{bmatrix} \vQ_{11}\\ \vq_{21}^\top \end{bmatrix} \vSigma
        \begin{bmatrix} \vQ_{11} \\ \vq_{21}^{\top} \end{bmatrix}^{\top}
    \vSigma_{\mathrm{joint}}
     .
\end{align}
Therefore, we have 
\begin{align}
    & \Var\left[ \hat{y}_{\mathrm{Net\text{-}FS}}(\vZ_{[n]}, \vx_{n+1}; \vV, \vQ) \right]
    \\
    & =
    \frac{1}{n} \cdot
        \tr \left(
        \begin{bmatrix} \vQ_{11}\\ \vq_{21}^\top \end{bmatrix} \vSigma
        \begin{bmatrix} \vQ_{11} \\ \vq_{21}^{\top} \end{bmatrix}^{\top}
        \vSigma_{\mathrm{joint}}\right)
        \cdot
        \begin{bmatrix} \vv_{21} \\ v_{22} \end{bmatrix}^\top\vSigma_{\mathrm{joint}}
    \begin{bmatrix} \vv_{21}\\ v_{22} \end{bmatrix}
    +
    \frac{n+1}{n} \cdot
    \begin{bmatrix} \vv_{21} \\ v_{22} \end{bmatrix}^\top
        \vSigma_{\mathrm{joint}}
        \begin{bmatrix} \vQ_{11}\\ \vq_{21}^\top \end{bmatrix} \vSigma
        \begin{bmatrix} \vQ_{11} \\ \vq_{21}^{\top} \end{bmatrix}^{\top}
        \vSigma_{\mathrm{joint}}
    \begin{bmatrix} \vv_{21}\\ v_{22} \end{bmatrix}
    \\&\quad
    +
    \frac{q^2}{n} \cdot
    \begin{bmatrix} \vv_{21} \\ v_{22} \end{bmatrix}^\top \vSigma_{\mathrm{joint}}
    \begin{bmatrix}
      \vv_{21}\\ v_{22}
    \end{bmatrix}
    ,
\end{align}
where the last term comes from $\sum_{i\in[n]}\begin{bmatrix} \vx_i \\ y_i \end{bmatrix} \sim \mathcal{N} \left( \vzero_{d+1}, n \vSigma_{\mathrm{joint}} \right)$.

On the other hand, from the definition of $\hat{y}_{\mathrm{ZS}}(\vx_{n+1}; \vV, \vQ)$ in \eqref{eq:appendix:prediction:zs}, we obtain
\begin{align}
    &\Var\left[ \hat{y}_{\mathrm{ZS}}(\vx_{n+1}; \vV, \vQ) \right]
    \\
    &=
    \E\left[ \hat{y}_{\mathrm{ZS}}(\vx_{n+1}; \vV, \vQ)^2 \right]
    \\&=
    \E\left[ 
        \vv_{21}^\top \vx_{n+1}\vx_{n+1}^\top \vQ_{11} \vx_{n+1}
        \vx_{n+1}^\top \vQ_{11}^\top \vx_{n+1}\vx_{n+1}^\top \vv_{21}
    \right]
    +
    2q\cdot
    \E\left[ 
        \vx_{n+1}^\top \vv_{21}
        \vv_{21}^\top \vx_{n+1}\vx_{n+1}^\top \vQ_{11} \vx_{n+1}
    \right]
    +
    q^2\cdot
    \E\left[ 
        \vv_{21}^\top \vx_{n+1}
        \vx_{n+1}^\top \vv_{21}
    \right]
    \\&=
    \vv_{21}^\top 
    \E\left[ 
        \vx_{n+1}\vx_{n+1}^\top \vQ_{11} \vx_{n+1}
        \vx_{n+1}^\top \vQ_{11}^\top \vx_{n+1}\vx_{n+1}^\top
    \right]
    \vv_{21}
    +
    q\cdot
    \E\left[ 
        \vx_{n+1}^\top \vv_{21}
        \vv_{21}^\top \vx_{n+1}\vx_{n+1}^\top (\vQ_{11}+\vQ_{11}^\top) \vx_{n+1}
    \right]
    +
    q^2\cdot
    \vv_{21}^\top \vSigma \vv_{21}
    .
\end{align}

By using Lemma~\ref{thm:appendix:normal:6moments},
\begin{align}
    \E\left[ 
        \vx_{n+1}\vx_{n+1}^\top \vQ_{11} \vx_{n+1}
        \vx_{n+1}^\top \vQ_{11}^\top \vx_{n+1}\vx_{n+1}^\top
    \right]
    &=
    \tr(\vQ_{11}\vSigma)^2 \cdot \vSigma
    + \tr(\vQ_{11}\vSigma \vQ_{11}^\top \vSigma ) \cdot \vSigma
    + \tr(\vQ_{11}\vSigma \vQ_{11} \vSigma) \cdot \vSigma
    \\&\quad
    + 2\tr(\vQ_{11}\vSigma ) \cdot \vSigma(\vQ_{11}+ \vQ_{11}^\top) \vSigma
    + 2\vSigma(\vQ_{11}+ \vQ_{11}^\top)\vSigma(\vQ_{11}+ \vQ_{11}^\top)\vSigma
     ,
\end{align}
and by using Lemma~\ref{thm:appendix:multi:normal},
\begin{align}
    \E\left[ 
        \vx_{n+1}^\top \vv_{21} \vv_{21}^\top \vx_{n+1}\vx_{n+1}^\top (\vQ_{11}+\vQ_{11}^\top) \vx_{n+1}
    \right]
    &=
    \tr(\vv_{21} \vv_{21}^\top\vSigma)\tr((\vQ_{11}+\vQ_{11}^\top)\vSigma) + 2 \tr(\vv_{21} \vv_{21}^\top\vSigma (\vQ_{11}+\vQ_{11}^\top)\vSigma)
    \\ &=
    \tr((\vQ_{11}+\vQ_{11}^\top)\vSigma) \cdot \vv_{21}^\top\vSigma\vv_{21} 
     + 2 \vv_{21}^\top\vSigma (\vQ_{11}+\vQ_{11}^\top)\vSigma \vv_{21} 
    \\ &=
    2 \tr(\vQ_{11}\vSigma) \cdot \vv_{21}^\top\vSigma\vv_{21} 
     + 4 \vv_{21}^\top\vSigma \vQ_{11}\vSigma \vv_{21} 
    ,
\end{align}
which implies
\begin{align}
    \Var\left[ \hat{y}_{\mathrm{ZS}}(\vx_{n+1}; \vV, \vQ) \right]
    &=
    \left(
        \tr(\vQ_{11}\vSigma)^2
        + \tr(\vQ_{11}\vSigma \vQ_{11}^\top \vSigma )
        + \tr(\vQ_{11}\vSigma \vQ_{11}\vSigma)
    \right)
    \cdot \vv_{21}^\top \vSigma \vv_{21}
    \\&\quad
    + 4\tr(\vQ_{11}\vSigma )
    \cdot \vv_{21}^\top \vSigma \vQ_{11} \vSigma \vv_{21}
    + 2 \vv_{21}^\top \vSigma(\vQ_{11} + \vQ_{11}^\top)\vSigma(\vQ_{11} + \vQ_{11}^\top)\vSigma
    \vv_{21}
    \\&\quad
    +
    2q\cdot
    \tr(\vQ_{11}\vSigma) \cdot \vv_{21}^\top\vSigma\vv_{21} 
     + 4q\cdot \vv_{21}^\top\vSigma \vQ_{11} \vSigma \vv_{21} 
    +
    q^2\cdot
    \vv_{21}^\top \vSigma \vv_{21}
    .
\end{align}

In a similar manner, we can determine the covariances as follows.
\begin{align}
    &\Cov\left[ \hat{y}_{\mathrm{Net\text{-}FS}}(\vZ_{[n]}, \vx_{n+1}; \vV, \vQ), \hat{y}_{\mathrm{ZS}}(\vx_{n+1}; \vV, \vQ) \right]
    \\ &=
    \E\left[ 
        \begin{bmatrix} \vv_{21} \\ v_{22} \end{bmatrix}^\top
        \bigg(\frac{1}{n}\sum_{i\in[n]}\begin{bmatrix} \vx_i \\ y_i \end{bmatrix}\begin{bmatrix} \vx_i \\ y_i \end{bmatrix}^{\top}\bigg)
        \begin{bmatrix} \vQ_{11}\\ \vq_{21}^\top \end{bmatrix} \vx_{n+1} 
        \vx_{n+1}^\top \vQ_{11}^\top \vx_{n+1}\vx_{n+1}^\top \vv_{21}
    \right]
    \\&\quad
    +
    \E\left[ 
        \begin{bmatrix} \vv_{21} \\ v_{22} \end{bmatrix}^\top
        \bigg(\frac{1}{n}\sum_{i\in[n]}\begin{bmatrix} \vx_i \\ y_i \end{bmatrix}\begin{bmatrix} \vx_i \\ y_i \end{bmatrix}^{\top}\bigg)
        \begin{bmatrix} \vQ_{11}\\ \vq_{21}^\top \end{bmatrix} \vx_{n+1} 
        \vx_{n+1}^\top \vv_{21} \cdot q
    \right]
    \\ &=
    \begin{bmatrix} \vv_{21} \\ v_{22} \end{bmatrix}^\top
    \vSigma_{\mathrm{joint}}
    \begin{bmatrix} \vQ_{11}\\ \vq_{21}^\top \end{bmatrix}
    \E\left[
        \vx_{n+1} \vx_{n+1}^\top \vQ_{11}^\top \vx_{n+1}\vx_{n+1}^\top 
    \right]
    \vv_{21}
    +
    \begin{bmatrix} \vv_{21} \\ v_{22} \end{bmatrix}^\top
    \vSigma_{\mathrm{joint}}
    \begin{bmatrix} \vQ_{11}\\ \vq_{21}^\top \end{bmatrix}
    \vSigma
    \vv_{21} \cdot q
    \\ &=
    \begin{bmatrix} \vv_{21} \\ v_{22} \end{bmatrix}^\top
    \vSigma_{\mathrm{joint}}
    \begin{bmatrix} \vQ_{11}\\ \vq_{21}^\top \end{bmatrix}
    \vSigma 
    \left(
        (\vQ_{11} +\vQ_{11}^\top)\vSigma + \tr(\vQ_{11}\vSigma) \vI_d
        + q \cdot \vI_d
    \right)
    \vv_{21},
\end{align}
where the mean of quartic forms comes from $\E[\vx\vx^\top \vA \vx\vx^\top] = \vSigma (\vA +\vA^\top)\vSigma + \tr(\vA\vSigma)\cdot\vSigma$ for an arbitrary matrix $\vA$, see Sec.8.2.4. in \citet{petersen2008matrix}. Moreover, we have
\begin{align}
    \Cov\left[ \hat{y}_{\mathrm{Net\text{-}FS}}(\vZ_{[n]}, \vx_{n+1}; \vV, \vQ), y_{n+1} \right]
    &=
    \E\left[ 
        \begin{bmatrix} \vv_{21} \\ v_{22} \end{bmatrix}^\top
        \bigg(\frac{1}{n}\sum_{i\in[n]}\begin{bmatrix} \vx_i \\ y_i \end{bmatrix}\begin{bmatrix} \vx_i \\ y_i \end{bmatrix}^{\top}\bigg)
        \begin{bmatrix} \vQ_{11}\\ \vq_{21}^\top \end{bmatrix} \vx_{n+1} 
        (\vtheta_0^{\top}\vx_{n+1} + e_{n+1})
    \right]
    \\ &=
    \begin{bmatrix} \vv_{21} \\ v_{22} \end{bmatrix}^\top \vSigma_{\mathrm{joint}} \begin{bmatrix} \vQ_{11}\\ \vq_{21}^\top \end{bmatrix}
    \E\left[ \vx_{n+1} (\vtheta_0^{\top}\vx_{n+1} + e_{n+1}) \right]
    \\ &=
    \begin{bmatrix} \vv_{21} \\ v_{22} \end{bmatrix}^\top \vSigma_{\mathrm{joint}} \begin{bmatrix} \vQ_{11}\\ \vq_{21}^\top \end{bmatrix}
    \vSigma \vtheta_0,
\end{align}
\begin{align}
    \Cov\left[ \hat{y}_{\mathrm{ZS}}(\vx_{n+1}; \vV, \vQ), y_{n+1} \right]
    &=
    \E\left[ 
        \vv_{21}^\top \vx_{n+1}\vx_{n+1}^\top \vQ_{11} \vx_{n+1}
        (\vtheta_0^{\top}\vx_{n+1} + e_{n+1})
    \right]
    +
    \E\left[ 
        \vv_{21}^\top \vx_{n+1}\cdot q\cdot
        (\vtheta_0^{\top}\vx_{n+1} + e_{n+1})
    \right]
    \\ &=
    \vv_{21}^\top 
    \E\left[ 
        \vx_{n+1}\vx_{n+1}^\top \vQ_{11} \vx_{n+1} \vx_{n+1}^{\top} 
    \right]
    \vtheta_0
    +
    q\cdot \vv_{21}^\top 
    \E\left[ 
        \vx_{n+1} \vx_{n+1}^{\top}
    \right]\vtheta_0
    \\ &=
    \vv_{21}^\top 
    \left(
        \vSigma (\vQ_{11} +\vQ_{11}^\top)\vSigma + \tr(\vQ_{11}^\top\vSigma)\vSigma
        + q\cdot \vSigma
    \right)
    \vtheta_0 .
\end{align}
\end{proof}

\begin{lemma}
\label{thm:appendix:error}
Given $\vV$, $\vQ$, and $\vtheta_0\in\sR^d$, the expected squared errors of $\hat{y}_{\mathrm{Net\text{-}FS}}(\vZ_{[n]}, \vx; \vV, \vQ)$ in \eqref{eq:appendix:prediction:icl} and $\hat{y}_{\mathrm{ZS}}(\vx; \vV, \vQ)$ in \eqref{eq:appendix:prediction:zs}, as well as the expected value of their cross term, are as follows:
\begin{align}
    & \E_{\vZ_{[n]},\vx, y| \vtheta_0}
    \left[
    \big(
        \hat{y}_{\mathrm{Net\text{-}FS}}(\vZ_{[n]}, \vx; \vV, \vQ)
        - y
    \big)^2
    \right]
    \\ &=
    \frac{1}{n} \cdot
        \tr \left(
        \begin{bmatrix} \vQ_{11}\\ \vq_{21}^\top \end{bmatrix} \vSigma
        \begin{bmatrix} \vQ_{11} \\ \vq_{21}^{\top} \end{bmatrix}^{\top}
        \!
        \vSigma_{\mathrm{joint}}\right)
        \cdot
        \begin{bmatrix} \vv_{21} \\ v_{22} \end{bmatrix}^\top
        \!\vSigma_{\mathrm{joint}}
    \begin{bmatrix} \vv_{21}\\ v_{22} \end{bmatrix}
    +
    \frac{n+1}{n} \cdot
    \begin{bmatrix} \vv_{21} \\ v_{22} \end{bmatrix}^\top
        \!
        \vSigma_{\mathrm{joint}}
        \begin{bmatrix} \vQ_{11}\\ \vq_{21}^\top \end{bmatrix} \vSigma
        \begin{bmatrix} \vQ_{11} \\ \vq_{21}^{\top} \end{bmatrix}^{\top}
        \!
        \vSigma_{\mathrm{joint}}
    \begin{bmatrix} \vv_{21}\\ v_{22} \end{bmatrix}
    \\&\quad
    +
    \frac{q^2}{n} \cdot
    \begin{bmatrix} \vv_{21} \\ v_{22} \end{bmatrix}^\top \vSigma_{\mathrm{joint}}
    \begin{bmatrix}
      \vv_{21}\\ v_{22}
    \end{bmatrix}
    -2 
    \begin{bmatrix} \vv_{21} \\ v_{22} \end{bmatrix}^\top \vSigma_{\mathrm{joint}} \begin{bmatrix} \vQ_{11}\\ \vq_{21}^\top \end{bmatrix}
    \vSigma \vtheta_0
    + \vtheta_0^\top\vSigma \vtheta_0  + \sigma^2
    ,
    \\ 
    & \E_{\vx, y| \vtheta_0} 
    \left[
    \big(
        \hat{y}_{\mathrm{ZS}}(\vx; \vV, \vQ)
        - y
    \big)^2
    \right]
    \\ &=
    \left(
        \tr(\vQ_{11}\vSigma)^2
        + \tr(\vQ_{11}\vSigma \vQ_{11}^\top \vSigma )
        + \tr(\vQ_{11}\vSigma \vQ_{11}\vSigma)
    \right)
    \cdot \vv_{21}^\top \vSigma \vv_{21}
    + 4\tr(\vQ_{11}\vSigma )
    \cdot \vv_{21}^\top \vSigma \vQ_{11} \vSigma \vv_{21}
    \\& \quad
    + 2 \vv_{21}^\top \vSigma(\vQ_{11} + \vQ_{11}^\top)\vSigma(\vQ_{11} + \vQ_{11}^\top)\vSigma
    \vv_{21}
    +
    2q\cdot
    \tr(\vQ_{11}\vSigma) \cdot \vv_{21}^\top\vSigma\vv_{21} 
     + 4q\cdot \vv_{21}^\top\vSigma \vQ_{11} \vSigma \vv_{21} 
    +
    q^2\cdot
    \vv_{21}^\top \vSigma \vv_{21}
    \\&\quad
    - 2
    \vv_{21}^\top 
    \left(
        \vSigma (\vQ_{11} +\vQ_{11}^\top)\vSigma + \tr(\vQ_{11}\vSigma)\cdot \vSigma
        + q\cdot \vSigma
    \right)
    \vtheta_0
    + \vtheta_0^\top\vSigma \vtheta_0  + \sigma^2
    ,
    \\ 
    & \E_{\vZ_{[n]},\vx, y| \vtheta_0} \left[
        \left(
        \hat{y}_{\mathrm{Net\text{-}FS}}(\vZ_{[n]}, \vx; \vV, \vQ)
        - y
        \right)
        \left(
        \hat{y}_{\mathrm{ZS}}(\vx; \vV, \vQ)
        - y
        \right)
    \right]
    \\ &= \!
    \left(
        \vv_{21}^\top \vSigma\vQ_{11}
        \!+\! v_{22} \vtheta_0^\top\vSigma \vQ_{11}
        \!+\! \vv_{21}^\top \vSigma\vtheta_0  \vq_{21}^\top
        \!+\! v_{22}(\vtheta_0^\top \vSigma \vtheta_0 \!+\! \sigma^2) \vq_{21}^\top
    \right)
    \vSigma
    \left(
        (\vQ_{11} \!+\!\vQ_{11}^\top)\vSigma \vv_{21} \!+\! \tr(\vQ_{11}\vSigma)  \vv_{21} \!+\! q \vv_{21}
        \!-\! \vtheta_0
    \right)
    \\& \quad
    -
    \vv_{21}^\top 
    \left(
        \vSigma (\vQ_{11} +\vQ_{11}^\top)\vSigma + \tr(\vQ_{11}\vSigma)\cdot\vSigma
        + q \cdot \vSigma
    \right)
    \vtheta_0
    +\vtheta_0^\top \vSigma \vtheta_0 + \sigma^2
    .
\end{align}
Moreover, if $v_{22}=1$, $\vq_{21}=\vzero_d$, and $\vQ_{11}=\vQ_{11}^\top$, we have
\begin{align}
    \label{eq:proof:error:total:in:statement}
    \E_{\vZ_{[n]},\vx, y| \vtheta_0} 
    \left[
    \big(
        \hat{y}_{\mathrm{FS}}(\vZ_{[n]}, \vx; \vV, \vQ)
        - y
    \big)^2
    \right]
    =
    \frac{1}{(n+1)^2} \cdot
    \E_{\vx, y| \vtheta_0} 
    \left[
    \big(
        \hat{y}_{\mathrm{ZS}}(\vx; \vV, \vQ)
        - y
    \big)^2
    \right]
    +
    \frac{n}{(n+1)^2} \cdot h(\vV, \vQ;\vtheta_0),
\end{align}
where
\begin{align}
    &\E_{\vx, y| \vtheta_0}
    \left[
    \big(
        \hat{y}_{\mathrm{ZS}}(\vx; \vV, \vQ)
        - y
    \big)^2
    \right]
    \\&\quad =
    \vv_{21}^\top 
    \Big(
        \tr(\vQ_{11}\vSigma)^2 \cdot \vSigma+ 2\tr(\vQ_{11}\vSigma \vQ_{11}\vSigma)\cdot\vSigma
        + 4\tr(\vQ_{11}\vSigma)\cdot \vSigma \vQ_{11}\vSigma
        + 8\vSigma \vQ_{11}\vSigma \vQ_{11}\vSigma
    \Big)
    \vv_{21}
    \\&\quad\quad
    - 2\vv_{21}^\top\Big(2\vSigma \vQ_{11}\vSigma  + \tr(\vQ_{11}\vSigma) \cdot \vSigma\Big) \vtheta_{0}
    + \vtheta_{0}^\top \vSigma \vtheta_{0} + \sigma^2
    \\&\quad\quad
    +
    q^2\cdot \vv_{21}^\top \vSigma \vv_{21}
    +
    2q\cdot \tr(\vQ_{11}\vSigma) \cdot \vv_{21}^\top\vSigma\vv_{21} 
    +
    4q\cdot \vv_{21}^\top\vSigma \vQ_{11} \vSigma \vv_{21} 
    -
    2q \cdot \vv_{21}^\top \vSigma \vtheta_0
    ,
    \\ 
    &h(\vV, \vQ;\vtheta_0)
    =
    \vv_{21}^\top\Big( \tr(\vQ_{11}\vSigma\vQ_{11}\vSigma)\cdot\vSigma+(n+5)\cdot\vSigma\vQ_{11}\vSigma\vQ_{11}\vSigma+2\tr(\vQ_{11}\vSigma)\cdot\vSigma\vQ_{11}\vSigma \Big)\vv_{21} 
    \\ &\qquad\qquad\quad +2\vv_{21}^\top\Big(\tr(\vQ_{11}\vSigma\vQ_{11}\vSigma)\cdot\vSigma+(n+3)\cdot\vSigma\vQ_{11}\vSigma\vQ_{11}\vSigma+\tr(\vQ_{11}\vSigma)\cdot\vSigma\vQ_{11}\vSigma\Big)\vtheta_0
    \\ &\qquad\qquad\quad -2\vv_{21}^\top\Big((n+3)\cdot\vSigma\vQ_{11}\vSigma+\tr(\vQ_{11}\vSigma)\cdot\vSigma\Big)\vtheta_0
    +(n+1)\cdot\vtheta_0^\top\vSigma\vQ_{11}\vSigma\vQ_{11}\vSigma\vtheta_0 
    \\ &\qquad\qquad\quad 
    - 2(n+1)\cdot\vtheta_0^\top\vSigma\vQ_{11}\vSigma\vtheta_0
    +(\tr(\vQ_{11}\vSigma\vQ_{11}\vSigma)+n+2)(\vtheta_0^\top\vSigma\vtheta_0+\sigma^2)
    \\ &\qquad\qquad\quad 
    +
    q^2 \cdot
    \left( (\vv_{21}^\top + \vtheta_0^\top) \vSigma (\vv_{21}+\vtheta_0) + \sigma^2 \right)
    +
    2q \cdot
    \left(
        \vv_{21}^\top \vSigma\vQ_{11}
        + \vtheta_0^\top\vSigma \vQ_{11}
        - \vtheta_0^\top
    \right)
    \vSigma \vv_{21}
    .
\end{align}
\end{lemma}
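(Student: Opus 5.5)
The approach is to expand each squared error as a second moment. By Lemma~\ref{thm:appendix:var:cov}, all three random quantities $\hat{y}_{\mathrm{Net\text{-}FS}}$, $\hat{y}_{\mathrm{ZS}}$ and $y$ have mean zero.

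\textbf{First two identities.} We write $\E[(\hat y_{\mathrm{Net\text{-}FS}}-y)^2]=\Var[\hat y_{\mathrm{Net\text{-}FS}}]-2\Cov[\hat y_{\mathrm{Net\text{-}FS}},y]+\Var[y]$. Substituting the expressions from Lemma~\ref{thm:appendix:var:cov} gives the first display directly, and the zero-shot error is obtained the same way. For the cross term we expand
\begin{equation}
\E[(\hat y_{\mathrm{Net\text{-}FS}}-y)(\hat y_{\mathrm{ZS}}-y)]=\Cov[\hat y_{\mathrm{Net\text{-}FS}},\hat y_{\mathrm{ZS}}]-\Cov[\hat y_{\mathrm{Net\text{-}FS}},y]-\Cov[\hat y_{\mathrm{ZS}},y]+\Var[y].
\end{equation}
The first two covariances share the left factor $[\vv_{21};v_{22}]^\top\vSigma_{\mathrm{joint}}[\vQ_{11};\vq_{21}^\top]\vSigma$, so they combine into one term. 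Writing out $\vSigma_{\mathrm{joint}}$ blockwise gives
\begin{equation}
[\vv_{21};v_{22}]^\top\vSigma_{\mathrm{joint}}=\big(\vv_{21}^\top\vSigma+v_{22}\vtheta_0^\top\vSigma,\ \vv_{21}^\top\vSigma\vtheta_0+v_{22}(\vtheta_0^\top\vSigma\vtheta_0+\sigma^2)\big),
\end{equation}
and multiplying this by $[\vQ_{11};\vq_{21}^\top]$ yields exactly the row vector displayed in the statement.

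\textbf{The decomposition \eqref{eq:proof:error:total:in:statement}.} We use $\hat y_{\mathrm{FS}}=\tfrac{n}{n+1}\hat y_{\mathrm{Net\text{-}FS}}+\tfrac1{n+1}\hat y_{\mathrm{ZS}}$. Then
\begin{equation}
\hat y_{\mathrm{FS}}-y=\tfrac{n}{n+1}(\hat y_{\mathrm{Net\text{-}FS}}-y)+\tfrac1{n+1}(\hat y_{\mathrm{ZS}}-y).
\end{equation}
Squaring this gives weights $\tfrac{n^2}{(n+1)^2}$, $\tfrac{2n}{(n+1)^2}$ and $\tfrac{1}{(n+1)^2}$ on the net few-shot, cross and zero-shot errors. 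The claim therefore reduces to defining
\begin{equation}
h:=n\,\E[(\hat y_{\mathrm{Net\text{-}FS}}-y)^2]+2\,\E[(\hat y_{\mathrm{Net\text{-}FS}}-y)(\hat y_{\mathrm{ZS}}-y)]
\end{equation}
and verifying that it has the stated closed form. Separately, the zero-shot error must be simplified under $\vQ_{11}=\vQ_{11}^\top$, where $(\vQ_{11}+\vQ_{11}^\top)=2\vQ_{11}$ and $\tr(\vQ_{11}\vSigma\vQ_{11}^\top\vSigma)=\tr(\vQ_{11}\vSigma\vQ_{11}\vSigma)$. The $4\tr$ and $8\vSigma\vQ\vSigma\vQ\vSigma$ coefficients then appear, and the $q$-terms are carried along unchanged.

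\textbf{Main obstacle: simplifying $h$.} With $v_{22}=1$ and $\vq_{21}=\vzero_d$, every block expression reduces to $d$-dimensional forms:
\begin{itemize}
\item $[\vQ_{11};\vq_{21}^\top]\vSigma[\cdot]^\top=\begin{bmatrix}\vQ_{11}\vSigma\vQ_{11}&0\\0&0\end{bmatrix}$,
\item $\tr(\cdot\,\vSigma_{\mathrm{joint}})=\tr(\vQ_{11}\vSigma\vQ_{11}\vSigma)$,
\item $[\vv_{21};1]^\top\vSigma_{\mathrm{joint}}[\vv_{21};1]=(\vv_{21}+\vtheta_0)^\top\vSigma(\vv_{21}+\vtheta_0)+\sigma^2$,
\item $[\vv_{21};1]^\top\vSigma_{\mathrm{joint}}[\vQ_{11};0]=(\vv_{21}+\vtheta_0)^\top\vSigma\vQ_{11}$.
\end{itemize}
Substituting these, multiplying the net error by $n$ and adding twice the cross term, we collect the coefficients of $\vv_{21}^\top(\cdot)\vv_{21}$, $\vv_{21}^\top(\cdot)\vtheta_0$, pure $\vtheta_0$ terms, and $q$-terms.

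For example, the $\vSigma\vQ_{11}\vSigma\vQ_{11}\vSigma$ coefficient in the quadratic form is $(n+1)$ from the net variance plus $2\cdot2$ from the cross term, which gives $n+5$ as required. The remaining coefficients are checked the same way. The pure-$\vtheta_0$ part arises from $n(\vtheta_0^\top\vSigma\vtheta_0+\sigma^2)$ combined with $2(\vtheta_0^\top\vSigma\vtheta_0+\sigma^2)$ and the trace term.

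This bookkeeping is the delicate step. The three separate $q$-contributions must combine into $q^2\big((\vv_{21}+\vtheta_0)^\top\vSigma(\vv_{21}+\vtheta_0)+\sigma^2\big)$ and the stated $2q(\cdots)\vSigma\vv_{21}$ term. I would verify this first by setting $\vSigma=\vI_d$ and $d=1$ and comparing scalar polynomials in $v_{21}$, $\theta_0$ and $q$.
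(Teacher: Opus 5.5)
Your proposal is correct and follows the paper's proof essentially step for step. You express each error through the variances and covariances of Lemma~\ref{thm:appendix:var:cov}, merge the two covariances that share the left factor, split $\hat y_{\mathrm{FS}}-y$ into its $\tfrac{n}{n+1}$ and $\tfrac{1}{n+1}$ pieces, and identify $h$ as $n$ times the net few-shot error plus twice the cross term after the block reductions under $v_{22}=1$, $\vq_{21}=\vzero_d$, $\vQ_{11}=\vQ_{11}^\top$; the coefficient checks you sketch (e.g., $n+5$) agree with the paper's collected terms.
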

\begin{proof}
By using Lemma~\ref{thm:appendix:var:cov}, we obtain
\begin{align}
    & \E_{\vZ_{[n]},\vx, y| \vtheta_0} 
    \left[
    \big(
        \hat{y}_{\mathrm{Net\text{-}FS}}(\vZ_{[n]}, \vx; \vV, \vQ)
        - y
    \big)^2
    \right]
    \\ &=
    \Var\left[ \hat{y}_{\mathrm{Net\text{-}FS}}(\vZ_{[n]}, \vx; \vV, \vQ) \right]
    - 2 \Cov\left[ \hat{y}_{\mathrm{Net\text{-}FS}}(\vZ_{[n]}, \vx; \vV, \vQ), y \right]
    + \Var\left[ y \right]
    \\ &=
    \frac{1}{n} \cdot
        \tr \left(
        \begin{bmatrix} \vQ_{11}\\ \vq_{21}^\top \end{bmatrix} \vSigma
        \begin{bmatrix} \vQ_{11} \\ \vq_{21}^{\top} \end{bmatrix}^{\top}
        \vSigma_{\mathrm{joint}}\right)
        \cdot
        \begin{bmatrix} \vv_{21} \\ v_{22} \end{bmatrix}^\top\vSigma_{\mathrm{joint}}
    \begin{bmatrix} \vv_{21}\\ v_{22} \end{bmatrix}
    +
    \frac{n+1}{n} \cdot
    \begin{bmatrix} \vv_{21} \\ v_{22} \end{bmatrix}^\top
        \vSigma_{\mathrm{joint}}
        \begin{bmatrix} \vQ_{11}\\ \vq_{21}^\top \end{bmatrix} \vSigma
        \begin{bmatrix} \vQ_{11} \\ \vq_{21}^{\top} \end{bmatrix}^{\top}
        \vSigma_{\mathrm{joint}}
    \begin{bmatrix} \vv_{21}\\ v_{22} \end{bmatrix}
    \\&\quad
    +
    \frac{q^2}{n} \cdot
    \begin{bmatrix} \vv_{21} \\ v_{22} \end{bmatrix}^\top \vSigma_{\mathrm{joint}}
    \begin{bmatrix}
      \vv_{21}\\ v_{22}
    \end{bmatrix}
    -2 
    \begin{bmatrix} \vv_{21} \\ v_{22} \end{bmatrix}^\top \vSigma_{\mathrm{joint}} \begin{bmatrix} \vQ_{11}\\ \vq_{21}^\top \end{bmatrix}
    \vSigma \vtheta_0
    + \vtheta_0^\top\vSigma \vtheta_0  + \sigma^2
    \label{eq:proof:expectation:full:icl:error}
    ,
\end{align}
\begin{align}
    \E_{\vx, y| \vtheta_0}
    \left[
    \big(
        \hat{y}_{\mathrm{ZS}}(\vx; \vV, \vQ)
        - y
    \big)^2
    \right]
    &=
    \Var\left[ \hat{y}_{\mathrm{ZS}}(\vx; \vV, \vQ) \right]
    - 2 \Cov\left[ \hat{y}_{\mathrm{ZS}}(\vx; \vV, \vQ), y \right]
    + \Var\left[ y \right]
    \\ &=
    \left(
        \tr(\vQ_{11}\vSigma)^2
        + \tr(\vQ_{11}\vSigma \vQ_{11}^\top \vSigma )
        + \tr(\vQ_{11}\vSigma \vQ_{11}\vSigma)
    \right)
    \cdot \vv_{21}^\top \vSigma \vv_{21}
    \\& \quad
    + 4\tr(\vQ_{11}\vSigma )
    \cdot \vv_{21}^\top \vSigma \vQ_{11} \vSigma \vv_{21}
    + 2 \vv_{21}^\top \vSigma(\vQ_{11} + \vQ_{11}^\top)\vSigma(\vQ_{11} + \vQ_{11}^\top)\vSigma
    \vv_{21}
    \\& \quad
    +
    2q\cdot
    \tr(\vQ_{11}\vSigma) \cdot \vv_{21}^\top\vSigma\vv_{21} 
     + 4q\cdot \vv_{21}^\top\vSigma \vQ_{11} \vSigma \vv_{21} 
    +
    q^2\cdot
    \vv_{21}^\top \vSigma \vv_{21}
    \\&\quad
    - 2
    \vv_{21}^\top 
    \left(
        \vSigma (\vQ_{11} +\vQ_{11}^\top)\vSigma + \tr(\vQ_{11}\vSigma)\cdot \vSigma
        + q\cdot \vSigma
    \right)
    \vtheta_0
    + \vtheta_0^\top\vSigma \vtheta_0  + \sigma^2
    ,
\end{align}
and
\begin{align}
    \\
    & \E_{\vZ_{[n]},\vx, y| \vtheta_0} \left[
        \left(
        \hat{y}_{\mathrm{Net\text{-}FS}}(\vZ_{[n]}, \vx; \vV, \vQ)
        - y
        \right)
        \left(
        \hat{y}_{\mathrm{ZS}}(\vx; \vV, \vQ)
        - y
        \right)
    \right]
    \\ &=
    \begin{bmatrix} \vv_{21} \\ v_{22} \end{bmatrix}^\top
    \vSigma_{\mathrm{joint}}
    \begin{bmatrix} \vQ_{11}\\ \vq_{21}^\top \end{bmatrix}
    \vSigma
    \left(
        (\vQ_{11} +\vQ_{11}^\top)\vSigma + \tr(\vQ_{11}\vSigma)
        + q \cdot \vI_d
    \right)
    \vv_{21}
    + 
    \left( \vtheta_0^\top \vSigma \vtheta_0 + \sigma^2 \right)
    \\& \quad
    -
    \begin{bmatrix} \vv_{21} \\ v_{22} \end{bmatrix}^\top \vSigma_{\mathrm{joint}} \begin{bmatrix} \vQ_{11}\\ \vq_{21}^\top \end{bmatrix}
    \vSigma \vtheta_0
    -
    \vv_{21}^\top 
    \left(
        \vSigma (\vQ_{11} +\vQ_{11}^\top)\vSigma + \tr(\vQ_{11}\vSigma)\cdot\vSigma
        + q \cdot \vSigma
    \right)
    \vtheta_0
    \\ &=
    \begin{bmatrix} \vv_{21} \\ v_{22} \end{bmatrix}^\top
    \vSigma_{\mathrm{joint}}
    \begin{bmatrix} \vQ_{11}\\ \vq_{21}^\top \end{bmatrix}
    \vSigma
    \left(
        (\vQ_{11} +\vQ_{11}^\top)\vSigma \vv_{21} + \tr(\vQ_{11}\vSigma) \cdot \vv_{21} + q\cdot \vv_{21}
        - \vtheta_0
    \right)
    \\& \quad
    -
    \vv_{21}^\top 
    \left(
        \vSigma (\vQ_{11} +\vQ_{11}^\top)\vSigma + \tr(\vQ_{11}\vSigma)\cdot\vSigma
        + q \cdot \vSigma
    \right)
    \vtheta_0
    +\vtheta_0^\top \vSigma \vtheta_0 + \sigma^2
    \\ &=
    \left(
        \vv_{21}^\top \vSigma\vQ_{11}
        + v_{22} \vtheta_0^\top\vSigma \vQ_{11}
        + \vv_{21}^\top \vSigma\vtheta_0  \vq_{21}^\top
        + v_{22}(\vtheta_0^\top \vSigma \vtheta_0 + \sigma^2) \vq_{21}^\top
    \right)
    \\& \qquad
    \cdot
    \vSigma
    \left(
        (\vQ_{11} +\vQ_{11}^\top)\vSigma \vv_{21} + \tr(\vQ_{11}\vSigma) \cdot \vv_{21} + q\cdot \vv_{21}
        - \vtheta_0
    \right)
    \\& \quad
    -
    \vv_{21}^\top 
    \left(
        \vSigma (\vQ_{11} +\vQ_{11}^\top)\vSigma + \tr(\vQ_{11}\vSigma)\cdot\vSigma
        + q \cdot \vSigma
    \right)
    \vtheta_0
    +\vtheta_0^\top \vSigma \vtheta_0 + \sigma^2
    ,
    \label{eq:proof:expectation:full:icl:zs}
\end{align}
where the last equality comes from
\begin{align}
    \begin{bmatrix} \vv_{21} \\ v_{22} \end{bmatrix}^\top
    \vSigma_{\mathrm{joint}}
    \begin{bmatrix} \vQ_{11}\\ \vq_{21}^\top \end{bmatrix}
    &=
    \vv_{21}^\top \vSigma\vQ_{11}
    + v_{22} \vtheta_0^\top\vSigma \vQ_{11}
    + \vv_{21}^\top \vSigma\vtheta_0  \vq_{21}^\top
    + v_{22}(\vtheta_0^\top \vSigma \vtheta_0 + \sigma^2) \vq_{21}^\top
    .
\end{align}

Now consider the case where $v_{22}=1$, $\vq_{21}=\vzero_d$, and $\vQ_{11}=\vQ_{11}^\top$. It follows that
\begin{align}
    &\tr \left(
    \begin{bmatrix} \vQ_{11}\\ \vq_{21}^\top \end{bmatrix} \vSigma
    \begin{bmatrix} \vQ_{11} \\ \vq_{21}^{\top} \end{bmatrix}^{\top}
    \vSigma_{\mathrm{joint}}\right)
    =
    \tr \left(
    \vSigma
    \begin{bmatrix} \vQ_{11} & \vzero_d \end{bmatrix}
    \begin{bmatrix} \vSigma & \vSigma\vtheta_0 \\ \vtheta_0^\top\vSigma & \vtheta_0^\top \vSigma \vtheta_0 + \sigma^2 \end{bmatrix}
    \begin{bmatrix} \vQ_{11}\\ \vzero_d^\top \end{bmatrix} 
    \right)
    =
    \tr \left(
    \vQ_{11} \vSigma \vQ_{11} \vSigma 
    \right)
    ,
    \\
    &\vSigma_{\mathrm{joint}}
    \begin{bmatrix} \vv_{21}\\ v_{22} \end{bmatrix}
    =
    \begin{bmatrix} \vSigma & \vSigma\vtheta_0 \\ \vtheta_0^\top\vSigma & \vtheta_0^\top \vSigma \vtheta_0 + \sigma^2 \end{bmatrix}
    \begin{bmatrix} \vv_{21}\\ 1 \end{bmatrix}
    =
    \begin{bmatrix} \vSigma(\vv_{21} + \vtheta_0) \\ \vtheta_0^\top \vSigma (\vv_{21}+\vtheta_0) + \sigma^2 \end{bmatrix}
    ,
    \\
    &\begin{bmatrix} \vQ_{11} \\ \vq_{21}^{\top} \end{bmatrix}^{\top}
    \vSigma_{\mathrm{joint}}
    \begin{bmatrix} \vv_{21}\\ v_{22} \end{bmatrix}
    =
    \begin{bmatrix} \vQ_{11} & \vzero_d \end{bmatrix}
    \begin{bmatrix} \vSigma(\vv_{21} + \vtheta_0) \\ \vtheta_0^\top \vSigma (\vv_{21}+\vtheta_0) + \sigma^2 \end{bmatrix}
    =
    \vQ_{11} \vSigma(\vv_{21} + \vtheta_0)
    ,
\end{align}
which implies that the expressions in \eqref{eq:proof:expectation:full:icl:error} and \eqref{eq:proof:expectation:full:icl:zs} simplify to
\begin{align}
    & \E_{\vZ_{[n]},\vx, y| \vtheta_0} 
    \left[
    \big(
        \hat{y}_{\mathrm{Net\text{-}FS}}(\vZ_{[n]}, \vx; \vV, \vQ)
        - y
    \big)^2
    \right]
    \\&=
    \frac{1}{n}
        \!\cdot\!
        \tr \left( \vQ_{11} \vSigma \vQ_{11} \vSigma \right)
        \cdot
        \begin{bmatrix} \vv_{21} \\ 1 \end{bmatrix}^\top
        \begin{bmatrix} \vSigma(\vv_{21} + \vtheta_0) \\ \vtheta_0^\top \vSigma (\vv_{21}+\vtheta_0) + \sigma^2 \end{bmatrix}
    +
    \frac{n+1}{n} \!\cdot\!
        (\vv_{21}^{\top} + \vtheta_0^{\top})\vSigma\vQ_{11}
        \!\cdot\! \vSigma \!\cdot\!
        \vQ_{11} \vSigma(\vv_{21} + \vtheta_0)
    \\&\quad
    +
    \frac{q^2}{n} \cdot
    \begin{bmatrix} \vv_{21} \\ 1 \end{bmatrix}^\top
    \begin{bmatrix} \vSigma(\vv_{21} + \vtheta_0) \\ \vtheta_0^\top \vSigma (\vv_{21}+\vtheta_0) + \sigma^2 \end{bmatrix}
    -2 
    (\vv_{21}^{\top} + \vtheta_0^{\top})\vSigma\vQ_{11}
    \!\cdot\! \vSigma \vtheta_0
    + \vtheta_0^\top\vSigma \vtheta_0  + \sigma^2
    \\&=
    \frac{1}{n} \cdot
        \big(
            \tr \left( \vQ_{11} \vSigma \vQ_{11} \vSigma \right)
            + q^2
        \big)
        \cdot
        \left( (\vv_{21}^\top + \vtheta_0^\top) \vSigma (\vv_{21}+\vtheta_0) + \sigma^2
        \right)
    \\&\quad
    +
    \frac{n+1}{n} \cdot
        (\vv_{21}^{\top} + \vtheta_0^{\top})\vSigma\vQ_{11}
        \vSigma
        \vQ_{11} \vSigma(\vv_{21} + \vtheta_0)
    -2 
    (\vv_{21}^{\top} + \vtheta_0^{\top})\vSigma\vQ_{11}
    \vSigma \vtheta_0
    + \vtheta_0^\top\vSigma \vtheta_0  + \sigma^2
    ,
\end{align}
\begin{align}
    & \E_{\vZ_{[n]},\vx, y| \vtheta_0} \left[
        \left(
        \hat{y}_{\mathrm{Net\text{-}FS}}(\vZ_{[n]}, \vx; \vV, \vQ)
        - y
        \right)
        \left(
        \hat{y}_{\mathrm{ZS}}(\vx; \vV, \vQ)
        - y
        \right)
    \right]
    \\ &=
    \left(
        \vv_{21}^\top \vSigma\vQ_{11}
        + \vtheta_0^\top\vSigma \vQ_{11}
    \right)
    \vSigma
    \left(
        2\vQ_{11}\vSigma \vv_{21} + \tr(\vQ_{11}\vSigma) \cdot \vv_{21} + q\cdot \vv_{21}
        - \vtheta_0
    \right)
    \\& \quad
    -
    \vv_{21}^\top 
    \left(
       2 \vSigma \vQ_{11} \vSigma + \tr(\vQ_{11}\vSigma)\cdot\vSigma
        + q \cdot \vSigma
    \right)
    \vtheta_0
    +\vtheta_0^\top \vSigma \vtheta_0 + \sigma^2
    .
\end{align}
These give
\begin{align}
    & n\cdot
     \E_{\vZ_{[n]},\vx, y| \vtheta_0}
    \left[
    \big(
        \hat{y}_{\mathrm{Net\text{-}FS}}(\vZ_{[n]}, \vx; \vV, \vQ)
        - y
    \big)^2
    \right]
    +
    2\cdot
    \E_{\vZ_{[n]},\vx, y| \vtheta_0} \left[
        \left(
        \hat{y}_{\mathrm{Net\text{-}FS}}(\vZ_{[n]}, \vx; \vV, \vQ)
        - y
        \right)
        \left(
        \hat{y}_{\mathrm{ZS}}(\vx; \vV, \vQ)
        - y
        \right)
    \right]
    \\& = 
    \big(
        \tr \left( \vQ_{11} \vSigma \vQ_{11} \vSigma \right)
        + q^2
    \big)
    \cdot
    \left( (\vv_{21}^\top + \vtheta_0^\top) \vSigma (\vv_{21}+\vtheta_0) + \sigma^2
    \right)
    +
    (n+1) \cdot
        (\vv_{21}^{\top} + \vtheta_0^{\top})\vSigma\vQ_{11}
        \vSigma
        \vQ_{11} \vSigma(\vv_{21} + \vtheta_0)
    \\&\quad
    - 2n \cdot
    (\vv_{21}^{\top} + \vtheta_0^{\top})\vSigma\vQ_{11}
    \vSigma \vtheta_0
    +
    2 \cdot
    \left(
        \vv_{21}^\top \vSigma\vQ_{11}
        + \vtheta_0^\top\vSigma \vQ_{11}
    \right)
    \vSigma
    \left(
        2\vQ_{11}\vSigma \vv_{21} + \tr(\vQ_{11}\vSigma) \cdot \vv_{21} + q\cdot \vv_{21}
        - \vtheta_0
    \right)
    \\&\quad
    -
    2 \cdot
    \vv_{21}^\top 
    \left(
       2 \vSigma \vQ_{11} \vSigma + \tr(\vQ_{11}\vSigma)\cdot\vSigma
        + q \cdot \vSigma
    \right)
    \vtheta_0
    + (n+2) \cdot
    (\vtheta_0^\top\vSigma \vtheta_0  + \sigma^2)
    \\& = 
    \vv_{21}^\top\Big( \tr(\vQ_{11}\vSigma\vQ_{11}\vSigma)\cdot\vSigma+(n+5)\cdot\vSigma\vQ_{11}\vSigma\vQ_{11}\vSigma+2\tr(\vQ_{11}\vSigma)\cdot\vSigma\vQ_{11}\vSigma \Big)\vv_{21} 
    \\
    &\quad +2\vv_{21}^\top\Big(\tr(\vQ_{11}\vSigma\vQ_{11}\vSigma)\cdot\vSigma+(n+3)\cdot\vSigma\vQ_{11}\vSigma\vQ_{11}\vSigma+\tr(\vQ_{11}\vSigma)\cdot\vSigma\vQ_{11}\vSigma\Big)\vtheta_0
    \\
    &\quad -2\vv_{21}^\top\Big((n+3)\cdot\vSigma\vQ_{11}\vSigma+\tr(\vQ_{11}\vSigma)\cdot\vSigma\Big)\vtheta_0
    +(n+1)\cdot\vtheta_0^\top\vSigma\vQ_{11}\vSigma\vQ_{11}\vSigma\vtheta_0 - 2(n+1)\cdot\vtheta_0^\top\vSigma\vQ_{11}\vSigma\vtheta_0
    \\
    &\quad +(\tr(\vQ_{11}\vSigma\vQ_{11}\vSigma)+n+2)(\vtheta_0^\top\vSigma\vtheta_0+\sigma^2)
    \\&\quad
    +
    q^2 \cdot
    \left( (\vv_{21}^\top + \vtheta_0^\top) \vSigma (\vv_{21}+\vtheta_0) + \sigma^2 \right)
    +
    2q \cdot
    \left(
        \vv_{21}^\top \vSigma\vQ_{11}
        + \vtheta_0^\top\vSigma \vQ_{11}
        - \vtheta_0
    \right)
    \vSigma \vv_{21}
    \label{eq:proof:error:icl:and:cov}
    .
\end{align}

As a result, we have
\begin{align}
    &
     \E_{\vZ_{[n]},\vx, y| \vtheta_0} 
    \left[
    \big(
        \hat{y}_{\mathrm{FS}}(\vZ_{[n]}, \vx; \vV, \vQ)
        - y
    \big)^2
    \right]
    \\ &= 
        \frac {n^2}{(n+1)^2}\cdot
        \E_{\vZ_{[n]},\vx, y| \vtheta_0} \left[
            \left(
            \hat{y}_{\mathrm{Net\text{-}FS}}(\vZ_{[n]}, \vx; \vV, \vQ)
            - y
            \right)^2
        \right]
        +
        \frac{1}{(n+1)^2}\cdot
        \E_{\vx, y| \vtheta_0} \left[
            \left(
            \hat{y}_{\mathrm{ZS}}(\vx; \vV, \vQ)
            - y
            \right)^2
        \right]
        \\& \quad
        +
        \frac{2n}{(n+1)^2}\cdot
        \E_{\vZ_{[n]},\vx, y| \vtheta_0} \left[
            \left(
            \hat{y}_{\mathrm{Net\text{-}FS}}(\vZ_{[n]}, \vx; \vV, \vQ)
            - y
            \right)
            \left(
            \hat{y}_{\mathrm{ZS}}(\vx; \vV, \vQ)
            - y
            \right)
        \right]
    \\ &= 
        \frac{n}{(n+1)^2}\cdot
        \bigg(
            n \cdot
            \E_{\vZ_{[n]},\vx, y| \vtheta_0} \left[
                \left(
                \hat{y}_{\mathrm{Net\text{-}FS}}(\vZ_{[n]}, \vx; \vV, \vQ)
                - y
                \right)^2
            \right]
            \\& \qquad\qquad\qquad\quad
            +
            2 \cdot \E_{\vZ_{[n]},\vx, y| \vtheta_0} \left[
            \left(
            \hat{y}_{\mathrm{Net\text{-}FS}}(\vZ_{[n]}, \vx; \vV, \vQ)
            - y
            \right)
            \left(
            \hat{y}_{\mathrm{ZS}}(\vx; \vV, \vQ)
            - y
            \right)
            \right]
        \bigg)
        \\& \quad +
        \frac{1}{(n+1)^2}\cdot
        \E_{\vx, y| \vtheta_0} \left[
            \left(
            \hat{y}_{\mathrm{ZS}}(\vx; \vV, \vQ)
            - y
            \right)^2
        \right]
    ,
\end{align}
where the expression inside the parentheses of the first term is given in \eqref{eq:proof:error:icl:and:cov}, and the last term is
\begin{align}
    & \E_{\vx, y| \vtheta_0} 
    \left[
    \big(
        \hat{y}_{\mathrm{ZS}}(\vx; \vV, \vQ)
        - y
    \big)^2
    \right]
    \\ &=
    \left(
        \tr(\vQ_{11}\vSigma)^2
        + 2 \tr(\vQ_{11}\vSigma \vQ_{11}\vSigma)
    \right)
    \cdot \vv_{21}^\top \vSigma \vv_{21}
    + 4\tr(\vQ_{11}\vSigma )
    \cdot \vv_{21}^\top \vSigma \vQ_{11} \vSigma \vv_{21}
    + 8 \vv_{21}^\top \vSigma \vQ_{11} \vSigma \vQ_{11} \vSigma
    \vv_{21}
    \\&\quad
    - 2 \vv_{21}^\top (2 \vSigma \vQ_{11} \vSigma
    + \tr(\vQ_{11}\vSigma)\cdot \vSigma)
     \vtheta_0
    -
    2q \cdot \vv_{21}^\top \vSigma \vtheta_0
    + \vtheta_0^\top\vSigma \vtheta_0  + \sigma^2
    \\& \quad
    +
    q^2\cdot \vv_{21}^\top \vSigma \vv_{21}
    +
    2q\cdot \tr(\vQ_{11}\vSigma) \cdot \vv_{21}^\top\vSigma\vv_{21} 
    +
    4q\cdot \vv_{21}^\top\vSigma \vQ_{11} \vSigma \vv_{21} 
    \\ &=
    \vv_{21}^\top 
    \Big(
        \tr(\vQ_{11}\vSigma)^2 \!\cdot\! \vSigma+ 2\tr(\vQ_{11}\vSigma \vQ_{11}\vSigma)\!\cdot\!\vSigma
        + 4\tr(\vQ_{11}\vSigma)\!\cdot\! \vSigma \vQ_{11}\vSigma
        + 8\vSigma \vQ_{11}\vSigma \vQ_{11}\vSigma
    \Big)
    \vv_{21}
    \\
    &\quad
    - 2\vv_{21}^\top\Big(2\vSigma \vQ_{11}\vSigma  + \tr(\vQ_{11}\vSigma) \cdot \vSigma\Big) \vtheta_{0}
    + \vtheta_{0}^\top \vSigma \vtheta_{0} + \sigma^2
    \\& \quad
    +
    q^2\cdot \vv_{21}^\top \vSigma \vv_{21}
    +
    2q\cdot \tr(\vQ_{11}\vSigma) \cdot \vv_{21}^\top\vSigma\vv_{21} 
    +
    4q\cdot \vv_{21}^\top\vSigma \vQ_{11} \vSigma \vv_{21} 
    -
    2q \cdot \vv_{21}^\top \vSigma \vtheta_0
    .
\end{align}
\end{proof}

\newpage
\begin{lemma}
    \label{thm:fix-q}
    Given $\vtheta_0\in\sR^d$, assume $v_{22}\neq 0$, consider the parameters
    \begin{equation}
        \label{eq:proof:qisgiven}
        \vV
        =
        \begin{bmatrix}
        0 & \vzero_d^\top & 0 \\
        \vzero_d  &  \vzero_d \vzero_d^\top & \0_d \\
        0  & \vv_{21}^\top & v_{22}
        \end{bmatrix}
        , \qquad
        \vQ =
        \begin{bmatrix}
        0 & \vzero_d^\top & 0 \\
        \vzero_d & \vSigma^{-1} & \0_d \\
        0  & \0_d^\top & 0
        \end{bmatrix}.
    \end{equation}
    Then, the following holds:
    \begin{align}
    &
    \E_{\vx, y\mid \vtheta_0}
    \left[
    \big(
    \hat{y}_{\mathrm{ZS}}(\vx; \vV, \vQ)
    - y
    \big)^2
    \right]
    =
    (d+2)(d+4)\vv_{21}^\top \vSigma \vv_{21}
    -2(d+2)\vv_{21}^\top \vSigma \vtheta_{0}
    +\vtheta_{0}^\top \vSigma \vtheta_{0}
    +\sigma^2 ,
    \\ 
    &\E_{\vZ_{[n]},\vx, y| \vtheta_0} 
    \left[
    \big(
        \hat{y}_{\mathrm{FS}}(\vZ_{[n]}, \vx; \vV, \vQ)
        - y
    \big)^2
    \right]
    \\&=
    \frac{1}{(n+1)^2}
    \Big( \!
    \big((d+2)(d+4)+n(n+5+3d)\big)\vv_{21}^\top \vSigma \vv_{21}
    +2\big(\!-\! d+2 + n(n+3+2d)v_{22} \!-\!n(n\!+\!3\!+\!d)\big)\vv_{21}^\top \vSigma \vtheta_0
    \\
    &\qquad\qquad\qquad
    +\big(n(n+1+d)v_{22}^2 -2n(n+1)v_{22} +(n+1)^2\big)\vtheta_0^\top \vSigma \vtheta_0
    +\big(n d \cdot v_{22}^2  +(n+1)^2\big)\sigma^2
    \Big).
    \\ 
    &
    \lim_{n\to\infty}
    \E_{\vZ_{[n]},\vx, y\mid \vtheta_0}
    \left[
    \big(
    \hat{y}_{\mathrm{FS}}(\vZ_{[n]}, \vx; \vV, \vQ)
    - y
    \big)^2
    \right]
    =
    \big(\vv_{21}+(v_{22}-1)\vtheta_0\big)^\top\vSigma\big(\vv_{21}+(v_{22}-1)\vtheta_0\big)
    +\sigma^2 .
    \end{align}
\end{lemma}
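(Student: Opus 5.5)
The plan is to specialize the general second-moment formulas of Lemma~\ref{thm:appendix:var:cov} and Lemma~\ref{thm:appendix:error} to the parameters in \eqref{eq:proof:qisgiven}. These parameters have $\vQ_{11}=\vSigma^{-1}$, $q=0$, $\vq_{21}=\vzero_d$, and a general $v_{22}$. The simplified decomposition \eqref{eq:proof:error:total:in:statement} assumes $v_{22}=1$, so it does not apply here. Instead I will use the unsimplified formulas: the Net-FS error, the ZS error and their cross term, each as stated in Lemma~\ref{thm:appendix:error} before the $v_{22}=1$ reduction.

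These are combined through
\begin{align}
\hat{y}_{\mathrm{FS}}-y=\tfrac{n}{n+1}\big(\hat{y}_{\mathrm{Net\text{-}FS}}-y\big)+\tfrac{1}{n+1}\big(\hat{y}_{\mathrm{ZS}}-y\big),
\end{align}
so that the few-shot error equals $\frac{1}{(n+1)^2}\big(n^2E_1+E_0+2nE_{01}\big)$. Here $E_1$ is the Net-FS error, $E_0$ is the ZS error, and $E_{01}$ is the cross term.

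Substituting $\vQ_{11}=\vSigma^{-1}$ gives the elementary identities $\tr(\vQ_{11}\vSigma)=d$, $\tr(\vQ_{11}\vSigma\vQ_{11}\vSigma)=d$, $\vSigma\vQ_{11}\vSigma=\vSigma$ and $\vSigma\vQ_{11}\vSigma\vQ_{11}\vSigma=\vSigma$.

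\textbf{Zero-shot error.} The coefficient of $\vv_{21}^\top\vSigma\vv_{21}$ becomes $d^2+2d+4d+8=(d+2)(d+4)$. The linear coefficient is $-2(2+d)$. This gives the first claim directly.

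\textbf{The three pieces.} Using $\vSigma_{\mathrm{joint}}[\vv_{21};v_{22}]=[\vSigma(\vv_{21}+v_{22}\vtheta_0);\ \vtheta_0^\top\vSigma(\vv_{21}+v_{22}\vtheta_0)+v_{22}\sigma^2]$ and writing $\vu:=\vv_{21}+v_{22}\vtheta_0$, I will record the following.
\begin{itemize}
\item The Net-FS error is
\begin{align}
E_1=\tfrac{d}{n}\big(\vu^\top\vSigma\vu+v_{22}^2\sigma^2\big)+\tfrac{n+1}{n}\vu^\top\vSigma\vu-2\vu^\top\vSigma\vtheta_0+\vtheta_0^\top\vSigma\vtheta_0+\sigma^2 .
\end{align}
\item The cross term is
\begin{align}
E_{01}=\vu^\top\vSigma\big((d+2)\vv_{21}-\vtheta_0\big)-(d+2)\vv_{21}^\top\vSigma\vtheta_0+\vtheta_0^\top\vSigma\vtheta_0+\sigma^2 .
\end{align}
\item $E_0$ is the zero-shot error just computed.
\end{itemize}

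\textbf{Collecting coefficients.} I then expand $n^2E_1+E_0+2nE_{01}$ and collect the coefficients of $\vv_{21}^\top\vSigma\vv_{21}$, $\vv_{21}^\top\vSigma\vtheta_0$, $\vtheta_0^\top\vSigma\vtheta_0$ and $\sigma^2$.
\begin{itemize}
\item The $\vv_{21}^\top\vSigma\vv_{21}$ coefficient should be $n(d+n+1)+(d+2)(d+4)+2n(d+2)=(d+2)(d+4)+n(n+5+3d)$.
\item The $\sigma^2$ coefficient should be $nd\,v_{22}^2+n^2+1+2n=nd\,v_{22}^2+(n+1)^2$.
\item The remaining two coefficients are checked in the same way against the stated polynomials in $v_{22}$.
\end{itemize}

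\textbf{Limit.} For the asymptotic statement, I divide by $(n+1)^2$ and keep only the $n^2$ terms. This leaves $\vv_{21}^\top\vSigma\vv_{21}+2(v_{22}-1)\vv_{21}^\top\vSigma\vtheta_0+(v_{22}-1)^2\vtheta_0^\top\vSigma\vtheta_0+\sigma^2$, which is the stated quadratic form.

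The main obstacle is purely bookkeeping: collecting the $\vv_{21}^\top\vSigma\vtheta_0$ and $\vtheta_0^\top\vSigma\vtheta_0$ coefficients correctly across the three pieces. I would guard against errors by checking the $v_{22}=1$ specialization against \eqref{eq:proof:error:total:in:statement}, with $h$ evaluated at $\vQ_{11}=\vSigma^{-1}$.
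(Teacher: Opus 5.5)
Your route differs from the paper's. The paper uses $v_{22}\neq 0$ to rescale $(\vv_{21},v_{22},\vQ_{11})\mapsto(\vv_{21}/v_{22},\,1,\,v_{22}\vSigma^{-1})$, which leaves $f$ unchanged. It then substitutes into the simplified $v_{22}=1$ decomposition $\frac{1}{(n+1)^2}E_0+\frac{n}{(n+1)^2}h$ of Lemma~\ref{thm:appendix:error}. You instead keep $v_{22}$ general and combine the unsimplified pieces as $\frac{1}{(n+1)^2}(n^2E_1+E_0+2nE_{01})$. That is legitimate, and it does not even need $v_{22}\neq 0$. Your expressions for $E_0$, $E_1$ and $E_{01}$ are correct. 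The zero-shot formula, the limit, and the coefficients of $\vv_{21}^\top\vSigma\vv_{21}$, $\vtheta_0^\top\vSigma\vtheta_0$ and $\sigma^2$ all come out as stated.

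The step that fails is the one you defer, and the failure lies in the statement, not your method. Collect the $\vv_{21}^\top\vSigma\vtheta_0$ terms from your pieces:
\begin{itemize}
\item $n^2E_1$ contributes $2n(n+1+d)v_{22}-2n^2$;
\item $2nE_{01}$ contributes $2n(d+2)v_{22}-2n-2n(d+2)$;
\item $E_0$ contributes $-2(d+2)$.
\end{itemize}
The total coefficient is
\[
2\big(-(d+2)+n(n+3+2d)v_{22}-n(n+3+d)\big),
\]
not $2\big(-d+2+n(n+3+2d)v_{22}-n(n+3+d)\big)$ as displayed. This is forced: $E_0$ is the only piece without a factor of $n$, so the $n$-free part must equal the $-2(d+2)$ you already found in the zero-shot error.

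A direct check confirms it. Take $d=n=1$, $\vSigma=1$, $v_{22}=1$, $\vv_{21}=v$, $\vtheta_0=\theta$. Then $\hat y_{\mathrm{FS}}=\tfrac12\big((vx_1+y_1)x_1x+vx^3\big)$, and the error is $6v^2-v\theta+\tfrac34\theta^2+\tfrac54\sigma^2$. The displayed formula gives $6v^2+v\theta+\tfrac34\theta^2+\tfrac54\sigma^2$ instead.

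So the statement's few-shot formula overstates the error by $\frac{8}{(n+1)^2}\vv_{21}^\top\vSigma\vtheta_0$. This is a typo, $-d+2$ for $-d-2$. The paper's own intermediate quantities also contradict it: its $h$ and $E_0$ combine to give $-(d+2)$. The slip propagates to the constant $-2d+4$ in \eqref{eq:appendix:fs:for:vft:full}. It is harmless there for $w^\star$ and the limits, since it does not multiply $w$.

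Your argument therefore proves the corrected identity. You should carry out that coefficient explicitly and state the correction, rather than assert agreement with the displayed polynomial. The zero-shot formula and the limit are unaffected.
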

\begin{proof}
By the definition of the linear attention model in~\eqref{eq:linear:attention:model}, for any input $\vZ$, the prediction $f(\vZ;\vV,\vQ)$ produced by the parameters in~\eqref{eq:proof:qisgiven} is identical to that produced by the rescaled parameter pair
\begin{equation}
    \vV
    =
    \begin{bmatrix}
    0 & \vzero_d^\top & 0 \\
    \vzero_d  &  \vzero_d \vzero_d^\top & \0_d \\
    0  & \tfrac{1}{v_{22}}\cdot\vv_{21}^\top & 1
    \end{bmatrix}
    , \qquad
    \vQ =
    \begin{bmatrix}
    0 & \vzero_d^\top & 0 \\
    \vzero_d & v_{22}\cdot\vSigma^{-1} & \0_d \\
    0  & \0_d^\top & 0
    \end{bmatrix}.
\end{equation}
By Lemma~\ref{thm:appendix:error}, we have
\begin{align}
    \E_{\vx, y| \vtheta_0}
    \left[
    \big(
        \hat{y}_{\mathrm{ZS}}(\vx; \vV, \vQ)
        - y
    \big)^2
    \right]
    &=
    (d+2)(d+4)\vv_{21}^\top \vSigma \vv_{21}
    -2(d+2)\vv_{21}^\top \vSigma \vtheta_{0}
    +\vtheta_{0}^\top \vSigma \vtheta_{0}
    +\sigma^2
    ,
    \\
    h(\vV, \vQ;\vtheta_0)
    &=
    (n+5+3d)\vv_{21}^\top\vSigma\vv_{21}
    +2\big((n+3+2d)v_{22}-(n+3+d)\big)\vv_{21}^\top\vSigma\vtheta_0
    \\&\quad
    +(n+1)v_{22}^2\vtheta_0^\top\vSigma\vtheta_0
    -2(n+1)v_{22}\vtheta_0^\top\vSigma\vtheta_0
    +(d \cdot v_{22}^2 +n+2)(\vtheta_0^\top\vSigma\vtheta_0+\sigma^2),
\end{align}
which implies
\begin{align}
    &\E_{\vZ_{[n]},\vx, y| \vtheta_0} 
    \left[
    \big(
        \hat{y}_{\mathrm{FS}}(\vZ_{[n]}, \vx; \vV, \vQ)
        - y
    \big)^2
    \right]
    \\&=
    \frac{1}{(n+1)^2} \cdot
    \E_{\vx, y| \vtheta_0} 
    \left[
    \big(
        \hat{y}_{\mathrm{ZS}}(\vx; \vV, \vQ)
        - y
    \big)^2
    \right]
    +
    \frac{n}{(n+1)^2} \cdot h(\vV, \vQ;\vtheta_0)
    \\&=
    \frac{1}{(n+1)^2}
    \Big( \!
    \big((d+2)(d+4)+n(n+5+3d)\big)\vv_{21}^\top \vSigma \vv_{21}
    +2\big(\!-\! d+2 + n(n+3+2d)v_{22} \!-\!n(n\!+\!3\!+\!d)\big)\vv_{21}^\top \vSigma \vtheta_0
    \\
    &\qquad\qquad\qquad
    +\big(n(n+1+d)v_{22}^2 -2n(n+1)v_{22} +(n+1)^2\big)\vtheta_0^\top \vSigma \vtheta_0
    +\big(n d \cdot v_{22}^2  +(n+1)^2\big)\sigma^2
    \Big).
\end{align}

Moreover, we have
\begin{align}
    \lim_{n\to\infty}
    \E_{\vZ_{[n]},\vx, y| \vtheta_0} 
    \left[
    \big(
    \hat{y}_{\mathrm{FS}}(\vZ_{[n]}, \vx; \vV, \vQ)
    - y
    \big)^2
    \right]
    &=
    \lim_{n\to\infty}\frac{1}{n}h(\vV,\vQ;\vtheta_0)
    \\&
    =
    \vv_{21}^\top\vSigma\vv_{21}
    +2\big(v_{22}-1\big)\vv_{21}^\top\vSigma\vtheta_0
    +(v_{22}-1)^2\vtheta_0^\top\vSigma\vtheta_0
    +\sigma^2
    \\&
    =
    \big(\vv_{21}+(v_{22}-1)\vtheta_0\big)^\top\vSigma\big(\vv_{21}+(v_{22}-1)\vtheta_0\big)
    +\sigma^2
    .
\end{align}

\end{proof}

\clearpage
\subsection{Proofs for Optimal Parameters}
\label{appendix:proof:optimal}

\begin{corollary}[Optimal Parameters of Pretrained Models; Corollary of Theorem~1 in \citet{ahn2023transformers}]
    \label{thm:appendix:optimal:pretrain}
    Suppose that $\vV$ and $\vQ$ in \eqref{eq:linear:attention:model} satisfy $q \ge 0$ and that $\vQ_{11}$ is positive definite.
    Consider the loss $\loss(\vV,\vQ)$ in \eqref{eq:pretrain:loss} with context length $m\in\sN$.
    Then, for sufficiently large $m$, the following parameters globally minimize the loss:
    \begin{equation}
        \vV
        =
        \begin{bmatrix}
        0 & \vzero_d^\top & 0 \\
        \vzero_d & \vzero_d \vzero_d^\top & \vzero_d \\
        0 & \vzero_d^\top & \frac{m}{m+1+d}
        \end{bmatrix},
        \qquad
        \vQ =
        \begin{bmatrix}
        0 & \vzero_d^\top & 0 \\
        \vzero_d & \vU \cdot \diag\!\left( \frac{m +1 +d}{(m+1)\lambda_i + \tr(\vSigma)} \right)_{\!i \in [d]} \cdot \vU^\top & \vzero_d \\
        0 & \vzero_d^\top & 0
        \end{bmatrix}.
        \label{eq:proof:pretrain:parameter}
    \end{equation}
   Moreover, $\vU \diag\Big(
    \tfrac{m+1+d}{(m+1)\lambda_i+\tr(\vSigma)}
    \Big)_{i\in[d]} \vU^\top \to \vSigma^{-1}$ as $m\to\infty$.
\end{corollary}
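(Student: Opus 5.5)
The plan is to reduce the averaged loss $\loss(\vV,\vQ)$ in \eqref{eq:pretrain:loss} to a quadratic problem in a single matrix, and then to invoke Theorem~1 of \citet{ahn2023transformers} only to confirm that the extra degrees of freedom in our parametrization (the scalar $q$, the vector $\vq_{21}$, and $\vv_{21}$) do not help.

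First, I identify which parameters affect the prediction. Since the last entry of $\vz$ is zero, $\vQ\vz=[\,q;\ \vQ_{11}\vx;\ \vq_{21}^\top\vx\,]$, so $\vq_{12}$ and $q_{22}$ never enter. Only the last row of $\vV$ reaches $\hat y_{\mathrm{FS}}$, so $\vV_{11}$ and $\vv_{12}$ are irrelevant and may be set to zero. The prediction therefore depends only on $(\vv_{21},v_{22},\vQ_{11},\vq_{21},q)$, exactly as in \eqref{eq:appendix:prediction:icl}--\eqref{eq:appendix:prediction:zs}.

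Second, I average the closed-form $m$-shot error from Lemma~\ref{thm:appendix:error} over $\vtheta\sim\mathcal N(\vzero_d,\vI_d)$ (playing the role of $\vtheta_0$). Under $\vtheta\mapsto-\vtheta$ the terms odd in $\vtheta$ vanish, and the remaining terms satisfy $\E[\vtheta\vtheta^\top]=\vI_d$. I then want to show that the averaged loss equals its value at $\vv_{21}=\vzero_d$, $\vq_{21}=\vzero_d$, $q=0$ (with the same $v_{22}\vQ_{11}$) plus a remainder that is nonnegative whenever $q\ge0$ and $\vQ_{11}\succ0$. The intuition is that the zero-shot branch $\hat y_{\mathrm{ZS}}$ is uncorrelated with $y$ on average over $\vtheta$, because $\E[\vtheta]=\vzero$. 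Hence $\vv_{21}$ and $q$ only add variance, while $\vq_{21}$ only injects $y_i$-dependent noise into the query. This decomposition is the main obstacle: the loss is not jointly convex, and cross terms such as $2q\,\vv_{21}^\top\vSigma\vQ_{11}\vSigma\vv_{21}$ and the products involving $\vq_{21}$ with $v_{22}$ must be shown to combine into squares or into terms that are nonnegative under the sign assumptions.

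My first attempt is to group the terms by degree in $(\vv_{21},q)$ and in $\vq_{21}$ and to check positivity directly, using $\vQ_{11}\succ0$ and $q\ge0$. Where this fails, I fall back on Theorem~1 of \citet{ahn2023transformers}. That theorem covers the $q=0$ model and, for large $m$, guarantees a global minimizer with $\vv_{21}=\vq_{21}=\vzero_d$. The ``sufficiently large $m$'' hypothesis is used precisely there, and the $q$-dependent terms are dominated for large $m$.

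Third, with the reduced parameters the prediction is $\hat y_{\mathrm{FS}}=\tfrac{1}{m+1}\sum_{i}y_i\vx_i^\top\vGamma\vx$, where $\vGamma:=v_{22}\vQ_{11}$; only this product matters, by the rescaling used in Lemma~\ref{thm:fix-q}. Using Lemma~\ref{thm:appendix:wishart:quadratic} and averaging over $\vtheta$, the loss becomes the strictly convex quadratic
\begin{equation}
\tfrac{m}{(m+1)^2}\big((m+1)\tr(\vGamma\vSigma\vGamma\vSigma^2)+\tr(\vGamma\vSigma\vGamma\vSigma)(\tr\vSigma+\sigma^2)\big)-\tfrac{2m}{m+1}\tr(\vGamma\vSigma^2)+\tr\vSigma+\sigma^2 .
\end{equation}
Setting its gradient to zero and diagonalizing in the eigenbasis $\vU$ gives the eigenvalues of $\vGamma$. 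I will redo this step carefully, since the noise term $\sigma^2$ in the denominator must either be absorbed or be absent in the stated formula $\tfrac{m}{(m+1)\lambda_i+\tr(\vSigma)}$, and I am not certain my coefficients match the stated one yet.

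Finally, I choose the split $v_{22}=\tfrac{m}{m+1+d}$ and $\vQ_{11}=\vGamma/v_{22}$, which reproduces \eqref{eq:proof:pretrain:parameter}. The stated limit follows because $\tfrac{m+1+d}{(m+1)\lambda_i+\tr\vSigma}\to\lambda_i^{-1}$ as $m\to\infty$.
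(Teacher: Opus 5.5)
Your reduction strategy is sound; the parity idea even gives a stronger reduction than the paper's (see below). The final step, however, cannot go through, because the quadratic you derived does not have the stated minimizer. Your reduced loss is correct: it is exactly what Lemma~\ref{thm:appendix:error} gives after averaging over $\vtheta\sim\mathcal{N}(\vzero_d,\vI_d)$. Its unique minimizer is
\[
\mathbf{\Gamma}^\star=(m+1)\big((m+1)\vSigma+(\tr(\vSigma)+\sigma^2)\vI_d\big)^{-1}=\vU\diag\Big(\tfrac{m+1}{(m+1)\lambda_i+\tr(\vSigma)+\sigma^2}\Big)_{i\in[d]}\vU^\top ,
\]
whereas the stated pair has $v_{22}\vQ_{11}=m\big((m+1)\vSigma+\tr(\vSigma)\vI_d\big)^{-1}$. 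The factor $m+1$ comes from the $\tfrac{1}{m+1}$ normalization of the unmasked prompt, and the $\sigma^2$ comes from the label-noise term $\sigma^2\tr(\mathbf{\Gamma}\vSigma\mathbf{\Gamma}\vSigma)$. The $i$-th eigenvalues agree only if $(m+1)\lambda_i+\tr(\vSigma)=m\sigma^2$, so the two matrices coincide for at most one value of $m$, and for none when $\sigma^2=0$. Hence no choice of split reproduces \eqref{eq:proof:pretrain:parameter}. Moreover, $v_{22}\vQ_{11}=\mathbf{\Gamma}^\star$ with $q=0$ and $\vQ_{11}\succ0$ is admissible, so carrying out your plan correctly shows that the stated parameters are \emph{not} exact global minimizers of \eqref{eq:pretrain:loss}. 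The paper reaches the formula only through the approximation $\hat y_{\mathrm{FS}}\approx\hat y_{\mathrm{Net\text{-}FS}}$ and by importing Theorem~1 of \citet{ahn2023transformers}, which gives the optimum for a masked, $\tfrac{1}{m}$-normalized, noiseless model. Your fallback to that theorem inherits the same mismatch, since that model is not the $q=0$ case of the present one. What you can honestly prove is the asymptotic statement. Both matrices tend to $\vSigma^{-1}$ and differ by $O(1/m)$. Because the reduced loss is a quadratic in $\mathbf{\Gamma}$ with $O(1)$ curvature, the stated parameters exceed the global minimum by $O(1/m^2)$. Retarget your final step to that claim, or replace the formula by $\mathbf{\Gamma}^\star$.

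For the reduction itself, make the parity argument at the level of samples. The flip $(\vtheta,e_1,\dots,e_{m+1})\mapsto(-\vtheta,-e_1,\dots,-e_{m+1})$ preserves the distribution and sends every $y_i\mapsto-y_i$. Write $\hat y_{\mathrm{FS}}=P_{\mathrm{e}}+P_{\mathrm{o}}$ for the parts even and odd in the labels. The loss then equals $\mathbb{E}[P_{\mathrm{e}}^2]+\mathbb{E}[(P_{\mathrm{o}}-y)^2]$, with $P_{\mathrm{o}}=\tfrac{1}{m+1}\sum_{i\in[m]}y_i\big(\vx_i^\top\tilde{\mathbf{\Gamma}}\vx+v_{22}q\big)$ and $\tilde{\mathbf{\Gamma}}:=v_{22}\vQ_{11}+\vv_{21}\vq_{21}^\top$. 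So when $\vv_{21}$ and $\vq_{21}$ are both nonzero, the right comparison is not with ``the same $v_{22}\vQ_{11}$'' but with the minimum of the reduced problem. Since $\tfrac{v_{22}q}{m+1}\sum_i y_i$ is orthogonal to $\tfrac{1}{m+1}\sum_i y_i\vx_i^\top\tilde{\mathbf{\Gamma}}\vx-y$, you get $\loss(\vV,\vQ)\ge\min_{\mathbf{\Gamma}}$ of your quadratic. This needs no sign condition on $q$, no positive-definiteness of $\vQ_{11}$, and no large-$m$ hypothesis. It is stronger than the paper's argument, which eliminates only $q$ (using $q\ge0$ and $\vQ_{11}\succ0$), works inside the family $v_{22}=1$, $\vq_{21}=\vzero_d$ where \eqref{eq:proof:error:total:in:statement} holds, and delegates the rest to the citation.
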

\begin{proof}
    By the definition of the linear attention model in~\eqref{eq:linear:attention:model}, for any input $\vZ$, the prediction $f(\vZ;\vV,\vQ)$ produced by the parameters in~\eqref{eq:proof:pretrain:parameter} is identical to that produced by the rescaled parameter pair
    \begin{equation}
        \vV
        =
        \begin{bmatrix}
        0 & \vzero_d^\top & 0 \\
        \vzero_d & \vzero_d \vzero_d^\top & \vzero_d \\
        0 & \vzero_d^\top & 1
        \end{bmatrix},
        \qquad
        \vQ =
        \begin{bmatrix}
        0 & \vzero_d^\top & 0 \\
        \vzero_d & \vU \cdot \diag\left( \frac{m}{(m+1)\lambda_i + \tr(\vSigma)} \right)_{i \in [d]} \cdot \vU^\top & \vzero_d \\
        0 & \vzero_d^\top & 0
        \end{bmatrix}.
    \end{equation}
    It therefore suffices to show that this rescaled pair is a global minimizer of the loss.

    From Lemma~\ref{thm:appendix:error}, we derived errors for a fixed $\vtheta_0$. In the pretrained setting, $\vtheta$ is drawn from $\mathcal{N}(\vzero_d, \vI_d)$. We compute the squared error for random $\vtheta$ by applying the law of total expectation. From \eqref{eq:proof:error:total:in:statement} in Lemma~\ref{thm:appendix:error},
    \begin{align}
         \loss(\vV, \vQ) 
        &=
        \E_\vtheta \left[
            \E_{\vZ_{[m]}, \vx, y| \vtheta} \left[\left( \hat{y}_{\mathrm{FS}}(\vZ_{[m]}, \vx; \vV, \vQ) - y \right)^2\right]
        \right]
        \\
        &=
        \frac{1}{(m+1)^2}
        \E_\vtheta \Big[
            c(\vtheta ; \vV,\vQ, \vSigma, \sigma^2)
            + 
            q^2\cdot \vv_{21}^\top \vSigma \vv_{21}
            +
            2q\cdot \tr(\vQ_{11}\vSigma) \cdot \vv_{21}^\top\vSigma\vv_{21} 
            +
            4q\cdot \vv_{21}^\top\vSigma \vQ_{11} \vSigma \vv_{21} 
        \\
        &\qquad\qquad\qquad\quad
            +
            mq^2 \cdot
            \left( (\vv_{21}^\top + \vtheta^\top) \vSigma (\vv_{21}+\vtheta) + \sigma^2 \right)
            +
            2mq \cdot
            \left(
                \vv_{21}^\top \vSigma\vQ_{11}
                + \vtheta^\top\vSigma \vQ_{11}
                - \vtheta^\top
            \right)
            \vSigma \vv_{21}
        \Big],
    \end{align}
    where $c(\vtheta ; \vV,\vQ, \vSigma, \sigma^2)$ collects all terms independent of the parameter $q$. Then,
    \begin{align}
        \loss(\vV, \vQ) 
        =
        \frac{1}{(m+1)^2}
        \E_\vtheta \Big[
            c(\vtheta ; \vV,\vQ, \vSigma, \sigma^2)
        \Big]
        &
         + \frac{q}{(m+1)^2}
        \Big(
            (m+1)q\cdot \vv_{21}^\top \vSigma \vv_{21}
            +
            2\cdot \tr(\vQ_{11}\vSigma) \cdot \vv_{21}^\top\vSigma\vv_{21} 
        \\
        &\qquad\qquad\qquad\quad
            +
            mq \cdot \tr(\vSigma)
            + 
            mq \cdot\sigma^2
            +
            2(m+2) \cdot  \vv_{21}^\top \vSigma\vQ_{11}
            \vSigma \vv_{21}
        \Big)
        .
    \end{align}
    Since $\vSigma$ and $\vQ_{11}$ are positive definite, the expression in the parentheses of the last term is nonnegative. Therefore, the loss is minimized by setting $q = 0$.
    
    When $q = 0$ and $m$ is sufficiently large, the prediction reduces to that of Theorem~1 in~\citet{ahn2023transformers}, because
    \begin{align}
        \hat{y}_{\mathrm{FS}}(\vZ_m, \vx; \vV, \vQ)
        &\approx
        \hat{y}_{\mathrm{Net\text{-}FS}}(\vZ_m, \vx; \vV, \vQ)
        =
        \begin{bmatrix} \vv_{21} \\ v_{22} \end{bmatrix}^\top
        \Big(
        \frac{1}{m}
        \sum_{i \in [m]}
        \begin{bmatrix} \vx_i \\ y_i \end{bmatrix} \begin{bmatrix} \vx_i \\ y_i \end{bmatrix}^\top
        \Big)
        \begin{bmatrix}
        \vQ_{11}   \\
        \vq_{21}^\top
        \end{bmatrix}
        \vx_{n+1}
        .
    \end{align}
    By Theorem~1 in~\citet{ahn2023transformers}, the following parameters globally minimize the loss:
    \begin{equation}
    \begin{bmatrix}
    \vV_{11} & \vv_{12} \\
    \vv_{21}^\top & v_{22}
    \end{bmatrix}
    = 
    \begin{bmatrix} \0_{d \times d} & \0_d \\ \0_d^\top & 1 \end{bmatrix}
    ,  \quad
    \begin{bmatrix}
    \vQ_{11} & \vq_{12} \\
    \vq_{21}^\top & q_{22}
    \end{bmatrix}
    =
    \begin{bmatrix}
    \vU \cdot \diag\left( \frac{m}{ (m+1) \lambda_i + \tr \left( \vSigma \right)} \right)_{\!i\in [d]} \cdot\vU^\top & \0_d \\
    \0_d^\top & 0
    \end{bmatrix}
    .
    \end{equation}
\end{proof}

\begin{theorem}[Optimal Parameters of Fully Fine-Tuned Models]
    \label{thm:appendix:finetune:pretrain}
    Given a target task $\vtheta_0\in\sR^d$, consider the zero-shot prompt loss $\loss_\mathrm{ZS}(\vV, \vQ;\vtheta_0)$ in \eqref{eq:finetune:loss:zs}.
    Then, for any $w>0$, the following parameters globally minimize the loss:
    \begin{equation}
        \vV
        =
        \begin{bmatrix}
        0 & \vzero_d^\top & 0 \\
        \vzero_d  &  \vzero_d \vzero_d^\top & \0_d \\
        0  & w \vtheta_0^\top & 1
        \end{bmatrix}
        , \qquad
        \vQ =
        \begin{bmatrix}
        1/w & \vzero_d^\top & 0 \\
        \vzero_d & \vzero_d \vzero_d^\top & \0_d \\
        0  & \0_d^\top & 0
        \end{bmatrix}
        \label{eq:proof:finetune:zs:parameter}
        .
    \end{equation}
\end{theorem}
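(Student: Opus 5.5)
The plan is to show that the zero-shot loss is bounded below by $\sigma^2$ for every choice of parameters, and then to check that the proposed pair attains this bound exactly.

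\emph{Lower bound.} The response $y=\vtheta_0^\top\vx+e$ has noise $e\sim\mathcal N(0,\sigma^2)$ independent of $\vx$. The zero-shot prediction $\hat y_{\mathrm{ZS}}(\vx;\vV,\vQ)$ in~\eqref{eq:appendix:prediction:zs} depends only on $\vx$. Hence, for any $\vV,\vQ$,
\begin{equation}
\loss_{\mathrm{ZS}}(\vV,\vQ;\vtheta_0)=\E_{\vx}\big[(\hat y_{\mathrm{ZS}}(\vx;\vV,\vQ)-\vtheta_0^\top\vx)^2\big]+\sigma^2\ \ge\ \sigma^2,
\end{equation}
because the cross term $\E[e\,(\hat y_{\mathrm{ZS}}-\vtheta_0^\top\vx)]$ vanishes by independence and zero mean of $e$. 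It therefore suffices to exhibit parameters for which $\hat y_{\mathrm{ZS}}(\vx)=\vtheta_0^\top\vx$ for all $\vx$.

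\emph{Attainment.} First I would write out the zero-shot prediction directly from~\eqref{eq:linear:attention:model}. With a single column $\vz=[1\ \vx^\top\ 0]^\top$ we have $\mathrm{ncol}=1$. Also $\vz^\top\vQ\vz = q+\vx^\top\vQ_{11}\vx+\vx^\top\vq_{12}\cdot 0+\dots$, and since the last entry of $\vz$ is zero, only $q$ and $\vQ_{11}$ contribute. The last row of $\vV$ applied to $\vz$ gives $\vv_{21}^\top\vx$. The last entry of $\vz$ itself is $0$. This reproduces~\eqref{eq:appendix:prediction:zs}:
\begin{equation}
\hat y_{\mathrm{ZS}}(\vx)=\vv_{21}^\top\vx\,(\vx^\top\vQ_{11}\vx+q).
\end{equation}
Plugging in $\vv_{21}=w\vtheta_0$, $\vQ_{11}=\mathbf 0$, and $q=1/w$ gives $\hat y_{\mathrm{ZS}}(\vx)=w\vtheta_0^\top\vx\cdot\frac1w=\vtheta_0^\top\vx$. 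Alternatively, the same conclusion follows from the closed form in Lemma~\ref{thm:appendix:error}. With $\vQ_{11}=\mathbf 0$ all trace terms vanish, and the loss reduces to
\begin{equation}
q^2\vv_{21}^\top\vSigma\vv_{21}-2q\vv_{21}^\top\vSigma\vtheta_0+\vtheta_0^\top\vSigma\vtheta_0+\sigma^2=(q\vv_{21}-\vtheta_0)^\top\vSigma(q\vv_{21}-\vtheta_0)+\sigma^2.
\end{equation}
This equals $\sigma^2$ exactly when $q\vv_{21}=\vtheta_0$, which holds for the stated pair for every $w>0$.

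\emph{Expected obstacle.} The only step needing care is confirming that the general-form expression in Lemma~\ref{thm:appendix:error} applies without the restrictions $v_{22}=1$ or $\vq_{21}=\vzero_d$; the zero-shot formula there involves only $\vv_{21},\vQ_{11},q$, so it does. The value $v_{22}=1$ and the zero blocks play no role in the zero-shot output. This is why an entire family, parameterized by $w$, of global minimizers arises. Combining the lower bound with attainment completes the argument.
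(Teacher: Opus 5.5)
Your proposal is correct and follows essentially the same route as the paper, which plugs $\vv_{21}=w\vtheta_0$, $\vQ_{11}=\mathbf{0}$, $q=1/w$ into the zero-shot closed form of Lemma~\ref{thm:appendix:error} to get $\sigma^2$ and then notes that $\sigma^2$ is the irreducible expected squared loss. Your explicit cross-term argument for the lower bound and your pointwise check $\hat y_{\mathrm{ZS}}(\vx)=\vtheta_0^\top\vx$ just make those two steps more self-contained.
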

\begin{proof}
    From Lemma~\ref{thm:appendix:error}, we have
    \begin{align}
    \E_{\vx, y| \vtheta_0} 
    \left[
    \big(
    \hat{y}_{\mathrm{ZS}}(\vx; \vV, \vQ) - y 
    \big)^2
    \right]
    &=
    \vtheta_{0}^\top \vSigma \vtheta_{0} + \sigma^2
    +
    \frac{1}{w^2}\cdot w\vtheta_0^\top \vSigma w\vtheta_0
    -
    \frac{2}{w} \cdot w\vtheta_0^\top \vSigma \vtheta_0
    = \sigma^2.
    \end{align}
    This implies that the above parameters attain the global minimum, since $\sigma^2=\E[\Var(y\mid \vx)]$ is the minimum achievable expected squared loss.
\end{proof}

\begin{theorem}[Optimal Parameters and Test Error of Value-Matrix Fine-Tuned Models]
    \label{thm:appendix:optimal:value-matrix}
    Given a target task $\vtheta_0 \in \sR^d$, consider the zero-shot prompt loss $\loss_{\mathrm{ZS}}(\vV, \vQ; \vtheta_0)$ in~\eqref{eq:finetune:loss:zs}. 
    For any $w>0$, define
    \begin{equation}
    \label{eq:appendix:parameter:value:only}
    \hat{\vV} (w)
    =
    \begin{bmatrix}
    0 & \vzero_d^\top & 0 \\
    \vzero_d  &  \vzero_d \vzero_d^\top & \vzero_d \\
    0  & \frac{1}{d+4}\vtheta_0^\top & w
    \end{bmatrix},
    \qquad
    \hat{\vQ} =
    \begin{bmatrix}
    0 & \vzero_d^\top & 0 \\
    \vzero_d & \vSigma^{-1} & \vzero_d \\
    0  & \vzero_d^\top & 0
    \end{bmatrix}.
    \end{equation}
    Then,
    \begin{align}
        \left\{
        \begin{bmatrix}
            0 & \vzero_d^\top & 0 \\
            \vzero_d & \vV_{11} & \vv_{12} \\
            0  & \frac{1}{d+4}\vtheta_0^\top & w
        \end{bmatrix}
        \middle|
        \vV_{11}\in\sR^{d\times d},
        \vv_{12}\in\sR^{d},
        w\neq 0
        \right\}
        =
        \argmin_{\vV}
        \loss_{\mathrm{ZS}}(\vV, \hat{\vQ}; \vtheta_0).
    \end{align}
    In particular, $\hat{\vV}(w)$ minimizes the zero-shot loss for fixed $\hat{\vQ}$, i.e.,
    \begin{align}
        \hat{\vV}(w) \in \argmin_{\vV}\loss_{\mathrm{ZS}}(\vV, \hat{\vQ}; \vtheta_0),
    \end{align}
    and the minimum value is $\sigma^2+\frac{2}{d+4}\vtheta_0^\top\vSigma\vtheta_0$.
    Under this choice of parameters, for any $\vtheta\in\sR^d$,
    \begin{align}
        & \err\big(\hat{\vV}(w),\hat{\vQ},n;\vtheta\big)
        =
        \frac{(d+2)(d+4)+n(n+5+3d)}{(d+4)^2(n+1)^2}\vtheta_0^\top\vSigma\vtheta_0
        +\frac{2n(n+3+2d) w -2n(n+3+d)-2d+4}{(d+4)(n+1)^2}\vtheta_0^\top\vSigma\vtheta
        \\
        &\qquad\qquad\qquad\qquad\quad
        +\frac{n(n+1+d)w^2-2n(n+1)w}{(n+1)^2}\vtheta^\top\vSigma\vtheta
        +\frac{nd\,w^2}{(n+1)^2} \sigma^2+ \vtheta^\top\vSigma\vtheta+ \sigma^2
        , \label{eq:appendix:fs:for:vft:full}
        \\ 
        &
        \lim_{n\to \infty} \err\big(\hat{\vV}(w),\hat{\vQ},n;\vtheta\big)
        =
        \left(\frac{1}{d+4}\vtheta_0 +(w-1)\vtheta\right)^\top\vSigma\left(\frac{1}{d+4}\vtheta_0 +(w-1)\vtheta\right)
        +\sigma^2 .
         \label{eq:appendix:fs:for:vft:limit}
    \end{align}
\end{theorem}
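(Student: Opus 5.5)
Fix $\vQ=\hat{\vQ}$, i.e.\ $q=0$, $\vQ_{11}=\vSigma^{-1}$, $\vq_{21}=\vzero_d$, $q_{22}=0$. The plan is to write the zero-shot loss as an explicit function of the entries of $\vV$, minimize it directly, and then read off the few-shot errors from Lemma~\ref{thm:fix-q}.

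\textbf{Dependence on $\vV$.} From the model definition, the zero-shot prediction $\hat{y}_{\mathrm{ZS}}$ in~\eqref{eq:appendix:prediction:zs} involves only $\vv_{21}$ and $\vQ$. The blocks $\vV_{11}$, $\vv_{12}$ only affect rows other than the last, and $v_{22}$ multiplies the masked response entry, which is $0$. Hence $\loss_{\mathrm{ZS}}(\vV,\hat{\vQ};\vtheta_0)$ is a function of $\vv_{21}$ alone.

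\textbf{Zero-shot loss and its minimizer.} I will specialize Lemma~\ref{thm:appendix:error} with $q=0$, $\vQ_{11}=\vSigma^{-1}$. This gives $\tr(\vQ_{11}\vSigma)=d$, $\tr(\vQ_{11}\vSigma\vQ_{11}\vSigma)=d$ and $\vSigma\vQ_{11}\vSigma=\vSigma$. The loss becomes
\[
(d^2+2d+4d+8)\,\vv_{21}^\top\vSigma\vv_{21}-2(d+2)\,\vv_{21}^\top\vSigma\vtheta_0+\vtheta_0^\top\vSigma\vtheta_0+\sigma^2 ,
\]
in agreement with Lemma~\ref{thm:fix-q}, since $d^2+6d+8=(d+2)(d+4)$. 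Because $\vSigma\succ 0$, this is a strictly convex quadratic in $\vv_{21}$. Its unique minimizer is
\[
\vv_{21}=\frac{d+2}{(d+2)(d+4)}\,\vtheta_0=\frac{1}{d+4}\,\vtheta_0 .
\]
Substituting back, the minimum value is
\[
\vtheta_0^\top\vSigma\vtheta_0\Bigl(1-\frac{d+2}{d+4}\Bigr)+\sigma^2=\sigma^2+\frac{2}{d+4}\,\vtheta_0^\top\vSigma\vtheta_0 .
\]
This proves the argmin characterization: the minimizing set is exactly $\vv_{21}=\tfrac{1}{d+4}\vtheta_0$ with all other blocks of $\vV$ free. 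The restriction $w\neq 0$ in the stated set comes only from the hypothesis $v_{22}\neq0$ in Lemma~\ref{thm:fix-q}. In particular, $\hat{\vV}(w)$ belongs to the minimizing set.

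\textbf{Few-shot errors.} For the evaluation task $\vtheta$, I apply Lemma~\ref{thm:fix-q} with $\vtheta_0$ replaced by $\vtheta$, $\vv_{21}=\tfrac{1}{d+4}\vtheta_0$ and $v_{22}=w$. This is legitimate because the lemma holds for arbitrary $\vv_{21}$. The blocks $\vV_{11}$, $\vv_{12}$ again do not affect the last row of the output, so $\hat{\vV}(w)$ falls under the lemma.

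Expanding gives~\eqref{eq:appendix:fs:for:vft:full} term by term:
\begin{itemize}
\item the $\vv_{21}^\top\vSigma\vv_{21}$ term yields the $\tfrac{1}{(d+4)^2}\vtheta_0^\top\vSigma\vtheta_0$ coefficient;
\item the cross term yields the $\vtheta_0^\top\vSigma\vtheta$ coefficient;
\item the remaining terms are rearranged by splitting $(n+1)^2/(n+1)^2$ into the standalone $\vtheta^\top\vSigma\vtheta+\sigma^2$.
\end{itemize}
The limit~\eqref{eq:appendix:fs:for:vft:limit} then follows directly from the last display of Lemma~\ref{thm:fix-q}.

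The main obstacle is bookkeeping. One must justify that $\vV_{11}$, $\vv_{12}$ are irrelevant, and that the rescaling in Lemma~\ref{thm:fix-q} (which needs $v_{22}\neq0$) is applied consistently; the algebraic matching of the coefficients in~\eqref{eq:appendix:fs:for:vft:full} is routine.
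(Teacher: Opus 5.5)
Your route is the paper's own. At $\hat{\vQ}$ the zero-shot loss is the strictly convex quadratic $(d+2)(d+4)\vv_{21}^\top\vSigma\vv_{21}-2(d+2)\vv_{21}^\top\vSigma\vtheta_0+\vtheta_0^\top\vSigma\vtheta_0+\sigma^2$, minimized uniquely at $\vv_{21}=\frac{1}{d+4}\vtheta_0$. The few-shot error and its limit are then read off Lemma~\ref{thm:fix-q}. The zero-shot part and \eqref{eq:appendix:fs:for:vft:limit} are fine.

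\textbf{The finite-$n$ formula.} The step you call routine bookkeeping does not close. Lemma~\ref{thm:fix-q} is obtained as $\frac{1}{(n+1)^2}(\text{zero-shot error})+\frac{n}{(n+1)^2}h$. So the coefficient of $\vv_{21}^\top\vSigma\vtheta$ inside its bracket is $-2(d+2)+2n\bigl((n+3+2d)v_{22}-(n+3+d)\bigr)$.
\begin{itemize}
\item The $n$-free constant is therefore $-d-2$, coming from the zero-shot cross term you computed yourself, not the $-d+2$ printed in the lemma.
\item As a check, at $n=0$ the printed formula does not reduce to the lemma's own zero-shot error.
\item Substituting $\vv_{21}=\frac{1}{d+4}\vtheta_0$ and $v_{22}=w$, the correct numerator of the $\vtheta_0^\top\vSigma\vtheta$ coefficient in \eqref{eq:appendix:fs:for:vft:full} is $2n(n+3+2d)w-2n(n+3+d)-2d-4$, not $\ldots-2d+4$.
\end{itemize}
An honest term-by-term expansion will therefore not match the displayed formula. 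Copying the lemma only reproduces its sign slip, so you should flag and correct it instead of asserting agreement. The discrepancy, $-\frac{8}{(d+4)(n+1)^2}\vtheta_0^\top\vSigma\vtheta$, is independent of $w$ and vanishes as $n\to\infty$. The limit and the $w$-optimizations built on this formula are unaffected.

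\textbf{The $w\neq0$ restriction.} Your zero-shot computation goes through Lemma~\ref{thm:appendix:error} and never uses $v_{22}\neq0$. It therefore shows that $v_{22}=0$ (with $\vv_{21}=\frac{1}{d+4}\vtheta_0$) is also a minimizer. The displayed set equality with $w\neq0$ holds only when $\vV$ is restricted to $v_{22}\neq0$. State that restriction explicitly rather than presenting the equality as proved over all $\vV$, or attributing it to a lemma you did not use for this step.
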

\begin{proof}
    From Lemma~\ref{thm:fix-q}, we have
    \begin{align}
        \E_{\vx, y\mid \vtheta_0}
        \left[
        \big(
            \hat{y}_{\mathrm{ZS}}(\vx; \vV, \hat{\vQ})
            - y
        \big)^2
        \right]
        =
        (d+2)(d+4)\vv_{21}^\top \vSigma \vv_{21}
        -2(d+2)\vv_{21}^\top \vSigma \vtheta_{0}
        +\vtheta_{0}^\top \vSigma \vtheta_{0}
        +\sigma^2 ,
    \end{align}
    Differentiating with respect to $\vv_{21}$ and setting the gradient to zero gives
    \begin{align}
    2(d+2)(d+4)\vSigma \vv_{21}
    - 2(d+2)\vSigma \vtheta_0
    = \0 ,
    \end{align}
    which is unique since $\vSigma$ is positive definite. Substituting back gives the minimum value $\frac{2}{d+4}\vtheta_0^\top\vSigma\vtheta_0+\sigma^2$. This proves the minimizer set and, in particular, that $\hat{\vV}(w)$ is a minimizer.

    Moreover, from Lemma~\ref{thm:fix-q}, for any $\vtheta\in\mathbb{R}^d$, we have
    \begin{align}
        &\E_{\vZ_{[n]},\vx, y| \vtheta} 
        \left[
        \big(
            \hat{y}_{\mathrm{FS}}(\vZ_{[n]}, \vx; \hat{\vV} (w), \hat{\vQ})
                - y
        \big)^2
        \right]
        \\&=
        \frac{1}{(n+1)^2}\Bigg(
        \frac{(d+2)(d+4)+n(n+5+3d)}{(d+4)^2}\vtheta_0^\top\vSigma\vtheta_0
        +\frac{2\Big(n(n+3+2d)\,w-\big(n(n+3+d)+d-2\big)\Big)}{d+4}\,\vtheta_0^\top\vSigma\vtheta
        \\
        &\qquad\qquad\qquad
        +\Big(n(n+1+d)w^2-2n(n+1)w+(n+1)^2\Big)\,\vtheta^\top\vSigma\vtheta
        +\Big(nd\,w^2+(n+1)^2\Big)\sigma^2
        \Bigg)
        \\&=
        \frac{(d+2)(d+4)+n(n+5+3d)}{(d+4)^2(n+1)^2}\vtheta_0^\top\vSigma\vtheta_0
        +\frac{2n(n+3+2d) w -2n(n+3+d)-2d+4}{(d+4)(n+1)^2}\vtheta_0^\top\vSigma\vtheta
        \\
        &\qquad
        +\frac{n(n+1+d)w^2-2n(n+1)w}{(n+1)^2}\vtheta^\top\vSigma\vtheta
        +\frac{nd\,w^2}{(n+1)^2} \sigma^2+ \vtheta^\top\vSigma\vtheta+ \sigma^2
        .
    \end{align}
    Therefore,
    \begin{align}
    \lim_{n\to\infty}
    \E_{\vZ_{[n]},\vx, y\mid \vtheta}
    \left[
    \big(
    \hat{y}_{\mathrm{FS}}(\vZ_{[n]}, \vx; \hat{\vV}(w), \hat{\vQ})
    - y
    \big)^2
    \right]
    &=
    \left(\frac{1}{d+4}\vtheta_0 +(w-1)\vtheta\right)^\top\vSigma\left(\frac{1}{d+4}\vtheta_0 +(w-1)\vtheta\right)
    +\sigma^2 .
    \end{align}
\end{proof}

\begin{theorem}[Task-Wise Optimal $w$]
    \label{thm:appendix:optimal:w}
    Given a target task $\vtheta_0\in\sR^d$, consider the family of parameters
    $\big(\hat{\vV}(w),\hat{\vQ}\big)$ in~\eqref{eq:appendix:parameter:value:only}.
    For any $n\in\sN$ and any $\vtheta\in\sR^d$, the few-shot test error on the task $\vtheta$ is minimized at
    \begin{align}
        w^\star(n;\vtheta)
        &:=
        \argmin_{w}
        \err\big(\hat{\vV}(w),\hat{\vQ},n;\vtheta\big)
        =
        \frac{(n+1)\vtheta^\top\vSigma\vtheta
        -\frac{n+3+2d}{d+4}\vtheta_0^\top\vSigma\vtheta}
        {(n+1+d)\vtheta^\top\vSigma\vtheta+d\sigma^2}.
    \end{align}
    Then $w^\star(n;\vtheta)\to \frac{d+3}{d+4}$ as $n\to\infty$. Hence, for any $\vtheta\in\sR^d$,
    \begin{equation}
    \lim_{n\to\infty}
    \err\big(\hat{\vV}(\tfrac{d+3}{d+4}),\hat{\vQ},n;\vtheta\big)
    =
    \sigma^2+
    \frac{1}{(d+4)^2}
    (\vtheta-\vtheta_0)^\top\vSigma(\vtheta-\vtheta_0)
    .
    \end{equation}
\end{theorem}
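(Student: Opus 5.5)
The plan is to start from the closed-form $n$-shot error for the family $\big(\hat{\vV}(w),\hat{\vQ}\big)$ already derived in \eqref{eq:appendix:fs:for:vft:full} of Theorem~\ref{thm:appendix:optimal:value-matrix}. For fixed $n$, $\vtheta$ and $\vtheta_0$, that expression is a quadratic polynomial in $w$:
\begin{align}
    \err(w) = \alpha w^2 + \beta w + \gamma,
\end{align}
with
\begin{align}
    \alpha &= \frac{n(n+1+d)\,\vtheta^\top\vSigma\vtheta + nd\,\sigma^2}{(n+1)^2},
    \\
    \beta &= \frac{2n(n+3+2d)}{(d+4)(n+1)^2}\,\vtheta_0^\top\vSigma\vtheta - \frac{2n(n+1)}{(n+1)^2}\,\vtheta^\top\vSigma\vtheta ,
\end{align}
and $\gamma$ collecting all $w$-independent terms.

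Since $\sigma^2>0$, $d\ge 1$ and $n\ge 1$, we have $\alpha>0$. Hence $\err(w)$ is strictly convex with unique minimizer $w^\star=-\beta/(2\alpha)$. Cancelling the common factor $n/(n+1)^2$ gives
\begin{align}
    w^\star(n;\vtheta)=\frac{(n+1)\vtheta^\top\vSigma\vtheta-\frac{n+3+2d}{d+4}\vtheta_0^\top\vSigma\vtheta}{(n+1+d)\vtheta^\top\vSigma\vtheta+d\sigma^2},
\end{align}
which is the claimed formula. This also fixes the argmin as a single value.

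For the limit, I will divide numerator and denominator by $n$. If $\vtheta\neq\vzero_d$, then $\vtheta^\top\vSigma\vtheta>0$, and we get
\begin{align}
    w^\star(n;\vtheta)\to \frac{\vtheta^\top\vSigma\vtheta-\frac{1}{d+4}\vtheta_0^\top\vSigma\vtheta}{\vtheta^\top\vSigma\vtheta}.
\end{align}
This equals $\frac{d+3}{d+4}$ exactly when $\vtheta_0^\top\vSigma\vtheta=\vtheta^\top\vSigma\vtheta$, for instance when $\vtheta=\vtheta_0$. So the stated convergence holds in that case, and I would flag (or restrict to) it rather than assert it for general $\vtheta$. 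The degenerate case $\vtheta=\vzero_d$ gives $w^\star=0$ and should be noted separately.

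The final display then follows directly from \eqref{eq:appendix:fs:for:vft:limit} by substituting $w=\frac{d+3}{d+4}$, so that $w-1=-\frac{1}{d+4}$. The vector inside the quadratic form becomes $\frac{1}{d+4}(\vtheta_0-\vtheta)$, which yields
\begin{align}
    \sigma^2+\frac{1}{(d+4)^2}(\vtheta-\vtheta_0)^\top\vSigma(\vtheta-\vtheta_0)
\end{align}
for every $\vtheta$. This last step does not depend on the limit of $w^\star$.

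The main obstacle is this middle step. The limit $\frac{d+3}{d+4}$ is task-independent only under $\vtheta_0^\top\vSigma\vtheta=\vtheta^\top\vSigma\vtheta$. I would handle it by stating the general limit $1-\frac{\vtheta_0^\top\vSigma\vtheta}{(d+4)\,\vtheta^\top\vSigma\vtheta}$ and specializing it to the target task $\vtheta_0$, which is the case used in the discussion and in Figure~\ref{fig:optimal_zs_finetune_v_only}.
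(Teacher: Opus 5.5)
Your proposal matches the paper's proof: isolate the $w$-dependent part of \eqref{eq:appendix:fs:for:vft:full} as a strictly convex quadratic, read off $w^\star(n;\vtheta)$ from the first-order condition, and get the final display by substituting $w=\frac{d+3}{d+4}$ into \eqref{eq:appendix:fs:for:vft:limit}. Your caveat is also correct, and the paper's proof glosses over it (it only says the limits ``follow by letting $n\to\infty$''): in fact $w^\star(n;\vtheta)\to 1-\frac{\vtheta_0^\top\vSigma\vtheta}{(d+4)\,\vtheta^\top\vSigma\vtheta}$, which equals $\frac{d+3}{d+4}$ only when $\vtheta_0^\top\vSigma\vtheta=\vtheta^\top\vSigma\vtheta$ (e.g.\ $\vtheta=\vtheta_0$, the case the main text uses; $\vtheta=-\vtheta_0$ gives $\frac{d+5}{d+4}$), so restricting that claim as you propose is the right fix.
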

\begin{corollary}[Task-Averaged Optimal $w$]
    \label{thm:appendix:optimal:w:avg}
    Given a target task $\vtheta_0\in\sR^d$, consider the family of parameters $\big(\hat{\vV}(w),\hat{\vQ}\big)$ in~\eqref{eq:parameter:value:only}.
    For any $n\in\sN$, the few-shot test error averaged over tasks
    $\vtheta\sim\mathcal{N}(\vzero_d,\vI_d)$ is minimized at
    \begin{align}
    w^\star(n)
    &:=
    \argmin_{w}
    \E_{\vtheta}\!\left[
    \err\big(\hat{\vV}(w),\hat{\vQ},n;\vtheta\big)
    \right]
    =
    \frac{(n+1)\tr(\vSigma)}
    {(n+1+d)\tr(\vSigma)+d\sigma^2}.
    \end{align}
    Then $w^\star(n)\to 1$ as $n\to\infty$. Hence, for any $\vtheta\in\sR^d$,
    \begin{equation}
    \lim_{n\to\infty}
    \err\big(\hat{\vV}(1),\hat{\vQ},n;\vtheta\big)
    =
    \sigma^2+
    \frac{1}{(d+4)^2}\,
    \vtheta_0^\top\vSigma\vtheta_0
    .
    \end{equation}
\end{corollary}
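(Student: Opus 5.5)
The statement to prove is Corollary~\ref{thm:appendix:optimal:w:avg}. The plan is to start from the closed-form $n$-shot error in \eqref{eq:appendix:fs:for:vft:full} of Theorem~\ref{thm:appendix:optimal:value-matrix}. Since $\vtheta_0$ is fixed and only the evaluation task $\vtheta$ is random, I average that expression over $\vtheta\sim\mathcal{N}(\vzero_d,\vI_d)$ term by term, using
\begin{align}
\E_{\vtheta}[\vtheta]=\vzero_d, \qquad \E_{\vtheta}[\vtheta^\top\vSigma\vtheta]=\tr(\vSigma).
\end{align}
The cross term $\vtheta_0^\top\vSigma\vtheta$ is linear in $\vtheta$, so its coefficient (the only place the $\vtheta_0$-dependent linear-in-$w$ piece appears) vanishes in expectation. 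The $\vtheta_0^\top\vSigma\vtheta_0$ term does not involve $w$.

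After averaging, the $w$-dependent part is
\begin{align}
\frac{1}{(n+1)^2}\Big( \big(n(n+1+d)\tr(\vSigma)+nd\sigma^2\big)w^2 - 2n(n+1)\tr(\vSigma)\,w \Big).
\end{align}
The leading coefficient is strictly positive for $n\ge 1$, since $\tr(\vSigma)>0$. The function is therefore a strictly convex quadratic in $w$ with unique minimizer
\begin{align}
w^\star(n)=\frac{(n+1)\tr(\vSigma)}{(n+1+d)\tr(\vSigma)+d\sigma^2}.
\end{align}
Dividing numerator and denominator by $n$ and letting $n\to\infty$ gives $w^\star(n)\to 1$.

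For the final claim, I substitute $w=1$ into the limit formula \eqref{eq:appendix:fs:for:vft:limit} of Theorem~\ref{thm:appendix:optimal:value-matrix}. The $(w-1)\vtheta$ contribution disappears, leaving $\sigma^2+\frac{1}{(d+4)^2}\vtheta_0^\top\vSigma\vtheta_0$ for every $\vtheta$.

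The computations are routine. The one point needing care is the interpretation of the last display: it concerns the fixed choice $w=1$, the limit of $w^\star(n)$, rather than $\lim_n \err(\hat\vV(w^\star(n)),\hat\vQ,n;\vtheta)$. If the latter reading were intended, I would note that the finite-$n$ error is jointly continuous in $w$, with coefficients whose $n\to\infty$ limits are uniform on bounded $w$-sets. Hence replacing $w^\star(n)$ by its limit $1$ does not change the limiting value.
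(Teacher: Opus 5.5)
Your proposal is correct and follows the paper's proof: both isolate the $w$-dependent part of \eqref{eq:appendix:fs:for:vft:full}, average over $\vtheta$ so that the $\vtheta_0^\top\vSigma\vtheta$ term drops out and $\vtheta^\top\vSigma\vtheta$ becomes $\tr(\vSigma)$, minimize the resulting strictly convex quadratic, and read off the limit from \eqref{eq:appendix:fs:for:vft:limit} at $w=1$. Your closing remark is a valid refinement that the paper does not spell out: the error is a quadratic in $w$ with coefficients convergent in $n$, so $\lim_{n\to\infty}\err(\hat{\vV}(w^\star(n)),\hat{\vQ},n;\vtheta)$ equals the same value.
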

\begin{proof}[Proof of Theorem~\ref{thm:appendix:optimal:w} and Corollary~\ref{thm:appendix:optimal:w:avg}]
Fix $\theta \in \sR^d$.  Collecting the terms in \eqref{eq:appendix:fs:for:vft:full} that depend on $w$, we obtain
    \begin{align}
        &\E_{\vZ_{[n]},\vx, y| \vtheta} 
        \left[
        \big(
            \hat{y}_{\mathrm{FS}}(\vZ_{[n]}, \vx; \hat{\vV} (w), \hat{\vQ})
                - y
        \big)^2
        \right]
        \\&=
        \frac{2n(n+3+2d) w}{(d+4)(n+1)^2}\vtheta_0^\top\vSigma\vtheta
        +\frac{n(n+1+d)w^2-2n(n+1)w}{(n+1)^2}\vtheta^\top\vSigma\vtheta
        +\frac{nd\,w^2}{(n+1)^2} \sigma^2
        +\mathrm{const}(\theta)
        \\
        &=
        \frac{n}{(n+1)^2}\Big(
        \big((n+1+d)\theta^\top\Sigma\theta+d\sigma^2\big)w^2
        +2\big(\frac{n+3+2d}{d+4}\theta_0^\top\Sigma\theta-(n+1)\theta^\top\Sigma\theta\big)w
        \Big)
        +\mathrm{const}(\theta)
        .
    \end{align}
    Since $n\in \sN$ and $\sigma^2\geq0$, we have $(n+1+d)\theta^\top\Sigma\theta+d\sigma^2>0$, and therefore the above is strictly convex in $w$ and has a unique minimizer given by the first-order condition, which is
    \begin{align}
        \big((n+1+d)\theta^\top\Sigma\theta+d\sigma^2\big)w
    +\frac{n+3+2d}{d+4}\theta_0^\top\Sigma\theta
    -(n+1)\theta^\top\Sigma\theta=0
    \end{align}
Therefore, 
\begin{align}
    w^\star(n;\theta)
    :=
    \argmin_{w}\,
    \loss_{\mathrm{FS}}\big(\hat{\vV}(w),\hat{\vQ},n;\theta\big)
    =
    \frac{(n+1)\,\theta^\top\Sigma\theta-\frac{n+3+2d}{d+4}\,\theta_0^\top\Sigma\theta}
    {(n+1+d)\,\theta^\top\Sigma\theta+d\sigma^2}.
\end{align}

Next, taking expectation over $\vtheta$, we obtain
    \begin{align}
        &\E_{\vtheta}\left[\E_{\vZ_{[n]},\vx, y| \vtheta} 
        \left[
        \big(
        \hat{y}_{\mathrm{FS}}(\vZ_{[n]}, \vx; \hat{\vV} (w), \hat{\vQ})
            - y
        \big)^2
        \right]
        \right]
        \\&
        =
        \frac{n}{(n+1)^2}\Big(
        \big((n+1+d)\tr(\vSigma) +d\sigma^2\big)w^2
        -2(n+1)\tr(\vSigma)w
        \Big)
        +\mathrm{const}(\theta)
        ,
    \end{align}
and hence
    \begin{align}
        w^\star(n)
        :=
        \argmin_{w}\,
        \E_{\vtheta}\left[
        \loss_{\mathrm{FS}}\big(\hat{\vV}(w),\hat{\vQ},n;\theta\big)
        \right]
        =
        \frac{(n+1)\tr(\vSigma)}
        {(n+1+d)\tr(\vSigma)+d\sigma^2}.
    \end{align}
    The stated limits follow from \eqref{eq:appendix:fs:for:vft:limit} in Theorem~\ref{thm:appendix:optimal:value-matrix} by letting $n\to\infty$.
\end{proof}

\begin{proposition}
    Assume $\vSigma = \vI_d$, and let $\hat{\vV}$ and $\hat{\vQ}$ be the parameters in \eqref{eq:parameter_pretrain} which minimize the loss $\loss(\vV,\vQ)$ in \eqref{eq:pretrain:loss}.
    Given a target task $\vtheta_0\in\mathbb{R}^d$, consider $\hat{\vV} (w)$ in \eqref{eq:parameter:value:only}.
    Then
    \begin{align}
        &\hat{\vV}\left(\tfrac{m}{m+1+d}\right)
        =
        \argmin
        \Big\{
        \|\vV - \hat{\vV}\|_{\mathrm{F}}^2
        \Big|
        \vV \in
        \argmin_{\vV}\loss_{\mathrm{ZS}}(\vV, \hat{\vQ}; \vtheta_0)
        \Big\}
        ,
    \end{align}
    where $\norm{\cdot}_{\mathrm{F}}$ denotes the Frobenius norm.
    
    Moreover, if $m=n$, then for sufficiently large $n+d$, the choice $w=\frac{m}{m+1+d}$ closely approximates the task-averaged optimum $w^\star(n)$, defined in Corollary~\ref{thm:appendix:optimal:w}.
\end{proposition}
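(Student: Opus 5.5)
We reduce the projection to a one-dimensional problem in $w$ by using the explicit description of the minimizer set from Theorem~\ref{thm:appendix:optimal:value-matrix}. That theorem states that $\argmin_{\vV}\loss_{\mathrm{ZS}}(\vV,\hat{\vQ};\vtheta_0)$ is exactly the set of matrices with $\vv_{21}=\frac{1}{d+4}\vtheta_0$, arbitrary $\vV_{11}\in\sR^{d\times d}$, arbitrary $\vv_{12}\in\sR^d$, and last diagonal entry $w\neq 0$. The first row and column vanish by the model's structure. By Corollary~\ref{thm:optimal:pretrain}, the pretrained value matrix $\hat{\vV}$ has every block zero except the bottom-right entry $\frac{m}{m+1+d}$.

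For any $\vV$ in the minimizer set we then have
\begin{equation}
\|\vV-\hat{\vV}\|_{\mathrm F}^2=\|\vV_{11}\|_{\mathrm F}^2+\|\vv_{12}\|_2^2+\tfrac{1}{(d+4)^2}\|\vtheta_0\|_2^2+\big(w-\tfrac{m}{m+1+d}\big)^2 .
\end{equation}
The third term is fixed. The other three are independently nonnegative and minimized uniquely at $\vV_{11}=\vzero$, $\vv_{12}=\vzero$, and $w=\frac{m}{m+1+d}$. This value is nonzero, so it is admissible under the constraint $w\neq0$. The unique minimizer is therefore $\hat{\vV}(\frac{m}{m+1+d})$, which proves the first claim. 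The hypothesis $\vSigma=\vI_d$ matters only in fixing $\hat{\vQ}$. The proposition uses the pretrained $\hat{\vQ}$, whose $\vQ_{11}$ block is $\frac{m+1+d}{m+1+d}\vI=\vI_d=\vSigma^{-1}$ exactly when $\vSigma=\vI_d$ (since $\tr\vSigma=d$). So the frozen query-key matrix coincides with the one in Theorem~\ref{thm:appendix:optimal:value-matrix} even at finite $m$, and that theorem applies without taking $m\to\infty$.

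For the approximation claim, set $\vSigma=\vI_d$ in Corollary~\ref{thm:appendix:optimal:w:avg}, which gives $w^\star(n)=\frac{(n+1)d}{(n+1+d)d+d\sigma^2}=\frac{n+1}{n+1+d+\sigma^2}$. With $m=n$, compare this to $\frac{n}{n+1+d}$. Putting both over a common denominator, the difference equals
\begin{equation}
\frac{(n+1)(n+1+d)-n(n+1+d+\sigma^2)}{(n+1+d)(n+1+d+\sigma^2)}=\frac{n+1+d-n\sigma^2}{(n+1+d)(n+1+d+\sigma^2)},
\end{equation}
whose absolute value is at most $\frac{1+\sigma^2}{n+1+d}$. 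This tends to $0$ as $n+d\to\infty$, which is the sense of ``closely approximates.'' One can also note that both quantities lie in $(0,1)$.

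The main obstacle is interpretive rather than computational: the phrase ``closely approximates'' has to be turned into this explicit $O\big(1/(n+d)\big)$ bound, and one must confirm that the minimizer set from Theorem~\ref{thm:appendix:optimal:value-matrix} applies with the finite-$m$ pretrained $\hat{\vQ}$. The identity $\vQ_{11}=\vI_d$ under $\vSigma=\vI_d$ resolves the second point.
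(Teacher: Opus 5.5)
Your proposal is correct and follows the paper's proof essentially step for step. You restrict to the minimizer set given by Theorem~\ref{thm:appendix:optimal:value-matrix}, decompose $\|\vV-\hat{\vV}\|_{\mathrm{F}}^2$ blockwise to read off $\vV_{11}=\vzero$, $\vv_{12}=\vzero$, $w=\frac{m}{m+1+d}$, and then compare with $w^\star(n)=\frac{n+1}{n+1+d+\sigma^2}$ through the same common-denominator computation. Your write-up is slightly cleaner than the paper's in three respects: you state the decomposition with squared norms and the factor $\frac{1}{(d+4)^2}$ (the paper's display has typos there), you quantify ``closely approximates'' by the explicit bound $\frac{1+\sigma^2}{n+1+d}$, and you make explicit that the pretrained $\hat{\vQ}_{11}$ equals $\vSigma^{-1}$ already at finite $m$. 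One small correction: that last identity holds for any isotropic $\vSigma=\lambda\vI_d$, not only $\vSigma=\vI_d$, so your ``exactly when'' slightly overstates it.
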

\begin{proof}
    Since $\vSigma=\vI_d$, the parameters in~\eqref{eq:proof:pretrain:parameter} reduce to
     \begin{equation}
        \hat{\vV}
        =
        \begin{bmatrix}
        0 & \vzero_d^\top & 0 \\
        \vzero_d & \vzero_d \vzero_d^\top & \vzero_d \\
        0 & \vzero_d^\top &  \frac{m}{m+1+d} 
        \end{bmatrix},
        \qquad
        \hat{\vQ} =
        \begin{bmatrix}
        0 & \vzero_d^\top & 0 \\
        \vzero_d &\vI_d & \vzero_d \\
        0 & \vzero_d^\top & 0
        \end{bmatrix}
        .
    \end{equation}
    By Theorem~\ref{thm:appendix:optimal:value-matrix}, we have
    \begin{align}
        \sS :=
        \left\{
        \begin{bmatrix}
            0 & \vzero_d^\top & 0 \\
            \vzero_d & \vV_{11} & \vv_{12} \\
            0  & \frac{1}{d+4}\vtheta_0^\top & w
        \end{bmatrix}
        \middle|
        \vV_{11}\in\sR^{d\times d},
        \vv_{12}\in\sR^{d},
        w\neq 0
        \right\}
        =
        \argmin_{\vV}
        \E_{\vx, y\mid \vtheta_0}
        \left[
        \big(
            \hat{y}_{\mathrm{ZS}}(\vx; \vV, \hat{\vQ})
            - y
        \big)^2
        \right].
    \end{align}
    For any $\vV\in\mathcal{S}$,
    \begin{align}
        \big\|
        \vV
        - \hat{\vV}
        \big\|_{\mathrm{F}}^2
        =
        \norm{\vV_{11} }_{\mathrm{F}}
        + \norm{\vv_{12} }_{\mathrm{F}}
        +
        \frac{1}{d+4}\norm{\vtheta_0 }_{\mathrm{F}}
        +\left(w-\frac{m}{m+1+d}\right)^2.
    \end{align}
    Therefore, the expression is minimized by $\vV_{11} =\vzero_d \vzero_d^\top$, $\vv_{12} =\vzero_d$, and $w=\frac{m}{m+1+d}$, yielding the minimizer $\hat{\vV}\big(\frac{m}{m+1+d}\big)$.

    Note that under $\vSigma=\vI_d$ we have $w^\star(n)=\frac{n+1}{n+1+d+\sigma^2}$. Thus, if $m=n$, then
    \begin{align}
        w^\star(n)-\frac{m}{m+1+d}
        &=
        \frac{n+1}{n+1+d+\sigma^2}-\frac{n}{n+1+d}
         =
        \frac{n-n\sigma^2+1+d}{(n+1+d)(n+1+d+\sigma^2)}.
    \end{align}
    This difference becomes small when $n+d$ is sufficiently large.
\end{proof}

\clearpage

\clearpage
\subsection{Proofs for Test Errors}
\label{appendix:proof:error}

\begin{theorem}
    \label{thm:appendix:error:diff}
    Suppose $\vV$ and $\vQ$ in \eqref{eq:linear:attention:model} satisfy $v_{22}=1$, $\vq_{21}=\vzero_d$, and $\vQ_{11}$ is positive definite. For any $\vtheta\in\sR^d$, the following inequality holds:
    \begin{align}
        \label{eq:proof:small:icl:error:condition}
        \E_{\vx, y| \vtheta_0} 
        \left[
        \big(
        \hat{y}_{\mathrm{ZS}}(\vx; \vV, \vQ) - y 
        \big)^2
        \right]
            \leq
            \E_{\vZ_{[n]},\vx, y| \vtheta_0} 
        \left[
        \big(
        \hat{y}_{\mathrm{FS}}(\vZ_{[n]}, \vx; \vV, \vQ) - y  
        \big)^2
        \right]
        ,
    \end{align}
    if and only if
    \begin{align}
        \label{eq:proof:small:icl:error:condition:abc}
        \left(\vv_{21} - \vA^{-1}\vB \vtheta_0 \right)^\top
        \vA
        \left(\vv_{21} - \vA^{-1}\vB \vtheta_0 \right)
        \leq
        c,
    \end{align}
    where
    \begin{align}
        \label{eq:appendix:abc}
        \vA &:=
        (n+2) \cdot \tr(\vQ_{11}\vSigma)^2 \cdot \vSigma
        +(2n+3) \cdot \tr(\vQ_{11}\vSigma\vQ_{11}\vSigma) \cdot \vSigma
        +(4n+6) \cdot \tr(\vQ_{11}\vSigma) \cdot \vSigma\vQ_{11}\vSigma
        \\
        &\quad\quad
        +(7n+11) \cdot \vSigma\vQ_{11}\vSigma\vQ_{11}\vSigma
        + q \cdot \left(
             (n+1) q \cdot \vSigma
             +2(n+2) \cdot \tr(\vQ_{11}\vSigma) \cdot \vSigma
             + (4n+6)
        \cdot \vSigma \vQ_{11} \vSigma
        \right)
        ,
        \\
        \vB &:=
        \tr(\vQ_{11}\vSigma\vQ_{11}\vSigma) \cdot \vSigma
        +(n+3) \cdot \vSigma\vQ_{11}\vSigma\vQ_{11}\vSigma
        +\tr(\vQ_{11}\vSigma) \cdot \vSigma\vQ_{11}\vSigma
        +(n+1) \cdot \vSigma\vQ_{11}\vSigma
        \\
        &\quad\quad
        + (n+1) \cdot \tr(\vQ_{11}\vSigma) \cdot \vSigma
        +
        q \left( 
            \vSigma \vQ_{11} \vSigma 
            + ( q -1 ) \cdot \vSigma 
        \right)
        ,
        \\
        \vC &:=
        (n+1) \cdot \vSigma\vQ_{11}\vSigma\vQ_{11}\vSigma
        -2(n+1) \cdot \vSigma\vQ_{11}\vSigma
        + \tr(\vQ_{11}\vSigma\vQ_{11}\vSigma) \cdot \vSigma
        + q^2 \cdot \vSigma
        \\
        c &:= \vtheta_0^\top \left(\vB \vA^{-1}\vB + \vC \right) \vtheta_0
        + \left( \tr(\vQ_{11}\vSigma\vQ_{11}\vSigma) + q^2 \right) \cdot \sigma^2
        .
    \end{align}
\end{theorem}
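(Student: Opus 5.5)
The plan is to use the decomposition \eqref{eq:proof:error:total:in:statement} of Lemma~\ref{thm:appendix:error}. Its hypotheses are $v_{22}=1$, $\vq_{21}=\vzero_d$ and $\vQ_{11}$ symmetric; symmetry follows from positive definiteness, which we read as including symmetry. Write $E_0$ for the zero-shot error and $E_n$ for the $n$-shot error. The lemma gives
\begin{equation}
E_n=\tfrac{1}{(n+1)^2}E_0+\tfrac{n}{(n+1)^2}h .
\end{equation}
Then $E_0\le E_n$ is equivalent to
\begin{equation}
(n+1)^2E_0\le E_0+n h,
\end{equation}
that is, $n(n+2)E_0\le n h$. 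Dividing by $n>0$, the condition is $(n+2)E_0-h\le 0$. This reduces the claim to a sign condition on one explicit function of $\vv_{21}$.

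Both $E_0$ and $h$ are given explicitly in Lemma~\ref{thm:appendix:error} as quadratic polynomials in $\vv_{21}$, with coefficients depending on $\vQ_{11},\vSigma,q,n,\vtheta_0,\sigma^2$. So $(n+2)E_0-h$ can be written as
\begin{equation}
\vv_{21}^\top \vA\vv_{21}-2\vv_{21}^\top\vB\vtheta_0-\vtheta_0^\top\vC\vtheta_0-(\tr(\vQ_{11}\vSigma\vQ_{11}\vSigma)+q^2)\sigma^2 .
\end{equation}
Here $\vA$ collects the quadratic coefficients, $\vB$ the (negated) linear coefficients, and $\vC$ together with the $\sigma^2$ term the constant parts. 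The bookkeeping goes term by term:
\begin{itemize}
\item Quadratic part of $(n+2)E_0$: the coefficients $(n+2)\tr(\vQ_{11}\vSigma)^2$, $2(n+2)\tr(\cdot)$, $4(n+2)\tr(\vQ_{11}\vSigma)\vSigma\vQ_{11}\vSigma$ and $8(n+2)\vSigma\vQ_{11}\vSigma\vQ_{11}\vSigma$.
\item Subtract $h$'s quadratic part, $\tr(\cdot)\vSigma+(n+5)\vSigma\vQ_{11}\vSigma\vQ_{11}\vSigma+2\tr\vSigma\vQ_{11}\vSigma$.
\item This yields $(2n+3)$, $(4n+6)$ and $(7n+11)$, matching $\vA$.
\end{itemize}
The $q$-terms go the same way. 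The $q^2$ coefficient is $(n+2)-1=n+1$. The mixed $q$ coefficients use the $2q$-term of $h$, read as $2q(\vv_{21}^\top\vSigma\vQ_{11}+\vtheta_0^\top\vSigma\vQ_{11}-\vtheta_0^\top)\vSigma\vv_{21}$. Its quadratic part $2q\vv_{21}^\top\vSigma\vQ_{11}\vSigma\vv_{21}$ reduces the $4(n+2)q$ coefficient to $(4n+6)q$. Its linear part contributes to $\vB$, together with the $q^2$ cross term $2q^2\vv_{21}^\top\vSigma\vtheta_0$. The linear terms of $(n+2)E_0$ and $h$ combine into $-2\vv_{21}^\top\vB\vtheta_0$. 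The constants combine into $-\vtheta_0^\top\vC\vtheta_0-(\tr+q^2)\sigma^2$; the $\vtheta_0^\top\vSigma\vtheta_0$ coefficient is $(n+2)-(\tr+n+2)-q^2$ plus the $(n+1)$ terms of $h$.

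Next, $\vA$ is positive definite. Each summand is congruent under $\vSigma^{1/2}$ to a positive semidefinite matrix, because $\vQ_{11}\succ0$ and $q\ge0$, and the $\tr(\cdot)\vSigma$ terms are strictly positive definite. (If $q$ may be negative, one instead notes that the $q$-part is $(n+1)$ times a square plus dominated terms; I would first try to assume $q\ge0$ as in Corollary~\ref{thm:optimal:pretrain}.) With $\vA$ invertible, complete the square:
\begin{equation}
\vv^\top\vA\vv-2\vv^\top\vB\vtheta_0=(\vv-\vA^{-1}\vB\vtheta_0)^\top\vA(\vv-\vA^{-1}\vB\vtheta_0)-\vtheta_0^\top\vB\vA^{-1}\vB\vtheta_0,
\end{equation}
using $\vB$ symmetric. $\vB$ is symmetric because each term is a product $\vSigma\vQ_{11}\vSigma\cdots$ of the form $\vSigma M\vSigma$ with palindromic structure. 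The condition $(n+2)E_0-h\le0$ then becomes exactly \eqref{eq:proof:small:icl:error:condition:abc} with the stated $c$.

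The main obstacle is the coefficient bookkeeping, especially the $q$-dependent linear terms that make up $\vB$. The sign conventions there are delicate: the $-\vtheta_0^\top$ inside $h$'s $2q$-term is paired with $\vSigma\vv_{21}$, so the $q(q-1)\vSigma$ term in $\vB$ must be checked against the $-2(n+2)q\vSigma\vtheta_0$ term from $E_0$. I would verify this against the scalar case $d=1$, $\vSigma=1$, to make sure every term is accounted for exactly once.
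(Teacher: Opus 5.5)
Your reduction is the same as the paper's. The decomposition gives $E_n-E_0=\frac{n}{(n+1)^2}\bigl(h-(n+2)E_0\bigr)$, and then one completes the square in $\vv_{21}$. The step that fails is your claim that the $\vv_{21}$-linear terms assemble into the stated $\vB$; for $q\neq0$ they do not. The zero-shot error contains $-2q\,\vv_{21}^\top\vSigma\vtheta_0$. So $(n+2)E_0-h$ receives $-2(n+2)q\,\vv_{21}^\top\vSigma\vtheta_0$, in addition to the $q$-linear terms coming from $-h$, namely $-2q^2\vv_{21}^\top\vSigma\vtheta_0-2q\,\vv_{21}^\top\vSigma\vQ_{11}\vSigma\vtheta_0+2q\,\vv_{21}^\top\vSigma\vtheta_0$. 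The $q$-dependent part of $\vB$ is therefore $q\bigl(\vSigma\vQ_{11}\vSigma+(q+n+1)\vSigma\bigr)$, not $q\bigl(\vSigma\vQ_{11}\vSigma+(q-1)\vSigma\bigr)$. Your bookkeeping for $\vA$, $\vC$ and the $\sigma^2$ term does come out as stated.

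The scalar check you planned exposes the error. Take $d=1$, $\vSigma=1$, $n=1$, $q=1$, $\vtheta_0=\vv_{21}=1$, $\sigma^2=0$ and $\vQ_{11}=\epsilon\downarrow0$. Then $E_0\to0$ and $E_1\to5/4$, so the inequality holds. But the stated $\vA\to2$, $\vB\to0$, $c\to1$ make the left side of the criterion tend to $2>1$, so the criterion fails. As printed, the equivalence is false for $q\neq0$. What your argument actually proves is the $q=0$ case, which is the only one used downstream (e.g.\ for the pretrained model), or general $q$ with the corrected $\vB$. The paper's own proof makes the same slip: when it expands $-(n+2)E_0$ it keeps only the three quadratic $q$-terms and drops $-2q\,\vv_{21}^\top\vSigma\vtheta_0$.

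Second, positive definiteness of $\vA$ is not what the argument needs, and your fallback for $q<0$ (``$(n+1)$ times a square plus dominated terms'') is false. For $d=2$, $n=1$, $\vSigma=\vI_2$, $\vQ_{11}=\gamma\vI_2$ with $\gamma>0$, one gets $\vA=(2q^2+22\gamma q+60\gamma^2)\vI_2$. This vanishes at $q=-5\gamma$ and is negative definite for $-6\gamma<q<-5\gamma$. Completing the square is an algebraic identity for any symmetric invertible $\vA$. So the equivalence holds whenever $\vA$ is invertible, which is automatic for $q\ge0$ where your congruence argument is fine, and it is undefined when $\vA$ is singular. The paper's claim that $\vQ_{11}\succ0$ alone forces $\vA\succ0$ has the same hole. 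Your remark that the lemma's simplified formula requires reading $\vQ_{11}\succ0$ as including symmetry is correct and needed.
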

\begin{proof}
    From Lemma~\ref{thm:appendix:error}, we have
    \begin{align}
        &  \E_{\vZ_{[n]},\vx, y| \vtheta_0}
        \left[
        \big(
        \hat{y}_{\mathrm{FS}}(\vZ_{[n]}, \vx; \vV, \vQ) - y 
        \big)^2
        \right]
            - \E_{\vx, y| \vtheta_0} 
        \left[
        \big(
        \hat{y}_{\mathrm{ZS}}(\vx; \vV, \vQ) - y 
        \big)^2
        \right]
        \\ & =
        \frac{n}{(n+1)^2} \cdot
        \left(
            h(\vV, \vQ; \vtheta_0) - 
            (n+2) \cdot \E_{\vx, y| \vtheta_0} 
            \left[
            \big(
            \hat{y}_{\mathrm{ZS}}(\vx; \vV, \vQ) - y 
            \big)^2
            \right]
            \right)
        \label{eq:appendix:error:diff:check:condition}
        ,
    \end{align}
    which implies that it suffices to check the sign of the expression inside the parentheses in \eqref{eq:appendix:error:diff:check:condition}. Then,
    \begin{align}
        & h(\vV, \vQ; \vtheta_0) - (n+2) \cdot \E_{\vx, y| \vtheta_0} 
        \left[
        \big(\hat{y}_{\mathrm{ZS}}(\vx; \vV, \vQ) - y
        \big)^2
        \right]
        \\ & 
        =
        \vv_{21}^\top\Big( \tr(\vQ_{11}\vSigma\vQ_{11}\vSigma)\cdot\vSigma+(n+5)\cdot\vSigma\vQ_{11}\vSigma\vQ_{11}\vSigma+2\tr(\vQ_{11}\vSigma)\cdot\vSigma\vQ_{11}\vSigma \Big)\vv_{21} 
        \\
        &\quad +2\vv_{21}^\top\Big(\tr(\vQ_{11}\vSigma\vQ_{11}\vSigma)\cdot\vSigma+(n+3)\cdot\vSigma\vQ_{11}\vSigma\vQ_{11}\vSigma+\tr(\vQ_{11}\vSigma)\cdot\vSigma\vQ_{11}\vSigma\Big)\vtheta_0
        \\
        &\quad -2\vv_{21}^\top\Big((n+3)\cdot\vSigma\vQ_{11}\vSigma+\tr(\vQ_{11}\vSigma)\cdot\vSigma\Big)\vtheta_0
        \\
        &\quad
        +(n+1)\cdot\vtheta_0^\top\vSigma\vQ_{11}\vSigma\vQ_{11}\vSigma\vtheta_0 - 2(n+1)\cdot\vtheta_0^\top\vSigma\vQ_{11}\vSigma\vtheta_0
        +(\tr(\vQ_{11}\vSigma\vQ_{11}\vSigma)+n+2)(\vtheta_0^\top\vSigma\vtheta_0+\sigma^2)
        \\
        &\quad
        +
        q^2 \cdot
        \left( (\vv_{21}^\top + \vtheta_0^\top) \vSigma (\vv_{21}+\vtheta_0) + \sigma^2 \right)
        +
        2q \cdot
        \left(
            \vv_{21}^\top \vSigma\vQ_{11}
            + \vtheta_0^\top\vSigma \vQ_{11}
            - \vtheta_0
        \right)
        \vSigma \vv_{21}
        \\
        &\quad 
         - 
        \vv_{21}^\top 
        \Big(
            (n+2) \cdot \tr(\vQ_{11}\vSigma)^2 \cdot \vSigma+ 2(n+2) \cdot \tr(\vQ_{11}\vSigma \vQ_{11}\vSigma) \cdot \vSigma
        \Big)
        \vv_{21}
        \\
        &\quad 
         -
        \vv_{21}^\top 
        \Big(
            4(n+2) \cdot \tr(\vQ_{11}\vSigma)\!\cdot\! \vSigma \vQ_{11}\vSigma
            + 8(n+2) \cdot \vSigma \vQ_{11}\vSigma \vQ_{11}\vSigma
        \Big)
        \vv_{21}
        \\
        &\quad
         + 2\vv_{21}^\top\Big(
            2(n+2) \cdot \vSigma \vQ_{11}\vSigma  + (n+2) \cdot \tr(\vQ_{11}\vSigma) \cdot \vSigma
        \Big) \vtheta_{0}
         - (n+2) \cdot (\vtheta_{0}^\top \vSigma \vtheta_{0} + \sigma^2)
        \\& \quad
        - (n+2)
        \left(
            q^2\cdot \vv_{21}^\top \vSigma \vv_{21}
            +
            2q\cdot \tr(\vQ_{11}\vSigma) \cdot \vv_{21}^\top\vSigma\vv_{21} 
            +
            4q\cdot \vv_{21}^\top\vSigma \vQ_{11} \vSigma \vv_{21} 
        \right)
        \\ & 
        =
        - \vv_{21}^\top \vA \vv_{21}
        + 2 \vv_{21}^\top \vB \vtheta_0
        + \vtheta_0^\top \vC \vtheta_0
        + \left( \tr(\vQ_{11}\vSigma\vQ_{11}\vSigma) + q^2 \right) \cdot \sigma^2
        ,
    \end{align}
    where $\vA$, $\vB$, and $\vC$ are defined as in Eq.~\ref{eq:appendix:abc}.

    Since $\vQ_{11}$ is positive definite, it follows that $\vA$ is also positive definite and hence invertible.  
    Thus, we can rewrite the above expression as
    \begin{align}
        & - \vv_{21}^\top \vA \vv_{21}
        + 2 \vv_{21}^\top \vB \vtheta_0
        + \vtheta_0^\top \vC \vtheta_0 
        + \left( \tr(\vQ_{11}\vSigma\vQ_{11}\vSigma) + q^2 \right) \cdot \sigma^2
        =
        - \left(\vv_{21} - \vA^{-1}\vB \vtheta_0 \right)^\top
        \vA
        \left(\vv_{21} - \vA^{-1}\vB \vtheta_0 \right)
        + c,
    \end{align}
    where $c := \vtheta_0^\top \left(\vB \vA^{-1}\vB + \vC \right) \vtheta_0
        + \left( \tr(\vQ_{11}\vSigma\vQ_{11}\vSigma) + q^2 \right) \cdot \sigma^2$.
    Therefore, the condition in \eqref{eq:proof:small:icl:error:condition} is equivalent to
    \begin{align}
        \left(\vv_{21} - \vA^{-1}\vB \vtheta_0 \right)^\top
        \vA
        \left(\vv_{21} - \vA^{-1}\vB \vtheta_0 \right)
        \leq
        c.
    \end{align}
\end{proof}

\begin{corollary}[Test Error of Pretrained Models]
    Let $\hat{\vV}$ and $\hat{\vQ}$ be the parameters in \eqref{eq:parameter_pretrain} which minimize the loss $\loss(\vV,\vQ)$ in \eqref{eq:pretrain:loss}. Then, for any $\vtheta\in\sR^d$, the following holds:
    \begin{align}
        \label{eq:proof:small:icl:error:condition:in:icl}
        \sigma^2+\vtheta^\top\vSigma\vtheta=
        \err(\hat{\vV},\hat{\vQ};0,\vtheta)
        \geq
        \err(\hat{\vV},\hat{\vQ};n,\vtheta),
    \end{align}
    if $n\in \sN$ is sufficiently large such that
    \begin{align}
        n
        &\geq
        \sum_{i\in[d]} a_i^2 \cdot \frac{\lambda_1 \norm{ \vtheta}_2^2 + \sigma^2}{a_d(2-a_d) \cdot \lambda_1 \norm{ \vtheta}_2^2} - 1
        ,
        &\qquad
        \text{ where } a_i := \frac{m\lambda_i}{(m+1)\lambda_i+\tr(\vSigma)}
        .
    \end{align}
    Moreover, the $n$-shot error is monotone decreasing in $n$, and thus the minimum error is
    \begin{align}
        \lim_{n\to \infty}
        \err(\hat{\vV},\hat{\vQ};n,\vtheta)
        =
        \sigma^2
        +
        \vtheta^\top\vU \diag( (a_i-1)^2\lambda_i )_{i\in[d]} \vU^\top\vtheta
        ,
    \end{align}
    which converges to $\sigma^2$ as $m \to \infty$, since $a_i \to 1$ for all $i \in [d]$.
\end{corollary}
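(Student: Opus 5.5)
The plan is to reduce everything to the closed form in Lemma~\ref{thm:appendix:error}. First I rescale the pretrained parameters exactly as in the proof of Corollary~\ref{thm:appendix:optimal:pretrain}: $v_{22}=1$, $\vv_{21}=\vzero_d$, $q=0$, $\vq_{21}=\vzero_d$, and $\vQ_{11}=\vU\diag(a_i/\lambda_i)_{i\in[d]}\vU^\top$. This rescaling leaves predictions unchanged, and the resulting $\vQ_{11}$ is symmetric positive definite. Moreover $\vSigma\vQ_{11}=\vU\diag(a_i)\vU^\top$.

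\textbf{Zero-shot error and the few-shot formula.} Since $\vv_{21}=\vzero_d$, the zero-shot prediction is identically zero. The zero-shot formula of Lemma~\ref{thm:appendix:error} then gives $\err(\hat{\vV},\hat{\vQ};0,\vtheta)=\sigma^2+\vtheta^\top\vSigma\vtheta$, which is the claimed equality.

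For the few-shot error I use \eqref{eq:proof:error:total:in:statement}. With $\vv_{21}=\vzero_d$ and $q=0$, only the $\vv_{21}$-free terms of $h$ survive:
\[
h=(n+1)\vtheta^\top\vSigma\vQ_{11}\vSigma\vQ_{11}\vSigma\vtheta-2(n+1)\vtheta^\top\vSigma\vQ_{11}\vSigma\vtheta+(T+n+2)(S+\sigma^2).
\]
Here $S:=\vtheta^\top\vSigma\vtheta$ and $T:=\tr(\vQ_{11}\vSigma\vQ_{11}\vSigma)=\sum_i a_i^2$. Write $u:=\vU^\top\vtheta$. Then the two quadratic forms become $\sum_i a_i^2\lambda_i u_i^2$ and $\sum_i a_i\lambda_i u_i^2$. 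Collecting terms, I expect
\[
\err(\hat{\vV},\hat{\vQ};n,\vtheta)=(S+\sigma^2)\Big(1+\tfrac{nT}{(n+1)^2}\Big)-\tfrac{n}{n+1}K,
\qquad K:=\sum_i a_i(2-a_i)\lambda_i u_i^2 .
\]

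\textbf{Monotonicity and the limit.} Since $0<a_i<1$, we have $K\ge 0$. For $n\ge1$, the factor $n/(n+1)^2$ is decreasing and $n/(n+1)$ is increasing. Hence the few-shot error is monotone decreasing in $n$. Letting $n\to\infty$ gives
\[
S+\sigma^2-K=\sigma^2+\sum_i(1-a_i)^2\lambda_iu_i^2,
\]
which is the stated limit. Because $a_i\to1$ as $m\to\infty$, this limit tends to $\sigma^2$.

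\textbf{Sufficient condition on $n$.} The few-shot error is at most the zero-shot error if and only if $T(S+\sigma^2)/(n+1)\le K$. This is the scalar specialization of Theorem~\ref{thm:appendix:error:diff}. The function $a\mapsto a(2-a)$ is increasing on $(0,1)$, and $a_i$ is increasing in $\lambda_i$. So $a_i(2-a_i)\ge a_d(2-a_d)$, and hence $K\ge a_d(2-a_d)\sum_i\lambda_iu_i^2$. I then bound $S\le\lambda_1\norm{\vtheta}_2^2$ in the numerator.

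This last step is where I expect the main obstacle: matching the exact constant in the stated threshold. The natural lower bound is $K\ge a_d(2-a_d)\,S$, which gives the sufficient condition $n+1\ge T(S+\sigma^2)/(a_d(2-a_d)S)$. I would then use that $(S+\sigma^2)/S$ is decreasing in $S$ to pass to the stated form with $\lambda_1\norm{\vtheta}_2^2$. That substitution is only safe if $S$ is replaced by a \emph{lower} bound. I would therefore first check whether the intended denominator should instead involve $\lambda_d\norm{\vtheta}_2^2$. If the statement as written is needed, I would try a sharper lower bound on $K$ via the weights $a_i\lambda_i$, which are monotone in $\lambda_i$.
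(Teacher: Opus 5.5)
Your derivation is correct and follows the paper's route. The paper also rescales to $v_{22}=1$ and reads off the zero-shot error $\sigma^2+S$. Its closed form is the same as yours, $(S+\sigma^2)\bigl(1+\tfrac{nT}{(n+1)^2}\bigr)-\tfrac{n}{n+1}K$. For the threshold it specializes Theorem~\ref{thm:appendix:error:diff} to $\vv_{21}=\vzero_d$, sets $\vD=\vSigma^{1/2}\vQ_{11}\vSigma^{1/2}$, and bounds $\norm{(\vD-\vI)\vSigma^{1/2}\vtheta}_2^2\le(1-a_d)^2S$; this is exactly your $K\ge a_d(2-a_d)S$. Your monotonicity argument is actually more complete than the paper's, which cites only the sign of $(a_i-2)a_i\lambda_i$ and never mentions the decreasing factor $\tfrac{n}{(n+1)^2}$.

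Your worry about the last step is justified, and no sharper bound on $K$ can fix it, because the threshold with $\lambda_1\norm{\vtheta}_2^2$ is false in general. Take $\vtheta$ to be an eigenvector of $\vSigma$ for $\lambda_d$. Then $S=\lambda_d\norm{\vtheta}_2^2$ and $K=a_d(2-a_d)S$ hold with equality. Since your criterion $T(S+\sigma^2)/(n+1)\le K$ is an equivalence, few-shot beats zero-shot if and only if $n+1\ge T(\lambda_d\norm{\vtheta}_2^2+\sigma^2)/\bigl(a_d(2-a_d)\lambda_d\norm{\vtheta}_2^2\bigr)$. Whenever $\lambda_1>\lambda_d$ and $\sigma^2>0$, this exceeds the stated bound by $\tfrac{T\sigma^2}{a_d(2-a_d)\norm{\vtheta}_2^2}\bigl(\lambda_d^{-1}-\lambda_1^{-1}\bigr)$, which can be made arbitrarily large.

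A concrete instance: $d=2$, $\vSigma=\diag(100,1)$, $\sigma^2=100$, $\vtheta=(0,1)^\top$, and $m$ large so that $a_i\approx1$ and $T\approx2$. The stated bound is about $3$. Yet at $n=4$ your formula gives a few-shot error of $\approx 101(1+\tfrac{8}{25})-\tfrac45\approx132.5$, above the zero-shot error $101$; the true crossover is near $n\approx 201$.

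The paper's proof makes precisely the substitution you flagged. With $X:=T-(n+1)a_d(2-a_d)<0$, it notes $XS\ge X\lambda_1\norm{\vtheta}_2^2$. It then infers $XS+T\sigma^2\le 0$ from $X\lambda_1\norm{\vtheta}_2^2+T\sigma^2\le 0$, which runs the inequality the wrong way.

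So drop the fallback plan. What your argument proves, and what is true, is the statement with $\lambda_1$ replaced by $\lambda_d$ in both places, using $S\ge\lambda_d\norm{\vtheta}_2^2$ and the fact that $(S+\sigma^2)/S$ is decreasing in $S$. Alternatively, state it directly with $S=\vtheta^\top\vSigma\vtheta$. The two forms coincide in the isotropic setting of Corollary~\ref{thm:apendix:fs:isotropic:iff}. You also need $\vtheta\neq\vzero_d$ throughout: for $\vtheta=\vzero_d$ the few-shot error exceeds the zero-shot error for every $n$.
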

\begin{proof}
    From \eqref{eq:proof:pretrain:parameter}, which gives $\vv_{21} = \vzero_d$ and $q=0$, the condition \eqref{eq:proof:small:icl:error:condition:abc} in Theorem~\ref{thm:appendix:error:diff} simplifies to
    \begin{align}
        \vtheta^\top \vB \vA^{-1}\vB \vtheta
        \geq
        \vtheta^\top \left(\vB \vA^{-1}\vB + \vC \right) \vtheta
        + \tr(\vQ_{11}\vSigma\vQ_{11}\vSigma) \cdot \sigma^2
        .
    \end{align}
    Therefore, by Theorem~\ref{thm:appendix:error:diff}, \eqref{eq:proof:small:icl:error:condition:in:icl} holds if and only if $\vtheta^\top \vC \vtheta + \tr(\vQ_{11}\vSigma\vQ_{11}\vSigma) \cdot \sigma^2 \leq 0$.
    
    Let $\vD := \vSigma^{1/2} \vQ_{11}\vSigma^{1/2}$, then
    \begin{align}
        \vtheta^\top \vC \vtheta
        + \tr(\vQ_{11}\vSigma\vQ_{11}\vSigma) \cdot \sigma^2
        &=
        (n+1) \cdot \Big(
            \vtheta^\top\vSigma\vQ_{11}\vSigma\vQ_{11}\vSigma \vtheta
            -2\cdot \vtheta^\top\vSigma\vQ_{11}\vSigma \vtheta
            + \vtheta^\top\vSigma\vtheta
            \Big)
        \\& \quad
        - (n+1) \cdot \vtheta^\top\vSigma\vtheta
        + \tr(\vQ_{11}\vSigma\vQ_{11}\vSigma) \cdot   \vtheta^\top\vSigma \vtheta
        + \tr(\vQ_{11}\vSigma\vQ_{11}\vSigma) \cdot \sigma^2
        \\& =
        (n+1) \norm{(\vD-\vI)\vSigma^{1/2} \vtheta}_2^2
        + \Big(\tr(\vD^2)-(n+1)\Big)\norm{\vSigma^{1/2} \vtheta}_2^2
        + \tr(\vD^2) \cdot \sigma^2
        \\& \leq
        \big(\tr(\vD^2)+(n+1)(\delta-1)\big)\norm{\vSigma^{1/2} \vtheta}_2^2
        + \tr(\vD^2) \cdot \sigma^2
        \label{eq:proof:trace:d:m:delta}
        ,
    \end{align}
    where $\delta$ denotes the maximum eigenvalue of $(\vD-\vI)^2$, and the inequality follows from the Rayleigh-Ritz theorem.
    Furthermore as $\vQ_{11}$ is given in \eqref{eq:proof:pretrain:parameter}, we have
    \begin{align}
        &\vD
        = \vU \diag \left( a_i \right)_{i\in[d]} \vU^\top
        ,
        &\text{ where } a_i := \frac{m\lambda_i}{(m+1)\lambda_i+\tr(\vSigma)} 
        .
    \end{align}
    Then, we have $\tr(\vD^2) = \sum_{i\in[d]} a_i^2$, and $\delta =  \max_{i\in[d]} (1-a_i)^2= (1-a_d)^2$, since $a_i$ is increasing with respect to $\lambda_i$ and $a_i <1$ for all $i\in[d]$.

    Let $n$ be sufficiently large such that $\big(\tr(\vD^2)+(n+1)(\delta-1) \big) < 0$, then
    \begin{align}
        \big(\tr(\vD^2)+(n+1)(\delta-1) \big)\norm{\vSigma^{1/2} \vtheta}_2^2
        &\geq
        \big(\tr(\vD^2)+(n+1)(\delta-1) \big)\cdot \lambda_1 \norm{ \vtheta}_2^2
        \label{eq:proof:trace:d:m:delta:2}
    \end{align}
    holds from the Rayleigh-Ritz theorem. Therefore, if $n$ is sufficiently large such that
    \begin{align}
        0
        &\geq
        \big(\tr(\vD^2)+(n+1)(\delta-1) \big) \lambda_1 \norm{ \vtheta}_2^2
        + \tr(\vD^2) \sigma^2
        =
        -(n+1)a_d(2-a_d)\lambda_1\|\vtheta\|_2^2
        +
        \sum_{i\in[d]} a_i^2\big(\lambda_1\|\vtheta\|_2^2+\sigma^2\big).
    \end{align}
    which is equal to
    \begin{align}
        n
        &\geq
        \sum_{i\in[d]} a_i^2 \cdot \frac{\lambda_1 \norm{ \vtheta}_2^2 + \sigma^2}{a_d(2-a_d) \cdot \lambda_1 \norm{ \vtheta}_2^2} - 1
        ,
    \end{align}
    then it follows $\vtheta^\top \vC \vtheta + \tr(\vQ_{11}\vSigma\vQ_{11}\vSigma) \cdot \sigma^2 \leq 0$ from \eqref{eq:proof:trace:d:m:delta} and \eqref{eq:proof:trace:d:m:delta:2}. Therefore, condition \eqref{eq:proof:small:icl:error:condition:in:icl} also holds.

    Moreover, because $\vV$ and $\vQ$ satisfy \eqref{eq:proof:pretrain:parameter}, it follows from Lemma~\ref{thm:appendix:error} that
    \begin{align}
        \E_{\vZ_{[n]},\vx, y| \vtheta} 
        \left[
        \big(
        \hat{y}_{\mathrm{FS}}(\vZ_{[n]}, \vx; \vV, \vQ) - y 
        \big)^2
        \right]
        &=
        \frac{1}{(n+1)^2}
        (\vtheta_{0}^\top \vSigma \vtheta_{0} + \sigma^2)
        +
        \frac{n}{n+1} \vtheta^\top\vSigma\vQ_{11}\vSigma\vQ_{11}\vSigma\vtheta
        \\ &\quad
        -
        \frac{n}{n+1} 
        2\vtheta^\top\vSigma\vQ_{11}\vSigma\vtheta
        +
        \frac{n}{(n+1)^2} (\tr(\vQ_{11}\vSigma\vQ_{11}\vSigma)+n+2)(\vtheta^\top\vSigma\vtheta+\sigma^2)
        \\ 
        &=
        \vtheta_{0}^\top \vSigma \vtheta_{0} + \sigma^2
        +
        \frac{n}{(n+1)^2} \tr(\vQ_{11}\vSigma\vQ_{11}\vSigma)(\vtheta^\top\vSigma\vtheta+\sigma^2)
        \\ &\quad
        +
        \frac{n}{n+1} 
        \vtheta^\top\vU \diag( (a_i-2)a_i\lambda_i )_{i\in[d]} \vU^\top\vtheta
        ,
    \end{align}
    which is monotone decreasing in $n$, since $(a_i-2)a_i\lambda_i < 0$ for all $i \in [d]$. Therefore, the minimum error is
    \begin{align}
        \lim_{n\to\infty} \E_{\vZ_{[n]},\vx, y| \vtheta} 
        \left[
        \big(
        \hat{y}_{\mathrm{FS}}(\vZ_{[n]}, \vx; \vV, \vQ) - y
        \big)^2
        \right]
        &=
        \vtheta_{0}^\top \vSigma \vtheta_{0} + \sigma^2
        +
        \vtheta^\top\vU \diag( (a_i-2)a_i\lambda_i )_{i\in[d]} \vU^\top\vtheta
        \\ 
        &=
        \sigma^2
        +
        \vtheta^\top\vU \diag( (a_i-1)^2\lambda_i )_{i\in[d]} \vU^\top\vtheta
        .
    \end{align}
\end{proof}

\begin{corollary}
    \label{thm:apendix:fs:isotropic:iff}
    Suppose that $\vSigma = \lambda \vI_d$ for some $\lambda>0$, and that $\vV$ and $\vQ$ are given by the parameters
    in \eqref{eq:proof:pretrain:parameter}.
    Let $\vtheta_0\in\sR^d$.
    Then
    \begin{align}
        \E_{\vZ_{[n]},\vx,y\mid\vtheta_0}
        \left[
        \big(
        \hat y_{\mathrm{FS}}(\vZ_{[n]},\vx;\vV,\vQ)-y
        \big)^2
        \right]
        \le
        \E_{\vx,y\mid\vtheta_0}
        \left[
        \big(
        \hat y_{\mathrm{ZS}}(\vx;\vV,\vQ)-y
        \big)^2
        \right]
    \end{align}
    holds \emph{if and only if}
    \begin{align}
        \label{eq:icl:isotropic:iff}
        n
        \ge
        \frac{d a}{2-a}
        \Bigl(
            1+\frac{\sigma^2}{\lambda\|\vtheta_0\|_2^2}
        \Bigr)
        -1,
        \qquad
        a := \frac{m}{m+1+d}.
    \end{align}
    Moreover, as $m\to\infty$, the condition in \eqref{eq:icl:isotropic:iff} reduces to
    \begin{align}
        n \ge d\Bigl(1+\frac{\sigma^2}{\lambda\|\vtheta_0\|_2^2}\Bigr)-1.
    \end{align}
\end{corollary}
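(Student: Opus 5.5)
The plan is to reuse the exact equivalence established inside the proof of Corollary~\ref{thm:error:pretrained}, but now without any Rayleigh--Ritz relaxation. In the isotropic case every matrix involved is a scalar multiple of $\vI_d$, so the relaxation is not needed, and this is what upgrades the sufficient condition to an \emph{if and only if}.

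\textbf{Step 1: reduce to the rescaled parameters.} I would first pass to the rescaled pair used in the proof of Corollary~\ref{thm:appendix:optimal:pretrain}. This pair gives identical predictions and has $v_{22}=1$, $\vv_{21}=\vzero_d$, $q=0$, $\vq_{21}=\vzero_d$, and $\vQ_{11}=\vU\diag\big(\tfrac{m}{(m+1)\lambda_i+\tr(\vSigma)}\big)\vU^\top$. With $\vSigma=\lambda\vI_d$ this becomes
\begin{equation}
\vQ_{11}=\frac{m}{(m+1)\lambda+d\lambda}\,\vI_d=\frac{a}{\lambda}\vI_d,
\end{equation}
which is positive definite. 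The hypotheses of Theorem~\ref{thm:appendix:error:diff} therefore hold. As in the proof of Corollary~\ref{thm:error:pretrained}, setting $\vv_{21}=\vzero_d$ and $q=0$ collapses condition \eqref{eq:proof:small:icl:error:condition:abc}. The terms $\vtheta_0^\top\vB\vA^{-1}\vB\vtheta_0$ cancel on both sides, and the few-shot error is at most the zero-shot error exactly when
\begin{equation}
\vtheta_0^\top\vC\vtheta_0+\tr(\vQ_{11}\vSigma\vQ_{11}\vSigma)\,\sigma^2\le 0 .
\end{equation}
This is an exact equivalence, since Theorem~\ref{thm:appendix:error:diff} is an iff and its inequality direction is reversed accordingly.

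\textbf{Step 2: evaluate the quantity explicitly.} With $\vD=\vSigma^{1/2}\vQ_{11}\vSigma^{1/2}=a\vI_d$, we get $\tr(\vD^2)=da^2$. The decomposition already written in that proof gives
\begin{equation}
\vtheta_0^\top\vC\vtheta_0+da^2\sigma^2=(n+1)(1-a)^2\lambda\|\vtheta_0\|_2^2+\big(da^2-(n+1)\big)\lambda\|\vtheta_0\|_2^2+da^2\sigma^2 .
\end{equation}
This equals
\begin{equation}
-(n+1)\,a(2-a)\,\lambda\|\vtheta_0\|_2^2+da^2\big(\lambda\|\vtheta_0\|_2^2+\sigma^2\big).
\end{equation}
Here the Rayleigh--Ritz steps are equalities, because $(\vD-\vI)^2$ is scalar.

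\textbf{Step 3: solve for $n$.} Since $0<a<1$, we have $a(2-a)>0$. Assuming $\vtheta_0\neq\vzero_d$, so that the stated threshold is well defined, I divide by $a(2-a)\lambda\|\vtheta_0\|_2^2>0$. Nonpositivity then becomes exactly
\begin{equation}
n\ge \frac{da}{2-a}\Big(1+\frac{\sigma^2}{\lambda\|\vtheta_0\|_2^2}\Big)-1 .
\end{equation}
Every step is reversible, which gives the iff.

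\textbf{Step 4: the limit.} Since $a=\tfrac{m}{m+1+d}\to1$ as $m\to\infty$, we have $\tfrac{a}{2-a}\to1$. This yields the reduced condition.

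\textbf{Main obstacle.} The only delicate point is the bookkeeping in Step 1. I need to confirm that the rescaling to $v_{22}=1$ leaves both errors unchanged, and that the sign convention of Theorem~\ref{thm:appendix:error:diff} translates into the stated direction of the inequality. Once that is settled, the rest is scalar algebra.
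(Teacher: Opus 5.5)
Your proposal is correct and follows the paper's proof essentially step for step. You specialize Theorem~\ref{thm:appendix:error:diff} at $\vv_{21}=\vzero_d$, $q=0$ to reduce everything to the sign of $\vtheta_0^\top\vC\vtheta_0+\tr(\vD^2)\sigma^2$. You then use $\vD=a\vI_d$ so that the decomposition from the proof of Corollary~\ref{thm:error:pretrained} holds with equality, and solve the scalar inequality for $n$.

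Two of your additions are welcome. First, you rescale explicitly to $v_{22}=1$, which the paper uses only tacitly when it writes $\vD=a\vI_d$. Second, you note that $\vtheta_0\neq\vzero_d$ is needed.

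One small fix: negating that theorem's ``if and only if'' only gives the strict inequality, so it does not settle the boundary case. The non-strict direction comes from the exact identity in the theorem's proof: few-shot minus zero-shot error equals $\tfrac{n}{(n+1)^2}\big(\vtheta_0^\top\vC\vtheta_0+\tr(\vD^2)\sigma^2\big)$. Cite that identity rather than the negated iff.
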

\begin{proof}
    Since $\vSigma=\lambda\vI_d$, all eigenvalues of $\vSigma$ are equal and
    \eqref{eq:proof:pretrain:parameter} gives
    \begin{align}
        \vD := \vSigma^{1/2}\vQ_{11}\vSigma^{1/2}
        = a \vI_d,
        \qquad
        a = \frac{m}{m+1+d}.
    \end{align}
    Hence $\tr(\vD^2)=d a^2$ and $(\vD-\vI)^2=(a-1)^2\vI_d$.
    From Theorem~\ref{thm:appendix:error:diff}, the condition to satisfy is
    \begin{align}
        0 &\ge
        \vtheta_0^\top\vC\vtheta_0
        + \tr(\vD^2)\sigma^2
        \\
        &=
        (n+1)\|(\vD-\vI)\vSigma^{1/2}\vtheta_0\|_2^2
        + \big(\tr(\vD^2)-(n+1)\big)\|\vSigma^{1/2}\vtheta_0\|_2^2
        + \tr(\vD^2)\sigma^2
        \\
        &=
        \Big(
            - (n+1)a(2-a) + d a^2
        \Big)\lambda\|\vtheta_0\|_2^2
        + d a^2 \sigma^2 .
    \end{align}
    Therefore,
    \begin{align}
        (n+1)a(2-a)\lambda\|\vtheta_0\|_2^2
        \ge
        d a^2\big(\lambda\|\vtheta_0\|_2^2+\sigma^2\big),
    \end{align}
    which is equivalent to the necessary and sufficient condition in
    \eqref{eq:icl:isotropic:iff}.
    Taking $m\to\infty$ completes the proof.
\end{proof}

\begin{corollary}[Test Error of Fully Fine-Tuned Models]
    Given a target task $\vtheta_0\in\sR^d$, consider the family of parameters $\big(\hat{\vV}(w),\hat{\vQ}(w)\big)$ in~\eqref{eq:finetune:zs:parameter}. Then, for any $n\in\sN$ and any $w>0$, the following inequality holds:
    \begin{align}
        \label{eq:proof:zs:ft:is:best}
        \sigma^2=
        \err(\hat{\vV}(w),\hat{\vQ}(w);0,\vtheta_0) < \err(\hat{\vV}(w),\hat{\vQ}(w);n,\vtheta_0)
        .
    \end{align}
    Moreover, for any $\vtheta \in \sR^d$, the test errors are given by
    \begin{align}
        & 
        \err(\hat{\vV}(w),\hat{\vQ}(w);0,\vtheta)
        =
        \sigma^2 +
        (\vtheta-\vtheta_0)^\top\vSigma(\vtheta-\vtheta_0),
        \\
        & \lim_{n \to \infty}
        \err(\hat{\vV}(w),\hat{\vQ}(w);n,\vtheta)
        =\sigma^2 + \vtheta^\top\vSigma\vtheta
        .
    \end{align}
\end{corollary}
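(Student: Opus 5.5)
The plan is to specialize the closed-form error expressions of Lemma~\ref{thm:appendix:error} to the fully fine-tuned parameters $\big(\hat{\vV}(w),\hat{\vQ}(w)\big)$ in~\eqref{eq:proof:finetune:zs:parameter}, evaluated at an arbitrary task $\vtheta$ in place of $\vtheta_0$. These parameters satisfy
\[
\vv_{21}=w\vtheta_0,\qquad v_{22}=1,\qquad \vq_{21}=\vzero_d,\qquad \vQ_{11}=\vzero_d\vzero_d^\top,\qquad q=1/w.
\]
In particular $\vQ_{11}$ is trivially symmetric, so the decomposition~\eqref{eq:proof:error:total:in:statement} applies:
\[
\err(\cdot;n,\vtheta)=\tfrac{1}{(n+1)^2}\,\err(\cdot;0,\vtheta)+\tfrac{n}{(n+1)^2}\,h(\cdot;\vtheta).
\]
Since $\vQ_{11}=\vzero$, every term containing $\vQ_{11}$ drops out, and only the $q$-dependent terms and the constant terms survive.

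\textbf{Zero-shot error.} Setting $\vQ_{11}=\vzero$ in the zero-shot formula leaves
\[
q^2\,\vv_{21}^\top\vSigma\vv_{21}-2q\,\vv_{21}^\top\vSigma\vtheta+\vtheta^\top\vSigma\vtheta+\sigma^2 .
\]
With $q=1/w$ and $\vv_{21}=w\vtheta_0$ this equals
\[
\vtheta_0^\top\vSigma\vtheta_0-2\vtheta_0^\top\vSigma\vtheta+\vtheta^\top\vSigma\vtheta+\sigma^2=\sigma^2+(\vtheta-\vtheta_0)^\top\vSigma(\vtheta-\vtheta_0).
\]
This is the same observation as in the proof of Theorem~\ref{thm:appendix:finetune:pretrain}: the zero-shot predictor is exactly $\vtheta_0^\top\vx$. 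At $\vtheta=\vtheta_0$ it gives $\sigma^2$.

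\textbf{The function $h$.} Setting $\vQ_{11}=\vzero$ in $h$ (with $\vtheta$ as the task) leaves
\[
h=(n+2)(\vtheta^\top\vSigma\vtheta+\sigma^2)+q^2\big((\vv_{21}+\vtheta)^\top\vSigma(\vv_{21}+\vtheta)+\sigma^2\big)-2q\,\vtheta^\top\vSigma\vv_{21}.
\]
The last term comes from the $2q(\cdots-\vtheta^\top)\vSigma\vv_{21}$ piece. Substituting the parameters gives
\[
h=(n+2)(\vtheta^\top\vSigma\vtheta+\sigma^2)+\big(\vtheta_0+\vtheta/w\big)^\top\vSigma\big(\vtheta_0+\vtheta/w\big)+\sigma^2/w^2-2\vtheta^\top\vSigma\vtheta_0 .
\]

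\textbf{Limit as $n\to\infty$.} The zero-shot contribution carries the weight $1/(n+1)^2$ and vanishes. In $\tfrac{n}{(n+1)^2}h$, only the $(n+2)(\vtheta^\top\vSigma\vtheta+\sigma^2)$ part survives in the limit, giving $\sigma^2+\vtheta^\top\vSigma\vtheta$ as claimed.

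\textbf{Strict inequality at $\vtheta=\vtheta_0$.} As in the proof of Theorem~\ref{thm:appendix:error:diff}, the difference $\err(\cdot;n,\vtheta_0)-\err(\cdot;0,\vtheta_0)$ equals $\tfrac{n}{(n+1)^2}\big(h-(n+2)\,\err(\cdot;0,\vtheta_0)\big)$, and here $\err(\cdot;0,\vtheta_0)=\sigma^2$. Collecting terms,
\[
h-(n+2)\sigma^2=\big(n+(1+1/w)^2\big)\,\vtheta_0^\top\vSigma\vtheta_0+\sigma^2/w^2 .
\]
This is strictly positive because $\sigma^2>0$, $w>0$, $n\in\sN$, and $\vSigma\succ 0$, which proves~\eqref{eq:proof:zs:ft:is:best}.

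The main obstacle is purely bookkeeping: correctly reading off the surviving $q$-terms of $h$ in Lemma~\ref{thm:appendix:error}. In particular, the cross term must be interpreted as $-2q\,\vtheta^\top\vSigma\vv_{21}$, and one must remember to replace $\vtheta_0$ by the evaluation task $\vtheta$ throughout while keeping $\vv_{21}=w\vtheta_0$. To guard against an error there, I would also verify the $n\to\infty$ limit directly: as $n\to\infty$, $\hat y_{\mathrm{FS}}\to\vv_{21}^\top(\vSigma\vtheta\cdot 0)+q\,\vtheta_0^\top\,\E[\vx_i]\cdot w+\cdots$. Since $\vQ_{11}=\vzero$ and the empirical means $\tfrac1n\sum_i\vx_i$ and $\tfrac1n\sum_i y_i$ tend to zero, the prediction tends to $0$. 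The error then tends to $\Var(y)=\sigma^2+\vtheta^\top\vSigma\vtheta$, consistent with the formula.
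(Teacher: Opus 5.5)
Your proposal is correct and matches the paper's proof essentially line by line. You specialize the decomposition of Lemma~\ref{thm:appendix:error} to $\vQ_{11}=\vzero$, $\vv_{21}=w\vtheta_0$, $q=1/w$ and obtain the same zero-shot error, the same $h$, and the same strictly positive excess $\frac{n}{(n+1)^2}\big(\big(n+(1+1/w)^2\big)\vtheta_0^\top\vSigma\vtheta_0+\sigma^2/w^2\big)$. You reach that excess through the difference identity from the proof of Theorem~\ref{thm:appendix:error:diff}, whereas the paper expands $\err(\cdot;n,\vtheta_0)$ directly, but the algebra is the same.
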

\begin{proof}
From Lemma~\ref{thm:appendix:error}, for any $\vtheta\in \sR^d$, we have
\begin{align}
    \E_{\vZ_{[n]},\vx,y\mid\vtheta}
    \left[
    \big(
    \hat y_{\mathrm{FS}}(\vZ_{[n]},\vx;\vV,\vQ)-y
    \big)^2
    \right]
    =
    \frac{1}{(n+1)^2}
    \E_{\vx,y\mid\vtheta}
    \left[
    \big(
    \hat y_{\mathrm{ZS}}(\vx;\vV,\vQ)-y
    \big)^2
    \right]
    +
    \frac{n}{(n+1)^2}
    h(\vV,\vQ;\vtheta).
\end{align}

Plugging in the optimal zero-shot parameters in \eqref{eq:proof:finetune:zs:parameter}, i.e., $\vQ_{11}=\vzero_d\vzero_d^\top$, $\vv_{21}=c\vtheta_0$, and $q=1/c$, gives
\begin{align}
    \E_{\vx,y\mid\vtheta}
    \left[
    \big(
    \hat y_{\mathrm{ZS}}(\vx;\vV,\vQ)-y
    \big)^2
    \right]
    &=
    \vtheta^\top\vSigma\vtheta+\sigma^2
    +\vtheta_0^\top\vSigma\vtheta_0
    -2\vtheta_0^\top\vSigma\vtheta
    =
    (\vtheta-\vtheta_0)^\top\vSigma(\vtheta-\vtheta_0)+\sigma^2.
\end{align}
In particular, $\E_{\vx,y\mid\vtheta_0} \left[\big(\hat y_{\mathrm{ZS}}(\vx;\vV,\vQ)-y \big)^2 \right]=\sigma^2.$

Moreover, under the same parameter choice, we have
\begin{align}
    h(\vV,\vQ;\vtheta)
    &=
    (n+2)(\vtheta^\top\vSigma\vtheta+\sigma^2)
    +\frac{1}{c^2}\Big((c\vtheta_0+\vtheta)^\top\vSigma(c\vtheta_0+\vtheta)+\sigma^2\Big)
    -2\vtheta^\top\vSigma\vtheta_0
    ,
\end{align}
which implies
\begin{align}
    \E_{\vZ_{[n]},\vx,y\mid\vtheta}
    \left[
    \big(
    \hat y_{\mathrm{FS}}(\vZ_{[n]},\vx;\vV,\vQ)-y
    \big)^2
    \right]
    &=
    \frac{1}{(n+1)^2}
    \left((\vtheta - \vtheta_0)^\top\vSigma(\vtheta - \vtheta_0)+\sigma^2\right)
    +
    \frac{n(n+2)}{(n+1)^2}(\vtheta^\top\vSigma\vtheta+\sigma^2)
    \\
    &\qquad
    +
    \frac{n}{(n+1)^2}
    \left(
        \frac{1}{c^2}\Big((c\vtheta_0+\vtheta)^\top\vSigma(c\vtheta_0+\vtheta)+\sigma^2\Big)
        -2\vtheta^\top\vSigma\vtheta_0
    \right)
    \label{eq:proof:zero:few:shot:error:t}
    .
\end{align}
If $\vtheta=\vtheta_0$, since $\vSigma$ is positive definite, for any $n\in\sN$
\begin{align}
    \E_{\vZ_{[n]},\vx,y\mid\vtheta_0}
    \left[
    \big(
    \hat y_{\mathrm{FS}}(\vZ_{[n]},\vx;\vV,\vQ)-y
    \big)^2
    \right]
    &=
    \sigma^2
    +
    \frac{n}{(n+1)^2}
    \left(n\vtheta_0^\top\vSigma\vtheta_0+
    \frac{1}{c^2}
    \Big(
        (c+1)^2\vtheta_0^\top\vSigma\vtheta_0
        +\sigma^2
    \Big)
    \right)
    \\
    &=
    \sigma^2
    +
    \frac{n}{(n+1)^2}
    \left(\left(n+\frac{(c+1)^2}{c^2}\right)\vtheta_0^\top\vSigma\vtheta_0
    +\frac{1}{c^2}\sigma^2
    \right)
    \\
    &>
    \E_{\vx, y| \vtheta_0} \norm{ \hat{y}_{\mathrm{ZS}}(\vx; \vV, \vQ) - y }_2^2
    .
\end{align}

Finally, taking the limit $n \to \infty$ in \eqref{eq:proof:zero:few:shot:error:t} gives
\begin{align}
    \lim_{n\to\infty}
    \E_{\vZ_{[n]},\vx,y\mid\vtheta}
    \left[
    \big(
    \hat y_{\mathrm{FS}}(\vZ_{[n]},\vx;\vV,\vQ)-y
    \big)^2
    \right]
    =
    \vtheta^\top\vSigma\vtheta+\sigma^2.
\end{align}
\end{proof}

\begin{proposition}
    Given a target task $\vtheta_0\in\sR^d$, consider the family of parameters $\big(\hat{\vV}(w),\hat{\vQ}\big)$ in~\eqref{eq:parameter:value:only}. Fix $n\in\mathbb{N}$.
    For simplicity, define $ \err(w;\theta) := \err\big(\hat{\vV}(w),\hat{\vQ};n,\vtheta\big)$ and $w^\star(\theta) := w^\star(n;\theta)$.
    For any $\theta\in\mathbb{R}^d$, the excess error is
    \begin{align}
        \label{eq:appendix:excess-error}
        &\err(w^\star(\theta_0);\theta)
        -
        \err\big(w^\star(\theta);\theta\big)
        =
        \frac{n \big((n+1+d)\theta^\top\Sigma\theta+d\sigma^2\big) }{(n+1)^2}
        \cdot
        \big(w^\star(\theta_0)-w^\star(\theta)\big)^2
        \geq
        0,
    \end{align}
    with equality if and only if $w^\star(\theta)=w^\star(\theta_0)$.
    Moreover, if $\theta$ satisfies $\theta^\top\Sigma\theta=\theta_0^\top\Sigma\theta_0$, then~\eqref{eq:appendix:excess-error} reduces to
    \begin{align}
        \err(w^\star(\theta_0);\theta)
        -
        \err\big(w^\star(\theta);\theta\big)
        =
        \frac{n(n+3+2d)^2}{(n+1)^2(d+4)^2}
        \cdot
        \frac{\big(\theta_0^\top\Sigma\theta_0\big)^2}
        {(n+1+d)\,\theta_0^\top\Sigma\theta_0+d\sigma^2}
        \cdot
        \big(1-\rho(\theta,\theta_0)\big)^2,
    \end{align}
    where
    \begin{align}
        \rho(\theta,\theta_0)
        :=
        \frac{\theta^\top\Sigma\theta_0}
        {\sqrt{(\theta^\top\Sigma\theta)(\theta_0^\top\Sigma\theta_0)}}
        \in[-1,1]
    \end{align}
    is the cosine similarity induced by the $\Sigma$-inner product.
\end{proposition}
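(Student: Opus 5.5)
The plan is to treat $\err(w;\vtheta)$ as an explicit quadratic in $w$ and complete the square. From \eqref{eq:appendix:fs:for:vft:full} in Theorem~\ref{thm:appendix:optimal:value-matrix}, and as already rearranged in the proof of Theorem~\ref{thm:appendix:optimal:w}, we have for fixed $\vtheta$ and $n$
\begin{align}
    \err(w;\vtheta)
    =
    \frac{n}{(n+1)^2}\Big(\alpha(\vtheta)\, w^2 - 2\beta(\vtheta)\, w\Big) + \mathrm{const}(\vtheta),
\end{align}
where $\alpha(\vtheta) := (n+1+d)\vtheta^\top\vSigma\vtheta + d\sigma^2 > 0$ and $\beta(\vtheta) := (n+1)\vtheta^\top\vSigma\vtheta - \frac{n+3+2d}{d+4}\vtheta_0^\top\vSigma\vtheta$. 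The key observation is that $\mathrm{const}(\vtheta)$ does not depend on $w$. Also, $w^\star(\vtheta) = \beta(\vtheta)/\alpha(\vtheta)$.

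Completing the square gives
\begin{align}
    \err(w;\vtheta) = \frac{n\,\alpha(\vtheta)}{(n+1)^2}\big(w - w^\star(\vtheta)\big)^2 + \err(w^\star(\vtheta);\vtheta).
\end{align}
Substituting $w = w^\star(\vtheta_0)$ then yields the first identity \eqref{eq:appendix:excess-error} immediately. Nonnegativity, and the equality case, follow from $\alpha(\vtheta) > 0$ (strict, since $n\ge1$, $\sigma^2>0$).

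For the second part, assume $c := \vtheta^\top\vSigma\vtheta = \vtheta_0^\top\vSigma\vtheta_0$. Then $\alpha(\vtheta) = \alpha(\vtheta_0) = (n+1+d)c + d\sigma^2$, so the two optimizers share the same denominator. Their numerators are $(n+1)c - \frac{n+3+2d}{d+4}\vtheta_0^\top\vSigma\vtheta$ and $(n+1)c - \frac{n+3+2d}{d+4}c$. The difference is
\begin{align}
    w^\star(\vtheta_0) - w^\star(\vtheta) = \frac{n+3+2d}{d+4}\cdot\frac{\vtheta_0^\top\vSigma\vtheta - c}{(n+1+d)c+d\sigma^2} = -\frac{(n+3+2d)\,c\,(1-\rho)}{(d+4)\big((n+1+d)c+d\sigma^2\big)},
\end{align}
using $\vtheta_0^\top\vSigma\vtheta = c\rho$ under the equal-norm assumption. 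Squaring this and multiplying by $n\alpha/(n+1)^2$ cancels one factor of the denominator, which gives the stated formula. The bound $\rho\in[-1,1]$ is Cauchy--Schwarz in the $\vSigma$-inner product.

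The main obstacle is bookkeeping rather than any new idea. I must check that the $w$-coefficients extracted from \eqref{eq:appendix:fs:for:vft:full} are exactly $\alpha$ and $-2\beta$ after factoring out $n/(n+1)^2$. Specifically, the $w$-linear term $\frac{2n(n+3+2d)w}{(d+4)(n+1)^2}\vtheta_0^\top\vSigma\vtheta$ must combine with $-\frac{2n(n+1)w}{(n+1)^2}\vtheta^\top\vSigma\vtheta$ into $-2\beta$. I also need to confirm that no other term in that expression depends on $w$, which I expect to verify directly by inspection.
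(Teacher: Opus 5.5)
Your proposal is correct and follows the paper's proof essentially step for step. You write $\err(w;\vtheta)$ as a quadratic in $w$ with positive leading coefficient $n\big((n+1+d)\vtheta^\top\vSigma\vtheta+d\sigma^2\big)/(n+1)^2$, complete the square around $w^\star(\vtheta)$, and evaluate at $w^\star(\vtheta_0)$; under equal $\vSigma$-norms the two optimizers share a denominator, so $w^\star(\vtheta_0)-w^\star(\vtheta)$ is proportional to $1-\rho(\vtheta,\vtheta_0)$. Your bookkeeping check is also right: the only $w$-dependent terms in \eqref{eq:appendix:fs:for:vft:full} are the three you identify, and the $w$-independent constant never enters.
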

\begin{proof}
Fix $n\in\sN$ and $\theta\in\sR^d$.
From the proof of Theorem~\ref{thm:appendix:optimal:w}, the few-shot error is a quadratic function of $w$ of the form
\begin{align}
    \err(w;\theta)
    =
    \frac{1}{(n+1)^2}
    \big(
    a(\theta)\,w^2 + b(\theta)\,w
    \big)
    +\mathrm{const}(\theta)
    =
    \frac{a(\theta)}{(n+1)^2}
    \left(
    w^2 - \frac{b(\theta)}{a(\theta)}\,w
    \right)
    +\mathrm{const}(\theta)
    ,
\end{align}
where
\begin{align}
    a(\theta) := n\big((n+1+d)\,\theta^\top\Sigma\theta + d\sigma^2\big),
    \quad
    b(\theta) := 2n\big(\frac{n+3+2d}{d+4}\,\theta_0^\top\Sigma\theta - (n+1)\,\theta^\top\Sigma\theta\big).
\end{align}
Since $a(\theta)>0$ and  $w^\star(\theta)=-b(\theta)/(2a(\theta))$, we have
\begin{align}
    \err(w;\theta)
    =
    \frac{a(\theta)}{(n+1)^2}\big(w-w^\star(\theta)\big)^2
    + \err\big(w^\star(\theta);\theta\big) ,
\end{align}
which implies
\begin{align}
    \err\big(w^\star(\theta_0);\theta\big)
    -
    \err\big(w^\star(\theta);\theta\big)
    =
    \frac{a(\theta)}{(n+1)^2}\big(w^\star(\theta_0)-w^\star(\theta)\big)^2
    = \frac{n\big((n+1+d)\,\theta^\top\Sigma\theta + d\sigma^2\big)}{(n+1)^2}
    \big(w^\star(\theta_0)-w^\star(\theta)\big)^2
    \ge 0,
\end{align}
where the equality condition is $w^\star(\theta)=w^\star(\theta_0)$.

Now assume $\theta^\top\Sigma\theta=\theta_0^\top\Sigma\theta_0$. From Theorem~\ref{thm:appendix:optimal:w}, we have
\begin{align}
    w^\star(\theta_0)-w^\star(\theta)
    &=
    \frac{\frac{n+3+2d}{d+4}\big(\theta_0^\top\Sigma\theta-\theta_0^\top\Sigma\theta_0\big)}
    {(n+1+d)\,\theta_0^\top\Sigma\theta_0+d\sigma^2}
    =
    \frac{\frac{n+3+2d}{d+4}\,\theta_0^\top\Sigma\theta_0\big(\rho(\theta,\theta_0)-1\big)}
    {(n+1+d)\,\theta_0^\top\Sigma\theta_0+d\sigma^2},
\end{align}
and hence
\begin{align}
    \err\big(w^\star(\theta_0);\theta\big)
    -
    \err\big(w^\star(\theta);\theta\big)
    &=
    \frac{n\big((n+1+d)\,\theta_0^\top\Sigma\theta_0+d\sigma^2\big)}{(n+1)^2}
    \left(
    \frac{\frac{n+3+2d}{d+4}\,\theta_0^\top\Sigma\theta_0\big(\rho(\theta,\theta_0)-1\big)}
    {(n+1+d)\,\theta_0^\top\Sigma\theta_0+d\sigma^2}
    \right)^2\\
    &=
    \frac{n(n+3+2d)^2}{(n+1)^2(d+4)^2}
    \cdot
    \frac{\big(\theta_0^\top\Sigma\theta_0\big)^2}
    {(n+1+d)\,\theta_0^\top\Sigma\theta_0+d\sigma^2}
    \cdot
    \big(1-\rho(\theta,\theta_0)\big)^2.
\end{align}
\end{proof}

\clearpage
\subsection{Proofs for Test Errors under the Auxiliary Few-Shot Prompt Loss}
\label{appendix:proof:dynamic}

To understand the role of the auxiliary few-shot objective during fine-tuning, we analyze the test error under a structured parameterization of the linear attention model in \eqref{eq:linear:attention:model}.

\begin{assumption}
    \label{thm:appendix:error:assumption}
    We assume that $m\to\infty$, $\vSigma = \vI_d$, and the parameters $\vV$ and $\vQ$ in \eqref{eq:linear:attention:model} take the following forms:
    \begin{equation}
        \vV
        =
        \begin{bmatrix}
        0 & \vzero_d^\top & 0 \\
        \vzero_d  &  \vzero_d \vzero_d^\top & \0_d \\
        0  & \alpha \cdot \vtheta_0^\top & 1
        \end{bmatrix}
        , \qquad
        \vQ =
        \begin{bmatrix}
        \beta & \vzero_d^\top & 0 \\
        \vzero_d & \gamma\cdot \vI_d  & \0_d \\
        0  & \0_d^\top & 0
        \end{bmatrix}
        ,
    \end{equation}
    where $\alpha,\beta,\gamma \in \sR$.
\end{assumption}

\begin{lemma}
    \label{thm:appendix:error:under:assumption}
    Under Assumption~\ref{thm:appendix:error:assumption}, we have
    \begin{align}
    \E_{\vZ_{[n]}, \vx, y \mid \vtheta_0} 
    \left[
    \big(
    \hat{y}_{\mathrm{FS}}(\vZ_{[n]}, \vx; \vV, \vQ) - y 
    \big)^2
    \right]
    =
    \tfrac{1}{(n+1)^2} \cdot
    \E_{\vx, y \mid \vtheta_0}
    \left[
    \big(
        \hat{y}_{\mathrm{ZS}}(\vx; \vV, \vQ)
        - y
    \big)^2
    \right]
    +
    \tfrac{n}{(n+1)^2} \cdot h(\vV, \vQ; \vtheta_0),
    \end{align}
    where
    \begin{align}
        \label{eqn:appendix:zs:error:under:assumption}
        & \E_{\vx, y | \vtheta_0} 
        \left[
        \big(
            \hat{y}_{\mathrm{ZS}}(\vx; \vV, \vQ)
            - y
        \big)^2
        \right]
        =
        \norm{\vtheta_0}_2^2 \cdot ((\beta+(d+2)\gamma) \alpha - 1)^2
        +
        \norm{\vtheta_0}_2^2 \cdot 2(d+2)\gamma^2\alpha^2
        + \sigma^2
        ,
        \\
        \label{eqn:appendix:h:function:under:assumption}
        & h(\vV, \vQ; \vtheta_0)
        =
         \norm{\vtheta_0}_2^2 \cdot \alpha^2
        \Big((3d+n+5)\gamma^2 + 2\beta\gamma + \beta^2\Big)
        \\
        &\qquad\qquad\qquad
        +
        \norm{\vtheta_0}_2^2 \cdot 2\alpha
        \Big((2d+n+3)\gamma^2 - (d+n+3)\gamma + \beta^2 + \beta\gamma - \beta\Big)
        \\
        &\qquad\qquad\qquad
        +
        \norm{\vtheta_0}_2^2
        \Big(
            (d+n+1)\gamma^2 - 2(n+1)\gamma + (n+2) + \beta^2
        \Big)
        +
        \sigma^2
        (d \gamma^2 + n+2 + \beta^2)
        .
    \end{align}
    Moreover, $\lim_{n\to \infty}\E_{\vZ_{[n]}, \vx, y | \vtheta_0} \left[ \big( \hat{y}_{\mathrm{FS}}(\vZ_{[n]}, \vx; \vV, \vQ)- y\big)^2\right]= \norm{\vtheta_0}_2^2 \cdot (\gamma \alpha + \gamma-1 )^2 + \sigma^2$.
\end{lemma}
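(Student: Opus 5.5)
The proof is a direct specialization of Lemma~\ref{thm:appendix:error}. Under Assumption~\ref{thm:appendix:error:assumption} we have $v_{22}=1$, $\vq_{21}=\vzero_d$ and $\vQ_{11}=\gamma\vI_d$, which is symmetric. The hypotheses of the ``moreover'' part of Lemma~\ref{thm:appendix:error} therefore hold, and the decomposition into $\frac{1}{(n+1)^2}$ times the zero-shot error plus $\frac{n}{(n+1)^2}h$ follows immediately. It then remains to evaluate the two displayed expressions of Lemma~\ref{thm:appendix:error} with $\vSigma=\vI_d$, $\vQ_{11}=\gamma\vI_d$, $q=\beta$ and $\vv_{21}=\alpha\vtheta_0$.

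For the zero-shot error, the substitutions are $\tr(\vQ_{11}\vSigma)=d\gamma$, $\tr(\vQ_{11}\vSigma\vQ_{11}\vSigma)=d\gamma^2$, $\vSigma\vQ_{11}\vSigma=\gamma\vI$ and $\vSigma\vQ_{11}\vSigma\vQ_{11}\vSigma=\gamma^2\vI$. The quadratic coefficient in $\vv_{21}$ becomes $(d^2+2d+4d+8)\gamma^2=(d+2)(d+4)\gamma^2$. The $q$-terms contribute $\beta^2+2\beta(d+2)\gamma$, and the linear term is $-2\alpha\big((d+2)\gamma+\beta\big)\|\vtheta_0\|^2$. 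Collecting gives
\[
\|\vtheta_0\|^2\big[\alpha^2\big((d+2)(d+4)\gamma^2+2(d+2)\beta\gamma+\beta^2\big)-2\alpha\big((d+2)\gamma+\beta\big)+1\big]+\sigma^2 .
\]
Since $(d+2)(d+4)=(d+2)^2+2(d+2)$, this equals $\|\vtheta_0\|^2\big[((\beta+(d+2)\gamma)\alpha-1)^2+2(d+2)\gamma^2\alpha^2\big]+\sigma^2$, which is the claimed form.

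For $h$, the same substitution is made term by term. The $\alpha^2$ coefficient is $d\gamma^2+(n+5)\gamma^2+2d\gamma^2+\beta^2+2\beta\gamma$. The $\alpha$ coefficient (divided by $2$) is $d\gamma^2+(n+3)\gamma^2+d\gamma^2-(n+3)\gamma-d\gamma+\beta^2+\beta\gamma-\beta$. Here the $\beta^2$ comes from the $q^2(\vv_{21}+\vtheta_0)^\top(\vv_{21}+\vtheta_0)$ cross term, and the $\beta\gamma-\beta$ from the last $2q$ term. The constant coefficient is $(n+1)\gamma^2-2(n+1)\gamma+d\gamma^2+n+2+\beta^2$, and the $\sigma^2$ coefficient is $d\gamma^2+n+2+\beta^2$. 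Each of these matches the statement.

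One point needs care: the last term of $h$ in Lemma~\ref{thm:appendix:error} is written with ``$-\vtheta_0$'' rather than ``$-\vtheta_0^\top$''. I would read it as $(\vv_{21}^\top\vSigma\vQ_{11}+\vtheta_0^\top\vSigma\vQ_{11}-\vtheta_0^\top)\vSigma\vv_{21}$, as the derivation implies.

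For the limit, both $\frac{1}{(n+1)^2}$ times the zero-shot error and $\frac{n}{(n+1)^2}$ times the $n$-independent part of $h$ vanish. Only the terms linear in $n$ inside $h$ survive, giving
\[
\|\vtheta_0\|^2\big(\alpha^2\gamma^2+2\alpha(\gamma^2-\gamma)+\gamma^2-2\gamma+1\big)+\sigma^2=\|\vtheta_0\|^2(\gamma\alpha+\gamma-1)^2+\sigma^2 .
\]
As a cross-check, for $\gamma=1$, $\beta=0$, $\alpha=\frac{1}{d+4}$ this agrees with Lemma~\ref{thm:fix-q}. The only real difficulty is bookkeeping in the $h$ substitution, especially tracking which $\beta$ terms come from which part of the $q$-dependent expressions.
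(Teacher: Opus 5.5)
Your proposal is correct and follows the paper's proof. Like the paper, you specialize the symmetric case of Lemma~\ref{thm:appendix:error} using $\tr(\vQ_{11}\vSigma)=d\gamma$ and $\tr(\vQ_{11}\vSigma\vQ_{11}\vSigma)=d\gamma^2$, and you obtain the limit as $\lim_{n\to\infty} h/n$. Your explicit coefficient bookkeeping, which the paper omits, checks out. Your reading of the last $2q$ term with $-\vtheta_0^\top$ is also the right one: the statement of Lemma~\ref{thm:appendix:error} already has the transpose, and only the displayed derivation in its proof drops it.
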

\begin{proof}
    Note that $\tr(\vQ_{11}\vSigma)=d \gamma$ and $\tr(\vQ_{11}\vSigma\vQ_{11}\vSigma)=d \gamma^2$. From Lemma~\ref{thm:appendix:error}, we have \eqref{eqn:appendix:zs:error:under:assumption} and \eqref{eqn:appendix:h:function:under:assumption}.
Therefore,
\begin{align}
    \lim_{n\to \infty}
    \E_{\vZ_{[n]}, \vx, y | \vtheta_0} 
    \left[
    \big(
    \hat{y}_{\mathrm{FS}}(\vZ_{[n]}, \vx; \vV, \vQ) - y
    \big)^2
    \right]
    &= 
    0+
    \lim_{n\to \infty}
    \tfrac{1}{n} \cdot h(\vV, \vQ; \vtheta_0)
    = 
    \norm{\vtheta_0}_2^2 \cdot (\gamma \alpha + \gamma-1 )^2
    +
    \sigma^2
    .
\end{align}
\end{proof}

\begin{theorem}
\label{thm:appendix:dynamic:add-and-remove-fs-loss}
Under Assumption~\ref{thm:appendix:error:assumption}, suppose the parameters are close to the optimal zero-shot solution in Theorem~\ref{thm:appendix:finetune:pretrain}. If one additional gradient descent step is performed on the asymptotic few-shot prompt loss in \eqref{eq:finetune:loss:fs} with a sufficiently small learning rate~$\eta$ and $n \to \infty$, then the zero-shot error increases.
\end{theorem}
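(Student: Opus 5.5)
We take the zero-shot optimum to be the family of Theorem~\ref{thm:appendix:finetune:pretrain} written in the coordinates of Assumption~\ref{thm:appendix:error:assumption}. Under the assumption the value matrix is $\vv_{21}=\alpha\vtheta_0$, $v_{22}=1$, and the query-key matrix is $q=\beta$, $\vQ_{11}=\gamma\vI_d$. Matching with Theorem~\ref{thm:appendix:finetune:pretrain} gives $\gamma=0$, $\alpha=w>0$ and $\beta=1/w$, so the optimum is the curve $\{\gamma=0,\ \alpha\beta=1\}$. Lemma~\ref{thm:appendix:error:under:assumption} gives both losses in closed form:
\begin{align}
  Z(\alpha,\beta,\gamma) &= \norm{\vtheta_0}_2^2\Big(\big((\beta+(d+2)\gamma)\alpha-1\big)^2+2(d+2)\gamma^2\alpha^2\Big)+\sigma^2,\\
  F(\alpha,\beta,\gamma) &= \norm{\vtheta_0}_2^2(\gamma\alpha+\gamma-1)^2+\sigma^2,
\end{align}
where $F$ is the $n\to\infty$ limit of the few-shot loss. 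The plan is to compute one gradient step on $F$ and do a Taylor expansion of $Z$ along that step.

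\emph{Gradient of $F$ at the optimum.} At a point with $\gamma=0$, $\alpha\beta=1$ we have $\gamma\alpha+\gamma-1=-1$. Differentiating gives $\partial_\beta F=0$ and $\partial_\alpha F=2\norm{\vtheta_0}_2^2(\gamma\alpha+\gamma-1)\gamma=0$. The only nonzero component is
\begin{equation}
  \partial_\gamma F=2\norm{\vtheta_0}_2^2(\gamma\alpha+\gamma-1)(\alpha+1)=-2\norm{\vtheta_0}_2^2(\alpha+1),
\end{equation}
which is nonzero since $\alpha=w>0$ and $\vtheta_0\neq\vzero_d$. So the step leaves $\alpha,\beta$ fixed and moves $\gamma$ to $\gamma'=2\eta\norm{\vtheta_0}_2^2(\alpha+1)>0$.

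\emph{Change in $Z$.} Substituting $\alpha\beta=1$ and $\gamma=\gamma'$ into $Z$, the first bracket becomes $\big((d+2)\gamma'\alpha\big)^2$. This gives the exact expression
\begin{equation}
  Z(\alpha,\beta,\gamma')-\sigma^2=\norm{\vtheta_0}_2^2\,(d+2)(d+4)\,\alpha^2\gamma'^2>0 .
\end{equation}
The increase is of order $\eta^2$, which matches the general picture: $\nabla Z=0$ on the optimal set, so the first-order term $-\eta\,\nabla Z^\top\nabla F$ vanishes. The second-order term $\tfrac{\eta^2}{2}\nabla F^\top\nabla^2 Z\,\nabla F$ is strictly positive because $\partial_\gamma^2 Z=2\norm{\vtheta_0}_2^2(d+2)(d+4)\alpha^2>0$ at the optimum.

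\emph{Main obstacle: "close to" rather than exactly at the optimum.} Write the starting point as $p$ and let $\varepsilon$ be its distance to the optimal curve. Then $\nabla Z(p)=O(\varepsilon)$, so $Z(p-\eta\nabla F(p))-Z(p)=-\eta\,O(\varepsilon)+c\,\eta^2+O(\eta^3)$, where $c=\tfrac12\nabla F^\top\nabla^2 Z\,\nabla F$ is bounded below by a positive constant near the optimum. The first-order term need not vanish, so positivity of the increase requires $\varepsilon\ll\eta$. I would try to make this precise by arguing at the optimum itself, where the increase is exact and strictly positive, and then using joint continuity in $(p,\eta)$ together with the scaling above. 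This yields an increase for all starting points within $o(\eta)$ of the optimum, with $\eta$ small enough that the $O(\eta^3)$ remainder is dominated; this is the reading of "close to the optimal zero-shot solution" that I would adopt.
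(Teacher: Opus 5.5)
Your proposal is correct and follows the paper's argument. The paper also takes one gradient step on the limiting loss $\norm{\vtheta_0}_2^2(\gamma\alpha+\gamma-1)^2+\sigma^2$ at the zero-shot optimum, observes that only $\gamma$ moves, and substitutes into the closed-form zero-shot error. It does this only at $(\alpha,\beta,\gamma)=(1,1,0)$, obtaining $\gamma'=4\eta\norm{\vtheta_0}_2^2$ and an increase of $16(d+2)(d+4)\eta^2\norm{\vtheta_0}_2^6$, which is your formula at $\alpha=1$.

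Your proposal goes further in two respects. It covers the whole optimal curve $\alpha\beta=1$, $\gamma=0$ with $\alpha=w>0$. It also shows that the neighborhood in which the increase persists must shrink like $O(\eta)$, because $\nabla Z$ is of order $\varepsilon$ off the curve. This makes precise the paper's remark ``by continuity the same holds in a neighborhood of $(1,1,0)$,'' which is valid only for each fixed $\eta$.
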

\begin{proof}
From \eqref{eqn:appendix:zs:error:under:assumption} in Lemma~\ref{thm:appendix:error:under:assumption}, define the zero-shot error as
\begin{align}
    \err(\alpha,\beta,\gamma) 
    &:= 
    \big((\beta+(d+2)\gamma)\alpha-1\big)^2
    \norm{\vtheta_0}_2^2
    + 2(d+2)\alpha^2\gamma^2\norm{\vtheta_0}_2^2 
    + \sigma^2
    .
\end{align}

From Lemma~\ref{thm:appendix:error:under:assumption}, the asymptotic few-shot loss under Assumption~\ref{thm:appendix:error:assumption} is
\begin{align}
   \loss(\alpha,\beta,\gamma)  :=
   \lim_{n\to \infty}\E_{\vZ_{[n]}, \vx, y | \vtheta_0} \left[ \big( \hat{y}_{\mathrm{FS}}(\vZ_{[n]}, \vx; \vV, \vQ)- y\big)^2\right]= \norm{\vtheta_0}_2^2 \cdot (\gamma \alpha + \gamma-1 )^2 + \sigma^2
   .
\end{align}
One gradient descent step with step size~$\eta$ gives
\begin{align}
   \Delta\alpha
   &
   = -\eta\frac{\partial\loss}{\partial\alpha}
   = -2\eta\gamma\big(\gamma\alpha+\gamma-1\big) \norm{\vtheta_0}_2^2,
   \qquad
   \Delta\beta = 0,
   \qquad
   \Delta\gamma
   =
   -\eta\frac{\partial\loss}{\partial\gamma}
   = -2\eta(\alpha+1)\big(\gamma\alpha+ \gamma-1 \big)\norm{\vtheta_0}_2^2.
\end{align}

At the zero-shot optimum $(\alpha,\beta,\gamma)=(1,1,0)$, one gradient descent step gives $\Delta\alpha=0$ and $\Delta\gamma = 4\eta\|\vtheta_0\|_2^2$,
so $(\alpha,\beta,\gamma)$ is updated to $(1,1,4\eta\|\vtheta_0\|_2^2)$. Therefore,
\begin{align}
\err(1,1,4\eta\|\vtheta_0\|_2^2)-\err(1,1,0)
=
(d+2)(d+4)\big(4\eta\|\vtheta_0\|_2^2\big)^2\|\vtheta_0\|_2^2
=
16(d+2)(d+4)\eta^2\|\vtheta_0\|_2^6
>0,
\end{align}
and by continuity the same holds in a neighborhood of $(1,1,0)$.
\end{proof}

\clearpage
\section{Experimental Details}

We provide additional details on the experimental setup and supplementary results for the linear regression tasks in Section~\ref{sec:appendix:experiment_lr} and the MMLU benchmark in Section~\ref{sec:appendix:experiment_mmlu}.

Code is available at \githuburl.

\subsection{Experimental Setup and Additional Results for Linear Regression Tasks}
\label{sec:appendix:experiment_lr}

\begin{figure}[h!]
  \centering
  \includegraphics[width=0.6\linewidth]{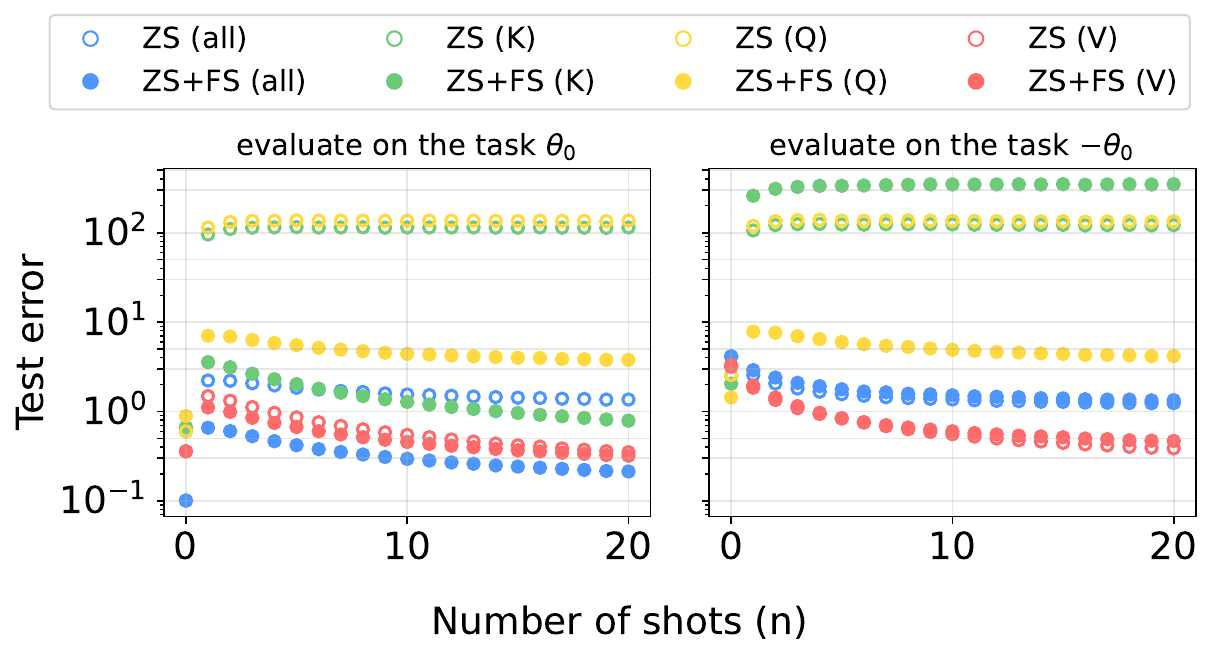}
  \caption{
      Test errors on \figleft the in-distribution task $\vtheta_0$ and \figright the out-of-distribution task $-\vtheta_0$, using a 1-head attention model.
  }
\end{figure}

\begin{figure}[h!]
  \centering
  \includegraphics[width=0.6\linewidth]{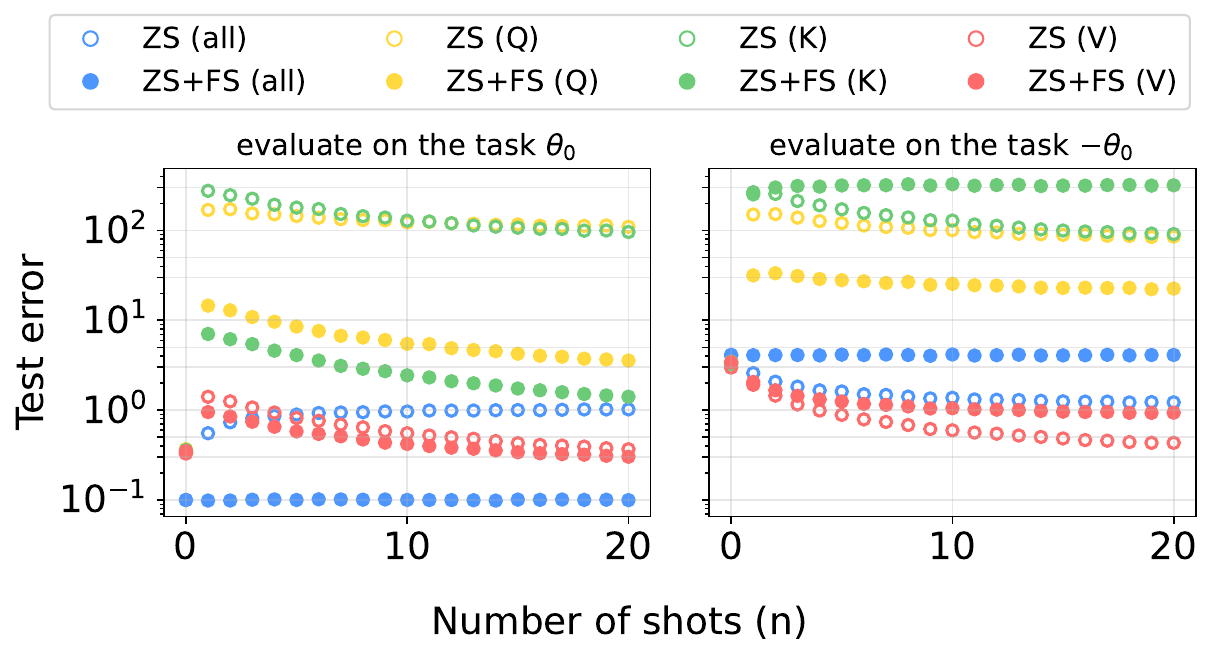}
  \caption{
      Test errors on \figleft the in-distribution task $\vtheta_0$ and \figright the out-of-distribution task $-\vtheta_0$, using a 2-head attention model.
  }
\end{figure}

\paragraph{Experimental Setup.}
Following prior work~\citep{ahn2023transformers}, we use the setup in Section~\ref{sec:problem-setup} and set $d=5$, $\sigma^2=0.1$, $\vSigma=\vI_5$, $\|\vtheta_0\|_2^2=1$, and $m=n=20$.
In the pretraining phase, we generate data by sampling $\vtheta \sim \mathcal{N}(\vzero_5,\vI_5)$ and $\vx \sim \mathcal{N}(\vzero_5,\vI_5)$, and setting $y=\vtheta^\top \vx + e$, where $e\sim\mathcal{N}(0,0.1)$. We fix the number of shots to $m=20$ during pretraining.
For fine-tuning, we first randomly generate a target task vector $\vtheta_0\in\mathbb{R}^5$ such that $\|\vtheta_0\|_2=1$. We then construct the fine-tuning dataset by sampling $\vx \sim \mathcal{N}(\vzero_5,\vI_5)$ and generating $y=\vtheta_0^\top \vx + e$, where $e\sim\mathcal{N}(0,0.1)$. When using the auxiliary few-shot loss during fine-tuning, we fix the number of shots to $n=20$.

\paragraph{Architectural Variants.}
Figure~\ref{fig:thetas} reports results for the linear self-attention model in~\eqref{eq:linear:attention:model}, which uses a merged query-key matrix. In additional experiments, we also consider a variant with separate query and key matrices, following the standard attention formulation, as well as a 2-head linear attention model.

In our framework, the multi-head extension is straightforward when query-key parameters are shared across heads, since the output reduces to a linear combination of the value matrices and is equivalent to a single-head model with an aggregated value matrix. We note, however, that our 2-head experiment does not impose this shared query-key structure. Even in this more flexible setting, the results exhibit trends consistent with those of the single-head model, suggesting that the qualitative behavior is not specific to the merged query-key formulation.

\paragraph{Additional Results.}
We evaluate $n$-shot test errors on the in-distribution task $\vtheta_0$ and the out-of-distribution task $-\vtheta_0$ across fine-tuning strategies. Empty circles denote models fine-tuned with the zero-shot (ZS) loss only, whereas filled circles denote models fine-tuned with both the zero-shot and few-shot (FS) losses. Parentheses indicate which parameters are updated: \texttt{all} updates all parameters, while \texttt{K}, \texttt{Q}, and \texttt{V} update the key, query, and value matrices, respectively. Value-matrix fine-tuning without the auxiliary few-shot loss achieves the lowest 20-shot error on the out-of-distribution task $-\vtheta_0$.

\subsection{Experimental Setup and Additional Results for the MMLU Benchmark}
\label{sec:appendix:experiment_mmlu}

\begin{table*}[h!]
\centering
\caption{
Zero-shot and 7-shot accuracies evaluated on four MMLU categories (\emph{Humanities}, \emph{Social Sciences}, \emph{STEM}, and \emph{Other}). We fine-tune the base model (\texttt{Qwen2.5-3B-Instruct}) on the \emph{Humanities} subset.
ZS and FS denote fine-tuning with the zero-shot objective and the auxiliary few-shot objective, respectively, and Q, K, and V indicate that LoRA adapters are applied to the query, key, and value matrices, respectively.
The second row for each model reports the difference from the base model in percentage points.
}
\label{tab:mmlu_zs_fs_results}
\small
\setlength{\tabcolsep}{6pt}
\renewcommand{\arraystretch}{1.15}
\resizebox{\textwidth}{!}{%
\begin{tabular}{lcccccccc}
\toprule
& \multicolumn{4}{c}{0-shot accuracies} & \multicolumn{4}{c}{7-shot accuracies} \\
\cmidrule(lr){2-5}\cmidrule(lr){6-9}
Model
& \emph{Humanities} & \emph{Social Sciences} & \emph{STEM} & \emph{Other}
& \emph{Humanities} & \emph{Social Sciences} & \emph{STEM} & \emph{Other} \\
\midrule
Base model
& 63.62 (0.11) & 70.12 (0.14) & 70.69 (0.17) & 63.52 (0.12)
& 70.13 (0.15) & 75.33 (0.12) & 71.77 (0.17) & 64.06 (0.07) \\
\midrule
ZS fine-tuning on Q/K/V
& 69.30 (0.16) & 75.15 (0.14) & 68.27 (0.20) & 63.65 (0.09)
& 66.06 (0.06) & 73.26 (0.19) & 66.72 (0.11) & 62.91 (0.11) \\
& +5.69 & +5.03 & -2.42 & +0.13
& -4.07 & -2.06 & -5.05 & -1.15 \\
\midrule
ZS fine-tuning on Q
& 66.05 (0.11) & 71.77 (0.36) & 67.42 (0.38) & 60.83 (0.13)
& 68.57 (0.14) & 73.13 (0.16) & 71.17 (0.27) & 64.01 (0.23) \\
& +2.44 & +1.65 & -3.28 & -2.69
& -1.55 & -2.19 & -0.60 & -0.05 \\
\midrule
ZS fine-tuning on K
& 66.42 (0.22) & 73.99 (0.18) & 70.76 (0.27) & 67.14 (0.18)
& 69.06 (0.12) & 75.01 (0.19) & 74.07 (0.45) & 67.02 (0.14) \\
& +2.81 & +3.87 & +0.06 & +3.62
& -1.06 & -0.32 & +2.30 & +2.95 \\
\midrule
ZS fine-tuning on V
& 70.21 (0.18) & 76.40 (0.21) & 65.22 (0.37) & 66.13 (0.02)
& 67.75 (0.07) & 72.86 (0.31) & 71.35 (0.52) & 64.74 (0.18) \\
& +6.59 & +6.28 & -5.47 & +2.61
& -2.38 & -2.47 & -0.42 & +0.68 \\
\midrule
ZS+FS fine-tuning on V
& 69.62 (0.07) & 77.96 (0.17) & 52.25 (0.09) & 67.48 (0.40)
& 68.36 (0.31) & 75.56 (0.06) & 65.29 (0.30) & 64.70 (0.10) \\
& +6.00 & +7.84 & -18.45 & +3.96
& -1.77 & +0.23 & -6.48 & +0.64 \\
\bottomrule
\end{tabular}%
}
\end{table*}

\paragraph{Experimental Setup.}
We use the LLaMAFactory framework~\citep{zheng2024llamafactory} for all experiments. All runs are conducted on a single NVIDIA RTX 4090 GPU or NVIDIA RTX A5000 GPU.

We fine-tune \texttt{Qwen2.5-3B-Instruct}~\citep{qwen2.5} on the \emph{Humanities} subset of MMLU~\citep{hendryckstest2021}, which contains 1{,}000 training examples. For each fine-tuning instance, we use \texttt{gpt-4.1-mini}~\citep{achiam2023gpt} to generate a chain-of-thought rationale, and train the model on prompts that include both the final answer and the reasoning trace. Fine-tuning is performed for 5 epochs using low-rank adaptation (LoRA)~\citep{hu2022lora} with rank $r=128$. We apply adapters either to all attention projection matrices or only to the query (Q), key (K), and value (V) projections. We adopt LoRA as a parameter-efficient approach for optimizing the zero-shot objective (ZS), which is also theoretically motivated by our analysis where the optimal parameter change admits a rank-one update under value-matrix fine-tuning.

We sweep the learning rate over $\{1\times 10^{-3}, 3\times 10^{-4}, 1\times 10^{-4}, 3\times 10^{-5}, 1\times 10^{-5}\}$ and select the model that achieves the best zero-shot performance on the \emph{Humanities} category. When using the zero-shot objective with the auxiliary few-shot objective (ZS+FS), we sweep the initial weight of the few-shot loss term over $\{0.25, 0.50, 0.75, 1.00\}$ and linearly anneal it to $0$ within each epoch. We evaluate the resulting models on the MMLU test set across four categories (\emph{Humanities}, \emph{Social Sciences}, \emph{STEM}, and \emph{Other}), consisting of 1{,}000 examples in total. Inference is performed with a temperature of $0.05$, evaluation is repeated three times, and the mean accuracy is reported along with the standard error.

\paragraph{Additional Results.}
Our fine-tuning objective is primarily to improve performance on the \emph{Humanities} category. Therefore, the models trained with \emph{ZS fine-tuning on Q/K/V}, \emph{ZS fine-tuning on V}, and \emph{ZS+FS fine-tuning on V} achieve substantial zero-shot improvements on \emph{Humanities}, with gains of $+5.69$, $+6.59$, and $+6.00$ percentage points (pp), respectively, whereas \emph{ZS fine-tuning on Q} and \emph{ZS fine-tuning on K} yield considerably smaller gains ($+2.44$ and $+2.81$ pp). In the remainder, we therefore focus on the three competitive variants.

These trends change in the 7-shot setting. In particular, \emph{ZS fine-tuning on Q/K/V} substantially harms few-shot performance, reducing 7-shot \emph{Humanities} accuracy by $-4.07$ pp and 7-shot \emph{STEM} accuracy by $-5.05$ pp. In contrast, \emph{ZS fine-tuning on V} better preserves few-shot generalization, with only a $-2.38$ pp change on 7-shot \emph{Humanities} and a $-0.42$ pp change on 7-shot \emph{STEM}. Moreover, incorporating the auxiliary few-shot objective (\emph{ZS+FS fine-tuning on V}) further improves few-shot performance on categories closer to the fine-tuning domain: it increases 7-shot \emph{Social Sciences} by $+0.23$ pp and reduces the 7-shot \emph{Humanities} drop to $-1.77$ pp (compared to $-2.38$ pp without the auxiliary loss). However, this auxiliary objective comes at a clear cost on \emph{STEM}, where it decreases 7-shot \emph{STEM} accuracy by $-6.48$ pp (vs.\ $-0.42$ pp without the auxiliary loss). Overall, these results indicate that value-matrix fine-tuning best preserves few-shot capability, while the auxiliary few-shot objective primarily improves few-shot performance on semantically similar categories but can significantly harm out-of-domain generalization. Detailed results are reported in Table~\ref{tab:mmlu_zs_fs_results} and are summarized in Figure~\ref{fig:mmlu_tradeoff}.

\paragraph{Auxiliary MMLU Word-Count Task.}
To reduce the possibility that the evaluated task has already been learned during prior training, we additionally construct an auxiliary task based on MMLU. For each MMLU question, the model is asked to predict the word count of the question by selecting one of four bins, 0 to 9, 10 to 19, 20 to 29, or 30 or more words, instead of predicting the original answer. This modification changes the target label while keeping the input distribution unchanged. In this setting, the base model achieves 26.67\% zero-shot accuracy, which is close to chance and suggests that the task is unlikely to have been explicitly learned during prior training.

We evaluate two models in the 7-shot setting: full fine-tuning with the zero-shot loss and value matrix fine-tuning with the zero-shot loss. On this auxiliary task, value matrix fine-tuning achieves 47.76\% accuracy, while full fine-tuning achieves 46.57\% accuracy. This result is consistent with our theoretical analysis, but should be interpreted as supportive rather than conclusive evidence because the performance gap is small and we have not explored a broad range of settings.